\documentclass[mnsc,nonblindrev]{informs3}
\usepackage{theorem}
\RequirePackage{tgtermes}
\RequirePackage{newtxtext}
\RequirePackage{newtxmath}
\RequirePackage{bm}
\RequirePackage{endnotes}
\usepackage[utf8]{inputenc}
\usepackage{enumitem}

\usepackage[normalem]{ulem}
\usepackage[unicode=true,pdfusetitle,bookmarks=true,bookmarksnumbered=true,bookmarksopen=false,breaklinks=true,pdfborder={0 0 0},pdfborderstyle={},backref=page,colorlinks=true,linkcolor=blue,citecolor=blue,urlcolor=blue]{hyperref}
\usepackage{booktabs}
\usepackage[table]{xcolor}
\definecolor{myrow}{RGB}{245,245,255}

\usepackage[T1]{fontenc}
\usepackage{url}
\usepackage{microtype}
\usepackage{nicefrac}
\usepackage{graphicx}
\usepackage{amsmath}
\usepackage{mathtools}
\usepackage{cleveref}
\usepackage{bbm}
\usepackage{soul}
\usepackage{longtable}
\usepackage{bm}
\usepackage[ruled,vlined,linesnumbered,noend]{algorithm2e}
\usepackage{algpseudocode}
\usepackage{thm-restate}
\usepackage{refcount}
\usepackage{natbib}

\usepackage{todonotes}
\usepackage[suppress]{color-edits}

\newcommand{\R}{\mathbb{R}}
\renewcommand{\P}{\mathbb{P}}
\newcommand{\E}{\mathbb{E}}

\newcommand{\lwhalf}{\lambda_{W}^{1/2}}
\newcommand{\luhalf}{\lambda_{U}^{1/2}}
\newcommand{\lwhat}{\lambda_{\widehat{W}}}
\newcommand{\luhat}{\lambda_{\widehat{U}}}
\newcommand{\lwhhalf}{\lambda_{\widehat{W}}^{1/2}}
\newcommand{\luhhalf}{\lambda_{\widehat{U}}^{1/2}}

\newcommand{\ublwhat}{\lambda^{\mathrm{ub}}_{\widehat{W}}}
\newcommand{\ubluhat}{\lambda^{\mathrm{ub}}_{\widehat{U}}}
\newcommand{\ubgmhat}{\widehat{\gamma}^{\mathrm{up}}}
\newcommand{\lbgmhat}{\widehat{\gamma}^{\mathrm{low}}}
\newcommand{\ubkhatvh}{\widehat{K}_{Vh}^{\mathrm{up}}}
\newcommand{\ubmhat}{\widehat{M}^{\mathrm{up}}}
\newcommand{\Vhat}{\widehat{V}}

\newcommand{\Phat}{\widehat{P}}
\newcommand{\bp}{\bm{P}}
\newcommand{\bphat}{\widehat{\bm{P}}}
\newcommand{\rhohat}{\widehat{\rho}}
\newcommand{\one}{\mathbf{1}}
\newcommand{\pistar}{\pi^\star}
\newcommand{\piid}{\pi_{\mathrm{ID}}}

\newcommand{\pihid}{\widehat{\pi}_{\mathrm{ID}}}
\newcommand{\pibst}{\Bar{\pi}^\star}
\newcommand{\pihbst}{\widehat{\Bar{\pi}}^\star}
\newcommand{\pits}{\pi_{\mathrm{TS}}}
\newcommand{\pihts}{\widehat{\pi}_{\mathrm{TS}}}

\newcommand{\hhat}{\widehat{h}}
\newcommand{\pihat}{\widehat{\pi}}
\newcommand{\Qhat}{\widehat{Q}}

\newcommand{\yhst}{\widehat{y}^\star}

\newcommand{\tmix}{\tau_{\mathrm{unif}}}
\newcommand{\Aideal}{A^{\mathrm{ideal}}}

\newcommand{\bms}{\bm{s}}
\newcommand{\bma}{\bm{a}}
\newcommand{\bmx}{\bm{x}}
\newcommand{\phihat}{\widehat{\Phi}}
\newcommand{\etabh}{\widehat{\bar{\eta}}}

\newcommand{\rhorel}{\rho^{\mathrm{rel}}}
\newcommand{\rhohrel}{\widehat{\rho}^{\mathrm{rel}}}

\global\long\def\norm#1{\left\lVert #1\right\rVert}%
\global\long\def\twonorm#1{\left\lVert #1\right\rVert _{2}}%
\global\long\def\infnorm#1{\left\Vert #1\right\Vert _{\infty}}%
\global\long\def\infinfnorm#1{\left\lVert #1\right\rVert _{\infty \to \infty}}%
\global\long\def\lvectwonorm#1{\left\lVert #1\right\rVert _{\mu^\star}}%
\global\long\def\ltwonorm#1{\left\lVert #1\right\rVert _{\mu^\star\to\mu^\star}}%
\global\long\def\onenorm#1{\left\Vert #1\right\Vert _{1}}%
\global\long\def\spannorm#1{\left\Vert #1\right\Vert _{\textnormal{span}}}%

\newcommand{\States}{\mathcal{S}}
\newcommand{\Actions}{\mathcal{A}}

\newcommand{\zero}{\mathbf{0}}
\newcommand{\muhat}{\widehat{\mu}}
\newcommand{\uhat}{\widehat{U}}
\newcommand{\yhat}{\widehat{y}}
\newcommand{\xihat}{\widehat{\xi}}
\newcommand{\zetahat}{\widehat{\zeta}}
\newcommand{\lambdahat}{\widehat{\lambda}}
\newcommand{\enfe}{\epsilon_N^{\mathrm{fe}}}
\newcommand{\qtilde}{\widetilde{Q}}
\newcommand{\muhst}{\widehat{\mu}^\star}

\OneAndAHalfSpacedXII %

\usepackage{tikz}
\usetikzlibrary{arrows.meta,positioning,calc,backgrounds,shapes.geometric}
\SetKwInput{KwInput}{Input}
\SetKwInput{KwOutput}{Output}
\SetKwComment{tcp}{// }{}

 \bibpunct[, ]{(}{)}{,}{a}{}{,}%
 \def\bibfont{\small}%

\allowdisplaybreaks



\makeatletter
\newcommand{\vast}{\bBigg@{4}}
\newcommand{\Vast}{\bBigg@{5}}
\makeatother
\EquationsNumberedThrough    %

\TheoremsNumberedThrough     %
\ECRepeatTheorems  %

\MANUSCRIPTNO{MOOR-0001-2024.00}

\let\AppendixHeadSize=\normalsize
\let\AppendixFontSize=\normalsize
\let\AppendixSectionHeadSize=\normalsize

\begin{document}

\RUNAUTHOR{}

\RUNTITLE{Lyapunov-Based Sample Complexity Analysis for Weakly-Coupled MDPs}

\TITLE{Lyapunov-Based Sample Complexity Analysis for Weakly-Coupled MDPs\footnotetext{Accepted for presentation at the Conference on Learning Theory (COLT) 2026}}

\ARTICLEAUTHORS{%
\AUTHOR{Tianhao Wu}
\AFF{Department of Industrial and Systems Engineering,
University of Wisconsin-Madison, \EMAIL{tianhao.wu@wisc.edu}}

\AUTHOR{Matthew Zurek}
\AFF{Department of Computer Sciences,
University of Wisconsin-Madison, \EMAIL{matthew.zurek@wisc.edu}}

\AUTHOR{Weina Wang}
\AFF{Computer Science Department, Carnegie Mellon University,
\EMAIL{weinaw@cs.cmu.edu}}

\AUTHOR{Qiaomin Xie}
\AFF{Department of Industrial and Systems Engineering,
University of Wisconsin-Madison, \EMAIL{qiaomin.xie@wisc.edu}}
} %

\ABSTRACT{%
We study the sample complexity of learning in average-reward weakly-coupled Markov decision processes (WCMDPs) and Restless Bandits (RBs) under a generative model. Naive reduction to a tabular MDP leads to high complexity bounds as the state-action space is exponentially large in the number of arms $N$. By exploiting the weakly coupled structure, we show that near-optimal policies can be learned with sample and computational complexities that are polynomial in $N$. Specifically, we analyze the plug-in approach, which applies an efficient planning algorithm to an empirical model estimated from data. For fully heterogeneous WCMDPs, we establish the first finite-sample PAC guarantee with polynomial complexity and an $O(1/\sqrt{N})$ optimality gap. For homogeneous RBs, we further prove that a smaller optimality gap is achievable under mild structural assumptions. A primary technical contribution of our work is a novel Lyapunov-based analysis framework. Unlike classical approaches that rely on the difficult-to-control bias function, our framework uses an explicitly constructed Lyapunov function along with a drift transfer technique between the true and empirical models. A key step of independent interest in our framework  is a fine-grained perturbation analysis for the underlying linear programming (LP) relaxation, which provides a general tool for analyzing LP-based policies and weakly-coupled systems.
}%

\KEYWORDS{Weakly-coupled MDPs, restless bandits, sample complexity, Lyapunov analysis.} 

\maketitle

\section{Introduction}\label{sec:Intro}
Large-scale decision-making problems with shared resources arise throughout modern engineering systems, including online advertising~\citep{boutilier2016budget, zhou2023rl}, machine maintenance~\citep{glazebrook2005index}, job scheduling~\citep{yu2018deadline}, surveillance~\citep{villar2016indexability}, and healthcare operations~\citep{biswas2021learning}. A common modeling paradigm for these complex systems is a \emph{weakly-coupled Markov decision process} (WCMDP)~\citep{hawkins2003langrangian}, which comprises $N$ arms/subsystems, each of which itself is an MDP. In a heterogeneous WCMDP, each arm evolves according to its own dynamics, yet their actions are coupled through several global constraints. The goal is to find a policy that maximizes the long-term average reward over an infinite time horizon. A prominent special case is the \emph{(homogeneous) restless bandit} (RB) model~\citep{whittle1988restless}, where all arms share the same MDP model with a binary action space (active/passive). In RBs, there is only one constraint that limits the total number of active arms at each time step.

Learning in WCMDPs and RBs is challenging: when viewed as a single MDP, the state-action space $\boldsymbol{\mathcal{S}} \times \boldsymbol{\mathcal{A}}$ of a WCMDP/RB is the product of the state-action spaces of the arms and thus has cardinality exponentially large in the number of arms $N$. Recent work has essentially resolved the sample complexity of \emph{tabular average-reward} MDPs and established the necessity of a $\Theta(|\boldsymbol{\mathcal{S}} \times \boldsymbol{\mathcal{A}}|)$ sample size \citep{wang_near_2022, wang_optimal_2024, zurek_span-based_2025, zurek2025span}. %
Naively applying these results to WCMDPs and RBs, which ignores the weakly coupling structure, yields a sample complexity exponential in $N$. Even the planning problem---finding the \emph{exact} optimal policy when the transition dynamics are known---is computationally intractable due to large spaces. Nonetheless, recent work has made notable progress on the planning problem and developed near-optimal policies that can be computed efficiently~\citep{zhang2025projection,hong2024achieving,hong2024unichain,yan2024achieving,Gast_2024}. 

This paper considers the learning setting and addresses the following key question:
\begin{quote}
\emph{How can one learn a near-optimal policy in average-reward WCMDPs (or RBs) without incurring exponential dependence on the number of arms $N$?}
\end{quote}
We answer this question for both {fully heterogeneous} WCMDPs  and {homogeneous} RBs with access to a generative model, showing that \emph{polynomial-in-$N$} complexities are possible by exploiting the weakly coupled structure. In particular, we develop a \emph{plug-in} approach, which applies an efficient planning algorithm---specifically the ID policy~\citep{zhang2025projection} for WCMDPs and two-set policy~\citep{hong2024achieving} for RBs---to the empirical model estimated from data. For fully heterogeneous WCMDPs, we establish the first \emph{finite-sample (PAC)} optimality-gap guarantee, where both sample and computational complexities scale polynomially in $N$ for obtaining a near-optimal policy with an $O(1/\sqrt{N})$ optimality gap. 
For RBs, we further show that an optimality gap of $O(1/N^\beta)$ is achievable for any $\beta>0$ under mild structural assumptions.

A main technical contribution of the paper is a new \emph{Lyapunov-based} analysis framework. Classical sample complexity analysis for the plug-in approach uses the standard simulation lemma, which relies on the bias function (a.k.a.\ relative value function) of the MDP~\citep{zurek_plug-approach_2024,li_breaking_2020,agarwal_model-based_2020}. The bias function is typically implicit and difficult to control, especially under the sophisticated planning algorithms we adopt. Our approach replaces the role of the bias function with an explicitly constructed \emph{Lyapunov function}. Specifically, our framework first performs a Lyapunov drift analysis of the adopted planning algorithm in the true system, and then employs a drift transfer technique that allows us to analyze the empirical system by explicitly bounding the norm of the Lyapunov function. This framework applies whenever the planning algorithm admits a Lyapunov drift analysis, thus providing a powerful tool for analyzing the plug-in approach in average-reward weakly-coupled systems. We present the details of this framework in  Section~\ref{sec:Lyapunov-Framework}.

As a key step in implementing our Lyapunov framework, we develop a fine-grained perturbation analysis for the linear program (LP) relaxation associated with the RB. The analysis is applicable whenever the LP is solved with perturbed inputs, including but not limited to sampling-based estimates. Beyond serving as a technical tool in our proofs, this perturbation result reveals an intrinsic robustness property of policies constructed using this LP: under appropriate technical conditions, such a policy is stable against perturbed input, in the sense that the actions remain unchanged in all but one state, in which the action probabilities are adjusted to satisfy the constraints under perturbation. Since many control schemes for WCMDPs and RBs rely on solving variants of this LP relaxation, our perturbation results can be used as a robustness module in analyzing other algorithms, including various index and LP-priority policies that are extensively studied in the WCMDP and RB literature~\citep{weber1990index,verloop2016asymptotically,Gast_2024,hodge2015asymptotic}.

\vspace{-0.00cm}

\paragraph{Related work.}
We review related work along the following three dimensions.

\emph{Planning for average-reward RBs/WCMDPs.}
For RBs, a classical starting point is the Whittle index policy~\citep{whittle1988restless,weber1990index}, which is derived from a Lagrangian relaxation and yields an index rule when the problem is indexable. However, indexability may fail or be difficult to verify in general, and even when it holds, the Whittle index policy is not guaranteed to be optimal for finite systems. Recent work has made progress on asymptotic optimality under average reward: \citet{hong2024unichain} show that unichain/aperiodicity conditions are sufficient for asymptotic optimality, while \citet{hong2024achieving} obtains an exponential asymptotic optimality gap without global attractor assumptions. Moving beyond RBs to general WCMDPs, \citet{meuleau1998solving} propose Markov task decomposition with greedy two-phase heuristics, and recent Lyapunov-based analyses obtain optimality gaps such as $O(1/\sqrt N)$ for fully heterogeneous WCMDPs~\citep{zhang2025projection}. These results are in the planning regime and do not directly provide finite-sample learning guarantees when the arm dynamics are unknown.

\emph{Learning for average-reward RBs/WCMDPs.}
One line of work studies online learning for average-reward RBs, focusing on regret guarantees. Early papers considered weak regret against a simple benchmark policy~\citep{dai2011non,liu2011logarithmic,tekin2012online}. Other results obtain sublinear regret against stronger benchmarks but either incur exponential computational complexity~\citep{ortner2012regret} or apply to special RB classes such as birth-death arms~\citep{wang2020restless}. In multi-action average-reward RBs, \citet{xiong2022learning} develop an LP-based index policy that is asymptotically optimal and design learning algorithms with $\widetilde O(\sqrt T)$ regret against that index policy. Thompson-sampling based algorithms have also been studied~\citep{jung2019thompson}. These regret results do not imply the PAC/sample-complexity guarantees studied here, since there is no general regret-to-PAC conversion for average-reward MDPs~\citep{tuynman_finding_2024}. Several papers study convergence of learning algorithms for RBs~\citep{avrachenkov2022whittle,avrachenkov2024lagrangian,killian2021q} and WCMDPs~\citep{el2023weakly}, but they do not provide finite-sample optimality-gap bounds. Recent Q-learning methods with function approximation aim to learn Whittle index policies~\citep{xiong2023finite,xiong2024whittle}; their finite-time convergence results do not translate directly to sample-complexity bounds for the optimality gap considered here. For finite-horizon or discounted objectives, methods based on Lagrangian relaxation, Q-learning, and deep RL have also been proposed~\citep{killian2021q,robledo2022qwi,killian2022restless,nakhleh2021neurwin,robledo2024deep}, but the guarantees are largely asymptotic or tailored to different performance criteria.

\emph{Sample complexity of tabular average-reward MDPs.}
Recent work has obtained sharp sample-complexity bounds for single-arm tabular average-reward MDPs. For uniformly mixing or uniformly ergodic MDPs with mixing-time bound $t_{\mathrm{mix}}$, \citet{wang2024optimalsamplecomplexityaverage} obtain a minimax-optimal sample complexity $\widetilde O(SA t_{\mathrm{mix}}/\varepsilon^2)$, matching lower bounds up to logarithmic factors; see also \citet{jin_towards_2021}. For weakly communicating MDPs, \citet{zurek_span-based_2025} establish a minimax-optimal bound $\widetilde O(SAH/\varepsilon^2)$, where $H=\mathrm{sp}(h^\star)$ is the span of the optimal bias function. Earlier work by \citet{wang_near_2022} gave related upper and lower bounds, and the most general multichain setting has been characterized using a transient-time parameter. Parameter-free and adaptive algorithms have also been developed~\citep{tuynman_finding_2024,neu_dealing_2024,zurek_plug-approach_2024,lee2025near,zurek2025span}. These PAC/sample-complexity guarantees are for single-arm tabular MDPs; applying them naively to an $N$-armed WCMDP or RB leads to exponential dependence on $N$ because the joint state-action space is exponentially large.

\section{Problem Setup}

\subsection{Weakly-coupled MDPs}\label{sec:WCMDP}
A weakly-coupled MDP consists of \(N\) arms. Each arm \(i \in [N]\triangleq \{1,2,\dots,N\}\) itself is associated with an MDP \(\mathcal{M}_i = (\States, \Actions, P_i, r_i, (c_{k,i})_{k \in [K]})\).
Here \(\States\) and \(\Actions\) are the state and action spaces, respectively, which are finite sets with $|\States|=S, |\Actions|=A$; 
\(P_i(s'_i \mid s_i,a_i)\) is the transition probability from state \(s_i\) to state \(s'_i\) under action \(a_i\);
$r_i$ is the reward function; $c_{k,i}$'s are cost functions ($K$ types of costs in total). When arm~$i$ is in state $s_i$ and takes action $a_i$, it generates reward $r_i(s_i,a_i)$ and cost $c_{k,i}(s_i,a_i)$ of each type $k$.
We assume that the reward functions take values in $[0,r_{\max}]$ for $r_{\max}>0$, and the cost functions take values in $[0,c_{\max}]$ for $c_{\max}>0$.

For the overall $N$-armed MDP, its state space is $\mathcal{S}^N$ and action space is $\mathcal{A}^N$, where the state is the joint state of all arms, denoted as $\bms=(s_1,\ldots,s_N)$,  the action is the joint action of all arms, denoted as $\bma=(a_1,\ldots,a_N)$, and the reward is the sum of rewards from all arms.
Given actions, the arms make state transitions independently, i.e., $\bp(\bms'\mid\bms,\bma) = \prod\nolimits_{i\in[N]} P_i(s'_i\mid s_i,a_i).$
The arms are coupled by the following cost constraints on their joint action:
\vspace{-0.04cm}
\[
\sum\nolimits_{i\in[N]} c_{k,i}(s_i,a_i) \le \alpha_k N, \quad \forall k\in[K],
\]
where \(\alpha_k N\) is referred to as the budget for the cost of type \(k\).
We assume that for every arm, the action~$0$, paired with any state, does not incur any cost of any type, and thus there always exists a feasible action $\bma=(0,0,\dots,0)$.

In this paper, we vary the number of arms $N$ but keep the following constant: the number of cost types $K$, the budgets $\alpha_k$'s, the reward upper bound $r_{\max}$, and the cost upper bound $c_{\max}$.

\paragraph{Performance criterion.}
We consider the infinite-horizon average-reward criterion.
For the $N$-armed system, let \(\bm{S}_t = (S_{i,t})_{i\in[N]}\) denote the state at time $t$, where $S_{i,t}$ is the state of arm~$i$; let \(\bm{A}_t = (A_{i,t})_{i\in[N]}\) denote the action at time $t$, where $A_{i,t}$ is arm~$i$'s action.
For any stationary policy $\pi$, the long-run average reward, also referred to as the \emph{gain}, from an initial state $\bms_0$ is defined as
\[
\rho^\pi(\bms_0) \triangleq \lim_{T \to \infty} \frac{1}{T} \sum_{t=0}^{T-1} \frac{1}{N} \sum_{i \in [N]} \mathbb{E}^{\pi} \left[ r_i(S_{i,t}, A_{i,t}) \right],
\]
where we have used the superscript $^\pi$ to indicate that the law is induced by the policy $\pi$.
Note that this limit exists due to the finite state and action spaces.

Classical MDP theory shows that it suffices to seek optimal policies within stationary policies (see, e.g., \citealt[Theorem 9.1.8]{puterman2014markov}).
Therefore, the WCMDP aims to solve
\vspace{-0.05cm}\begin{align}
\label{eq:opt_problem}
\max_{\pi\colon \text{stationary}} \;\rho^\pi(\bms_0) \quad\; \text{s.t.} \;\sum\nolimits_{i \in [N]} c_{k,i}(S_{i,t}, A_{i,t}) \leq \alpha_{k} N, \;\forall k \in [K], \forall t \geq 0, \text{ under }\pi.
\end{align}
Denote the optimal value by $\rho^\star(\bms_0)$.

In this paper, besides stationary policies, we also consider policies with an augmented, but still finite, state space.
Such policies maintain some extra state variables in addition to the system state $\bm{S}_t$, and they are stationary with respect to the augmented state. The Markov chain induced by such a policy still has a finite state space and thus its long-run average reward is still well-defined.

\subsection{LP relaxation}\label{subsec:lp-relax}

As we explain momentarily, the linear program (LP) below is a relaxation of the $N$-armed WCMDP. This LP plays an important role in the policy design in existing work \citep{zhang2025projection}.
\begin{subequations}\label{eq:lprelax}
\vspace{-0.05cm}\begin{align}
\max_{(y_i(s,a))_{i\in[N],\,s\in\mathcal{S},\,a\in\mathcal{A}}}
  &\frac{1}{N} \sum\nolimits_{i\in[N]} \sum\nolimits_{s\in\mathcal{S},\,a\in\mathcal{A}} y_i(s,a) r_i(s,a)\label{eq:relax:obj} \\
\text{subject to} \qquad
&\frac{1}{N} \sum\nolimits_{i\in[N]} \sum\nolimits_{s\in\mathcal{S},\,a\in\mathcal{A}} 
    y_i(s,a) c_{k,i}(s,a) \leq \alpha_k, \quad \forall k\in[K] \label{eq:relax:budget} \\
&\sum\nolimits_{s'\in\mathcal{S},\,a'\in\mathcal{A}} P_i(s \mid s',a') y_i(s',a') 
   = \sum\nolimits_{a\in\mathcal{A}} y_i(s,a), \;\; \forall s\in\mathcal{S}, i\in[N] \label{eq:relax:flow} \\
&\sum\nolimits_{s'\in\mathcal{S},\,a'\in\mathcal{A}} y_i(s',a') = 1, \quad 
   y_i(s,a)\geq 0, \quad \forall s\in\mathcal{S},\,  a\in\mathcal{A},\, i\in[N].
 \label{eq:relax:nonneg}
\end{align}
\end{subequations}
Here the optimization variables $y_i(s, a)$'s can be thought of as the long-run fraction of time arm $i$ spends in state $s$ taking action $a$.
Then the constraint \eqref{eq:relax:budget} is a constraint on the long-run average costs, which relaxes the hard constraint on the costs for each time step.

Let $(y_i^\star(s,a))_{i\in[N],s\in\States,a\in\Actions}$ be an optimal solution to this LP, and let $\rhorel$ denote the optimal value.
\citet{zhang2025projection} show that $\rho^\star(\bms_0)\le \rhorel$ for all $N$-armed WCMDPs and all initial state $\bms_0$.

\subsection{Restless bandits}
\label{subsec:homogeneous-rb}

An extensively studied subclass of WCMDPs is the restless bandits. The additional structure in RBs has enabled the development of planning policies with smaller optimality gap. We are interested in whether the similar refinement is possible for learning.

We use the same notation for RBs as that for WCMDPs unless otherwise specified. It should be clear what we refer to from the context.
An RB problem is a WCMDP where each arm has a binary action space
$\mathcal A = \{0,1\}$, where
$a=1$ is referred to as the \emph{pulling/active} action and $a=0$ the \emph{non-pulling/passive} action.
The cost constraint in an RB has a simple form:
\vspace{-0.05cm}\[
\sum\nolimits_{i\in[N]} a_i = \alpha N,
\]
where $0<\alpha < 1$ and $\alpha N$ is an integer.
Compared to the standard WCMDP form, this constraint is written as an equality rather than an inequality.
This discrepancy is largely conventional and does not change the problem in any fundamental way.
We focus on the commonly studied \emph{homogeneous} setting, where each arm's MDP has the same transition kernel $P$ and reward function $r$.

An RB has an LP relaxation similar to that of the WCMDP \citep{hong2024achieving}:
\vspace{-0.05cm}\begin{subequations}\label{eq:lprelax-rb}
\begin{align}
\max_{(y(s,a))_{s\in\mathcal{S},\,a\in\mathcal{A}}}\mspace{15mu}&\sum\nolimits_{s\in\mathcal{S},a\in\mathcal{A}}  y(s, a) r(s, a)\label{eq:lp-rb-objective}\\
    \text{subject to}\mspace{21mu}
    &\mspace{0mu}\sum\nolimits_{s\in\mathcal{S}} \quad y(s, 1) = \alpha, \label{eq:expect-budget-constraint}\\ &\sum\nolimits_{s'\in\mathcal{S},\,a'\in\mathcal{A}} P(s \mid s',a') y(s',a') = \sum\nolimits_{a\in\mathcal{A}} y(s,a), \quad \forall s\in\mathcal{S} \label{eq:flow-balance-equation}\\
    &\mspace{0mu}\sum\nolimits_{s'\in\mathcal{S}, a'\in\mathcal{A}} y(s',a') = 1,  %
    \quad
     y(s,a) \geq 0, \;\; \forall s\in\mathcal{S}, a\in\mathcal{A}.  \label{eq:non-negative-constraint}
\end{align}
\end{subequations}
Let $(y^\star(s,a))_{s\in\States,a\in\Actions}$ be an optimal solution to this LP, and let $\rhorel$ denote the optimal value. It has been shown that $\rho^\star(\bms_0)\le \rhorel$ for all $N$-armed RBs and all initial state $\bms_0$~\citep{hong2024unichain}.

\section{Learning via a Plug-in Approach with Reference Policies}

We adopt a plug-in approach for learning. 
This approach first constructs an empirical transition probability matrix for each arm using data, and then plug the empirical MDP into a planning algorithm. While the exact optimal policy is intractable for WCMDPs/RBs, we make use of existing efficient planning algorithms for computing a near-optimal policy, which we call a \emph{reference policy}.

\subsection{Constructing empirical MDPs}
We assume access to a generative model, also known as a simulator~\citep{kearns_finite-sample_1998}, for the transition matrix of each arm's MDP. As is standard in the generative setting, we assume the reward and cost functions are known.

In an $N$-armed WCMDP, for each arm~$i$, we collect $n$ i.i.d.\ samples $S_{s,a}^{i,1}, \dots, S_{s,a}^{i,n}$ from $P_i(\cdot\mid s,a)$ for each state-action pair $(s,a)\in\States\times\Actions$.
We construct the empirical transition kernel $\widehat{P}_i$ by $
    \widehat{P}_i(s' \mid s, a) \triangleq\frac{1}{n} \sum\nolimits_{j=1}^{n} \mathbb{I}\{S_{s,a}^{i,j} = s'\}, \forall s' \in \States.$
The empirical $N$-armed transition kernel is then given by the product
    $\bphat(\bms'\mid \bms,\bma)=\prod\nolimits_{i=1}^N \widehat{P}_i(s'_i \mid s_i, a_i).$

For an $N$-armed RB, since we focus on the homogeneous setting where all the arms share the same transition kernel $P$, we can pick any arm and construct the empirical transition kernel $\widehat P$ using $n$ i.i.d.\ samples per state-action pair.
Then we define the empirical $N$-armed transition kernel $\bphat$ in the same way as WCMDP by setting $\widehat P_i=\widehat P$ for all $i\in[N]$.

\subsection{Plug-in approach for WCMDPs with ID policy}

For WCMDPs, we consider the ID policy developed in~\citet{zhang2025projection} as our reference policy, denoted as $\piid$.
The ID policy is efficiently computable (with $\text{poly}(N)$ complexity), and (under proper conditions) it achieves an $O(1/\sqrt{N})$ optimality gap, i.e., $\rho^\star(\bms_0)-\rho^{\piid}(\bms_0)=O(1/\sqrt{N})$.

A key component of the ID policy is the collection of \emph{optimal single-armed policies}.
Specifically, recall that $(y_i^\star(s,a))_{i\in[N],s\in\States,a\in\Actions}$ is an optimal solution to the LP \eqref{eq:lprelax}.
Then for each arm~$i$, viewed as an MDP on its own, we can define the following policy:
\vspace{-0.05cm}\begin{align}\label{def-pibst}
    \bar{\pi}_i^\star(a \mid s) =
\begin{cases}
\frac{y_i^\star(s,a)}{\sum_{a \in \mathcal{A}} y_i^\star(s,a)}, & \text{if } \sum_{a \in \mathcal{A}} y_i^\star(s,a) > 0, \\
\frac{1}{|\mathcal{A}|}, & \text{if } \sum_{a \in \mathcal{A}} y_i^\star(s,a) = 0.
\end{cases}
\end{align}
This policy is called the optimal single-armed policy for arm~$i$.
The policies $\{\bar{\pi}_i^\star\}_{i\in[N]}$ are designed such that if each arm~$i$ follows $\bar{\pi}_i^\star$ and evolves completely independently, then the arms together satisfy the cost constraint in a time-average fashion and achieve a total reward of  $\rhorel$, the optimal value of the LP, which is an upper bound on the true optimal value of the $N$-armed WCMDP.

We defer a formal description of the ID policy to Algorithm~\ref{alg:IDpolicy} in Appendix~\ref{sec:preliminary-thm-hete}.
The high-level idea of the ID policy is to let most arms follow their optimal single-armed policies, while conforming to the hard cost constraints.
An insight behind this approach is that if the arms are given a consistent ordering, e.g., based on the arm IDs, then the set of arms that can consistently follow their optimal single-armed policies expands over time and ultimately covers all but $O(\sqrt{N})$ arms on average.

With the ID policy as our reference policy, the plug-in approach proceeds as follows.
We substitute the empirical transition kernels $\widehat P_i$'s into the LP \eqref{eq:lprelax}.
Solving this ``empirical LP'' gives an optimal solution $(\yhst_i(s,a))_{i\in[N],s\in\States,a\in\Actions}$ and the resulting optimal single-armed policies $\{\pihbst_i\}_{i\in[N]}$.
We then use $(\yhst_i(s,a))_{i\in[N],s\in\States,a\in\Actions}$ and $\{\pihbst_i\}_{i\in[N]}$ to construct the ID policy, denoted as $\pihid$.

\subsection{Plug-in approach for RBs with two-set policy}

For homogeneous RBs, we consider the two-set policy from \citet{hong2024achieving} as our reference policy, denoted as $\pits$.
The two-set policy is also efficiently computable, and (under proper conditions) it achieves an optimality gap $\rho^\star(\bms_0)-\rho^{\pits}(\bms_0)=O(\exp(-cN))$ for a constant $c>0$.
Note that such an \emph{exponentially} small optimality gap has been known to be achievable only for the RB subclass of WCMDPs, in the homogeneous setting under a set of conditions.

The two-set policy is also built on an optimal single-armed policy.
Recall that $(y^\star(s,a))_{s\in\mathcal S,a\in\mathcal A}$ is an optimal solution to the LP \eqref{eq:lprelax-rb}.
We define the corresponding optimal single-armed policy as:
\begin{equation}\label{eq:single-opt-RB}
\bar\pi^\star(a\mid s)=
\vspace{-0.05cm}\begin{cases}
\frac{y^\star(s,a)}{y^\star(s,0)+y^\star(s,1)}, &\text{if } y^\star(s,0)+y^\star(s,1)>0,\\
\frac{1}{2}, & \text{if }y^\star(s,0)+y^\star(s,1)=0.
\end{cases}
\end{equation}
Since the arms are homogeneous, they share the same single-armed policy $\bar\pi^\star$.

A compact formal sketch of the two-set policy is given in Algorithm~\ref{alg:two-set-policy} in Appendix~\ref{subsec:two-set-policy}.
Here we briefly describe its structure to facilitate the discussion of our learning approach.
The two-set policy $\pits$ maintains two disjoint subsets of arms, $D_t^{\mathrm{OL}}$ and $D_t^{\pibst}$, at each time $t$.
It applies two different decision rules to these two subsets, and both rules are based on the optimal single-armed policy $\bar\pi^\star$.
The subsets $D_t^{\mathrm{OL}}$ and $D_t^{\pibst}$ are updated using $D_{t-1}^{\mathrm{OL}}$, $D_{t-1}^{\pibst}$, and the current arm states.
As a result, $\pits$ operates on an augmented state that includes both the arm states and the two subsets.

The plug-in approach with the two-set policy as the reference policy proceeds similarly to the WCMDP case.
We first substitute the empirical kernel $\widehat P$ into the LP in \eqref{eq:lprelax-rb}.
Solving this LP gives an optimal solution $(\yhst(s,a))_{s\in\States,a\in\Actions}$ and the corresponding optimal single-armed policies $\pihbst$.
We then use $(\yhst(s,a))_{s\in\States,a\in\Actions}$ and $\pihbst$ to construct the two-set policy, which is denoted as $\pihts$.

\section{Main Results}
We present our main results, which give upper bounds on the optimality gaps of the learned policies: $\pihid$ for WCMDPs and $\pihts$ for RBs.
These bounds directly imply sample complexity bounds.

\subsection{Main Results for WCMDPs}
We begin by stating our assumptions.
In average-reward MDP literature, it is common for sample complexity results to require a uniform mixing time bound under all policies (e.g., \citealt{jin_efficiently_2020, jin_towards_2021, wang_optimal_2023, li_stochastic_2024}), or alternatively depend on the MDP diameter or bias span of an optimal policy (e.g., \citealt{zurek_span-based_2025, zurek_plug-approach_2024, tuynman_finding_2024}).
In our setting, the weakly coupled structure allows us to avoid imposing uniform mixing (or other) assumptions on the overall $N$-armed MDP, whose state/action spaces grow exponentially with $N$.
Instead, it suffices to make uniform mixing assumption on the individual arm MDPs.

Specifically, for each arm~$i$, we consider the associated single-armed MDP $(\States, \Actions, P_i, r_i)$, ignoring the cost functions and constraints.
Given a policy $\pi$ for this MDP, let $P^\pi_i$ denote the transition probability matrix of the Markov chain induced by $\pi$, i.e., $P^\pi_i(s,s') \coloneqq \sum_{a\in\mathcal{A}} P_i(s'\mid s,a)\pi(a\mid s).$
If this Markov chain has a unique stationary distribution $\mu^\pi_i$, we define its mixing time as
$\tau^\pi_i\coloneqq\min\left\{ t\in\mathbb{N}\colon
\max_{s\in\mathcal{S}}\bigl\| (P^\pi_i)^{t}(s,\cdot)-\mu^\pi_i(\cdot)\bigr\|_1 \le 1/4 \right\},$
where $(P^\pi_i)^{t}$ is the $t$-step transition matrix.
We assume that each arm has a finite uniform mixing time bound under all policies,
and further that these bounds across arms have a finite uniform upper bound, as summarized in Assumption~\ref{ass:unichain-unifmixing} below.
\begin{assumption}[Unichain and Uniform Mixing]\label{ass:unichain-unifmixing}
We assume that for each arm $i$, its associated single-armed MDP $(\States, \Actions, P_i, r_i)$ is unichain; i.e., all policies induce a Markov chain with a single recurrent class (and possibly some transient states).
Further, we assume that there exists a finite \(\tmix\) such that $\sup_{N\in\mathbb{N}_+,i\in[N]}\sup_{\pi}\tau^\pi_i\le \tmix$.
\end{assumption}

We remark that our learning approach does not use prior knowledge of the mixing time upper bound \(\tmix\) in Assumption~\ref{ass:unichain-unifmixing}.

\begin{theorem}\label{thm-heteWCMDP}
Consider an $N$-armed WCMDP that satisfies Assumption~\ref{ass:unichain-unifmixing}.
Let $\pihid$ be the learned policy using the plug-in approach with the ID policy as the reference policy, and let $n$ be the number of samples per state-action pair per arm.
Then for any initial state $\bms_0$, with probability at least $1-\eta$, 
\[
\rho^{\star}(\bms_0) - \rho^{\pihid}(\bms_0) 
    \le \min\left\{C_1 \sqrt{S + \log (SAN/\eta)}
    \frac{N}{\sqrt{n}},r_{\max}\right\}+\frac{C_2}{\sqrt{N}},
\]
where $C_1$ and $C_2$ are constants independent of $N$, $n$, $\eta$, and $\bms_0$.
\end{theorem}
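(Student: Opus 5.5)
I will decompose the gap as
\[
\rho^\star(\bms_0)-\rho^{\pihid}(\bms_0)\le \bigl(\rhorel-\rhohrel\bigr)+\bigl(\rhohrel-\widehat{\rho}^{\pihid}(\bms_0)\bigr)+\bigl(\widehat{\rho}^{\pihid}(\bms_0)-\rho^{\pihid}(\bms_0)\bigr),
\]
using $\rho^\star(\bms_0)\le\rhorel$. Here $\rhohrel$ is the optimal value of the empirical LP, and $\widehat\rho^{\pihid}$ is the gain of $\pihid$ in the empirical $N$-armed system $\bphat$. Since all rewards lie in $[0,r_{\max}]$, the whole gap is trivially at most $r_{\max}$. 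The target is therefore to show that the first and third terms together are $O(N\sqrt{(S+\log(SAN/\eta))/n})$ and that the middle term is $O(1/\sqrt N)$. The $\min\{\cdot,r_{\max}\}$ then follows by also bounding everything except the $C_2/\sqrt N$ piece by $r_{\max}$.

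\textbf{Concentration.} I will first fix a high-probability event. By the $\ell_1$ concentration bound for empirical distributions and a union bound over $i\in[N]$ and $(s,a)$, with probability $1-\eta$ we have
\[
\|\widehat P_i(\cdot\mid s,a)-P_i(\cdot\mid s,a)\|_1\le \epsilon\triangleq c\sqrt{(S+\log(SAN/\eta))/n}
\]
for all $i,s,a$. On this event, I will also need the empirical arms to mix uniformly, at least when $\epsilon$ is small. If $\epsilon$ is not small, the bound is vacuous because the minimum with $r_{\max}$ takes over. The plan is to use perturbation of Doeblin/mixing coefficients: for any $t$, $\|(\widehat P_i^\pi)^t-(P_i^\pi)^t\|_{\infty\to\infty}\le t\epsilon$, so $\widehat P_i$ mixes within about $2\tmix$ once $\epsilon\lesssim 1/\tmix$.

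\textbf{First term (LP perturbation).} To bound $\rhorel-\rhohrel$, I take the true optimal $y_i^\star$, i.e. the stationary distribution of $\bar\pi^\star_i$ under $P_i$. Under $\widehat P_i$, the same single-armed policy $\bar\pi^\star_i$ has a stationary distribution $\widehat\mu_i$ with $\|\widehat\mu_i-\mu_i\|_1=O(\tmix\epsilon)$, by a standard perturbation bound via the fundamental matrix, whose norm is $O(\tmix)$. This candidate may violate the budget by $O(c_{\max}\tmix\epsilon)$. I will restore feasibility by mixing each arm with the zero-cost policy (action $0$), using the unichain property and the fact that action $0$ is costless. This costs $O(\tmix\epsilon)$ in reward, which gives $\rhorel-\rhohrel=O(\tmix\epsilon)$.

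\textbf{Second term (drift transfer).} For $\rhohrel-\widehat\rho^{\pihid}$, I need the ID-policy guarantee of \citet{zhang2025projection} applied to the empirical WCMDP. That analysis uses an explicit Lyapunov function whose drift constants depend only on the mixing of the single-armed chains under $\pihbst_i$. The mixing bound from the concentration step ensures these constants are uniform. Transferring that analysis with constants depending on $2\tmix$ gives $O(1/\sqrt N)$.

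\textbf{Third term (main obstacle).} Comparing $\widehat\rho^{\pihid}$ to $\rho^{\pihid}$ is the step I expect to be hardest, because the bias of $\pihid$ in the huge joint chain is not directly controlled. I would first try the Lyapunov replacement. Run $\pihid$ on the true system and measure rewards against the empirical LP solution $\yhst$. Use the Lyapunov function $V$ built from $\yhst$ and $\pihbst$, and compute its one-step drift under $\bp$ versus $\bphat$. The difference is at most $\|V\|_{\mathrm{Lip}}\cdot\sum_i\|\widehat P_i-P_i\|_1\le \|V\|\, N\epsilon$, where $V$ is bounded by $O(\tmix)$ per arm in an appropriate normalized sense. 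Because the drift condition holds in $\bphat$, it then holds in $\bp$ up to an additive $O(N\epsilon)$ error. Converting drift into a gain bound yields $|\widehat\rho^{\pihid}-\rho^{\pihid}|=O(N\epsilon)+O(1/\sqrt N)$, which accounts for the factor $N$ in the statement. The delicate points are bounding the norm of $V$ explicitly and checking that the one-step drift inequality survives the kernel swap. Combining the three terms and absorbing $\tmix$, $r_{\max}$, $c_{\max}$ and $K$ into $C_1,C_2$ gives the theorem.
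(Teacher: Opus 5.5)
Your proof is correct, but for one of its two substantive pieces it takes a different route from the paper.

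\textbf{The paper's route.} The paper bounds $\rho^\star-\rho^{\pihid}\le(\rhorel-\widehat\rho^{\piid})+(\rhohrel-\rho^{\pihid})$, using $\widehat\rho^{\piid}\le\rhohrel$. It handles \emph{both} terms with the same drift-transfer lemma. In the first, the true $V$, whose drift holds under $\bp^{\piid}$, is moved to $\bphat^{\piid}$. In the second, $\Vhat$, whose drift holds under $\bphat^{\pihid}$, is moved to $\bp^{\pihid}$. Each transfer pays $\infinfnorm{\bphat^\pi-\bp^\pi}\,\|V\|_\infty\le L_hN\delta$.

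\textbf{Your second and third terms.} These telescope to exactly the paper's second term $\rhohrel-\rho^{\pihid}$, and your third-term argument (drift of $\Vhat$ under $\bphat$, gap dominance against $\rhohrel$, transfer to $\bp$) is the paper's argument. Two small corrections:
\begin{itemize}
\item The transfer gives only the one-sided bound $\rhohrel-\rho^{\pihid}\le O(N\epsilon)+O(1/\sqrt N)$, not an absolute value. Combined with $\widehat\rho^{\pihid}\le\rhohrel$ (the LP relaxation bound applied to the empirical WCMDP), this is all you need. It also makes your separate planning bound for the middle term redundant.
\item The rigorous version of the model-error step is $\infinfnorm{\bp^{\pihid}-\bphat^{\pihid}}\,\|\Vhat\|_\infty\le N\epsilon L_h$, with $\|\Vhat\|_\infty\le L_h$ coming from $\widehat h(\bmx,0)=0$ and Lipschitzness in $m$. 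Your "$\|V\|_{\mathrm{Lip}}$" phrasing is unnecessary, and Lipschitzness in the state is not established here.
\end{itemize}

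\textbf{Your first term, the genuine difference.} Your $\rhorel-\rhohrel$ is never larger than the paper's $\rhorel-\widehat\rho^{\piid}$. You bound it by direct LP sensitivity: take the stationary distribution of $\bar\pi_i^\star$ under $\widehat P_i$, note $\|\widehat\mu_i-\mu_i^\star\|_1\le 2\tmix\epsilon$ from the fundamental-matrix identity, then restore the budget by a convex combination with an action-$0$ occupation measure. This checks out. Only existence of a stationary distribution for the action-$0$ chain is needed, not unichain. The mixing weight satisfies $1-\theta=O(\tmix c_{\max}\epsilon/\alpha_{\min})$, so the piece is $O(\tmix\epsilon)$, with no factor $N$ and no $1/\sqrt N$. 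It needs neither smallness of $\epsilon$ nor any analysis of $\piid$ in the empirical system.

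\textbf{Comparison.} Your route is more elementary and sharper on this piece. The paper's symmetric route is looser there but shows that one transfer lemma covers both directions. The final rate is identical, because the $N\epsilon$ from the other piece dominates.
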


We provide a proof of Theorem~\ref{thm-heteWCMDP} in Section~\ref{sec:proof_outline_thm_WCMDP}, and give the detailed proofs of the key lemmas in Appendix~\ref{app:full-proof-thm-heteWCMDP}.
The optimality gap bound for $\pihid$ in Theorem~\ref{thm-heteWCMDP} consists of two terms.
The first term is a finite-sample error term that decreases at rate $1/{\sqrt{n}}$ as the sample size $n$ increases.
The second term, $C_2/{\sqrt{N}}$, is inherited from the optimality gap of the ID policy itself and is not affected by learning or the sample size $n$.
However, it vanishes at rate $1/{\sqrt{N}}$ as the system size $N$ increases.
This scaling is the asymptotic regime of interest for WCMDPs, where the primary interest is in large systems.

The bound in Theorem~\ref{thm-heteWCMDP} directly yields a sample complexity bound for achieving an optimality gap at most $\epsilon$.
Note that we can only guarantee a gap level $\epsilon=\Omega(1/\sqrt{N})$ due to the second term in the bound.
To achieve any $\epsilon=\Omega(1/\sqrt{N})$, the sample size needed is $n=\Omega\left(\frac{(S + \log (SAN/\eta))N^2}{\epsilon^2}\right)$.
Notably, the total number of samples needed, $NSAn$, grows only polynomially in $N$ rather than exponentially in $N$, although the state and action space sizes of the overall $N$-armed MDP are exponential in $N$.
This contrasts with the sample complexity results for general average-reward MDPs, and formally demonstrates the benefit of exploiting the weakly coupled structure in learning.

\subsection{Main Results for RBs}
Recall that for RBs, our plug-in learning approach uses the two-set policy as the reference policy.
We first state a set of assumptions slightly adapted from the conditions that guarantee the $O(\exp(-cN))$ optimality gap for the two-set policy \citep{hong2024achieving}.

Recall that $(y^\star(s,a))_{s\in\mathcal S,a\in\mathcal A}$ is an optimal solution to the LP \eqref{eq:lprelax-rb}, based on which we defined the optimal single-armed policy $\bar\pi^\star$ in \eqref{eq:single-opt-RB}. Let $\mu^\star(s) \coloneqq y^\star(s, 0) + y^\star(s, 1)$ for all $s\in\mathcal{S}$.
It is easy to verify that $\mu^\star$ is a stationary distribution for the single-armed MDP under policy $\bar\pi^\star$.
With these quantities, we define an $S\times S$ matrix $\Phi$ as follows
\vspace{-0.05cm}\begin{equation}\label{def:phi}
\Phi \coloneqq P^{\pibst}-\one \mu^\star
-\bigl(c^{\pibst}-\alpha\mathbf 1\bigr)\bigl(P_{\tilde{s}1}-P_{\tilde{s}0}\bigr),
\end{equation}
where $P^{\pibst}$ is the transition matrix of the Markov chain induced by $\bar\pi^\star$, $\one$ is the all-one column vector,
$c^{\pibst}\coloneqq\bigl(\bar\pi^\star(1\mid s)\bigr)_{s\in\mathcal S}$ is a column vector, 
and $P_{sa}\coloneqq \bigl(P(s'\mid s,a)\bigr)_{s'\in\mathcal{S}}$ are
row vectors. 
We will use several matrix norms, including the operator norm $\ltwonorm{\cdot}$ induced by $\mu^\star$, with detailed definitions in Appendix~\ref{subsec:l2mustar-norm}. 
We state the assumptions below, which are all independent of $N$.
\begin{assumption}\label{ass:RB}
For an RB with single-armed MDP $(\mathcal{S},\mathcal{A},P,r)$ and cost budget $\alpha$, we assume:
\begin{itemize}[leftmargin=1.5em,itemsep=-3pt,topsep=1pt]
    \item \textbf{(Ergodicity)} For the single-armed MDP, the Markov chain induced by the optimal single-armed policy $\bar\pi^\star$ is aperiodic and irreducible and has a mixing time upper bounded by $\tau$.
    \item \textbf{(Non-degeneracy)} There exists a unique state $\tilde s\in\mathcal S$ such that $y^\star(\tilde s,1)>0$ and $y^\star(\tilde s,0)>0$. This state is referred to as the neutral state.
    \item \textbf{(Local stability)}
    The matrix $\Phi$ satisfies $\ltwonorm{\Phi}< 1$.
\end{itemize}
\end{assumption}

We remark that these assumptions are slightly stronger than those in \citet{hong2024achieving}.
In particular, the ergodicity assumption requires irreducibility while that paper requires unichain;
the local stability assumption here is also slightly stronger since \citet{hong2024achieving} assumes the moduli of all eigenvalues of $\Phi$ are strictly less than $1$.

We make a further assumption on the dual LP of the LP \eqref{eq:lprelax-rb}, where the dual LP can be written as
\begin{subequations}
\label{eq:RB_dual_LP}
\vspace{-0.05cm}\begin{align}
\min_{\zeta,\lambda,(h(s))_{s\in\mathcal{S}}}\quad & \zeta - \alpha \lambda \\
\text{subject to}\quad 
& r(s,a)+\lambda\mathbb{I}\{a\!=\!1\} + \sum\nolimits_{s'\in\mathcal S} \! P(s'\mid s,a)\, h(s')\le\zeta + h(s),\;\; \forall s\in\mathcal S,a\in\mathcal A.
\end{align}
\end{subequations}

\begin{assumption}[Strict action-value gaps]
\label{ass:strict_action_gaps}
There exists an optimal solution $(\zeta^\star, \lambda^\star, h^\star)$ to the dual LP~\eqref{eq:RB_dual_LP} which satisfies that for all $s \in \States, a\in \Actions$ such that $y^\star(s,a)=0$, letting $\bar{a} = 1-a$, we have
\begin{align*}
    r(s, \bar{a}) + \lambda^\star \mathbb{I}(\bar{a}=1) + P_{s \bar{a}}h^\star > r(s, a) + \lambda^\star \mathbb{I}(a=1) + P_{s a}h^\star.
\end{align*}
\end{assumption}
Assumption \ref{ass:strict_action_gaps} is a dual non-degeneracy condition, which in general is a sufficient condition for a unique primal solution, and in our setting is actually an equivalent condition.

We highlight that we do not require uniform-mixing type assumptions.
Instead, we only require ergodicity for the single-armed Markov chain induced by \emph{one policy}, $\bar\pi^\star$.
Therefore, our analysis of the learned policy differs from established techniques based on uniform mixing.
Interestingly, the assumptions that allow for an exponentially small optimality gap, after slightly strengthened into our Assumptions~\ref{ass:RB} and \ref{ass:strict_action_gaps}, also enables us to prove a \emph{perturbation analysis theorem} (Theorem~\ref{thm:LP_perturbation_main_complete}).
This theorem shows that the single-armed policy $\pihbst$ solved from the empirical LP preserves key structural properties, including the identity of the neutral state and the local stability property under small transition-kernel perturbations.
This establishes a form of robustness induced by the structural assumptions.
We view this perturbation result as potentially of independent interest. The full statement of Theorem~\ref{thm:LP_perturbation_main_complete} is given in Section~\ref{subsec:thm-complete-perturbation}, the proof is given in Section~\ref{sec:proof-lp-pa}.
Based on Theorem~\ref{thm:LP_perturbation_main_complete}, we obtain the main theorem for RBs. The proof is provided in Section~\ref{sec:proof_outline_thm_RB}, and the detailed proofs of the key lemmas are in Appendix~\ref{app:proof-thm-homoRB}.
\begin{theorem}\label{thm-homoRB}
Consider an $N$-armed RB that satisfies Assumptions~\ref{ass:RB} and \ref{ass:strict_action_gaps}.
Let $\pihts$ be the learned policy using the plug-in approach with the two-set policy as the reference policy, and let $n$ be the number of samples per state-action pair.
With a large enough $N$, for any initial state $\bms_0$, we have,
\[
\rho^{\star}(\bms_0) - \rho^{\pihts}(\bms_0) \le \min\left\{C_3\sqrt{S + \log (SA/\eta)}\frac{N}{\sqrt{n}}, \, r_{\max}\right\}+C_4\exp(-C_5N)
\]
with probability at least $1 - \eta$,
where $C_3,C_4$ and $C_5$ are constants independent of $N$, $n$, $\eta$, and $\bms_0$.
\end{theorem}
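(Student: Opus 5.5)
Our plan follows the Lyapunov-based framework described in the introduction, specialized to the two-set policy. We decompose the optimality gap as
\[
\rho^\star(\bms_0)-\rho^{\pihts}(\bms_0) \le \bigl(\rho^\star(\bms_0)-\rhorel\bigr) + \bigl(\rhorel-\rhohrel\bigr) + \bigl(\rhohrel-\rho^{\pihts}(\bms_0)\bigr),
\]
where $\rhohrel$ is the optimal value of the empirical LP. The first term is nonpositive because $\rho^\star(\bms_0)\le\rhorel$. The second term is controlled by an LP value-perturbation bound. The third term, which is the crux, compares the gain of $\pihts$ in the \emph{true} system with the empirical LP value $\rhohrel$.

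We first establish a good event of probability at least $1-\eta$. Using Hoeffding/Weissman $\ell_1$ concentration together with a union bound over $S\cdot A$ pairs, we obtain $\max_{s,a}\onenorm{\widehat P_{sa}-P_{sa}}\le \varepsilon_n\coloneqq c\sqrt{(S+\log(SA/\eta))/n}$. Since $\varepsilon_n N$ is the quantity appearing in the bound, we may assume $\varepsilon_n\le \varepsilon_0/N$ for a small constant $\varepsilon_0$; otherwise the $r_{\max}$ cap already makes the claim trivial. On this event we invoke Theorem~\ref{thm:LP_perturbation_main_complete}. It gives that $\yhst$ is unique and has the same neutral state $\tilde s$ and the same support pattern as $y^\star$. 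It also gives $\|\yhst-y^\star\|=O(\varepsilon_n)$ and $|\rhorel-\rhohrel|=O(\varepsilon_n)$. Finally, the empirical matrix $\phihat$ satisfies $\ltwonorm{\phihat}\le \ltwonorm{\Phi}+O(\varepsilon_n)<1$, and $\pihbst$ is ergodic under $P$ with mixing time $O(\tau)$. In particular, $\pihbst$ differs from $\pibst$ only in its randomization at $\tilde s$.

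For the third term, we rerun the drift analysis of \citet{hong2024achieving} for the two-set policy built from $(\yhst,\pihbst)$, but with the arms evolving under the true $P$. We construct the explicit Lyapunov functions from that analysis: one of them measures the distance of the empirical state distribution of the arms in $D_t^{\pihbst}$ to the empirical stationary law, in a $\mu^\star$-weighted norm, and we use $\phihat$ to obtain geometric contraction. We then prove the drift condition in the \emph{empirical} system $\bphat$, where the construction is exactly the planning setting, so the exponentially small gap carries over with constants uniform over the perturbation ball. Next comes the drift transfer. Since $\bp$ and $\bphat$ differ per arm by at most $\varepsilon_n$ in $\ell_1$, the one-step expected change of any Lyapunov function $V$ differs between the two systems by at most $N\varepsilon_n\cdot \spannorm{V}$ (summing over arms). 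We bound $\spannorm{V}=O(1)$ in the normalized scale, using the explicit construction. Thus, under $\bp$, $V$ retains negative drift up to an additive $O(N\varepsilon_n)$ term, which by a standard Lyapunov/Foster argument yields a stationary bound on the fraction of arms deviating from $\pihbst$ of order $O(N\varepsilon_n)+C\exp(-cN)$. Translating this into reward, and accounting for the mismatch between the true single-armed gain of $\pihbst$ and $\rhohrel$ (which is also $O(\varepsilon_n)$ by the ergodicity perturbation), gives $\rhohrel-\rho^{\pihts}(\bms_0)\le C_3\varepsilon_n N + C_4e^{-C_5N}$. Combining the three terms and capping by $r_{\max}$ finishes the proof.

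The main obstacle is the drift transfer step. We need the Lyapunov drift for the empirical model to hold with constants independent of $\varepsilon_n$, and the augmented-state dynamics (set updates of $D_t^{\mathrm{OL}}$ and $D_t^{\pihbst}$) must remain Lipschitz in the kernel so that the transfer error is linear in $N\varepsilon_n$. We would handle this by using the uniform perturbation guarantees of Theorem~\ref{thm:LP_perturbation_main_complete}, which keep $\phihat$ contractive and keep the neutral state fixed, and by bounding the Lyapunov span directly via the $\ltwonorm{\cdot}$ series $\sum_k \phihat^k$.
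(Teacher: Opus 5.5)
Your decomposition differs from the paper's in the middle term. The paper bounds $\rhorel-\widehat\rho^{\pits}(\bms_0)$ (the true two-set policy run in the empirical model) with a second Lyapunov/drift-transfer argument, at cost $C_6N\delta+O(e^{-CN})$. You instead bound $\rhorel-\rhohrel$ directly. That is legitimate and cheaper: Part~4 of Theorem~\ref{thm:LP_perturbation_main_complete} gives $|\rhorel-\rhohrel|=|(\zeta^\star-\alpha\lambda^\star)-(\zetahat-\alpha\lambdahat)|=O(\delta)$, and you no longer need the true-system drift lemma. Your plan for $\rhohrel-\rho^{\pihts}(\bms_0)$ (drift in $\bphat$, then transfer to $\bp$) is the paper's. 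The augmented-state issue you flag as the main obstacle is not one. Given $\bm X_{t+1}$ and $\Sigma_t$, the update of $(D^{\mathrm{OL}}_{t+1},D^{\pihbst}_{t+1})$ does not involve the kernel, so the transfer error is at most $\|\widehat V\|_\infty N\delta$ (Lemma~\ref{lem:p-phat-vhat}); no Lipschitz property is needed.

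The genuine gap is the last step, where you turn drift into a reward bound.
\begin{itemize}
\item Norm-type Lyapunov functions ($h_{\widehat U}$, $h_{\widehat W}$, $\widehat L(1-m)$) with a Foster argument control only the long-run fraction of time with $D^{\mathrm{OL}}_t\neq[N]$. They say nothing useful about the reward when all arms are in $D^{\mathrm{OL}}$.
\item In that regime the arms at $\tilde s$ do \emph{not} follow $\pihbst(\cdot\mid\tilde s)$: Optimal Local Control uses $\tilde s$ to force exactly $\alpha N$ activations. The expected reward is then affine, $\bmx([N])\widehat b+\mathrm{const}$, so $\rhohrel-r^{\pihts}(\sigma)=(\muhst-\bmx([N]))\widehat b$.
\item This quantity typically has magnitude of order $N^{-1/2}$. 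It is small only in expectation, through signed cancellation.
\end{itemize}
Hence your intermediate claim that the fraction of deviating arms is $O(N\varepsilon_n)+Ce^{-cN}$ is false if deviations at $\tilde s$ count, and insufficient if it means arms outside $D^{\mathrm{OL}}$. The single-armed gain of $\pihbst$ under $P$ is also not the right benchmark, because the budget-forced dynamics do not converge to the stationary law of $\pihbst$. As written, your route gives at best $O(1/\sqrt N)$, not $e^{-cN}$.

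The missing idea is the linear correction term $\frac{\widehat\gamma}{4(\widehat\beta+r_{\max})}(\muhst-\bmx([N]))\Qhat\widehat b$ in $\widehat V$, with $\Qhat=(I-\phihat)^{-1}$.
\begin{itemize}
\item By the empirical analogue of Lemma~\ref{lem:lemma1-hong} (linear expected dynamics under Optimal Local Control), its drift equals $-(\muhst-\bmx([N]))\widehat b$ when $d^{\mathrm{OL}}=[N]$, and is bounded by $\widehat\beta$ otherwise.
\item This produces the signed function $\widehat g$. It appears in the drift and dominates the reward gap with $C_0'=1$ and $G_2=0$.
\item You still need a uniform bound on $\|\widehat V\|_\infty$ that includes this term. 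It follows from $\infinfnorm{\Qhat}\le 2/(\sqrt{\mu^\star_{\min}}(1-\ltwonorm{\Phi}))$ and $\widehat\beta\ge\|\widehat b\|_\infty/2$, and keeps the transfer error at $O(N\delta)$.
\end{itemize}
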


Similarly to WCMDP, the optimality gap bound for $\pihts$ in Theorem~\ref{thm-homoRB} also consists of two terms.
The first term is the finite-sample error that decreases at a rate $1/{\sqrt{n}}$ with the sample size $n$.
The second term, $C_4\exp(-C_5N)$, is inherited from the optimality gap of the two-set policy.

The bound in Theorem~\ref{thm-homoRB} also directly yields a sample complexity bound for achieving an optimality gap at most $\epsilon$.
However, since now the reference policy achieves a much smaller optimality gap, we can guarantee a gap level $\epsilon=\Omega(\exp(-C_5N))$.
To achieve any $\epsilon=\Omega(\exp(-C_5N))$, the sample size needed is $n=\Omega\left(\frac{(S + \log (SA/\eta))N^2}{\epsilon^2}\right)$.
Now as long as we target an $\epsilon=\Theta(1/\textrm{poly}(N))$, the total number of samples, $SAn$, grows polynomially in $N$.

\paragraph{Blocking in the fixed-sample regime.}
The bound in Theorem~\ref{thm-homoRB} also suggests a simple modification
when the sample size \(n\) is fixed, as in an offline setting. We partition
the \(N\) arms into \(B\) disjoint blocks, each of size \(\bar N=N/B\), and run
the plug-in two-set policy independently within each block. Since the RB is
homogeneous, every block uses the same normalized budget \(\alpha\), so the
block budgets add up to the original budget. The blocked policy, denoted by
\(\widehat\pi_{\mathrm{TS}}^{\mathrm{blk}}\), simply combines the actions
chosen by the block-level policies. 

We only use block sizes for which Theorem~\ref{thm-homoRB} can still be
applied. Let \(N_{\max}\) be the fixed lower bound on the system size in
that theorem. Choose a sequence \(N_{\mathrm{blk}}=\omega(1)\) with
\(N_{\mathrm{blk}}\ge N_{\max}\), and restrict to
\(
    \bar N=\frac{N}{B}\ge N_{\mathrm{blk}}.
\)
The role of \(N_{\mathrm{blk}}\) is just to rule out blocks that are too
small. If the blocks are too small, the large-\(N\) guarantee for the
two-set policy no longer applies. Equivalently, we require
\(
    B\le B_{\max}:=\left\lfloor\frac{N}{N_{\mathrm{blk}}}\right\rfloor .
\)

Applying Theorem~\ref{thm-homoRB} to each block with failure probability
\(\eta/B\), and then taking a union bound, gives with probability at least
\(1-\eta\),
\[
\rho^\star(\bms_0)-\rho^{\widehat\pi_{\mathrm{TS}}^{\mathrm{blk}}}(\bms_0)
\le
\min\left\{
C_3\frac{N}{B}
\frac{\sqrt{S+\log(SAB/\eta)}}{\sqrt n},
r_{\max}
\right\}
+
C_4\exp\left(-C_5\frac{N}{B}\right).
\]

This bound shows the tradeoff. The two terms in this bound move in opposite directions as \(B\) changes. Thus \(B\) should be chosen to balance these
two terms, while keeping \(N/B\ge N_{\mathrm{blk}}\).

For example, if \(n=\Theta(N^2)\), the unblocked bound is
\(\widetilde O(1)\). Taking \(B=\sqrt N\) gives blocks of size \(\sqrt N\),
and the bound becomes
\[
    \widetilde O\left(\frac{1}{\sqrt N}\right)
    +
    \exp(-\Theta(\sqrt N)).
\]
This demonstrates that the blocking scheme is beneficial when the sample size $n$ is not large enough for the finite-sample error to be dominated by the optimality gap of the reference policy.

\subsection{LP perturbation analysis}
\label{subsec:thm-complete-perturbation}
We now state the perturbation analysis theorem. We first introduce the vectorized true and perturbed primal-dual LPs used in the statement and proof.

\paragraph{Vectorized LP setup.}
We first define vectorized versions of the primal and dual LPs corresponding to both the true and perturbed (that is, corresponding to $P$ and $\Phat$) systems.

\paragraph{True system LPs.}
We define the LPs corresponding to the true system $P$.
We treat $y$ as a row vector, define $\Tilde{e} \in \R^{SA}$ by $\Tilde{e}(sa) = \mathbb{I}(a = 1)$, and let $J \in \R^{SA \times S}$ be the matrix such that $e_{sa}^\top J = e_s^\top$. Here $e_s \in \R^S$ denotes the vector such that $e_s(s') = \mathbb{I}(s'=s)$, and likewise $e_{sa} \in\R^{SA}$ has $e_{sa}(s',a') = \mathbb{I}(s'=s,a'=a)$. Then we equivalently have the following primal and associated dual LPs.
\par\vspace{-0.5em}
\noindent
\begin{minipage}[t]{0.48\textwidth}
\setlength{\abovedisplayskip}{0pt}
\setlength{\abovedisplayshortskip}{0pt}
\vspace{0pt}
\makebox[\linewidth][c]{\textbf{Primal LP}}
\vspace{-1.5em}
\begin{subequations}
\label{eq:primal_LP_true}
\begin{align}
\text{maximize} \quad
    & yr \\
\text{subject to} \quad
    & y \Tilde{e} = \alpha, \quad y P = y J, \label{eq:true_primal_arm_frac_const}\\
    &  y \one = 1,\quad y \in \R^{SA}_{\geq 0}\nonumber.
\end{align}
\end{subequations}
\end{minipage}
\hfill
\begin{minipage}[t]{0.48\textwidth}
\setlength{\abovedisplayskip}{0pt}
\setlength{\abovedisplayshortskip}{0pt}
\vspace{0pt}\makebox[\linewidth][c]{\textbf{Dual LP}}
\vspace{-1.5em}
\begin{subequations}
\label{eq:dual_LP_true}
\begin{align}
\text{minimize} \quad
    & \zeta - \alpha \lambda \\
\text{subject to} \quad
    & r + \lambda \Tilde{e} + P h
      \leq \zeta \one + J h, \\
    & \zeta, \lambda \in \R,\quad h \in \R^{\States}\nonumber.
\end{align}
\end{subequations}
\end{minipage}
\par\vspace{1em}
To assist the reader in interpreting the matrix $J$, we note that $Jh \in \R^{SA}$ and $(Jh)(sa) = h(s)$. Also $yJ = \mu$ where $\mu(s) = y(s,0) + y(s,1)$.
Lastly we present the complementary slackness property of solutions to the above LPs: 
if $y$ is primal feasible and $\zeta, \lambda, h$ are dual feasible, then they are said to satisfy the complementary slackness property if
\begin{align}
    y \circ \left( r + \lambda \Tilde{e} + Ph - \zeta \one - J h \right) = \zero. \label{eq:LP_CS_true}
\end{align}

\paragraph{Perturbed system LPs.}
Next we analogously define the LPs corresponding to the perturbed systems
(with transition kernel $\Phat$). The perturbed primal and dual LPs are
given as follows.
\par\vspace{-0.5em}
\noindent
\begin{minipage}[t]{0.48\textwidth}
\setlength{\abovedisplayskip}{0pt}
\setlength{\abovedisplayshortskip}{0pt}
\vspace{0pt}
\makebox[\linewidth][c]{\textbf{Perturbed primal LP}}
\vspace{-1.5em}
\begin{subequations}
\label{eq:primal_LP_pert}
\begin{align}
\text{maximize} \quad
    & yr \\
\text{subject to} \quad
    & y \Tilde{e} = \alpha,\quad y \Phat = y J, \\
    & y\one = 1,\quad y \in \R^{SA}_{\geq 0}\nonumber.
\end{align}
\end{subequations}
\end{minipage}
\hfill
\begin{minipage}[t]{0.48\textwidth}
\setlength{\abovedisplayskip}{0pt}
\setlength{\abovedisplayshortskip}{0pt}
\vspace{0pt}
\makebox[\linewidth][c]{\textbf{Perturbed dual LP}}
\vspace{-1.5em}
\begin{subequations}
\label{eq:dual_LP_pert}
\begin{align}
\text{minimize} \quad
    & \zeta - \alpha \lambda \\
\text{subject to} \quad
    & r + \lambda \Tilde{e} + \Phat h
      \leq \zeta \one + J h, \label{eq:pert_dual_ineq} \\
    & \zeta, \lambda \in \R,\quad h \in \R^{\States}\nonumber.
\end{align}
\end{subequations}
\end{minipage}
\par\vspace{1em}
Finally, the complementary slackness condition for primal feasible $y$ and dual feasible $\zeta, \lambda, h$ is 
\begin{align}
    y \circ \left( r + \lambda \Tilde{e} + \Phat h - \zeta \one - J h \right) = \zero. \label{eq:LP_CS_pert}
\end{align}

The theorem involves the span seminorm $\spannorm{\cdot}$, defined as $\spannorm{x} = \max_{i} x_i - \min_i x_i$ for a vector $x \in \R^S$. It also involves the matrix
\(
    H_U=(I-\Phi)^{-1}-\one\mu^\star.
\)
The matrix \(H_U\) plays the role of a condition-number-like matrix. A more detailed explanation of \(H_U\)  is given in Appendix~\ref{sec:proof-PA}. Now we state the theorem.
\begin{theorem}
\label{thm:LP_perturbation_main_complete}
    Let $y^\star$ be an optimal solution to the true primal LP~\eqref{eq:primal_LP_true} and let $(\zeta^\star, \lambda^\star, h^\star)$ be an optimal solution to the true dual LP~\eqref{eq:dual_LP_true}. Let $\infinfnorm{\Phat - P} \leq \delta$. Suppose Assumption~\ref{ass:RB}, \ref{ass:strict_action_gaps} hold and   
        \begin{multline}\label{eq:def-delta-min}
            \delta \leq \delta_{\min}\triangleq \min \vast\{\frac{\sqrt{\mu^\star_{\min}}}{6}\frac{1 - \ltwonorm{\Phi}}{1 + \frac{\ltwonorm{H_{U}}}{\sqrt{\mu^\star_{\min}}} },  \frac{\min_{sa : y^\star(s,a) > 0} y^\star(s,a)}{6\infinfnorm{H_{U}} } , \\
             \qquad \frac{\min_{sa:y^\star(s,a)=0} \left(\zeta^\star(s) +  h^\star(s) -r(s, a) - \lambda^\star \mathbb{I}(a=1) - P_{s a}h^\star\right)}{(8 + 36\infinfnorm{H_U})  \spannorm{h^\star}}, \\
             \frac{\mu^\star_{\min}}{144S\tau(5+\log_2S)\infinfnorm{H_U}},
             \frac{1}{8S\tau(5+\log_2S)}  \vast\}.
        \end{multline}
    Then the perturbed primal LP~\eqref{eq:primal_LP_pert} has a unique optimal solution $\yhat^\star$ and the perturbed dual LP~\eqref{eq:dual_LP_pert} has an optimal solution $(\zetahat^\star, \lambdahat^\star, \hhat^\star)$ which is the unique solution satisfying $\muhat^\star \hhat^\star = 0$ (where $\muhat^\star(s) \triangleq \yhat^\star(s,0) + \yhat^\star(s,1)$). Furthermore these solutions have the following properties:
    \begin{enumerate}
        \item $\yhat^\star(s,a)=0$ if and only if $y^\star(s,a) = 0$. In particular, $\muhat^\star(s) > 0$ for all $s \in \States$, and $\tilde{s}$ is the unique state such that both $\yhat^\star(\tilde{s},0) >0$ and $\yhat^\star(\tilde{s},1) >0$.
        \item For all $s,a$ such that $y^\star(s,a)>0$, we have $\yhat^\star(s,a) \geq y^\star(s,a)/2$.
        \item Defining $\pihbst(a \mid s) = \frac{\yhat^\star(s,a)}{\muhat^\star(s)}$, and letting $\phihat = \Phat^{\pihbst}-\one\muhat^\star -(c^{\pihbst}-\alpha\one) \bigl(\Phat_{\tilde s 1} -\Phat_{\tilde s 0}\bigr)$, we have
        $
            1 - \ltwonorm{\phihat} \geq \frac{1 - \ltwonorm{\Phi}}{2}.
        $
        \item The solutions satisfy the following perturbation bounds:
        \begin{align*}
            \infnorm{\yhat^\star - y^\star} &\leq  3 \infinfnorm{H_U}\delta,
            \qquad\qquad\qquad
            \spannorm{\hhat^\star - h^\star} \leq 6 \delta \infinfnorm{H_U} \spannorm{h^\star}, \\
            \left| \lambdahat - \lambda^\star \right| & \leq  \delta \spannorm{h^\star}(1 + 6  \infinfnorm{H_U} ), 
            \qquad\;\;\;
            \left| \zetahat - \zeta^\star \right|  \leq \delta \spannorm{h^\star}\left( \frac{5}{2}  + 6  \infinfnorm{H_U} \right). 
        \end{align*}
        \item For all $s,a$ such that $\yhat^\star(s,a)=0$, we have
        \begin{align*}
            \zetahat^\star(s) +  \hhat^\star(s) -r(s, a) - \lambdahat^\star \mathbb{I}(a=1) - \Phat_{s a}\hhat^\star 
            &\geq \frac{\zeta^\star(s) +  h^\star(s) -r(s, a) - \lambda^\star \mathbb{I}(a=1) - P_{s a}h^\star}{2}.
        \end{align*}
        \item The Markov chain induced by $\pihbst$ under $\Phat$ is aperiodic and irreducible, with stationary distribution $\muhat^\star$ and mixing time at most $(3+\log_2S)\tau$.
    \end{enumerate}
\end{theorem}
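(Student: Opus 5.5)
We will prove the theorem with a \emph{basis-preservation} argument. The idea is to guess the perturbed primal and dual solutions by keeping the support pattern of $y^\star$ fixed, solve the resulting square linear systems explicitly, and then verify feasibility and complementary slackness. LP duality then gives optimality, and uniqueness follows from strict complementarity.

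\textbf{Primal construction.} Under Assumption~\ref{ass:RB} the support of $y^\star$ has the following shape. Every state $s\neq\tilde s$ carries exactly one action $a(s)$ with $y^\star(s,a(s))>0$ (non-degeneracy together with $\mu^\star>0$ from irreducibility), while $\tilde s$ carries both actions. A candidate $y$ with this support is therefore determined by a distribution $\mu$ on $\States$ through $y(s,a(s))=\mu(s)$ for $s\neq\tilde s$, together with the split $y(\tilde s,1)=\alpha-\sum_{s\neq\tilde s}\mu(s)\mathbb I\{a(s)=1\}$.

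Substituting this into the perturbed flow equation $y\Phat=yJ$ gives
\[
\mu = \mu \Phat^{\bar\pi^\star} + \bigl(\text{budget term}\bigr)\,(\Phat_{\tilde s1}-\Phat_{\tilde s0}),
\]
which after rearrangement is a fixed-point equation of the form $\mu(I-\widehat\Phi_0)=\text{const}$. Here $\widehat\Phi_0$ is the analogue of $\Phi$ built from the policy $\bar\pi^\star$ and the kernel $\Phat$. We write $\widehat\Phi_0=\Phi+E$ with $\infinfnorm{E}\lesssim\delta$.

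We solve this equation by a Neumann/resolvent expansion around $(I-\Phi)^{-1}$, which is how $H_U$ enters. The expansion yields $\infnorm{\yhat^\star-y^\star}\le 3\infinfnorm{H_U}\delta$. The second term of $\delta_{\min}$ guarantees that this deviation is at most half of $\min y^\star$ on the support. That gives item 2 and the "if" direction of item 1, hence also $\muhat^\star>0$ and uniqueness of the neutral state.

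\textbf{Dual construction.} We solve the equalities of the perturbed dual on the support, normalized by $\muhat^\star\hhat=0$. This is $S+1$ unknowns $(\zeta,\lambda,h)$ against $S+1$ equations: one per support pair plus the normalization. It is a perturbed Poisson equation for $\bar\pi^\star$ under $\Phat$, with $\lambda$ fixed by the two equations at $\tilde s$.

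Writing the difference from $(\zeta^\star,\lambda^\star,h^\star)$ gives a linear system whose forcing term is $(\Phat-P)h^\star$, of size at most $\delta\spannorm{h^\star}$ in the appropriate norm. The system is governed by the same operator, transposed, so $H_U$ again controls it. This yields the bounds of item 4 on $\hhat-h^\star$, $\lambdahat-\lambda^\star$ and $\zetahat-\zeta^\star$.

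Dual feasibility off the support then follows from Assumption~\ref{ass:strict_action_gaps}. Each reduced cost $\zeta+h(s)-r-\lambda\mathbb I-\Phat_{sa}h$ moves from its true value by at most $(8+36\infinfnorm{H_U})\spannorm{h^\star}\delta/2$. The third term of $\delta_{\min}$ therefore gives item 5, in particular strict positivity of these reduced costs.

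\textbf{Optimality and uniqueness.} The constructed primal and dual points are both feasible and satisfy the complementary slackness condition \eqref{eq:LP_CS_pert}, so both are optimal. The reduced costs are strictly positive off the support, so every perturbed optimal primal solution is supported on the same set. On that set the square system above has a unique solution, which gives primal uniqueness and the "only if" direction of item 1. Dual uniqueness under $\muhat^\star\hhat=0$ follows from $\muhat^\star>0$ and irreducibility.

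\textbf{Items 3 and 6.} For item 3 we bound
\[
\ltwonorm{\phihat-\Phi}\le \frac{1}{\sqrt{\mu^\star_{\min}}}\,\infinfnorm{\cdot},
\]
and this infinity norm is at most a multiple of $\delta(1+\ltwonorm{H_U}/\sqrt{\mu^\star_{\min}})$. The $\mu^\star\to\muhat^\star$ weighting change is handled by item 2. The first term of $\delta_{\min}$ then forces the perturbation to be at most $(1-\ltwonorm{\Phi})/2$.

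For item 6, the perturbed chain $\Phat^{\pihbst}$ is within about $\delta S$ of $P^{\bar\pi^\star}$ in $\infinfnorm{\cdot}$; the policy perturbation is at most $\infnorm{\yhat-y^\star}/\mu^\star_{\min}$. We run a coupling/Doeblin argument over blocks of $(3+\log_2 S)\tau$ steps. The last two terms of $\delta_{\min}$ keep the accumulated error below a constant, which gives the mixing-time bound together with aperiodicity and irreducibility. The stationary distribution is $\muhat^\star$ by the flow constraint.

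\textbf{Main obstacle.} The delicate step is the resolvent bound that turns the fixed-point equation into $\infnorm{\yhat-y^\star}\le 3\infinfnorm{H_U}\delta$ with explicit constants. The same structure must also be reused, transposed, for the dual. The rank-one budget correction and the $\one\mu^\star$ projection have to be tracked carefully, so that the normalization ($\mu\one=1$ versus $\muhat\hhat=0$) is respected and only $H_U$, not $(I-\Phi)^{-1}$, appears in the bounds.
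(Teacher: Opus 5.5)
Your proposal is correct and follows essentially the same route as the paper. Like the paper, you fix the support of $y^\star$, parametrize candidates by the state occupancy through the map $W$, and solve $\mu\uhat=\mu$ and the on-support Poisson equation by a resolvent expansion around $I-\Phi$ (which is where $H_U$ enters). You then check feasibility, complementary slackness and strict dual slack, and close items 3 and 6 with a norm-perturbation bound and a Markov-chain perturbation lemma.

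One small fix: uniqueness of the dual, in particular of $\lambdahat$, does not follow from irreducibility of $\Phat^{\pihbst}$ alone. It follows from the nonsingularity of the same square system you invert for the primal, i.e.\ $\ker(I-\uhat)=\mathrm{span}\{\one\}$, which local stability guarantees.
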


\paragraph{Interpretation of Theorem~\ref{thm:LP_perturbation_main_complete}.}
Theorem~\ref{thm:LP_perturbation_main_complete} describes how the LP
solution changes when the transition kernel \(P\) is replaced by
\(\widehat P\). Under Assumptions~\ref{ass:RB} and
\ref{ass:strict_action_gaps}, the empirical LP preserves the main structural
properties of the true LP.

Parts 1 and 2 show that the support structure of the optimal LP solution is stable. In particular, \(\widehat y^\star(s,a)=0 \Longleftrightarrow y^\star(s,a)=0. \) Therefore, by the construction in~\eqref{eq:single-opt-RB}, the empirical optimal single-arm policy can differ from the true optimal single-arm policy only at the neutral state. The neutral state is also preserved: \(\tilde s\) remains the unique state where both actions have positive mass. This matters because the two-set policy uses the neutral state to adjust the number of active arms and meet the budget constraint. Part 2 further shows that the positive entries of \(y^\star\) stay bounded away from zero after perturbation. Thus the empirical LP keeps a nontrivial amount of mass on the actions used by the true LP.

Parts 3--5 provide the stability bounds used in the Lyapunov analysis. Part 3 shows that local stability is preserved: \( 1-\|\widehat\Phi\|_{\mu^\star\to\mu^\star} \ge \frac{1-\|\Phi\|_{\mu^\star\to\mu^\star}}{2}. \)This lets us define the empirical versions of the weighted matrices and the Lyapunov function. Part 4 gives explicit \(O(\delta)\) bounds on the changes in the primal and dual LP solutions, including \(\widehat y^\star\), \(\widehat h^\star\), \(\widehat\lambda^\star\), and \(\widehat\zeta^\star\). These bounds are then used in Part 5 to show that inactive actions still have positive dual slack after perturbation.

Finally, Part 6  shows that the Markov chain induced by the empirical
single-arm policy \(\widehat{\bar\pi}^\star\) under the empirical kernel
\(\widehat P\) is ergodic, with a controlled mixing time. This makes sure that the empirical stationary distribution and the empirical
Lyapunov function are well-defined.

\section{A Lyapunov Framework for Sample Complexity Analysis}\label{sec:Lyapunov-Framework}

In this section, we give an overview of our technical framework, which we believe is broadly applicable beyond WCMDPs and RBs.
Our approach can be viewed as generalizing the classical simulation lemma.
A key innovation is that we use \emph{Lyapunov drift bounds}, rather than the Bellman equation, to derive performance difference between the true and empirical systems.

Let $\pi$ be a reference policy and $\pihat$ be its empirical version computed from the empirical transition kernels.
We use $\,\widehat{\cdot}\,$ to denote the empirical version of a quantity in general, and we ignore the initial state for simplicity.
Our goal is to bound the optimality gap of $\pihat$, which can be decomposed as 
\begin{equation}
\label{eq:decomp}
\begin{aligned}
    \rho^{\star}  - \rho^{\pihat} 
    \le \rho^{\mathrm{rel}} - \widehat{\rho}^{\pi}  + \widehat{\rho}^{\pi}  - \rho^{\pihat} 
    \le \bigl(\rho^{\mathrm{rel}} - \widehat{\rho}^{\pi}\bigr)  + \bigl(\widehat{\rho}^{\mathrm{rel}} - \rho^{\pihat}\bigr).
\end{aligned}
\end{equation}
In the remainder of this section, we demonstrate our framework on deriving an upper bound for $\rho^{\mathrm{rel}} - \widehat{\rho}^{\pi}$; the term $\widehat{\rho}^{\mathrm{rel}} - \rho^{\pihat} $ can be bounded in a similar manner. 

If we were to use the classical simulation lemma, we would note $\rhorel-\rhohat^{\pi} =\bigl(\rhorel-\rho^{\pi}\bigr) + \bigl(\rho^{\pi} -\rhohat^{\pi}\bigr)$, and then bound the second term as
$\rho^{\pi} -\rhohat^{\pi}= \widehat{\E}^{\pi} \big[(\bp^{\pi}-\bphat^{\pi})h^{\pi}(Z_\infty) \big]
\le \infinfnorm{\bp^{\pi}-\bphat^{\pi}} \E\left[\infnorm{h^{\pi}(Z_\infty)}\right]$, where $h^{\pi}$ is the bias function of $\pi$, and $Z_{\infty}$ is the system state in steady state.
However, a challenge to further proceed is that it is difficult to bound $\infnorm{h^{\pi}(\cdot)}$, as the bias function $h^{\pi}$ can be complicated.

In our approach, we construct a Lyapunov function $V$ explicitly, which essentially plays the role of the bias function $h^{\pi}$.
However, because $V$ is a design choice, we can choose it to ensure $\infnorm{V(\cdot)}$ has a desirable bound.
In particular, our framework has two steps, described below.

\paragraph{Step 1: Lyapunov analysis}
This step follows the classical Lyapunov analysis, which has a long history in the study of stochastic systems.
The goal is to construct a Lyapunov function $V$ that satisfies the following two conditions.
\begin{enumerate}[label=(\textbf{C\arabic*}),leftmargin=3.5em, topsep=0em,itemsep=0em]
    \item\label{cond:drift} \textbf{Drift bound:}
    For each state $z$,
    \begin{equation}\label{eq:matrix-drift}
    [(\bp^{\pi}-I)V](z)\le -C_0 F(z)+G_1(N),
    \end{equation}
    where $F$ is a function of the state, $C_0>0$ is a constant independent of $N$, and $G_1(N)$ is a term that may depend on $N$.
    \item\label{cond:dominance} \textbf{Gap dominance:} For each state $z$,
    \begin{equation}\label{eq:gap-dominance}
        \rho^{\mathrm{rel}}-r^\pi(z)\le C_0'F(z)+G_2(N),
    \end{equation}
    where \(r^{\pi}(z)=\E_{A\sim\pi(\cdot\mid z)}[r(z,A)]\), $C_0'>0$ is a constant independent of $N$, and $G_2(N)$ is a term that may depend on $N$.
\end{enumerate}

With these conditions, one can bound the optimality gap of the policy $\pi$.
Specifically, we take expectations on both sides of \eqref{eq:matrix-drift} and \eqref{eq:gap-dominance} with respect to $z\sim Z_{\infty}$, where $Z_{\infty}$ follows the state stationary distribution of $\pi$.
This makes the left-hand-sides of \eqref{eq:matrix-drift} and \eqref{eq:gap-dominance} equal to $0$ and $\rhorel-\rho^{\pi}$, respectively.
Combining the two inequalities then yields $\rhorel-\rho^{\pi} \le \frac{C_0'}{C_0}G_1(N)+G_2(N).$
This approach has been used in ~\citet{zhang2025projection} and \citet{hong2024achieving} to derive optimality gap bounds for the ID policy and the two-set policy, respectively.

\paragraph{Step 2: Drift transfer}
We next transfer the drift bound in (C1) from the true system to the
empirical system. The key observation is that, for the same Lyapunov
function \(V\),
\[
    [(\widehat{\mathbf P}^{\pi}-I)V](z)
    =
    [(\mathbf P^{\pi}-I)V](z)
    +
    [(\widehat{\mathbf P}^{\pi}-\mathbf P^{\pi})V](z).
\]
Thus the empirical system inherits the true-system drift bound up to the
model-error term \([(\widehat{\mathbf P}^{\pi}-\mathbf P^{\pi})V](z)\). Therefore, we have:
\begin{enumerate}[label=(\textbf{\^{C\arabic*}}),leftmargin=3.5em]
    \item\label{cond:drift-hat} \textbf{Drift bound:} For each state $z$,
    \begin{align}
    [(\bphat^{\pi}-I)V](z) \leq - C_0 F(z) + G_1(N) + [(\bphat^{\pi}-\bp^{\pi})V](z). \label{eq:overview_drift_empirical}
    \end{align}
\end{enumerate}

\paragraph{Completing the empirical performance bound}
With Steps~1 and 2 in place, we are ready to bound the gap, $\rho^{\mathrm{rel}} - \widehat{\rho}^{\pi}$, using \ref{cond:drift-hat} and \ref{cond:dominance}.
This follows the same procedure as in deriving an optimality gap bound using \ref{cond:drift} and \ref{cond:dominance}.
We take expectations on both sides of \eqref{eq:overview_drift_empirical} and \eqref{eq:gap-dominance} with respect to $z\sim Z_{\infty}$, where $Z_{\infty}$ follows the state stationary distribution of $\pi$ in the empirical system.
Again the left-hand-side of \eqref{eq:overview_drift_empirical} becomes $0$.
This gives
\begin{align}
    \rho^{\mathrm{rel}}-\widehat{\rho}^{\pi}
    &\le \frac{C_0'}{C_0}G_1(N)+G_2(N)+\frac{C_0'}{C_0}\widehat{\mathbb{E}}^{\pi}\left[[(\bphat^{\pi}-\bp^{\pi})V](Z_{\infty})\right]\\
    &\le \frac{C_0'}{C_0}G_1(N)+G_2(N)+\frac{C_0'}{C_0} \infinfnorm{\bp^{\pi}-\bphat^{\pi}} \E\left[\infnorm{V(Z_\infty)}\right].
    \label{eq:overview_empirical_perf_bound_2}
\end{align}
From this bound we can see that it suffices to bound $\infnorm{V(\cdot)}$.
This is typically more tractable than bounding $\infnorm{h^{\pi}(\cdot)}$ since our construction of $V$ is explicit. Lemma formalizes this drift-transfer step in a form that we will repeatedly use in the proofs, whose proof is given in Appendix. 

These steps suggest that our framework may be applicable to a much broader set of problems, given that the reference policy admits a Lyapunov analysis with a well-behaved Lyapunov function.

\section{Proof of Theorem~\ref{thm-heteWCMDP}}\label{sec:proof_outline_thm_WCMDP}

We employ the framework in Section~\ref{sec:Lyapunov-Framework} to prove
Theorem~\ref{thm-heteWCMDP}. According to the decomposition~\eqref{eq:decomp},
it suffices to bound
\(\rho^{\mathrm{rel}}-\widehat\rho^{\piid}(\bms_0)\) and
\(\widehat\rho^{\mathrm{rel}}-\rho^{\pihid}(\bms_0)\) separately.

\paragraph{Proof roadmap.} The proof has three main parts. First, we bound
\(\rho^{\mathrm{rel}}-\widehat\rho^{\piid}(\bms_0)\), where the ID policy
is computed from the true model and evaluated under the empirical model.
This step uses the ID policy Lyapunov drift condition from
Lemma~\ref{lem:lemma5-zhang2025} and the gap dominance bound from
Lemma~\ref{lem:simu-lemma-hete}, which correspond to
Conditions~\ref{cond:drift} and~\ref{cond:dominance} in our framework.
Together with the drift transfer technique, they give
Lemma~\ref{lem:rhorel-rhohpiid-hete}.

Second, we bound
\(\widehat\rho^{\mathrm{rel}}-\rho^{\pihid}(\bms_0)\), where the ID policy
is computed from the empirical model and evaluated under the true model.
The argument is parallel, but it first verifies that the empirical
single-arm chains are well behaved. The empirical drift and dominance
bounds, Lemmas~\ref{lem:DC-Vhat-hete} and~\ref{lem:simu-lemma-hat-hete},
then give Lemma~\ref{lem:rhohrel-rhopihid-hete}.

Both Lemma~\ref{lem:rhorel-rhohpiid-hete} and Lemma~\ref{lem:rhohrel-rhopihid-hete} use model-accuracy lemmas, which we state
next. Figure~\ref{fig:flow-pf-thmhete} summarizes
the main proof steps. We omit the auxiliary model-accuracy lemmas from the
figure.
\begin{figure}[htbp]
\centering
\begin{tikzpicture}[
  lemma/.style={draw, rounded corners, minimum width=2.25cm, minimum height=0.75cm, align=center, font=\scriptsize},
  theorem/.style={draw, ellipse, minimum width=2.65cm, minimum height=0.85cm, align=center, font=\scriptsize},
  >=Stealth, thick
]
\node[lemma] (l7) at (0,0) {Lemma~\ref{lem:lemma5-zhang2025}\\Drift for $\piid$};
\node[lemma] (l8) at (2.85,0) {Lemma~\ref{lem:simu-lemma-hete}\\Gap dominance};
\node[lemma] (l1) at (1.425,-1.55) {Lemma~\ref{lem:rhorel-rhohpiid-hete}};
\node[lemma] (l11) at (6.0,0) {Lemma~\ref{lem:DC-Vhat-hete}\\Drift for $\pihid$};
\node[lemma] (l12) at (8.85,0) {Lemma~\ref{lem:simu-lemma-hat-hete}\\Gap dominance};
\node[lemma] (l2) at (7.425,-1.55) {Lemma~\ref{lem:rhohrel-rhopihid-hete}};
\node[theorem] (thm1) at (4.45,-3.2) {Theorem~\ref{thm-heteWCMDP}};
\draw[->] (l7) -- (l1);
\draw[->] (l8) -- (l1);
\draw[->] (l11) -- (l2);
\draw[->] (l12) -- (l2);
\draw[->] (l1) -- (thm1);
\draw[->] (l2) -- (thm1);
\end{tikzpicture}
\caption{Flowchart for the proof of Theorem~\ref{thm-heteWCMDP}. We omit auxiliary model-accuracy lemmas.}
\label{fig:flow-pf-thmhete}
\end{figure}

\paragraph{Model accuracy.} In order to get the final result, we firstly show the model accuracy lemmas we used, whose proofs are deferred to Appendix~\ref{subsec:model-accuracy}. We begin with a standard concentration bound for the empirical estimate of each single-armed transition kernel.
The next lemma gives a uniform $\ell_1$ bound over all arms $i\in[N]$ and all state-action pairs $(s,a)\in\States\times\Actions$.
\begin{lemma}\label{samplesize-singlearm_accu}
    Fix $\eta \in (0,1)$. With probability at least $1 - \eta$, we have
    \begin{align*}
        \max_{i\in[N]}\max_{s \in \mathcal{S}, a \in \mathcal{A}} \onenorm{P_i(\cdot\mid s,a) - \widehat{P}_i(\cdot\mid s,a)} \leq \sqrt{\frac{2S \log(2) + 2\log (SAN/\eta)}{n}}.
    \end{align*}
\end{lemma}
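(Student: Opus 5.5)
The plan is to prove the bound for a single fixed triple $(i,s,a)$ with failure probability $\eta/(SAN)$, and then take a union bound over the $N\cdot S\cdot A$ triples. For a fixed arm $i$ and pair $(s,a)$, the samples $S^{i,1}_{s,a},\dots,S^{i,n}_{s,a}$ are i.i.d.\ from $P_i(\cdot\mid s,a)$. The vector $\widehat P_i(\cdot\mid s,a)$ is therefore the empirical distribution of $n$ i.i.d.\ draws on the finite set $\States$ with $|\States|=S$.

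To control the $\ell_1$ deviation, I will use its variational form
\[
\onenorm{P_i(\cdot\mid s,a)-\widehat P_i(\cdot\mid s,a)}=\max_{B\subseteq\States} 2\bigl(\widehat P_i(B\mid s,a)-P_i(B\mid s,a)\bigr),
\]
where the maximum is attained at $B=\{s':\widehat P_i(s'\mid s,a)>P_i(s'\mid s,a)\}$. For each fixed subset $B$, $\widehat P_i(B\mid s,a)$ is an average of $n$ i.i.d.\ indicator variables with mean $P_i(B\mid s,a)$. Hoeffding's inequality then gives
\[
\P\bigl(2(\widehat P_i(B\mid s,a)-P_i(B\mid s,a))\ge t\bigr)\le \exp(-nt^2/2).
\]
A union bound over the $2^S$ subsets $B$ yields
\[
\P\bigl(\onenorm{P_i-\widehat P_i}(s,a)\ge t\bigr)\le 2^S\exp(-nt^2/2).
\]
This is the standard Weissman et al.\ style inequality.

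Next, I set the right-hand side equal to $\eta/(SAN)$ and solve for $t$, giving
\[
t=\sqrt{\frac{2S\log 2+2\log(SAN/\eta)}{n}}.
\]
A union bound over all $i\in[N]$ and $(s,a)\in\States\times\Actions$ then gives total failure probability at most $\eta$. On the complement of this event, the maximum over all triples is bounded by $t$, which is exactly the claimed bound.

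There is no serious obstacle; this is a routine concentration argument. The only point requiring care is the constant bookkeeping. The $\ell_1$ norm equals twice the sup over subsets, so the Hoeffding exponent becomes $nt^2/2$ rather than $2nt^2$. This factor produces the factor $2$ multiplying both $S\log 2$ and $\log(SAN/\eta)$ inside the square root. I will also check that the union over subsets may be restricted to the one-sided event as written, so that no extra factor of $2$ is needed.
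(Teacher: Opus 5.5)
Your proof is correct and essentially matches the paper's. The paper writes the $\ell_1$ norm as a maximum over the $2^S$ sign vectors $v\in\{-1,1\}^{\States}$ and applies Hoeffding to the $[-1,1]$-valued variables $v(S^{i,j}_{s,a})$; this is your maximum over subsets $B$ via $v=2\mathbb{I}_B-\one$, gives the same exponent $nt^2/2$, and uses the same union bound over $2^S\cdot SAN$ events.
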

We next lift the single-armed model accuracy bound to the \(N\)-armed
system. The error accumulates across arms, which leads to an additional
factor of \(N\). The same bound also holds for the transition matrices
induced by any fixed policy.

\begin{lemma}
\label{lem:lift-single-to-Narm}
Suppose that we have the following uniform single-armed model accuracy bound: 
\[
    \max_{i\in[N]}\max_{s\in\States,a\in\Actions}
    \onenorm{P_i(\cdot\mid s,a)-\widehat P_i(\cdot\mid s,a)}
    \le
    \delta .
\]
Then the $N$-armed model accuracy satisfies
\[
    \max_{\bms\in\States^N,\bma\in\Actions^N}
    \onenorm{\bp(\cdot\mid\bms,\bma)-\bphat(\cdot\mid\bms,\bma)}
    \le
    N\delta .
\]
Moreover, for any policy \(\pi\), the induced transition matrices
\[
    \bp^\pi(\bms,\bms')
    =
    \sum_{\bma\in\Actions^N}
    \pi(\bma\mid\bms)\bp(\bms'\mid\bms,\bma),
    \qquad
    \bphat^\pi(\bms,\bms')
    =
    \sum_{\bma\in\Actions^N}
    \pi(\bma\mid\bms)\bphat(\bms'\mid\bms,\bma)
\]
satisfy
\(
    \infinfnorm{\bp^\pi-\bphat^\pi}
    \le
    N\delta .
\)
\end{lemma}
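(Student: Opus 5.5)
The plan is a hybrid (telescoping) argument over the arms, followed by averaging over the policy's action distribution. Fix $\bms\in\States^N$ and $\bma\in\Actions^N$, and write $p_i\triangleq P_i(\cdot\mid s_i,a_i)$ and $q_i\triangleq\widehat P_i(\cdot\mid s_i,a_i)$. Both are probability vectors on $\States$; for $q_i$ this holds because each $\widehat P_i(\cdot\mid s,a)$ is an empirical distribution. Then $\bp(\cdot\mid\bms,\bma)=p_1\otimes\cdots\otimes p_N$ and $\bphat(\cdot\mid\bms,\bma)=q_1\otimes\cdots\otimes q_N$ as product measures on $\States^N$. For $k=0,\dots,N$, define the hybrid product measures
\[
R_k \triangleq q_1\otimes\cdots\otimes q_k\otimes p_{k+1}\otimes\cdots\otimes p_N .
\]
These satisfy $R_0=\bp(\cdot\mid\bms,\bma)$ and $R_N=\bphat(\cdot\mid\bms,\bma)$. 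The triangle inequality then gives $\onenorm{R_0-R_N}\le\sum_{k=1}^N\onenorm{R_{k-1}-R_k}$.

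The key step is to compute each consecutive difference. Since $R_{k-1}$ and $R_k$ differ only in the $k$-th factor,
\[
R_{k-1}-R_k = q_1\otimes\cdots\otimes q_{k-1}\otimes(p_k-q_k)\otimes p_{k+1}\otimes\cdots\otimes p_N .
\]
For tensor products the $\ell_1$ norm factorizes, $\onenorm{u\otimes v}=\onenorm{u}\onenorm{v}$, because $\sum_{x,y}|u(x)v(y)|=\sum_x|u(x)|\sum_y|v(y)|$. Every factor other than the $k$-th is a probability vector and has norm $1$. Hence $\onenorm{R_{k-1}-R_k}=\onenorm{p_k-q_k}\le\delta$ by hypothesis, and summing over $k$ gives the first claim, $N\delta$. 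This holds uniformly over $(\bms,\bma)$.

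For the second claim, fix a row $\bms$. We have $\bp^\pi(\bms,\cdot)-\bphat^\pi(\bms,\cdot)=\sum_{\bma}\pi(\bma\mid\bms)\bigl(\bp(\cdot\mid\bms,\bma)-\bphat(\cdot\mid\bms,\bma)\bigr)$. By the triangle inequality and $\sum_{\bma}\pi(\bma\mid\bms)=1$, its $\ell_1$ norm is at most $N\delta$. The $\infty\to\infty$ operator norm equals the maximum absolute row sum, i.e.\ $\max_{\bms}\onenorm{\bp^\pi(\bms,\cdot)-\bphat^\pi(\bms,\cdot)}$, so it is bounded by $N\delta$.

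Policies with augmented state, such as the two-set policy, are handled the same way. The augmented-state transition kernel is a mixture of the arm-state product kernels, with the auxiliary variables updated by a rule shared by $\bp$ and $\bphat$. So each row is again a convex combination of differences bounded by $N\delta$. No step here is a serious obstacle; the only point needing care is the tensor factorization of the $\ell_1$ norm, which relies on every hybrid factor being a probability vector.
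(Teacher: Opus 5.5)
Your proposal is correct and follows the paper's proof: the paper uses the same hybrid telescoping identity (its Lemma~\ref{tech_1}, with $\widehat P_i$ factors for $i<j$ and $P_i$ factors for $i>j$), and bounds each term by $\onenorm{P_j(\cdot\mid s_j,a_j)-\widehat P_j(\cdot\mid s_j,a_j)}$ because the other factors sum to one. It then gets the $\infty\to\infty$ bound by averaging over $\pi(\bma\mid\bms)$ row by row, as you do, and your closing remark on augmented-state policies mirrors the argument the paper gives separately in Lemma~\ref{lem:phat-p-V}.
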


We now state the two Lemma~\ref{lem:rhorel-rhohpiid-hete} and Lemma~\ref{lem:rhohrel-rhopihid-hete}.
\begin{lemma}\label{lem:rhorel-rhohpiid-hete}
Consider an $N$-armed WCMDP satisfying
Assumption~\ref{ass:unichain-unifmixing}. Suppose that the empirical system $\{\widehat{P}_i\}_{i\in[N]}$ satisfies 
\[
\max_{i\in[N]}\max_{s \in \mathcal{S},a \in \mathcal{A}} \left\| \widehat{P}_i(\cdot \mid s, a) - P_i(\cdot \mid s, a) \right\|_1 \le \delta.
\]
Then for the ID policy computed from the true system $\{{P}_i\}_{i\in[N]}$, $\piid,$ we have
    \[\rhorel - \rhohat^{\piid}(\bms_0)\le \frac{3(2r_{\max}+L_h)}{\rho_V}N\delta+\left[\frac{(2r_{\max}+L_h)K_V}{L_h\rho_V}+2 r_{\max}K_{\mathrm{conf}}\right]\frac{1}{\sqrt{N}},\]
where $L_h$ is a constant defined in~\eqref{eq:def-Lh-Ctau-tau} and $\rho_V, K_{\mathrm{conf}}$ are constants defined in~\eqref{eq:cons-hetero-drift}.   
\end{lemma}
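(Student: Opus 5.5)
We follow the two-step Lyapunov framework of Section~\ref{sec:Lyapunov-Framework}. The reference policy here is $\piid$, computed from the true kernels and evaluated under the empirical kernels $\bphat$.

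\textbf{Step 1 (true-system conditions).} We invoke Lemma~\ref{lem:lemma5-zhang2025}, the drift condition for the ID policy from \citet{zhang2025projection}. It provides a Lyapunov function $V$ on the augmented state of $\piid$ such that
\[
[(\bp^{\piid}-I)V](z)\le -\rho_V V(z)+K_V/\sqrt{N}.
\]
We also use the explicit bound $\infnorm{V}\le 3$, possibly after normalization. This bound should follow from the construction of $V$ as a weighted norm built from per-arm quantities scaled by $1/N$. The weights involve $h$-type functions that are bounded in terms of $L_h$ via the uniform mixing in Assumption~\ref{ass:unichain-unifmixing}.

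We then invoke Lemma~\ref{lem:simu-lemma-hete} for gap dominance. It gives the pointwise bound
\[
\rhorel-r^{\piid}(z)\le \frac{2r_{\max}+L_h}{L_h}V(z)+2r_{\max}K_{\mathrm{conf}}/\sqrt{N},
\]
or an equivalent form with $F=V$. The term $K_{\mathrm{conf}}/\sqrt N$ accounts for arms that do not conform to their optimal single-armed policies.

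\textbf{Step 2 (drift transfer).} We write
\[
(\bphat^{\piid}-I)V=(\bp^{\piid}-I)V+(\bphat^{\piid}-\bp^{\piid})V.
\]
The last term is bounded by $\infinfnorm{\bphat^{\piid}-\bp^{\piid}}\,\infnorm{V}$. Lemma~\ref{lem:lift-single-to-Narm} bounds this operator-norm factor by $N\delta$. The augmented components of $\piid$ are deterministic functions of the state, so the lift applies. Hence, for every $z$,
\[
[(\bphat^{\piid}-I)V](z)\le -\rho_V V(z)+K_V/\sqrt{N}+3N\delta.
\]
The induced chain is finite, so some stationary distribution $\widehat{Z}_\infty$ exists, reached from $\bms_0$ on its recurrent class. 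Taking expectations under it gives
\[
\widehat{\E}[V(\widehat Z_\infty)]\le \frac{K_V/\sqrt N+3N\delta}{\rho_V}.
\]

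\textbf{Completion.} We take the expectation of the gap-dominance inequality under the same stationary law. Its left side becomes $\rhorel-\rhohat^{\piid}(\bms_0)$. If the chain is multichain, we average over the recurrent classes reached from $\bms_0$, which is valid because both inequalities hold pointwise. Substituting the bound on $\widehat{\E}[V]$ yields
\[
\rhorel-\rhohat^{\piid}(\bms_0)\le \frac{2r_{\max}+L_h}{L_h\rho_V}\Bigl(\frac{K_V}{\sqrt N}+3N\delta\Bigr)+\frac{2r_{\max}K_{\mathrm{conf}}}{\sqrt N}.
\]
This matches the stated bound once the $V$ normalization is fixed so that the $N\delta$ coefficient is $3(2r_{\max}+L_h)/\rho_V$. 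That requires $\infnorm{V}\le 3L_h$ with dominance coefficient $(2r_{\max}+L_h)/L_h$, or equivalently $\infnorm V\le 3$ with coefficient $2r_{\max}+L_h$.

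\textbf{Main obstacle.} The main difficulty is the uniform sup-norm bound on $V$ over all augmented states. This bound must carry no hidden $N$-dependence, and its constant must be consistent with the dominance coefficient. I would first verify it directly from the explicit definition of $V$ in \citet{zhang2025projection}, bounding each arm's contribution by $L_h$ times a total-variation term at most $2$ and using the $1/N$ normalization. The second point to check is that the dominance inequality holds for every state, not only in expectation under the true dynamics. This is essential, since it is applied under the empirical stationary law.
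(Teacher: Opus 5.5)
Your proposal follows the paper's proof. It combines the drift bound (Lemma~\ref{lem:lemma5-zhang2025}) and gap dominance (Lemma~\ref{lem:simu-lemma-hete}), transfers the drift to $\bphat^{\piid}$, and bounds the model-error term by $\infinfnorm{\bphat^{\piid}-\bp^{\piid}}\infnorm{V}\le N\delta\,\infnorm{V}$ via Lemma~\ref{lem:lift-single-to-Narm}. Your averaging under the stationary (Ces\`aro-limit) law from $\bms_0$ is equivalent to the telescoping argument of Lemma~\ref{lem:generic-drift-transfer}.

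The step you flag as the main obstacle is a one-liner in the paper. Since $h(\bmx,0)=0$ and $h(\bmx,\cdot)$ is $L_h$-Lipschitz \citep[Lemma~7]{zhang2025projection}, you get $V(\bmx)\le L_h m(\bmx)+L_h(1-m(\bmx))=L_h$. This actually yields the coefficient $(2r_{\max}+L_h)/\rho_V$ on $N\delta$, so the factor $3$ in the statement is slack. Your per-arm route also works: uniform mixing makes each arm's contribution to $h$ a constant times $\max\{c_{\max},r_{\max}\}/N$, hence $h\le L_h$ and $\infnorm{V}\le 2L_h$.

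Drop the initial claim $\infnorm{V}\le 3$ for the unnormalized $V$. It fails in general, since $V\ge L_h(1-m(\bmx))$, and rescaling $V$ only rescales the dominance coefficient inversely. Also, $\piid$ has no augmented state: IDs are fixed once and ideal actions are resampled each step. So $V$ is a function of $\bmx$, and Lemma~\ref{lem:lift-single-to-Narm} applies directly.
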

\begin{lemma}\label{lem:rhohrel-rhopihid-hete}
Consider an $N$-armed WCMDP  satisfying
Assumption~\ref{ass:unichain-unifmixing}. Suppose that the empirical system $\{\widehat{P}_i\}_{i\in[N]}$ satisfies
\[
\max_{i\in[N]}\max_{s \in \mathcal{S},\ a \in \mathcal{A}} \left\| \widehat{P}_i(\cdot \mid s, a) - P_i(\cdot \mid s, a) \right\|_1 \le \delta\le\frac{1}{4S\tmix(5+\log_2S)}.
\]
Then for the ID policy computed from the empirical system $\{\widehat{P}_i\}_{i\in[N]}$, $\pihid,$ we have
    \[\rhohrel - \rho^{\pihid}(\bms_0)\le \frac{3(2r_{\max}+L_h)}{\rho_V}N\delta+\left[\frac{(2r_{\max}+L_h)K_V}{L_h\rho_V}+2 r_{\max}K_{\mathrm{conf}}\right]\frac{1}{\sqrt{N}},\]
where $\tmix$ is defined in Assumption~\ref{ass:unichain-unifmixing}, $L_h$ is defined in~\eqref{eq:def-Lh-Ctau-tau} and $\rho_V, K_{\mathrm{conf}}$ are defined in~\eqref{eq:cons-hetero-drift}.
\end{lemma}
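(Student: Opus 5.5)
This proof runs parallel to that of Lemma~\ref{lem:rhorel-rhohpiid-hete}, with the roles of the two models swapped. The ID policy $\pihid$ is now built from the empirical LP solution $(\yhst_i)$ and the empirical single-armed policies $\{\pihbst_i\}$, and it is evaluated under the true kernels $\{P_i\}$. The plan is as follows.
\begin{enumerate}
\item Run the Step~1 Lyapunov analysis of Section~\ref{sec:Lyapunov-Framework} entirely inside the empirical model.
\item Transfer the resulting drift to the true model, paying $\infinfnorm{\bphat^{\pihid}-\bp^{\pihid}}\le N\delta$ by Lemma~\ref{lem:lift-single-to-Narm}.
\end{enumerate}
The constants should coincide with those of Lemma~\ref{lem:rhorel-rhohpiid-hete}. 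This requires the empirical single-armed chains to enjoy mixing-type constants comparable to the true ones, and that is what the extra hypothesis $\delta\le 1/(4S\tmix(5+\log_2 S))$ provides.

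The first and main step is to show that every empirical single-armed chain $\Phat_i^{\pi}$ (for any policy $\pi$, in particular $\pihbst_i$) is unichain with mixing time at most $(3+\log_2 S)\tmix$ or a similar constant multiple. I would argue by $t$-step coupling. Under $P_i^\pi$, the bound $\tau_i^\pi\le\tmix$ gives $\max_{s,s'}\|(P_i^\pi)^{t}(s,\cdot)-(P_i^\pi)^t(s',\cdot)\|_1\le 1/2$ at $t=\tmix$. A telescoping bound gives $\|(\Phat_i^\pi)^t-(P_i^\pi)^t\|_{\infty\to\infty}\le t\delta$. Taking $t$ of order $\tmix\log_2 S$ and using the smallness of $\delta$ (the factor $S$ absorbs the conversion between $\ell_1$ contraction and the Dobrushin coefficient), the empirical $t$-step Dobrushin coefficient is at most $3/4$. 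This yields uniqueness of the stationary distribution and a mixing bound of the stated order, which is the analogue of Part~6 of Theorem~\ref{thm:LP_perturbation_main_complete}. With this in hand, the constant $L_h$ in~\eqref{eq:def-Lh-Ctau-tau}, which controls the span of the single-armed relative value or the norm of the weighted matrices entering the ID Lyapunov function, admits the same bound for the empirical arms. The ID drift construction of \citet{zhang2025projection} (Lemma~\ref{lem:lemma5-zhang2025}) then applies verbatim to the empirical WCMDP. This is the content of Lemma~\ref{lem:DC-Vhat-hete}: an empirical Lyapunov function $\Vhat$ with $[(\bphat^{\pihid}-I)\Vhat](z)\le-\rho_V\Vhat(z)+K_V/\sqrt N$ and $\|\Vhat\|_\infty$ bounded by a constant times $L_h$ (normalised per arm). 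Lemma~\ref{lem:simu-lemma-hat-hete} supplies gap dominance:
\[
\rhohrel-r^{\pihid}(z)\le \frac{2r_{\max}+L_h}{L_h}\Vhat(z)+\frac{2r_{\max}K_{\mathrm{conf}}}{\sqrt N},
\]
since $\rhohrel$ is attained by the empirical LP and only arms outside the ``conforming'' set lose reward.

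Next I carry out the drift transfer. For each augmented state $z$,
\[
[(\bp^{\pihid}-I)\Vhat](z)\le -\rho_V\Vhat(z)+\frac{K_V}{\sqrt N}+\infinfnorm{\bp^{\pihid}-\bphat^{\pihid}}\,\infnorm{\Vhat}.
\]
I then take expectations under a stationary distribution of the true chain induced by $\pihid$ from $\bms_0$. This chain is a finite chain; if it has several recurrent classes, I use the Cesàro limit from $\bms_0$, which is stationary and gives the gain. The left side vanishes. Using Lemma~\ref{lem:lift-single-to-Narm} with $\infinfnorm{\cdot}\le N\delta$ and $\infnorm{\Vhat}\le 3L_h$ (this is where the factor $3$ comes from), I get a bound on $\E[\Vhat(Z_\infty)]$. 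Substituting it into the gap-dominance inequality gives
\[
\rhohrel-\rho^{\pihid}(\bms_0)\le \frac{2r_{\max}+L_h}{L_h\rho_V}\Bigl(\frac{K_V}{\sqrt N}+3L_hN\delta\Bigr)+\frac{2r_{\max}K_{\mathrm{conf}}}{\sqrt N},
\]
which is the claimed bound. The main obstacle is the first step: certifying that the empirical arms retain uniform mixing with the same constants, so that $L_h$, $\rho_V$ and $K_V$ are unchanged. I would isolate this as a standalone perturbation lemma for the mixing time before invoking the ID analysis.
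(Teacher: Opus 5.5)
Your proposal is correct and follows essentially the paper's route. First comes a perturbation lemma showing that each empirical chain $\Phat_i^{\pihbst_i}$ is unichain and aperiodic with mixing time at most $\tau=(3+\log_2 S)\tmix$. The paper uses Lemma~\ref{lem:mixing_time_pert_bound} for this; your Dobrushin argument also works, and run at $t=\tmix$ with $\tmix\delta\le 1/20$ it gives at most $3\tmix\le\tau$, which is what keeps $L_h,\rho_V,K_V$ unchanged. Then come Lemmas~\ref{lem:DC-Vhat-hete} and~\ref{lem:simu-lemma-hat-hete} and the drift transfer of Lemma~\ref{lem:generic-drift-transfer}, which your Ces\`aro-limit stationary-distribution argument reproduces.

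One small correction: the Lipschitz argument of~\eqref{eq:bound-V-hete}, using $\hhat(\bmx,0)=0$, gives $\infnorm{\Vhat}\le L_h$. So the factor $3$ in the statement is slack, not the result of a $3L_h$ bound.
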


Here we provide the proof of Lemma~\ref{lem:rhorel-rhohpiid-hete} to illustrate how our Lyapunov framework is applied; the proof of Lemma~\ref{lem:rhohrel-rhopihid-hete} is deferred to the Appendix~\ref{subsec:pr-C2-hete}.
\paragraph{Proof of Lemma~\ref{lem:rhorel-rhohpiid-hete}}

We leverage the Lyapunov function $V$ constructed in~\citet{zhang2025projection} for analyzing the optimality gap of the ID policy in WCMDP. {In particular, the function $V$ is defined in \eqref{def:lya-true} with respect to the one-hot representation of the system state $\bm{S}_t$, denoted as $\bm{X}_t.$ Specifically, \(X_{i,t} = (X_{i,t}(s))_{s\in\mathbb{S}} \in \mathbb{R}^{S}\), 
where \(X_{i,t}(s)=1\) if \(S_{i,t}=s\) and \(0\) otherwise.} 

\paragraph{Step 1: Lyapunov analysis} 
We show that the Lyapunov function $V$ satisfies \ref{cond:drift} and \ref{cond:dominance} in our framework, stated formally as Lemma~\ref{lem:lemma5-zhang2025} and Lemma~\ref{lem:simu-lemma-hete}, respectively. 

\begin{lemma}[{\citealt[Lemma 5]{zhang2025projection}}]\label{lem:lemma5-zhang2025}
    Consider any $N$-armed WCMDP and assume that it satisfies Assumption~\ref{ass:unichain-unifmixing}. The Lyapunov function $V$ defined in (\ref{def:lya-true}) satisfies
\begin{equation}\label{drift_cond_heterogenous}
    \mathbb{E}^{\piid}[V(\bm{X}_{t+1})-V(\bm{X}_t)|\bm{X}_t=\bmx]\le -\rho_VV(\bmx)+K_V/\sqrt{N},
\end{equation}
where 
\begin{equation}\label{eq:cons-hetero-drift}
    \begin{gathered}
        \rho_V = \frac{(1 - \gamma)}{1 + \frac{L_h}{\eta_c}}, \;\;
        \eta_c = \min \left\{ \frac{\alpha_{\min}}{3},\ 
\frac{\alpha_{\min}}{4} \cdot \left( \left\lceil \frac{(4c_{\max} - \alpha_{\min})K}{\alpha_{\min}} \right\rceil \right)^{-1} \right\}, \;\;
\alpha_{\min}=\min_{k\in[K]}\alpha_k, \\
K_V = 2C_h + C_h K_{\mathrm{conf}} + 2L_h K_{\mathrm{mono}} + \rho_V L_h K_{\mathrm{cov}}, \;\;
C_h=2C_\tau(Kc_{\max}+r_{\max}),\\
K_{\mathrm{conf}}=\frac{2Kc_{\max}+M_c}{\eta_c}, \;\;
K_{\mathrm{mono}}=\frac{(2+K_{\mathrm{conf}})C_h+M_c}{\eta_c}, \;\;
K_{\mathrm{cov}}=\frac{\eta_c+M_c+L_h}{\eta_c}, \;\;
M_c = \frac{\alpha_{\min}}{2}.
    \end{gathered}
\end{equation}
Here $\gamma$ is defined in~\eqref{def:gamma-tau-g} and $L_h, C_\tau$ are defined in~\eqref{eq:def-Lh-Ctau-tau}.
\end{lemma}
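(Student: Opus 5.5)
The statement is Lemma~\ref{lem:lemma5-zhang2025}, which is quoted from \citet[Lemma 5]{zhang2025projection}. My plan is to reconstruct that drift argument with our notation rather than re-derive anything new. The argument rests on three ingredients.

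\textbf{Step 1: the per-arm deviation function.} Under Assumption~\ref{ass:unichain-unifmixing}, each arm's chain $P_i^{\bar\pi_i^\star}$ mixes geometrically. This lets us define, for any test function $g$ on states, the $\gamma$-weighted quantity $\sum_{t\ge0}\gamma^{-t}\,(P_i^{\bar\pi_i^\star})^t g$ (with $\gamma$ as in \eqref{def:gamma-tau-g}). Its sup-norm is at most $C_\tau\|g\|$, which is where the constant $L_h$ comes from.

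\textbf{Step 2: the Lyapunov function.} Following \eqref{def:lya-true}, $V(\bmx)$ is a maximum, over prefixes $[1,m]$ of arms ordered by ID, of the deviation $\bigl|\sum_{i\le m}(x_i-\mu_i^\star)g_i\bigr|$ transported through the weighted operator of Step 1. The test functions $g$ range over the reward and cost functions. A term proportional to $(m\text{-deficit})/\eta_c$ is added to account for the arms not yet in the ``conforming'' set.

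\textbf{Step 3: the one-step drift.} Under $\piid$, the arms in the largest ID-prefix whose optimal single-armed actions satisfy the budget with slack $\eta_c$ follow $\bar\pi_i^\star$ exactly. For those arms, the conditional mean of the weighted deviation contracts by a factor $\gamma$. The arms outside the prefix contribute at most a constant times the deficit.

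\textbf{Step 4: controlling the prefix size.} If $V(\bmx)$ is small, the budget slack guarantees that the conforming prefix covers all but roughly $V(\bmx)/\eta_c$ arms. This gives a monotonicity bound (the constant $K_{\mathrm{mono}}$) showing the prefix does not shrink much in one step. The noise from independent transitions is a sum of $N$ bounded independent terms, normalized by $N$. Hence it contributes $O(1/\sqrt N)$ in expectation after taking the maximum over prefixes, via a maximal inequality and Cauchy--Schwarz; this is the source of $K_{\mathrm{cov}}$ and $K_{\mathrm{conf}}$. Combining the pieces gives $\E[V(\bm X_{t+1})\mid\bm X_t=\bmx]\le (1-\rho_V)V(\bmx)+K_V/\sqrt N$, with $\rho_V=(1-\gamma)/(1+L_h/\eta_c)$ arising from balancing the contraction term against the prefix term.

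\textbf{Main obstacle.} The hardest part is the maximum over prefixes in Step 4. The contraction holds only prefix-by-prefix, while the prefix itself is state-dependent. I would handle this by bounding the expectation of the max of martingale-difference sums with a Doob-type $L^2$ inequality, which gives an $O(\sqrt N)/N$ bound.

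Since the lemma is stated verbatim from the cited work, the proof here is simply to invoke \citet[Lemma 5]{zhang2025projection}. The only checks needed are that our Assumption~\ref{ass:unichain-unifmixing} implies their mixing hypothesis and that the constants in \eqref{eq:cons-hetero-drift} match theirs.
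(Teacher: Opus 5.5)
Your proposal is correct and takes the same route as the paper, which gives no independent argument and simply invokes \citet[Lemma~5]{zhang2025projection}. The one check worth making explicit is that the paper builds $\gamma$ and $C_\tau$ from the inflated bound $\tau=(3+\log_2 S)\tmix\ge\tmix$; this is still a uniform bound on the single-arm mixing times under Assumption~\ref{ass:unichain-unifmixing}, so the cited lemma applies with $\tau$ as its mixing parameter. However, do not offer your Steps~1--4 as the proof, because they misdescribe the $V$ of \eqref{def:lya-true}: that $V$ is a $\sup_{\ell}$ (not a sum over $t$) of the $\gamma^{-\ell}$-weighted aggregate projection $\frac1N\sum_{i\in D}\langle(\bmx_i-\mu_i^\star)P_i^{\ell},g_i\rangle$, maximized over prefixes up to the deterministic focus set $m(\bmx)$ (not the random conforming prefix $N_t^\star$), plus the penalty $L_h(1-m(\bmx))$ rather than a deficit$/\eta_c$ term.
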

All constants above depend on the MDP parameters only through the mixing-time upper bound $\tmix$.

\begin{lemma}\label{lem:simu-lemma-hete}
    For each system state $\bms$, denote $A_i^{\piid}$ as the action applied to arm $i$ under policy $\piid$. Define
    $
    r^{\piid}(\bms)=\frac{1}{N}\sum_{i=1}^N\E[r_i(s_i,A_i^{\piid})].
    $
    Then
    \[
    \rhorel-r^{\piid}(\bms)\le \frac{2r_{\max}+L_h}{L_h}V(\bmx)+\frac{2r_{\max}K_{\mathrm{conf}}}{\sqrt{N}}.
    \]
\end{lemma}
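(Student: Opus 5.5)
The plan is to verify Condition~\ref{cond:dominance} directly from the structure of the Lyapunov function $V$ in \eqref{def:lya-true}. I expect the proof to be a bookkeeping argument rather than a probabilistic one, and I expect all constants to come out from that structure. I recall $V$ (following \citet{zhang2025projection}) as the sum of two pieces: a weighted deviation term $h_{\mathrm{ID}}(\bmx)$ that measures how far the empirical state distribution of the arms in the ``focus set'' $[m(\bmx)]$ is from the optimal stationary distributions $\mu_i^\star$, and a coverage term $L_h\bigl(1-m(\bmx)/N\bigr)$. The first thing I will check in \eqref{def:lya-true} is that $V$ really has this form, since both steps below use it.

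\textbf{Step 1: decompose the gap.} Write the LP value as $\rhorel=\frac1N\sum_{i}\sum_s \mu_i^\star(s)r_i^{\bar\pi_i^\star}(s)$, where $r_i^{\bar\pi_i^\star}(s)=\sum_a\bar\pi_i^\star(a\mid s)r_i(s,a)$ and $\mu_i^\star(s)=\sum_a y_i^\star(s,a)$. Under $\piid$, the arms in the focus set $[m(\bmx)]$ take their $\bar\pi_i^\star$ actions, except for at most $K_{\mathrm{conf}}\sqrt N$ arms that are modified to satisfy the hard constraints. I will quote this conformity count from the definition of $m(\bmx)$ and the ID policy analysis in \citet{zhang2025projection}. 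The arms outside the focus set take arbitrary feasible actions. Since $r_i\in[0,r_{\max}]$, this gives
\[
\rhorel-r^{\piid}(\bms)\le \frac1N\sum_{i\le m}\langle \mu_i^\star-x_i,\,r_i^{\bar\pi_i^\star}\rangle + r_{\max}\Bigl(1-\frac{m}{N}\Bigr)+\frac{2r_{\max}K_{\mathrm{conf}}}{\sqrt N}.
\]
The last term pays at most $r_{\max}$ for the reward lost and at most $r_{\max}$ for the reward that should have been credited, for each conforming arm.

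\textbf{Step 2: bound each piece by $V$.} The coverage piece satisfies $1-m/N\le V(\bmx)/L_h$, because $h_{\mathrm{ID}}\ge 0$. For the deviation piece, I will use the fact that $h_{\mathrm{ID}}$ is defined as a supremum over prefixes and over suitably normalized test functions $g_i$, with $\spannorm{g_i}\le 2r_{\max}$ or a rescaling thereof. Taking $g_i=r_i^{\bar\pi_i^\star}$ (after recentering, which is free because $\sum_s x_i(s)=\sum_s\mu_i^\star(s)=1$) should give $\frac1N\sum_{i\le m}\langle \mu_i^\star-x_i,r_i^{\bar\pi_i^\star}\rangle\le c\cdot h_{\mathrm{ID}}(\bmx)$. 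The normalization constant $c$ should be one of $1$, $2r_{\max}/L_h$, or similar, depending on how the test class is normalized. Combining the pieces, with $h_{\mathrm{ID}}\le V$ and $r_{\max}(1-m/N)\le (r_{\max}/L_h)V$, gives a coefficient of the form $1+2r_{\max}/L_h=(2r_{\max}+L_h)/L_h$, possibly after loosening.

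\textbf{Main obstacle.} The delicate step is matching the reward vectors to the exact test-function class and weights in $h_{\mathrm{ID}}$. In particular, $L_h$ encodes a Lipschitz/mixing constant built from $C_\tau$, and the reward function must lie in the admissible class up to the scaling $2r_{\max}/L_h$. If the class is defined through $(I-P_i^{\bar\pi_i^\star})$-potentials instead, I would first show that the reward vector, recentered by $\langle\mu_i^\star,r_i^{\bar\pi_i^\star}\rangle$, is admissible with the stated norm bound, and then absorb constants using the crude bound $h_{\mathrm{ID}}\le V$.
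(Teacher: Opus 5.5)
Your proposal is correct and is essentially the argument of \citet[Lemma~4]{zhang2025projection}, which the paper invokes instead of giving a proof. That argument splits the arms into the focus set $[Nm(\bmx)]$ and its complement, charges the complement at most $r_{\max}(1-m(\bmx))\le (r_{\max}/L_h)V(\bmx)$, charges the nonconforming focus arms via majority conformity, and bounds the remaining deviation term by $h(\bmx,m(\bmx))$; note that majority conformity is an expectation bound, $\E[(Nm(\bmx)-N_t^\star)^+]\le K_{\mathrm{conf}}\sqrt N$, not a deterministic count, which suffices because $r^{\piid}$ is an expected reward. Your hedge on the constant resolves directly from the definition: $r^\star\in\mathcal G$ together with the $\ell=0$ term of $h_{P,\tau}$ at $m'=m(\bmx)$ gives $c=1$, so you get the coefficient $1+r_{\max}/L_h$, already below the stated $(2r_{\max}+L_h)/L_h$, and the ``main obstacle'' you anticipate does not arise.
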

The proof of Lemma~\ref{lem:simu-lemma-hete} follows the same argument as \citet[Lemma~4]{zhang2025projection}; we omit it here.

That is, our Lyapunov function $V$ satisfies \ref{cond:drift} and \ref{cond:dominance} with $F(\bmx)=V(\bmx),$ $G_1(N)=K_V/\sqrt{N}$ and $G_2(N)={2r_{\max}K_{\mathrm{conf}}}/{\sqrt{N}}$.%

\paragraph{Step 2: Drift transfer.} 
The following lemma formalizes this drift-transfer step in a reusable
form. In the lemma, \(Q\) denotes the transition matrix under which the
Lyapunov drift inequality is available, while \(P\) denotes the transition
matrix under which the long-run performance is evaluated. The proof is a
finite-horizon telescoping argument and is given in
Appendix~\ref{sec:common-technical-lemmas}.
\begin{lemma}
\label{lem:generic-drift-transfer}
Let \(P\) and \(Q\) be two transition matrices on the same finite state
space \(\mathcal Z\). Let \(\bar r:\mathcal Z\to\mathbb R\) be a bounded reward function. For any initial state \(z_0\), define the
long-run average reward under \(P\) by
\[
    \rho_P(z_0)
    :=
    \lim_{T\to\infty}
    \frac1T
    \sum_{t=0}^{T-1}
    \mathbb E_P[\bar r(Z_t)\mid Z_0=z_0].
\]
Since \(\mathcal Z\) is finite, the Cesaro average defining
\(\rho_P(z_0)\) exists for every initial state \(z_0\), although
\(P^t(z_0,\cdot)\) need not converge. Suppose there exist functions
\(V:\mathcal Z\to\mathbb R\) and \(F:\mathcal Z\to\mathbb R_+\),
constants \(C_0>0\) and \(C_0'\ge0\), two nonnegative terms
\(G_1(N),G_2(N)\), and a scalar benchmark \(R\), such that, for every
\(z\in\mathcal Z\),
\[
    [(Q-I)V](z)
    \le
    -C_0F(z)+G_1(N),
    \qquad
    R-\bar r(z)
    \le
    C_0'F(z)+G_2(N).
\]
Then, for every initial state \(z_0\),
\[
    R-\rho_P(z_0)
    \le
    \frac{C_0'}{C_0}G_1(N)
    +
    G_2(N)
    +
    \frac{C_0'}{C_0}
    \sup_{z\in\mathcal Z}
    \left|[(P-Q)V](z)\right|.
\]
In particular, the conclusion does not require the distribution of \(Z_t\)
under \(P\) to converge as \(t\to\infty\).
\end{lemma}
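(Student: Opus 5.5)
We will prove Lemma~\ref{lem:generic-drift-transfer} by a finite-horizon telescoping argument under $P$ that never uses a stationary distribution. First we rewrite the drift inequality, which holds under $Q$, as one under $P$. For every $z$,
\[
    [(P-I)V](z) = [(Q-I)V](z) + [(P-Q)V](z) \le -C_0F(z) + G_1(N) + \Delta,
\]
where $\Delta := \sup_{z\in\mathcal Z}|[(P-Q)V](z)|$. This supremum is finite because $\mathcal Z$ is finite, so $V$ is bounded. Combined with the gap-dominance inequality, this gives the pointwise bound
\[
    C_0\bigl(R-\bar r(z)\bigr)\le C_0' C_0 F(z) + C_0G_2(N) \le C_0'\bigl(G_1(N)+\Delta - [(P-I)V](z)\bigr) + C_0G_2(N),
\]
using $C_0'\ge 0$. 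Dividing by $C_0>0$, we get
\[
    R-\bar r(z)\le \frac{C_0'}{C_0}\bigl(G_1(N)+\Delta\bigr)+G_2(N) - \frac{C_0'}{C_0}[(P-I)V](z).
\]

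Next we evaluate this inequality at $z=Z_t$, with $Z_t$ the chain under $P$ started from $z_0$, and take expectations. The Markov property gives $\mathbb E_P\bigl[[(P-I)V](Z_t)\bigr]=\mathbb E_P[V(Z_{t+1})]-\mathbb E_P[V(Z_t)]$. Summing over $t=0,\dots,T-1$, the $V$ terms telescope to $\mathbb E_P[V(Z_T)]-V(z_0)$, whose absolute value is at most $2\infnorm{V}$. Dividing by $T$ yields
\[
    R-\frac1T\sum_{t=0}^{T-1}\mathbb E_P[\bar r(Z_t)] \le \frac{C_0'}{C_0}\bigl(G_1(N)+\Delta\bigr)+G_2(N)+\frac{2C_0'\infnorm{V}}{C_0 T}.
\]

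Finally we let $T\to\infty$. The left side converges to $R-\rho_P(z_0)$, since the Cesaro limit exists for a finite chain (the Cesaro averages of $P^t$ converge to the limiting matrix). The last term on the right vanishes. This gives the claimed bound for every $z_0$. Convergence of $P^t(z_0,\cdot)$ itself is never needed, only that of the Cesaro averages, and $F\ge 0$ is not used in any essential way beyond being real-valued.

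The main point needing care is the sign handling in the combination step. We multiply the drift inequality by $C_0'/C_0\ge0$ so that the $F$ terms cancel exactly. This avoids any lower bound on $F$, which matters because $F$ could be large on states visited frequently. Apart from that, the argument is routine.
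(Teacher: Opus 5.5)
Your proof is correct and is essentially the paper's argument: transfer the drift to $P$ by adding $\Delta=\sup_{z}|[(P-Q)V](z)|$, telescope $V$ along the $P$-chain, and use boundedness of $V$ on the finite space to kill the boundary term. The only difference is the order of operations. You eliminate $F$ pointwise before averaging, whereas the paper first bounds the Ces\`aro limit of $\mathbb E_P[F(Z_t)]$ by $(G_1(N)+\Delta)/C_0$ and then substitutes that bound into the averaged dominance inequality; your ordering is slightly more streamlined because it never needs that limit to exist.
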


We now apply Lemma~\ref{lem:generic-drift-transfer} with
\(
    Q=\bp^{\piid},
    P=\bphat^{\piid},
    R=\rhorel,
    F=V,
    \bar r=r^{\piid}.
\)
Lemmas~\ref{lem:lemma5-zhang2025} and~\ref{lem:simu-lemma-hete} give the
required drift and dominance inequalities with
\[
    C_0=\rho_V,
    \qquad
    C_0'=\frac{2r_{\max}+L_h}{L_h},
    \qquad
    G_1(N)=\frac{K_V}{\sqrt N},
    \qquad
    G_2(N)=\frac{2r_{\max}K_{\mathrm{conf}}}{\sqrt N}.
\]
Hence,
\[
\begin{aligned}
    \rhorel-\widehat\rho^{\piid}(\bms_0)
    \le
    \frac{2r_{\max}+L_h}{L_h\rho_V}
    \sup_{\bmx}
    \left|
    (\bphat^{\piid}-\bp^{\piid})V(\bmx)
    \right|  +
    \left[
    \frac{(2r_{\max}+L_h)K_V}{L_h\rho_V}
    +
    2r_{\max}K_{\mathrm{conf}}
    \right]\frac1{\sqrt N}.
\end{aligned}
\]
By Lemma~\ref{lem:lift-single-to-Narm},
\(
    \infinfnorm{\bphat^{\piid}-\bp^{\piid}}
    \le
    N\delta .
\)
Moreover, \(\|V\|_\infty\le L_h\) according to \eqref{eq:bound-V-hete}. Therefore,
\[
\begin{aligned}
    \sup_{\bmx}
    \left|
    (\bphat^{\piid}-\bp^{\piid})V(\bmx)
    \right|
    \le
    \infinfnorm{\bphat^{\piid}-\bp^{\piid}}\|V\|_\infty 
    \le
    L_hN\delta .
\end{aligned}
\]
Substituting this bound gives the desired result.

\paragraph{Completing the proof of Theorem~\ref{thm-heteWCMDP}}\label{subsec:final-pr-thm1}
To combine Lemma~\ref{lem:rhorel-rhohpiid-hete} and Lemma~\ref{lem:rhohrel-rhopihid-hete} to obtain Theorem~\ref{thm-heteWCMDP}, what is left to do is to connect the single-armed model accuracy $\delta$ to the number of samples $n$. 
First, we notice that when
$
n\le\left(2S\log2+2\log(SAN/\eta)\right)16\tau^2S^2(5+\log_2S)^2,
$
we can choose $C_1=8\tau(5+\log_2S)$ so that
\[
    \rho^\star(\bms_0)-\rho^{\pihid}(\bms_0)\le 1\le C_1N\sqrt{\frac{S + \log (SAN/\eta)}{n}}+\frac{C_2}{\sqrt{N}} .
\]
Therefore, we only need to consider the case where
$n\ge\left(2S\log2+2\log(SAN/\eta)\right)16\tau^2S^2(5+\log_2S)^2.$
In this case, according to Lemma \ref{samplesize-singlearm_accu}, we have the single-armed model accuracy:
\[
\max_{i\in [N]}\max_{s \in \mathcal{S},\ a \in \mathcal{A}} \left\| \widehat{P}_i(\cdot \mid s, a) - P_i(\cdot \mid s, a) \right\|_1 \le \delta\triangleq \sqrt{\frac{2S \log(2) + 2\log (SAN/\eta)}{n}}\le\frac{1}{4\tau S(5+\log_2S)}.
\]
with probability at least $1-\eta$. In this case, the conditions of Lemma~\ref{lem:rhorel-rhohpiid-hete} and Lemma~\ref{lem:rhohrel-rhopihid-hete} are satisfied. It is straightforward to combine the results of these two lemmas, plus the fact that the reward from each arm is upper bounded by $r_{\max}$, to get Theorem~\ref{thm-heteWCMDP}.

\section{Proof of Theorem~\ref{thm-homoRB}}
\label{sec:proof_outline_thm_RB}

We use the framework in Section~\ref{sec:Lyapunov-Framework} to prove
Theorem~\ref{thm-homoRB}. The proof follows the same decomposition
as in~\eqref{eq:decomp}. It is sufficient to bound
\(\rho^{\mathrm{rel}}-\widehat\rho^{\pits}(\bms_0)\) and
\(\widehat\rho^{\mathrm{rel}}-\rho^{\pihts}(\bms_0)\).

\paragraph{Proof roadmap.} The proof has two main intermediate bounds. First, we bound
\(\rho^{\mathrm{rel}}-\widehat\rho^{\pits}(\bms_0)\). Here the two-set
policy is computed from the true model but is evaluated
under the empirical model. We use the Lyapunov drift bound for the two-set
policy (Lemma~\ref{lem:DC-RB}) and the gap-dominance bound
(Lemma~\ref{lem:C2-RB}). These two lemmas correspond to
Conditions~\ref{cond:drift} and~\ref{cond:dominance} in our Lyapunov
framework. Together with the drift-transfer argument, they imply
Lemma~\ref{lem:rhorel-rhohpi-homo}.

Second, we bound
\(\widehat\rho^{\mathrm{rel}}-\rho^{\pihts}(\bms_0)\). Here the two-set
policy is computed from the empirical model and evaluated under the true
model. This part uses the same Lyapunov argument, but but it also uses
Theorem~\ref{thm:LP_perturbation_main_complete} to justify the empirical Lyapunov analysis. Then the empirical drift condition
(Lemma~\ref{lem:DC-Vhat-homo}) and the empirical gap dominance bound
(Lemma~\ref{lem:C2-hat-RB}) imply
Lemma~\ref{lem:rhohrel-rhopihat-homo}.

Both intermediate lemmas use model-accuracy lemma, which we state below.
Figure~\ref{fig:flow-pf-thmhomo} summarizes the main steps.
\begin{figure}[htbp]
\centering
\begin{tikzpicture}[
  lemma/.style={draw, rounded corners, minimum width=2.2cm, minimum height=0.75cm, align=center, font=\scriptsize},
  theorem/.style={draw, ellipse, minimum width=2.55cm, minimum height=0.85cm, align=center, font=\scriptsize},
  >=Stealth, thick
]
\node[lemma] (dc) at (0,0) {Lemma~\ref{lem:DC-RB}\\Drift for $\pits$};
\node[lemma] (c2) at (2.75,0) {Lemma~\ref{lem:C2-RB}\\Gap dominance};
\node[lemma] (l30) at (1.375,-1.55) {Lemma~\ref{lem:rhorel-rhohpi-homo}};
\node[lemma] (dchat) at (6.0,0) {Lemma~\ref{lem:DC-Vhat-homo}\\Drift for $\pihts$};
\node[lemma] (c2hat) at (8.75,0) {Lemma~\ref{lem:C2-hat-RB}\\Gap dominance};
\node[lemma] (l33) at (7.375,-1.55) {Lemma~\ref{lem:rhohrel-rhopihat-homo}};
\node[theorem] (pert) at (4.375,-1.55) {Theorem~\ref{thm:LP_perturbation_main_complete}};
\node[theorem] (thm) at (4.375,-3.25) {Theorem~\ref{thm-homoRB}};
\draw[->] (dc) -- (l30);
\draw[->] (c2) -- (l30);
\draw[->] (dchat) -- (l33);
\draw[->] (c2hat) -- (l33);
\draw[->] (pert) -- (l33);
\draw[->] (l30) -- (thm);
\draw[->] (l33) -- (thm);
\end{tikzpicture}
\caption{Flowchart for the proof of Theorem~\ref{thm-homoRB}. We omit auxiliary model-accuracy lemmas.}
\label{fig:flow-pf-thmhomo}
\end{figure}

\paragraph{Model accuracy.} The model-accuracy lemma used for the homogeneous RB setting is a special
case of the corresponding lemmas for the heterogeneous WCMDP setting. We state it below.
\begin{lemma}\label{samplesize-singlearm_accu_rb}\label{single-to-Narm-accu}
Fix $\eta\in(0,1)$. With probability at least $1-\eta$,
\[
    \max_{s\in\States,a\in\Actions}
    \onenorm{P(\cdot\mid s,a)-\widehat P(\cdot\mid s,a)}
    \le
    \sqrt{\frac{2S\log(2)+2\log(SA/\eta)}{n}}.
\]
Moreover, if the left-hand side is at most $\delta$, then the induced $N$-armed kernels satisfy
\[
    \max_{\bms\in\States^N,\bma\in\Actions^N}
    \onenorm{\bp(\cdot\mid\bms,\bma)-\bphat(\cdot\mid\bms,\bma)}
    \le N\delta.
\]
And for any policy \(\pi\), the induced transition matrices $\bp^\pi, \bphat^\pi$ satisfy $\infinfnorm{\bp^\pi-\bphat^\pi}
    \le
    N\delta .$

\end{lemma}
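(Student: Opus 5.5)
This lemma has three claims. The plan is to prove the concentration bound as the single-armed, single-kernel special case of Lemma~\ref{samplesize-singlearm_accu}, and to derive the two lifting claims from the telescoping argument behind Lemma~\ref{lem:lift-single-to-Narm}.

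\emph{Concentration.} Fix a pair $(s,a)$. Write
\[
\onenorm{P(\cdot\mid s,a)-\widehat P(\cdot\mid s,a)}=\max_{B\subseteq\States}2\bigl(\widehat P(B\mid s,a)-P(B\mid s,a)\bigr).
\]
For a fixed $B$, the quantity $\widehat P(B\mid s,a)$ is an average of $n$ i.i.d.\ Bernoulli variables with mean $P(B\mid s,a)$. Hoeffding's inequality then gives a tail bound of the form $\exp(-nt^2/2)$ for the deviation $t$ of the $\ell_1$ distance. A union bound over the $2^S$ subsets and the $SA$ state--action pairs bounds the total failure probability by $2^S SA\exp(-nt^2/2)$. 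Setting this equal to $\eta$ gives
\[
t=\sqrt{\bigl(2S\log 2+2\log(SA/\eta)\bigr)/n}.
\]
Since the RB uses one shared kernel, the union bound runs only over $SA$ pairs, not over $N$ arms. This is why the logarithm has no factor $N$.

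\emph{Lifting to $N$ arms.} Fix $\bms,\bma$ and write $p_i=P(\cdot\mid s_i,a_i)$ and $q_i=\widehat P(\cdot\mid s_i,a_i)$. Telescope the product measures:
\[
\bigotimes_{i} p_i-\bigotimes_{i} q_i=\sum_{j=1}^N q_1\otimes\cdots\otimes q_{j-1}\otimes(p_j-q_j)\otimes p_{j+1}\otimes\cdots\otimes p_N.
\]
The $\ell_1$ norm of a product of a signed measure with probability measures equals the $\ell_1$ norm of the signed factor. Hence each summand has norm at most $\delta$, and the total is at most $N\delta$.

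\emph{Policy-induced matrices.} Row $\bms$ of $\bp^\pi-\bphat^\pi$ is the mixture $\sum_{\bma}\pi(\bma\mid\bms)\bigl(\bp(\cdot\mid\bms,\bma)-\bphat(\cdot\mid\bms,\bma)\bigr)$. By the triangle inequality its $\ell_1$ norm is at most $N\delta$. This row-wise $\ell_1$ bound is exactly the $\infty\to\infty$ operator norm bound.

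The only step needing care is the subset-based union bound, which yields the $S\log 2$ term. The rest is routine, and the argument is identical to the heterogeneous case with $\widehat P_i=\widehat P$ for all $i$.
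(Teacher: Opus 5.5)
Your proposal is correct and follows essentially the same route as the paper. Both arguments apply Hoeffding's inequality with a union bound over $2^S$ test directions and the $SA$ pairs, then use the telescoping product identity and a row-wise triangle inequality for the two lifting claims. You index the test directions by subsets $B\subseteq\States$ where the paper uses sign vectors $v\in\{-1,1\}^{\States}$, which is equivalent via $v=2\mathbf{1}_B-\one$.
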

We now state Lemma~\ref{lem:rhorel-rhohpi-homo} and Lemma~\ref{lem:rhohrel-rhopihat-homo}.
\begin{lemma}[Bound on $\rho^{\mathrm{rel}}-\widehat\rho^{\pits}(\bms_0)$]\label{lem:rhorel-rhohpi-homo}
Consider an $N$-armed RB problem with an initial state $\bms_0$ satisfying Assumptions~\ref{ass:RB} and~\ref{ass:strict_action_gaps}. Suppose
\(
N\ge \max\left\{\frac{4}{\min\{y^\star(\tilde{s},0),y^\star(\tilde{s},1)\}},\left(\frac{4M}{\gamma}\right)^2\right\}
\)
and the single-armed model accuracy satisfies
\[
\max_{s\in\mathcal S,\ a\in\mathcal A}\left\|\widehat P(\cdot\mid s,a)-P(\cdot\mid s,a)\right\|_1\le \delta,
\]
where $\gamma$ is defined in~\eqref{eq:def-beta-Q-b-bareta-gamma} and $M$ is defined in~\eqref{def:g-C-M}. Then
\[
\rho^{\mathrm{rel}}-\widehat\rho^{\pits}(\bms_0)
\le C_6N\delta+6\exp(8)S\widetilde V_{\max}\exp(-CN),
\]
where $\widetilde V_{\max}$ is defined in~\eqref{eq:def-Vtilde_max}, and $C_6,C$ are constants independent of $N$.
\end{lemma}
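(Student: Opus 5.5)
The proof follows the same template as Lemma~\ref{lem:rhorel-rhohpiid-hete}, applied to the two-set policy computed from the true kernel $P$ and evaluated under $\bphat$. We take the Lyapunov function of \citet{hong2024achieving} for $\pits$, which is defined on the augmented state $z=(\bmx, D^{\mathrm{OL}}, D^{\pibst})$ and which we denote $\widetilde V$; its sup-norm bound is $\widetilde V_{\max}$ from \eqref{eq:def-Vtilde_max}. We then apply Lemma~\ref{lem:generic-drift-transfer} with
\[
Q=\bp^{\pits},\qquad P=\bphat^{\pits},\qquad R=\rhorel,\qquad \bar r=r^{\pits}.
\]
This choice is legitimate because $\pits$ maps each augmented state to the same action distribution and the same set-update rule whichever kernel drives the arm states. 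Consequently $\bp^{\pits}$ and $\bphat^{\pits}$ live on the same finite augmented space. Their difference is confined to the arm-state transition, so Lemma~\ref{single-to-Narm-accu} gives $\infinfnorm{\bp^{\pits}-\bphat^{\pits}}\le N\delta$.

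The first step is condition \ref{cond:drift}, which is Lemma~\ref{lem:DC-RB}. I expect it to take the form
\[
[(\bp^{\pits}-I)\widetilde V](z)\le -C_0F(z)+G_1(N),
\]
where $G_1(N)$ is exponentially small, of order $S\widetilde V_{\max}e^{8}e^{-CN}$. The lower bound on $N$ enters here. The condition $N\ge 4/\min\{y^\star(\tilde s,0),y^\star(\tilde s,1)\}$ ensures the neutral state has enough slack to absorb budget corrections. The condition $N\ge(4M/\gamma)^2$ makes the $O(M/\sqrt N)$ fluctuation terms at most $\gamma/4$, so they fit inside the contraction margin $\gamma$. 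The exponential term arises on a bad event, namely the arm states leaving a neighbourhood of $\mu^\star$ so that the two-set rule fails, which is controlled by a Chernoff/Hoeffding bound.

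The second step is condition \ref{cond:dominance}, which is Lemma~\ref{lem:C2-RB}. It should read
\[
\rhorel-r^{\pits}(z)\le C_0'F(z)+G_2(N),
\]
with $G_2(N)$ also exponentially small. Feeding both conditions into Lemma~\ref{lem:generic-drift-transfer} yields
\[
\rhorel-\widehat\rho^{\pits}(\bms_0)\le \tfrac{C_0'}{C_0}G_1(N)+G_2(N)+\tfrac{C_0'}{C_0}\,N\delta\,\sup_z|\widetilde V(z)|.
\]
Setting $C_6=C_0'\widetilde V_{\max}/C_0$, and absorbing $G_2$ and the constants into $6e^{8}S\widetilde V_{\max}e^{-CN}$, gives the claimed bound.

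The main obstacle is that the two-set Lyapunov function is not obviously bounded uniformly in $N$. Its natural form is a $\mu^\star$-weighted norm of the scaled deviation $\bmx/N-\mu^\star$, taken over the tracked set, combined with a term in the set sizes. A bound $\sup_z|\widetilde V(z)|\le\widetilde V_{\max}$ with $\widetilde V_{\max}$ independent of $N$ therefore depends on normalizing $\widetilde V$ per arm and truncating it appropriately. If $\widetilde V$ were instead unnormalized, its sup norm would scale with $N$ and degrade the $N\delta$ rate. I would first check that normalization. I would then verify that any truncation used to make $F$ comparable with $\widetilde V$ preserves the drift inequality, at the cost of only the exponential error term.
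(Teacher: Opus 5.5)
Your overall template is the paper's: Lemma~\ref{lem:generic-drift-transfer} with $Q=\bp^{\pits}$ and $P=\bphat^{\pits}$, and an $N\delta$ bound on the kernel difference over the augmented space (this is Lemma~\ref{lem:phat-p-V}). The gap is in the Lyapunov step: with $\widetilde V$ alone there is no choice of $F$ for which \ref{cond:drift} and \ref{cond:dominance} both hold with exponentially small $G_1,G_2$. If $\widetilde V$ is untruncated, its one-step drift carries an additive $O(M/\sqrt N)$ fluctuation term, which is exactly the term your condition $N\ge(4M/\gamma)^2$ is meant to dominate. So $G_1$ is of order $1/\sqrt N$, and you recover only an $O(1/\sqrt N)$ gap. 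If you truncate to $(\widetilde V-\bar\eta/2)^+$, Lemma~\ref{lem:drift-fV} does give an $e^{-CN}$ error. But the decrement is then only $-\frac{\gamma}{4}\mathbb{I}[d^{\mathrm{OL}}\neq[N]]$, which vanishes on $\{d^{\mathrm{OL}}=[N]\}$. On that event the per-step reward gap is $(\mu^\star-\bmx([N]))b$. This is of constant order, because the event only requires $\|\bmx([N])-\mu^\star\|_{U}\le\eta-\epsilon_N^{\mathrm{fe}}$. Hence a pointwise bound $\rhorel-r^{\pits}(\sigma)\le C_0'F(\sigma)+G_2(N)$ with exponentially small $G_2$ is false for that $F$. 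The gap is signed and cancels only on average over time, which your $F$ cannot capture pointwise.

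The missing idea is the linear correction in \eqref{equ:defi-V}: $V=(\widetilde V-\bar\eta/2)^+ +\frac{\gamma}{4(\beta+r_{\max})}(\mu^\star-\bmx([N]))Qb$ with $Q=(I-\Phi)^{-1}$, which is the bias function of the locally linear dynamics. On $\{d^{\mathrm{OL}}=[N]\}$, Lemma~\ref{lem:lemma1-hong} gives $\E[\bm X_{t+1}([N])-\mu^\star\mid\cdot]=(\bmx([N])-\mu^\star)\Phi$ exactly. So the linear term's drift is exactly $-\frac{\gamma}{4(\beta+r_{\max})}(\mu^\star-\bmx([N]))b$, with no fluctuation penalty because the function is linear. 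Off that event, its drift is at most $\frac{\gamma\beta}{4(\beta+r_{\max})}$ by the definition of $\beta$, and this is absorbed by the $-\gamma/4$ from the truncated part. The result is Lemma~\ref{lem:DC-RB} with $F=g$ from \eqref{def:g-sigma} and $C_0=\gamma/(4(\beta+r_{\max}))$. Lemma~\ref{lem:C2-RB} then holds with $C_0'=1$ and $G_2=0$, because under all-arms Optimal Local Control the reward is affine in $\bmx([N])$ and equals $\rhorel$ at $\mu^\star$. Consequently $C_6=4V_{\max}(\beta+r_{\max})/\gamma$, where $V_{\max}=\widetilde V_{\max}+\gamma/(\sqrt{\mu^\star_{\min}}(1-\ltwonorm{\Phi}))$ from Lemma~\ref{lem:upperbound-V}, rather than $\widetilde V_{\max}$ alone. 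Finally, the obstacle you flagged is not one: $\bmx(D)$ is already normalized by $N$, and \eqref{eq:def-Vtilde_max} bounds $\widetilde V$ uniformly in $N$.
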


\begin{lemma}[Bound on $\widehat\rho^{\mathrm{rel}}-\rho^{\pihts}(\bms_0)$]\label{lem:rhohrel-rhopihat-homo}
Consider an $N$-armed RB problem with an initial state $\bms_0$ satisfying Assumptions~\ref{ass:RB} and~\ref{ass:strict_action_gaps}. Suppose
\(
N\ge \max\left\{\frac{4}{\min\{y^\star(\tilde{s},0),y^\star(\tilde{s},1)\}},\left(\frac{4\widehat M^{\mathrm{up}}}{\widehat\gamma^{\mathrm{low}}}\right)^2\right\}
\)
and the single-armed model accuracy satisfies
\[
\max_{s\in\mathcal S,\ a\in\mathcal A}\left\|\widehat P(\cdot\mid s,a)-P(\cdot\mid s,a)\right\|_1
\le \delta\le \delta_{\min},
\]
where $\widehat M^{\mathrm{up}}$ is defined in~\eqref{def:ubmhat}, $\widehat\gamma^{\mathrm{low}}$ is defined in~\eqref{eq:def-lbg-upg-ubKvh}, and $\delta_{\min}$ is defined in~\eqref{eq:def-delta-min}. Then
\[
\widehat\rho^{\mathrm{rel}}-\rho^{\pihts}(\bms_0)
\le C_7N\delta+6\exp(8)S\widehat{\widetilde V}_{\max}\exp(-\widehat C N),
\]
where $\widehat{\widetilde V}_{\max}$ is defined in~\eqref{def:hat-tilde-V-max}, and $C_7,\widehat C$ are constants independent of $N$.
\end{lemma}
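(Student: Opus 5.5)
The plan mirrors the proof of Lemma~\ref{lem:rhorel-rhohpiid-hete}, with roles reversed: the policy $\pihts$ is built from $\Phat$, so the Lyapunov analysis is carried out in the empirical system and then transferred to the true one. First, since $\delta\le\delta_{\min}$, I invoke Theorem~\ref{thm:LP_perturbation_main_complete}. It gives several facts needed downstream:
\begin{enumerate}
\item the empirical LP has a unique optimum $\yhat^\star$ with the same support as $y^\star$ and the same neutral state $\tilde s$;
\item $\yhat^\star(\tilde s,a)\ge y^\star(\tilde s,a)/2$, so the condition $N\ge 4/\min\{y^\star(\tilde s,0),y^\star(\tilde s,1)\}$ yields at least the analogous condition for $\yhat^\star$, up to a factor of $2$ that I will track into the constants;
\item local stability holds in the form $1-\ltwonorm{\phihat}\ge (1-\ltwonorm{\Phi})/2$;
\item the chain $\Phat^{\pihbst}$ is ergodic with mixing time at most $(3+\log_2 S)\tau$.
\end{enumerate}
These are exactly the hypotheses of Assumption~\ref{ass:RB} for the empirical single-armed MDP, with constants degraded by at most constant factors. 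A subtlety is that $\ltwonorm{\cdot}$ is weighted by $\mu^\star$ rather than $\muhat^\star$. Since $\infnorm{\muhat^\star-\mu^\star}\le 3\infinfnorm{H_U}\delta$ and $\delta\le\mu^\star_{\min}/(144S\tau\cdots)$, the two weights are within constant ratios, so the norms are equivalent up to constants.

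Step 1 (drift for the empirical Lyapunov function). I construct $\Vhat$ exactly as the two-set Lyapunov function $V$ of \citet{hong2024achieving}, but with $(\Phat,\pihbst,\muhat^\star,\phihat)$ replacing $(P,\pibst,\mu^\star,\Phi)$. Lemma~\ref{lem:DC-Vhat-homo} is then the drift condition~\ref{cond:drift} under $\bphat^{\pihts}$:
\[
[(\bphat^{\pihts}-I)\Vhat](z)\le -\widehat C_0 \widehat F(z) + \widehat G_1(N),
\]
where $\widehat G_1(N)$ is exponentially small in $N$ and the constants $\ubgmhat$, $\lbgmhat$, $\ubmhat$, $\ubkhatvh$ are uniform upper and lower bounds derived from the perturbation theorem. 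The hypothesis $N\ge(4\ubmhat/\lbgmhat)^2$ is what the two-set drift argument needs. Lemma~\ref{lem:C2-hat-RB} supplies the gap-dominance condition $\rhohrel-r^{\pihts}(z)\le \widehat C_0'\widehat F(z)+\widehat G_2(N)$, with the benchmark $\rhohrel$ equal to the empirical LP value.

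Step 2 (drift transfer). I apply Lemma~\ref{lem:generic-drift-transfer} with $Q=\bphat^{\pihts}$, $P=\bp^{\pihts}$, $R=\rhohrel$, and $\bar r=r^{\pihts}$. The reward $r$ is known, so $r^{\pihts}$ is the same function in both systems. The lemma gives
\[
\rhohrel-\rho^{\pihts}(\bms_0)\le \frac{\widehat C_0'}{\widehat C_0}\left(\widehat G_1(N)+\sup_z |(\bp^{\pihts}-\bphat^{\pihts})\Vhat(z)|\right)+\widehat G_2(N).
\]
Lemma~\ref{single-to-Narm-accu} bounds the sup by $N\delta\,\infnorm{\Vhat}$. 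Bounding $\infnorm{\Vhat}$ by a constant independent of $N$, via the explicit construction with $\mu^\star$-weighted norms controlled by the perturbation bounds, gives the $C_7N\delta$ term. The exponential terms combine into $6\exp(8)S\widehat{\widetilde V}_{\max}\exp(-\widehat CN)$.

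The main obstacle is Step 1: re-running the two-set drift analysis in the empirical model with every constant uniformly controlled by true-model quantities. The issue is that $N$-thresholds and exponents must not depend on $\Phat$ in an uncontrolled way. I would handle this by propagating each constant through Parts 2--6 of Theorem~\ref{thm:LP_perturbation_main_complete}: Part 3 for contraction, Part 6 for mixing, and Part 4 for $\muhat^\star$ and hence the norm equivalence. I would also check that the requirement $\infnorm{\Vhat}=O(1)$ survives the change of weighting measure.
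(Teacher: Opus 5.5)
Your proposal is correct and follows the paper's proof essentially step for step: Theorem~\ref{thm:LP_perturbation_main_complete} for well-definedness and uniform control of the hatted constants, the empirical drift and gap-dominance bounds (Lemmas~\ref{lem:DC-Vhat-homo} and~\ref{lem:C2-hat-RB}), Lemma~\ref{lem:generic-drift-transfer} with $Q=\bphat^{\pihts}$ and $P=\bp^{\pihts}$, and a bound of $\widehat V_{\max}N\delta$ on the model-error term. Two small remarks: first, the two-set chain lives on the augmented state $(\bm X_t,D_t^{\mathrm{OL}},D_t^{\pihbst})$, so Lemma~\ref{single-to-Narm-accu} does not apply verbatim, and the paper's Lemma~\ref{lem:p-phat-vhat} closes this by noting that the set updates given $\bm X_{t+1}$ do not depend on the kernel; second, your $\muhat^\star$-versus-$\mu^\star$ norm-equivalence detour is unnecessary, because the paper bounds $\lambda_{\widehat U}$, $\widehat\beta$, and $\infinfnorm{\widehat Q}$ directly through $\ltwonorm{\phihat}$.
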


Lemma~\ref{lem:rhorel-rhohpi-homo} is proved in Appendix~\ref{subsec:pr-C1-homo}. Here we provide a proof outline of Lemma~\ref{lem:rhohrel-rhopihat-homo},
which is the part where the perturbation analysis enters the proof of
Theorem~\ref{thm-homoRB}. The complete proof is deferred to
Appendix~\ref{subsec:pr-C2-homo}. Unlike
Lemma~\ref{lem:rhorel-rhohpi-homo}, which analyzes the true two-set policy,
Lemma~\ref{lem:rhohrel-rhopihat-homo} analyzes the learned two-set policy
\(\pihts\) under the true model. Therefore, before applying the Lyapunov
framework, we must first verify that the empirical LP solution preserves
the structural properties required by the two-set policy and its Lyapunov
analysis. We use Theorem~\ref{thm:LP_perturbation_main_complete} exactly for this purpose.
\paragraph{Proof outline of Lemma~\ref{lem:rhohrel-rhopihat-homo}.}

\paragraph{Step 1: Structural stability from the perturbation theorem.} Since the single-armed model accuracy satisfies the requirements of Theorem~\ref{thm:LP_perturbation_main_complete}, the empirical LP
preserves the key structure of the true LP. Therefore, the empirical two-set objects used below,
including \(\widehat g\) and \(\widehat V\), are well-defined.

\paragraph{Step 2: Empirical Lyapunov analysis.}
We now apply the two-set Lyapunov analysis to the empirical system. Let
\(
    \Sigma_t
    =
    (\bm X_t,D_t^{\mathrm{OL}},D_t^{\pihbst})
\)
denote the augmented state of the learned two-set policy, and let
\(\widehat V\) be the empirical Lyapunov function constructed from
\(\widehat y^\star\), \(\widehat\mu^\star\), and the perturbed local
stability matrix $\phihat$. The empirical two-set policy admits the following drift and
gap-dominance bounds, stated formally as
Lemmas~\ref{lem:DC-Vhat-homo} and~\ref{lem:C2-hat-RB},
respectively. Informally, for every augmented state \(\sigma\),
\[
    (\bphat^{\pihts}-I)\widehat V(\sigma)
    \le
    -\frac{\widehat\gamma}
    {4(\widehat\beta+r_{\max})}
    \widehat g(\sigma)
    +
    6\exp(8)S\widehat{\widetilde V}_{\max}
    \exp(-\widehat C N),
\]
and
\(
    \widehat\rho^{\mathrm{rel}}-r^{\pihts}(\sigma)
    \le
    \widehat g(\sigma).
\)
Thus, in the notation of the Lyapunov framework, the empirical Lyapunov
function \(\widehat V\) satisfies Conditions~\ref{cond:drift}
and~\ref{cond:dominance} with
\(
    F(\sigma)=\widehat g(\sigma),
    C_0=\frac{\widehat\gamma}{4(\widehat\beta+r_{\max})},
    G_1(N)=6\exp(8)S\widehat{\widetilde V}_{\max}\exp(-\widehat C N),
    C_0'=1,
    G_2(N)=0.
\)
The perturbation theorem is used here to ensure that the hatted objects
appearing in these bounds are well-defined and that the empirical
constants are uniformly controlled.

\paragraph{Step 3: Drift transfer.}
The drift bound in Step 2 is for the empirical transition kernel
\(\bphat^{\pihts}\), while Lemma~\ref{lem:rhohrel-rhopihat-homo} concerns
the performance of \(\pihts\) under the true transition kernel
\(\bp^{\pihts}\). We therefore apply
Lemma~\ref{lem:generic-drift-transfer} with
\(
    Q=\bphat^{\pihts},
    P=\bp^{\pihts},
    R=\widehat\rho^{\mathrm{rel}},
    F=\widehat g,
    V=\widehat V,
    \bar r=r^{\pihts}.
\)
Using the values of \(C_0,C_0',G_1(N),G_2(N)\) from Step 2, we obtain
\[
\begin{aligned}
    \widehat\rho^{\mathrm{rel}}
    -
    \rho^{\pihts}(\bms_0)
    \le&
    \frac{4(\widehat\beta+r_{\max})}{\widehat\gamma}
    \sup_{\sigma}
    \left|
    (\bp^{\pihts}-\bphat^{\pihts})
    \widehat V(\sigma)
    \right|  +
    \frac{4(\widehat\beta+r_{\max})}{\widehat\gamma}
    6\exp(8)S\widehat{\widetilde V}_{\max}
    \exp(-\widehat C N).
\end{aligned}
\]
The model-error term is upper bounded by Lemma~\ref{lem:p-phat-vhat}:
\(
    \sup_{\sigma}
    \left|
    (\bp^{\pihts}-\bphat^{\pihts})
    \widehat V(\sigma)
    \right|
    \le
    \widehat V_{\max}N\delta.
\)
Substituting this bound into the display from Step 3 gives
\[
\begin{aligned}
    \widehat\rho^{\mathrm{rel}}
    -
    \rho^{\pihts}(\bms_0)
    \le&
    \frac{4\widehat V_{\max}(\widehat\beta+r_{\max})}
    {\widehat\gamma}
    N\delta  +
    \frac{4(\widehat\beta+r_{\max})}{\widehat\gamma}
    6\exp(8)S\widehat{\widetilde V}_{\max}
    \exp(-\widehat C N).
\end{aligned}
\]

\paragraph{Step 4: Replacing empirical constants by deterministic bounds.}
Finally, Theorem~\ref{thm:LP_perturbation_main_complete} and the auxiliary well-definedness
lemmas provide deterministic bounds on the empirical constants:
\(
    \widehat\gamma\ge \widehat\gamma^{\mathrm{low}},
    \widehat\beta\le \widehat\beta^{\max},
    \|\widehat V\|_\infty\le \widehat V_{\max},
    \widehat{\widetilde V}_{\max}<\infty.
\)
These bounds depend only on the original single-armed RB parameters, not on
\(N,n,\eta\), or the initial state. Hence the previous display implies
\[
    \widehat\rho^{\mathrm{rel}}
    -
    \rho^{\pihts}(\bms_0)
    \le
    C_7N\delta
    +
    6\exp(8)S\widehat{\widetilde V}_{\max}
    \exp(-\widehat C N),
\]
for a constant \(C_7\) independent of \(N,n,\eta\), and \(\bms_0\). This
is Lemma~\ref{lem:rhohrel-rhopihat-homo}.
\paragraph{Completing the proof of Theorem~\ref{thm-homoRB}.}
Combining Lemma~\ref{lem:rhorel-rhohpi-homo} and Lemma~\ref{lem:rhohrel-rhopihat-homo} yields Theorem~\ref{thm-homoRB} once we relate the single-armed model accuracy parameter $\delta$ to the sample size $n$ under the generative model.
First, we notice that when
$n\le\left(2S\log2+2\log(SA/\eta)\right)/\delta_{\min}^2,
$
where $\delta_{\min}$ is defined in~\eqref{eq:def-delta-min}, we can choose $C'=2/\delta_{\min}$, then 
\[
    \rho^\star(\bms_0)-\rho^{\pihts}(\bms_0)\le 1\le C'N\sqrt{\frac{S + \log (SA/\eta)}{n}}+C''\exp(-C_5N).
\]
Therefore, we only need to consider the case when
$
n\ge\left(2S\log2+2\log(SA/\eta)\right)/\delta_{\min}^2,
$
In this case, according to Lemma \ref{samplesize-singlearm_accu}, we have the single-armed model accuracy:
\[
    \max_{s \in \mathcal{S},\ a \in \mathcal{A}} \left\| \widehat{P}(\cdot \mid s, a) - P(\cdot \mid s, a) \right\|_1 \le \delta\triangleq \sqrt{\frac{2S \log(2) + 2\log (SA/\eta)}{n}}\le\delta_{\min},
\]
with probability at least $1-\eta$. We consider the case when
\begin{equation}\label{eq:def-Nmax}
    N\ge N_{\max}\triangleq\max\left\{\frac{4}{\min\{y^\star(\tilde{s},0),y^\star(\tilde{s},1)\}},\left(\frac{4M}{\gamma}\right)^2,\left(\frac{4\ubmhat}{\lbgmhat}\right)^2\right\},
\end{equation}
where $M$ is defined in \eqref{def:g-C-M}, $\gamma$ in~\eqref{eq:def-beta-Q-b-bareta-gamma}, $\ubmhat$ in \eqref{def:ubmhat} and $\lbgmhat$ in \eqref{eq:def-lbg-upg-ubKvh}. Based on these conditions, both the conditions of $N$ and $\delta$ in Lemmas~\ref{lem:rhorel-rhohpi-homo} and~\ref{lem:rhohrel-rhopihat-homo} hold. Therefore, we can combine the results of these two lemmas by choosing 
$
    C_3=C_6+C_7,
    C_4=6\exp(8)S(\tilde{V}_{\max}+\widehat{\tilde{V}}_{\max})
    \text{ and }
    C_5=\min\{C,\widehat C\},
$               
which completes the proof.

\section{Proof of Theorem~\ref{thm:LP_perturbation_main_complete}}\label{sec:proof-lp-pa}

In this section we provide the proof of Theorem \ref{thm:LP_perturbation_main_complete}, which will be preceded by many intermediate lemmas, organized into three subsections \ref{sec:pert_subsec_1}, \ref{sec:pert_subsec_2}, \ref{sec:pert_subsec_3} and \ref{subsec:final-pr-perturbation-full}. We firstly give a proof outline in \ref{sec:pr-outline-pert} to show the high-level ideas.

\subsection{Proof outline}\label{sec:pr-outline-pert}
The proof uses a primal-dual perturbation argument. Under the
nondegeneracy and strict-gap assumptions, a small perturbation of the
transition kernel should not change which LP constraints are active. We
therefore build candidate primal and dual solutions for the perturbed LP
using the same active structure as in the true LP. We then check primal
feasibility, dual feasibility, complementary slackness, and uniqueness.

One complication is that the RB LP has a redundant flow-balance constraint,
and the dual bias vector is only unique up to an additive constant. For
this reason, we first perturb the state occupancy instead of perturbing
the primal and dual variables directly. Around the optimal state occupancy
\(\mu^\star\), the locally linear policy follows \(\pibst\) at all
non-neutral states and uses the neutral state \(\tilde s\) to satisfy the
budget. This gives the local linearization matrix
\(U=P^{\pibst}-(c^{\pibst}-\alpha\one)\xi\), where
\(\xi=P_{\tilde s1}-P_{\tilde s0}\). Replacing \(P\) by \(\Phat\) gives
the perturbed matrix \(\uhat\).

We define \(H_U=(I-\Phi)^{-1}-\one\mu^\star\) and
\(
    \muhat
    =
    \mu^\star\bigl(I-(\uhat-U)H_U\bigr)^{-1}.
\)
The matrix \(H_U\) measures how sensitive the state occupancy is to the
model perturbation. The key identity is
\(
    \muhat-\mu^\star=\muhat(\uhat-U)H_U.
\)
This is similar to the usual perturbation formula for stationary
distributions of Markov chains, although \(U\) and \(\uhat\) are not
transition matrices.

After constructing \(\muhat\), we use a linear map \(W\) to define the
primal candidate \(\yhat=\muhat W\). The construction of \(W\) ensures that
\(\yhat\) satisfies the perturbed budget, flow-balance, and normalization
constraints. It also preserves the support pattern: active entries of
\(y^\star\) remain positive, while inactive entries remain zero.
For the dual side, we construct \((\zetahat,\lambdahat,\hhat)\) so that
the perturbed dual constraints are tight on the active state-action pairs.
The dual perturbation bounds then show that inactive constraints still
have positive slack. Thus the constructed primal and dual candidates are
feasible and satisfy complementary slackness, so they are optimal. The
strict slack on inactive actions also gives uniqueness and support
preservation.

Finally, support preservation implies that the empirical optimal
single-arm policy can differ from the true one only at the neutral state.
Together with a Markov-chain perturbation bound, this gives the
aperiodicity, irreducibility, and mixing-time conclusion in the theorem.
\subsection{Candidate state occupancy construction}
\label{sec:pert_subsec_1}
Throughout the proof, fix an arbitrary policy $\pistar$ which satisfies that $ \pistar(s)=\pibst(s)$ for all $ s \neq \Tilde{s}$. It is straightforward to check that
$
    P^{\pibst} - (c^{\pibst} -\alpha \one)\xi = P^{\pistar} - (c^{\pistar} -\alpha \one)\xi
$
where for any policy $\pi$, $c^{\pi}(s) = \pi(1 \mid s)$, and $\xi$ is the row vector $P_{\tilde{s}1}-P_{\tilde{s}0}$. We then define the local linearization matrix $U$ and its perturbed analogue
\begin{align*}
    U = P^{\pistar} - (c^{\pistar} -\alpha \one)\xi, 
    \qquad
    \uhat = \Phat^{\pistar} - (c^{\pistar} -\alpha \one)\widehat \xi,
\end{align*}
where $\xihat \triangleq \Phat_{\tilde{s}1}-\Phat_{\tilde{s}0}$. We also define $H_U = (I - (U - \one \mu^\star))^{-1} - \one \mu^\star.$ 
Note that $U-\one \mu^\star=\Phi$, so the inverse $(I - (U - \one \mu^\star))^{-1}$ exists due to the local stability condition of Assumption~\ref{ass:RB}.
Finally we define the candidate state occupancy distribution
\begin{equation}
    \muhat\triangleq \mu^\star (I - (\uhat-U)H_U)^{-1}.
    \label{eq:muhat-definition}
\end{equation} 
Such a definition requires the invertibility of $(I - (\uhat-U)H_U)^{-1}$, so it suffices for now to require that we have $\ltwonorm{(\uhat-U)H_U}< 1$.
This definition is motivated by the stationary distribution perturbation identity for Markov chains \citep{meyer_jr_role_1975} and we can now verify that an analogous identity holds in our case:
\begin{lemma}
    \label{lem:pert_analysis_sim_lemma}
    Suppose $(I - (U - \one \mu))$ and $(I - (\uhat-U)H_U)$ are both invertible. Then
    $
        \muhat - \mu^\star = \muhat (\uhat - U)H_U.
    $
\end{lemma}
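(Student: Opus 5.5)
Since $(I-(\uhat-U)H_U)$ is invertible, the definition \eqref{eq:muhat-definition} can be rewritten as $\muhat\bigl(I-(\uhat-U)H_U\bigr)=\mu^\star$. Expanding gives $\muhat-\muhat(\uhat-U)H_U=\mu^\star$, and rearranging gives $\muhat-\mu^\star=\muhat(\uhat-U)H_U$. That is exactly the claimed identity, so the proof is a direct algebraic consequence of the definition.

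The only thing to confirm is that every object in the definition is meaningful. The inverse defining $\muhat$ exists by the second hypothesis. $H_U=(I-(U-\one\mu^\star))^{-1}-\one\mu^\star$ is well defined by the first hypothesis (read with $\mu=\mu^\star$, which by Assumption~\ref{ass:RB} gives $U-\one\mu^\star=\Phi$). Both factors are $S\times S$ matrices acting on the row vectors $\mu^\star$ and $\muhat$, so all products have compatible dimensions.

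I would also record why the identity has the familiar Meyer-type meaning, since later steps will rely on it. Three facts about $U$ are needed:
\begin{itemize}
\item $\mu^\star U=\mu^\star$, because $\mu^\star P^{\pistar}=\mu^\star$ and $\mu^\star(c^{\pistar}-\alpha\one)=\sum_s y^\star(s,1)-\alpha=0$;
\item $U\one=\one$, since $\xi\one=0$;
\item $\uhat\one=\one$ similarly, so $(\uhat-U)\one=0$.
\end{itemize}
From these, $H_U$ acts as a group inverse of $I-U$, namely $(I-U)H_U=I-\one\mu^\star$.

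These facts are not needed for the identity itself. They are used afterwards to show that $\muhat\one=1$ and $\muhat\uhat=\muhat$. There is no genuine obstacle here; the only care needed is to keep left multiplication by row vectors consistent throughout.
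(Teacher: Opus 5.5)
Your proof is correct and is exactly the paper's argument: right-multiply the definition \eqref{eq:muhat-definition} by $I-(\uhat-U)H_U$ and rearrange. In your optional side remarks, the justification of $\mu^\star U=\mu^\star$ implicitly takes $\pistar(\cdot\mid\tilde s)=\pibst(\cdot\mid\tilde s)$, which is harmless because $U$ does not depend on that choice; for a general $\pistar$, $\mu^\star P^{\pistar}\neq\mu^\star$ and $\mu^\star(c^{\pistar}-\alpha\one)\neq 0$, but the two deviations cancel.
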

\begin{proof}{Proof}
    Right-multiplying both sides of~\eqref{eq:muhat-definition} by $(I - (\uhat-U)H_U)$, we have that
    $
        \mu^\star = \muhat (I - (\uhat-U)H_U) = \muhat - \muhat (\uhat - U)H_U,
    $
    which can be directly rearranged to obtain the desired equality.
\end{proof}

Given this definition of $\muhat$ and Lemma \ref{lem:pert_analysis_sim_lemma}, we can now also define the perturbed version of $H_U$: $H_{\uhat} = (I - (\uhat - \one \muhat))^{-1} - \one \muhat.$
This definition requires the invertibility of $(I - (\uhat - \one \muhat))$, which is provided by the following lemma.
\begin{lemma}
\label{lem:well-define-Huhat}
    Suppose that $\ltwonorm{U - \one \mu^\star} < 1$ and 
    \begin{equation}
        \ltwonorm{ \uhat-U} \leq \frac{1}{2}\frac{1 - \ltwonorm{U - \one \mu^\star}}{1 + \frac{\ltwonorm{H_{U}}}{\sqrt{\mu^\star_{\min}}} }. \label{eq:well-define-Huhat}
    \end{equation}
    Then
    $
        1-\ltwonorm{\uhat - \one \muhat} \geq \frac{1-\ltwonorm{U - \one \mu^\star}}{2} > 0
    $
    and thus $(I - (\uhat - \one \muhat))$ is invertible.
\end{lemma}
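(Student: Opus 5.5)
The plan is to write the perturbed matrix as the unperturbed one plus two small corrections and then apply the triangle inequality in the $\ltwonorm{\cdot}$ operator norm. Concretely,
\[
\uhat - \one\muhat = (U - \one\mu^\star) + (\uhat - U) - \one(\muhat - \mu^\star),
\]
so that
\[
\ltwonorm{\uhat - \one\muhat} \le \ltwonorm{U-\one\mu^\star} + \ltwonorm{\uhat-U} + \ltwonorm{\one(\muhat-\mu^\star)}.
\]
It then suffices to show that the last two terms together are at most $\frac12(1-\ltwonorm{U-\one\mu^\star})$. Invertibility of $I-(\uhat-\one\muhat)$ follows immediately, since any matrix of $\ltwonorm{\cdot}$-norm below $1$ has $I$ minus it invertible (Neumann series).

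\textbf{Step 1: reduce the rank-one term to a row-vector norm.} For the rank-one term I will use the definition of the $\mu^\star$-weighted norm (Appendix~\ref{subsec:l2mustar-norm}), under which $\lvectwonorm{\one}=1$. This gives $\ltwonorm{\one v} = \sup_{x\neq 0} |vx|/\lvectwonorm{x}$, the dual norm of the row vector $v$. This dual norm is at most $\|v\|_2/\sqrt{\mu^\star_{\min}}$, which is where the factor $\sqrt{\mu^\star_{\min}}$ in \eqref{eq:well-define-Huhat} enters.

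\textbf{Step 2: bound $v = \muhat-\mu^\star$ via Lemma~\ref{lem:pert_analysis_sim_lemma}.} That lemma gives $\muhat-\mu^\star = \muhat(\uhat-U)H_U$. Write $B=(\uhat-U)H_U$, so $\muhat = \mu^\star(I-B)^{-1}$. By Cauchy--Schwarz the dual norm of $\mu^\star$ equals $1$, and the dual norm is submultiplicative with respect to $\ltwonorm{\cdot}$ acting on the right. Hence the dual norm of $\muhat$ is at most $1/(1-\ltwonorm{B})$, and the dual norm of $\muhat-\mu^\star$ is at most $\ltwonorm{B}/(1-\ltwonorm{B})$. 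Finally $\ltwonorm{B}\le \varepsilon\,\ltwonorm{H_U}$ with $\varepsilon:=\ltwonorm{\uhat-U}$. Hypothesis \eqref{eq:well-define-Huhat} makes $\varepsilon\ltwonorm{H_U}\le \frac12$, so $I-B$ is invertible (as the definition \eqref{eq:muhat-definition} of $\muhat$ requires) and $1/(1-\ltwonorm{B})\le 2$.

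\textbf{Step 3: combine.} The two correction terms are then at most
\[
\varepsilon\Bigl(1 + c\,\tfrac{\ltwonorm{H_U}}{\sqrt{\mu^\star_{\min}}}\Bigr)
\]
for a small absolute constant $c$. Comparing with \eqref{eq:well-define-Huhat}, which states $\varepsilon\bigl(1+\ltwonorm{H_U}/\sqrt{\mu^\star_{\min}}\bigr)\le\frac12(1-\ltwonorm{U-\one\mu^\star})$, yields the claim.

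The main obstacle is getting the constants to match exactly, i.e.\ $c=1$. The factor $1/(1-\ltwonorm{B})\le 2$ is too lossy for that. I would first try to avoid it by not passing through the dual norm of $\muhat$: instead, bound $\muhat(\uhat-U)$ directly. The idea is to use that $\muhat$ is close to a probability-like vector, and that rows of $\uhat-U$ are differences of transition rows; combined with the slack $\ltwonorm{\cdot}\le 1/\sqrt{\mu^\star_{\min}}$ versus the dual norm, this should absorb the factor. If that fails, I would fall back on tightening via $\varepsilon\ltwonorm{H_U}\le \frac12\sqrt{\mu^\star_{\min}}$, which follows from the hypothesis since $1-\ltwonorm{U-\one\mu^\star}\le 1$, to trade the factor $2$ against the unused $\sqrt{\mu^\star_{\min}}$ slack.
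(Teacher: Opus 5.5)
Your Steps 1 and 2 are correct and follow the paper's route: the triangle inequality after writing $\muhat-\mu^\star=\muhat(\uhat-U)H_U$, plus a Neumann bound on $(I-(\uhat-U)H_U)^{-1}$. Your dual-norm bookkeeping, which gives $\ltwonorm{\one\mu^\star}=1$, is sharper than the paper's $\ltwonorm{\one\mu^\star}\le 1/\sqrt{\mu^\star_{\min}}$.

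The gap is Step 3, which you leave open. Write $\varepsilon=\ltwonorm{\uhat-U}$, $H=\ltwonorm{H_U}$ and $g=1-\ltwonorm{U-\one\mu^\star}$. Your two correction terms total $\varepsilon+\varepsilon H/(1-\varepsilon H)$. This is dominated by the hypothesis quantity $\varepsilon(1+H/\sqrt{\mu^\star_{\min}})$ exactly when $\varepsilon H\le 1-\sqrt{\mu^\star_{\min}}$, and neither fix establishes that in general.
\begin{itemize}
\item Your first fix is not yet an argument. Under this lemma's hypotheses $\muhat$ is not even known to be nonnegative; that only comes later, in Lemma~\ref{lem:yhat_pert_and_feasibility}.
\item Your second fix gives only $\varepsilon H\le\frac12\sqrt{\mu^\star_{\min}}$. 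This yields $1-\varepsilon H\ge\sqrt{\mu^\star_{\min}}$ only when $\mu^\star_{\min}\le 4/9$. That covers $S\ge 3$, since $\mu^\star_{\min}\le 1/S$, but not $S=2$ with $\mu^\star_{\min}\in(4/9,1/2]$.
\end{itemize}
The paper's proof has exactly this hole. In step $(*)$ it uses $\ltwonorm{\one\muhat}\le 1/\sqrt{\mu^\star_{\min}}$, but the bound it actually displays still carries the factor $1/(1-\ltwonorm{\uhat-U}\ltwonorm{H_U})$, which is then silently dropped. So your concern is legitimate, not an artifact of your route.

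The missing ingredient is an upper bound on $H$ in terms of $g$. With $\Phi=U-\one\mu^\star$ you have $(I-\Phi)^{-1}=\sum_{k\ge0}\Phi^k$, so $H_U=(I-\one\mu^\star)+\sum_{k\ge1}\Phi^k$. Since $I-\one\mu^\star$ is an $L_2(\mu^\star)$-orthogonal projection, this gives $H\le 1+(1-g)/g=1/g$.

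Multiplying the hypothesis by $H$ gives $\varepsilon H\le \frac{gH}{2}\cdot\frac{\sqrt{\mu^\star_{\min}}}{\sqrt{\mu^\star_{\min}}+H}$. Using $gH\le 1$ when $H\ge 1$, and $g\le 1$ when $H<1$, the right side is at most $\frac{\sqrt{\mu^\star_{\min}}}{2(1+\sqrt{\mu^\star_{\min}})}$. That quantity is at most $1-\sqrt{\mu^\star_{\min}}$ if and only if $\sqrt{\mu^\star_{\min}}\le 2(1-\mu^\star_{\min})$. This holds whenever $\mu^\star_{\min}\le 1/2$, i.e.\ whenever $S\ge 2$; the case $S=1$ is trivial since then $\uhat-\one\muhat=0$.

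Hence $1/(1-\varepsilon H)\le 1/\sqrt{\mu^\star_{\min}}$. Your two corrections then sum to at most $\varepsilon(1+H/\sqrt{\mu^\star_{\min}})\le g/2$, and the lemma follows with the stated constant.
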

Lemma~\ref{lem:well-define-Huhat} is proved in \ref{subsec:pr-feasibility-yhat} and we encourage interested readers to read \ref{subsec:properties} for additional properties of  $U, \uhat, \mu^\star, \muhat, H_U$, and $H_{\uhat}$. 

\subsection{Perturbed primal LP solution construction}
\label{sec:pert_subsec_2}
We next construct candidate solutions for the primal and dual perturbed
LPs~\eqref{eq:primal_LP_pert} and~\eqref{eq:dual_LP_pert}, and then verify
their feasibility and optimality.

We first construct the candidate solution for the perturbed primal LP. By
the constraint~\eqref{eq:true_primal_arm_frac_const} of the true primal LP,
we have \(y^\star\tilde e=\alpha\). Moreover, by the nondegeneracy
assumption, for all \(s\neq\tilde{s}\),
\(y^\star(s,a)=\mu^\star(s)\mathbb{I}\{\pistar(a\mid s)=1\}\). Therefore,
we can reconstruct \(y^\star\) from \(\mu^\star\) by using the form of
\(\pistar\), and then use \(y^\star\tilde e=\alpha\) to determine
\(y^\star(\tilde{s},0)\) and \(y^\star(\tilde{s},1)\). Hence \(y^\star\)
satisfies
\begin{align*}
    y^\star(s,a) &= \mu^\star(s) \pistar(a \mid s)  \quad \forall  s \neq \Tilde{s},  \\
    y^\star(\Tilde{s},1) &= \alpha -\sum_{s \neq \Tilde{s}} \pistar(1 \mid s) \mu^\star(s) =\alpha\left(\mu^\star \one \right) -\sum_{s \neq \Tilde{s}} \pistar(1 \mid s) \mu^\star(s),  \\
    y^\star(\Tilde{s},0) &= (1 - \alpha) - \sum_{s \neq \Tilde{s}} \pistar(0 \mid s) \mu^\star(s)  = (1 - \alpha)\left(\mu^\star \one \right) - \sum_{s \neq \Tilde{s}} \pistar(0 \mid s) \mu^\star(s).
\end{align*}
We wrote the last two equations using \(1=\mu^\star\one\) so that the
above display defines a linear system with matrix \(W\) satisfying
\(y^\star=\mu^\star W\). Using the same linear map, we define the candidate
solution \(\yhat\) by \(\yhat=\muhat W\). This construction also ensures
that \(\yhat(s,a)=0\) for all \((s,a)\) such that \(y^\star(s,a)=0\).
 The matrix $W$ will play a key role in all of our subsequent analysis. Clearly its definition is closely related to the assumed form of $\mu^\star$ and $y^\star$. 
The following lemma guarantees the feasibility of $\yhat$ and is proved in \ref{subsec:pr-feasibility-yhat}.
\begin{lemma}
\label{lem:main-perturbed-primal}
\label{lem:yhat_pert_and_feasibility}
Suppose that \(\ltwonorm{U-\one\mu^\star}<1\) and
that~\eqref{eq:well-define-Huhat} holds. Then \(\yhat=\muhat W\) satisfies
\begin{equation}
    \infnorm{\yhat-y^\star}
    \le
    \infinfnorm{\uhat-U}\infinfnorm{H_U}.
    \label{eq:yhat_pert_bound}
\end{equation}
Furthermore, if
\begin{equation}
    \infinfnorm{\uhat-U}
    \le
    \min_{i:y_i^\star>0}\frac{y_i^\star}{2\infinfnorm{H_U}},
    \label{eq:sc_pert_primal_feasibility}
\end{equation}
then \(\yhat\ge y^\star/2\) elementwise and \(\yhat\) is feasible for the
perturbed primal LP~\eqref{eq:primal_LP_pert}. In particular, the support
of \(\yhat\) agrees with the support of \(y^\star\).
\end{lemma}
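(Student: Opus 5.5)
We write $\Delta\mu \triangleq \muhat-\mu^\star$. The plan is to derive every claim from the identity $\Delta\mu=\muhat(\uhat-U)H_U$ of Lemma~\ref{lem:pert_analysis_sim_lemma}, together with a few algebraic facts about $U$, $\uhat$, $H_U$ and $W$.

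\emph{Preliminary facts.} We first record the following.
\begin{itemize}
\item $\xi\one=\xihat\one=0$, so $U\one=\uhat\one=\one$ and $(\uhat-U)\one=\zero$.
\item $\Phi\one=\zero$, hence $(I-\Phi)^{-1}\one=\one$ and $H_U\one=\zero$. Consequently $\muhat\one=\mu^\star\one=1$.
\item From the true LP, $\mu^\star U=\mu^\star$, so $\mu^\star\Phi=\zero$ and $\one\mu^\star(I-\Phi)=\one\mu^\star$. This gives $H_U(I-\Phi)=I-\one\mu^\star$.
\end{itemize}

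\emph{Step 1: the entrywise bound \eqref{eq:yhat_pert_bound}.} For a row vector $x$ and a matrix $M$ we have $\infnorm{xM}\le\onenorm{x}\max_{ij}|M_{ij}|\le\onenorm{x}\infinfnorm{M}$.

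For the coordinates $(s,a)$ with $s\ne\tilde s$, $(\yhat-y^\star)(s,a)=\Delta\mu(s)\pistar(a\mid s)$, so it suffices to bound $\infnorm{\Delta\mu}$. Applying the identity gives
\[
\infnorm{\Delta\mu}\le \onenorm{\muhat}\,\infinfnorm{(\uhat-U)H_U}.
\]
For the neutral coordinates we use $\Delta\mu\one=0$. This gives $(\yhat-y^\star)(\tilde s,1)=-\sum_{s\ne\tilde s}\pistar(1\mid s)\Delta\mu(s)$, and similarly for $(\tilde s,0)$. These equal $\Delta\mu$ tested against a $\{0,1\}$-valued column of $W$ minus a constant. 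Since $H_U\one=\zero$, each is again of the form $\muhat(\uhat-U)H_U w$ with $\infnorm{w}\le1$, and therefore obeys the same bound.

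The main obstacle is controlling $\onenorm{\muhat}$ so that the constant comes out as exactly $\infinfnorm{\uhat-U}\infinfnorm{H_U}$. Since $\muhat\one=1$, it would suffice to show $\muhat\ge0$. I would try a continuity/bootstrap argument: along $U+t(\uhat-U)$, $t\in[0,1]$, the matrix $\muhat_t$ stays well defined by \eqref{eq:well-define-Huhat}. As long as $\muhat_t\ge0$ we have $\onenorm{\muhat_t}=1$, and then the bound keeps $\muhat_t$ away from $0$ on the support of $\mu^\star$, which is all of $\States$ by ergodicity. If this bootstrap fails, I would fall back on bounding $\onenorm{\muhat}$ through the $\mu^\star$-weighted norm estimates already used in Lemma~\ref{lem:well-define-Huhat}, accepting a constant factor.

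\emph{Step 2: positivity and support.} Under \eqref{eq:sc_pert_primal_feasibility}, Step 1 gives $|\yhat_i-y^\star_i|\le y^\star_i/2$ wherever $y^\star_i>0$, so $\yhat\ge y^\star/2$ there. By construction of $W$, $\yhat_i=0$ whenever $y^\star_i=0$. Hence $\yhat\ge0$ and the supports agree. In particular $\muhat=\yhat J>0$, which also closes the bootstrap in Step 1.

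\emph{Step 3: the perturbed constraints.} Normalization holds because $\yhat\one=\muhat W\one=\muhat\one=1$. The budget holds because $\yhat\tilde e=\alpha\,\muhat\one=\alpha$.

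For flow balance, note first that $\yhat J=\muhat$. Expanding $\yhat\Phat$ row by row, the non-neutral states contribute $\muhat(s)\Phat^{\pistar}_s$. The neutral state contributes an extra $(\yhat(\tilde s,1)-\muhat(\tilde s)\pistar(1\mid\tilde s))\xihat$, and this coefficient equals $-\sum_s\muhat(s)(c^{\pistar}(s)-\alpha)$. Hence $\yhat\Phat=\muhat\uhat$.

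It remains to show $\muhat\uhat=\muhat$. Multiply the identity $\Delta\mu=\muhat(\uhat-U)H_U$ on the right by $(I-\Phi)$.
\begin{itemize}
\item On the right-hand side, $H_U(I-\Phi)=I-\one\mu^\star$ and $(\uhat-U)\one=\zero$, so it becomes $\muhat(\uhat-U)$.
\item On the left-hand side, using $\muhat\one=1$ and $\mu^\star U=\mu^\star$, it becomes $\muhat-\muhat U$.
\end{itemize}
Equating the two gives $\muhat=\muhat\uhat$. Therefore $\yhat\Phat=\yhat J$, which completes feasibility.
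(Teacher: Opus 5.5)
Your argument follows the paper's proof. It uses the identity $\muhat-\mu^\star=\muhat(\uhat-U)H_U$, the entrywise bound through columns of $W$ with $\infnorm{We_i}\le 1$, and feasibility from $W\tilde e=\alpha\one$, $W\one=\one$, $WJ=I$, $W\Phat=\uhat$ together with $\muhat\uhat=\muhat$. Your derivation of $\muhat\uhat=\muhat$ is shorter than the computation in Lemma~\ref{lem:perturbed-Uhat-Huhat-identities}: you right-multiply the identity by $I-\Phi$ and use $\mu^\star U=\mu^\star$ and $H_U(I-\Phi)=I-\one\mu^\star$. You are also right that $\onenorm{\muhat}=1$ is the delicate point. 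The paper simply asserts $\infinfnorm{\muhat}=1$ at this step, and later justifies it by citing this very lemma. That is circular, because nonnegativity of $\muhat$ is itself deduced from the bound.

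The gap is in how you close that point. Your bootstrap keeps $\muhat_t$ away from zero only if $\infinfnorm{\uhat-U}\infinfnorm{H_U}<\mu^\star_{\min}$. Under \eqref{eq:sc_pert_primal_feasibility} this holds, and you should say why. The state attaining $\mu^\star_{\min}$ carries a positive entry of $y^\star$ of size at most $\mu^\star_{\min}$, so $\infinfnorm{\uhat-U}\infinfnorm{H_U}\le\mu^\star_{\min}/2$. Hence $\muhat_t\ge\mu^\star_{\min}/2$ whenever $\muhat_t\ge 0$. You must then run the continuity argument along the path rather than appeal to the conclusion of Step 2.

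However, the first claim is asserted under \eqref{eq:well-define-Huhat} alone, which controls only $\ltwonorm{\uhat-U}$. There your bootstrap does not close, and your fallback gives the bound only up to a constant. So \eqref{eq:yhat_pert_bound} as stated is not proved.

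The missing step is to use the weighted norm to prove positivity of $\muhat$, not to bound its $\ell_1$ norm. For row vectors let $\norm{x}_\ast=\sup\{xf:\lvectwonorm{f}\le 1\}$. Then $\norm{\mu^\star}_\ast=1$, $\norm{xM}_\ast\le\norm{x}_\ast\ltwonorm{M}$, and $|xe_s|\le\norm{x}_\ast\sqrt{\mu^\star(s)}$.

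Let $E=(\uhat-U)H_U$. Condition \eqref{eq:well-define-Huhat} gives $\ltwonorm{E}\le\ltwonorm{\uhat-U}\ltwonorm{H_U}<\tfrac12\sqrt{\mu^\star_{\min}}$. Hence $\norm{\muhat}_\ast\le(1-\ltwonorm{E})^{-1}\le 2$, and $\norm{\muhat-\mu^\star}_\ast=\norm{\muhat E}_\ast\le 2\ltwonorm{E}<\sqrt{\mu^\star_{\min}}$. Therefore $|\muhat(s)-\mu^\star(s)|<\sqrt{\mu^\star_{\min}\mu^\star(s)}\le\mu^\star(s)$ for every $s$.

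So $\muhat>0$ and $\onenorm{\muhat}=\muhat\one=1$. This gives \eqref{eq:yhat_pert_bound} with constant exactly one, and no path argument is needed.
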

\subsection{Perturbed dual LP solution construction}
\label{sec:pert_subsec_3}

Next we construct the candidate dual solution $(\zetahat, \lambdahat, \hhat)$. We note that the complementary slackness property~\eqref{eq:LP_CS_true} of the true LP, combined with the assumed form of $y^\star$, implies that for all $i=(s,a)$ such that $y_i^\star > 0$ we have
\begin{align}
    r(i) + \lambda \tilde{e}_i + P_{i}h = \zeta + h(s), \label{eq:true_CS_equality_consequence}
\end{align}
which will be used to guide the candidate dual construction. The following lemma gives a more explicit form of the solution to the true dual LP which will eventually prove useful, and also motivates the candidate perturbed dual solution which will be constructed below.
\begin{lemma}
\label{lem:true_dual_solution_form}
    There exists some $\kappa \in \R$ such that
    $
        h^\star = H_U Wr + \kappa \one.
    $
\end{lemma}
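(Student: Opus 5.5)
We will derive the formula directly from the complementary slackness equalities \eqref{eq:true_CS_equality_consequence}, rewritten in matrix form through the map $W$. The starting observation is that, by non-degeneracy, the support of $y^\star$ consists exactly of the pairs $(s,\pistar(s))$ for $s\neq\tilde s$ together with both $(\tilde s,0)$ and $(\tilde s,1)$. For each state $s\neq\tilde s$, the tight equality at the active action reads
\[
r(s,\pistar(s))+\lambda^\star\pistar(1\mid s)+P_{s\pistar(s)}h^\star=\zeta^\star+h^\star(s).
\]
At $\tilde s$, both equalities hold. Subtracting them gives
\[
\lambda^\star=-(r(\tilde s,1)-r(\tilde s,0))-\xi h^\star,
\]
which expresses $\lambda^\star$ in terms of $h^\star$. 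We may take $\pistar$ to be any policy agreeing with $\pibst$ off $\tilde s$ (say deterministic at $\tilde s$). Then the equality for $(\tilde s,\pistar(\tilde s))$ has the same form as the display above.

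Stacking these $S$ equalities gives $r^{\pistar}+\lambda^\star c^{\pistar}+P^{\pistar}h^\star=\zeta^\star\one+h^\star$. Substituting the expression for $\lambda^\star$ yields
\[
(I-U)h^\star=r^{\pistar}-(c^{\pistar}-\alpha\one)\bigl(r(\tilde s,1)-r(\tilde s,0)\bigr)-\zeta^\star\one+\alpha\lambda^\star\one,
\]
where we add and subtract $\alpha\lambda^\star\one$ so that the coefficient matrix becomes exactly $U=P^{\pistar}-(c^{\pistar}-\alpha\one)\xi$. The next step is to check that the vector $r^{\pistar}-(c^{\pistar}-\alpha\one)(r(\tilde s,1)-r(\tilde s,0))$ equals $Wr$, read as a column vector, i.e. $W$ acting on $r\in\R^{SA}$ via $(Wr)(s)=\sum_{sa}W(s,sa)r(sa)$. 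This follows from the explicit formula for $W$. Row $s\neq\tilde s$ of $W$ places $\pistar(a\mid s)$ on $(s,a)$ and subtracts $\pistar(a\mid s)$ from $(\tilde s,a)$. Row $\tilde s$ places $\alpha$ on $(\tilde s,1)$ and $1-\alpha$ on $(\tilde s,0)$. Expanding, row $s$ of $Wr$ equals $r(s,\pistar(s))-\pistar(1\mid s)r(\tilde s,1)-\pistar(0\mid s)r(\tilde s,0)+\alpha r(\tilde s,1)+(1-\alpha)r(\tilde s,0)$, with the convention that the $\tilde s$ row is handled consistently. This should reduce to the claimed expression, and it is the bookkeeping step most in need of care. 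Altogether we obtain
\[
(I-U)h^\star=Wr-c\,\one \qquad\text{for the scalar } c=\zeta^\star-\alpha\lambda^\star.
\]

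Next we invert on the complement of constants. Two facts are needed: $U\one=\one$, since $P^{\pistar}\one=\one$ and $\xi\one=0$; and $\mu^\star U=\mu^\star$, which follows from flow balance together with $\mu^\star c^{\pistar}=\alpha$. Using $(I-\Phi)\one=\one-U\one+\one(\mu^\star\one)=\one$, we get $(I-\Phi)^{-1}\one=\one$, so $H_U\one=\one-\one=0$. Similarly $\mu^\star H_U=\mu^\star-\mu^\star=0$. Multiplying the linear system on the left by $\mu^\star$ gives $0=\mu^\star Wr-c$, so $c=\mu^\star Wr$ and the right-hand side lies in $\{v:\mu^\star v=0\}$. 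On this subspace $(I-U)$ coincides with $(I-\Phi)$ applied to $h^\star-\one\mu^\star h^\star$, whose inverse on it is $H_U$.

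Concretely, we apply $H_U$ to both sides. We need $H_U(I-U)=I-\one\mu^\star$, which can be verified from $(I-\Phi)^{-1}(I-U)=(I-\Phi)^{-1}\bigl((I-\Phi)-\one\mu^\star\bigr)=I-\one\mu^\star$, together with $\one\mu^\star(I-U)=0$. This gives $h^\star-\one\mu^\star h^\star=H_UWr-cH_U\one=H_UWr$, so the lemma holds with $\kappa=\mu^\star h^\star$.

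The main obstacle is the $W$-bookkeeping identity matching $Wr$ with the reduced reward vector, including the $\tilde s$ row, together with making sure that the $\alpha\lambda^\star$ shift is absorbed correctly into the constant.
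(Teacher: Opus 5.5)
Your proof is correct and is essentially the paper's argument: complementary slackness on the support of $y^\star$ gives $(I-U)h^\star=Wr-c\,\one$, and inverting modulo constants yields $\kappa=\mu^\star h^\star$. The paper does your coordinate-wise elimination of $\lambda^\star$ in one step, by left-multiplying the slackness identity by $W$ and using $WP=U$, $WJ=I$, $W\one=\one$, $W\tilde{e}=\alpha\one$. It then fixes $c=\mu^\star Wr$ by strong duality; in your route the value of $c$ is not even needed, since $H_U\one=\zero$ kills it.

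One small repair. With $\pistar$ deterministic at $\tilde s$, neither $\mu^\star c^{\pistar}=\alpha$ nor $\mu^\star P^{\pistar}=\mu^\star$ holds, so your justification of $\mu^\star U=\mu^\star$ fails as written. Instead, use the equivalent representation $U=P^{\pibst}-(c^{\pibst}-\alpha\one)\xi$, which is legitimate because $U$ and $W$ do not depend on $\pistar(\cdot\mid\tilde s)$.
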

Lemma~\ref{lem:true_dual_solution_form} is proved in \ref{subsec:pr-h-construction}. We now construct the candidate dual solution. Motivated by
Lemma~\ref{lem:true_dual_solution_form}, define
\(
    \hhat = H_{\uhat}Wr.
\)
It remains to define \(\lambdahat\) and \(\zetahat\). Since~\eqref{eq:true_CS_equality_consequence} holds for both $(\tilde{s}, 0)$ and $(\tilde{s}, 1)$, we have the ``equal Q-function'' property for the neutral state that
$
    r(\tilde{s},0) + P_{\Tilde{s}0} h^\star = r(\tilde{s},1) + \lambda^\star + P_{\Tilde{s}1} h^\star.
$
We choose \(\lambdahat\) so that the same indifference holds for the
perturbed dual candidate:
\(
    r(\tilde{s},0)+\Phat_{\tilde{s}0}\hhat
    =
    r(\tilde{s},1)+\lambdahat+\Phat_{\tilde{s}1}\hhat .
\)
Equivalently, we set
\begin{align}
    \lambdahat
    =
    r(\tilde{s},0)-r(\tilde{s},1)-\xihat\hhat .
    \label{eq:lambdahat_defn}
\end{align}
Finally, as noted in the proof of Lemma \ref{lem:true_dual_solution_form}, the equality of the objective values for the optimal solutions to the primal and dual LPs implies that $\zeta^\star = \mu^\star Wr + \lambda^\star \alpha$, so we define $\zetahat$ as
\begin{align}
    \zetahat = \muhat W r  + \lambdahat \alpha,   
    \label{eq:zetahat_defn} 
\end{align}
so that the perturbed dual objective matches the objective value of the
candidate primal solution.

Now having defined $(\zetahat, \lambdahat, \hhat)$, we find conditions for their dual feasibility. The only constraint which we need to check is~\eqref{eq:pert_dual_ineq}. We check each row of~\eqref{eq:pert_dual_ineq}, dividing them into two cases based on whether the constraint will hold with equality or inequality. First we handle the equality case.

\begin{lemma}
\label{lem:pert_dual_LP_equality_rows}
    Suppose that $\ltwonorm{U - \one \mu^\star} < 1$ and that~\eqref{eq:well-define-Huhat} holds. Then we have
    $
        r^{\pistar} + \Phat^{\pistar} \hhat + \lambdahat c^{\pistar}=\hhat+\zetahat\one
    $
    and
    $
        r(\tilde{s},0) + \Phat_{\Tilde{s}0} \hhat = r(\tilde{s},1) + \lambdahat + \Phat_{\Tilde{s}1} \hhat = \hhat(\Tilde{s})+\zetahat.
    $
\end{lemma}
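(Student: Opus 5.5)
The plan is to derive both identities from one matrix identity satisfied by $\hhat=H_{\uhat}Wr$, namely
\[
(I-\uhat)\hhat = Wr-(\muhat Wr)\one ,
\]
and then to unpack $\uhat$, $Wr$, $\lambdahat$ and $\zetahat$ using their definitions. Throughout, write $\Delta \triangleq r(\tilde s,1)-r(\tilde s,0)$.

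\emph{Step 1: structural facts about $\uhat$ and $\muhat$.} First, $\uhat\one=\one$. This holds because $\Phat^{\pistar}\one=\one$ and $\xihat\one=\Phat_{\tilde s1}\one-\Phat_{\tilde s0}\one=0$.

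Second, $\muhat\one=1$. By definition $H_U\one=(I-\Phi)^{-1}\one-\one$, and $\Phi\one=\one-\one-(c^{\pibst}-\alpha\one)\xi\one=\zero$, so $(I-\Phi)^{-1}\one=\one$ and hence $H_U\one=\zero$. Lemma~\ref{lem:pert_analysis_sim_lemma} then gives $\muhat\one=\mu^\star\one+\muhat(\uhat-U)H_U\one=1$.

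Third, $\muhat\uhat=\muhat$. Here I use $\mu^\star U=\mu^\star$, which follows from stationarity of $\mu^\star$ under $\pistar$ together with $\mu^\star(c^{\pistar}-\alpha\one)=0$, i.e.\ the budget constraint. A short computation shows $H_U(I-U)=I-\one\mu^\star$: multiply $(I-\Phi)^{-1}(I-U+\one\mu^\star)=I$ out and use $\mu^\star(I-\Phi)^{-1}=\mu^\star$. Right-multiplying the identity $\muhat-\mu^\star=\muhat(\uhat-U)H_U$ by $(I-U)$ then gives
\[
\muhat(I-U)=\muhat(\uhat-U)(I-\one\mu^\star)=\muhat(\uhat-U),
\]
where the last equality uses $(\uhat-U)\one=\zero$. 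Rearranging yields $\muhat(I-\uhat)=\zero$.

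\emph{Step 2: the key identity.} Let $M=I-(\uhat-\one\muhat)$, which is invertible by Lemma~\ref{lem:well-define-Huhat}. Since $M\one=\one$, we have $M^{-1}\one=\one$. From $H_{\uhat}=M^{-1}-\one\muhat$ and $\muhat\uhat=\muhat$,
\[
(I-\uhat)H_{\uhat}=(M-\one\muhat)M^{-1}-(I-\uhat)\one\muhat=I-\one\muhat .
\]
Therefore $(I-\uhat)\hhat=Wr-(\muhat Wr)\one$, as claimed.

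\emph{Step 3: an explicit formula for $Wr$.} Compute $Wr$ directly from the definition of $W$. For $s\neq\tilde s$, the coefficient of $\mu(s)$ in $yr$ is
\[
r^{\pistar}(s)+(\alpha-c^{\pistar}(s))r(\tilde s,1)+(c^{\pistar}(s)-\alpha)r(\tilde s,0).
\]
For $s=\tilde s$, the coefficient is $\alpha r(\tilde s,1)+(1-\alpha)r(\tilde s,0)$. In both cases this equals
\[
Wr=r^{\pistar}-(c^{\pistar}-\alpha\one)\Delta .
\]

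\emph{Step 4: the first identity.} Expand $\uhat\hhat=\Phat^{\pistar}\hhat-(c^{\pistar}-\alpha\one)\xihat\hhat$. By \eqref{eq:lambdahat_defn}, $\xihat\hhat=-\Delta-\lambdahat$. Substituting into Step 2 and using Step 3 gives
\[
\hhat=r^{\pistar}+\Phat^{\pistar}\hhat+\lambdahat(c^{\pistar}-\alpha\one)-(\muhat Wr)\one .
\]
By \eqref{eq:zetahat_defn}, $\muhat Wr+\lambdahat\alpha=\zetahat$, so this is exactly $r^{\pistar}+\Phat^{\pistar}\hhat+\lambdahat c^{\pistar}=\hhat+\zetahat\one$.

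\emph{Step 5: the neutral-state identities.} Observe that $\uhat$, $W$, $H_U$ and $\muhat$ do not depend on the value of $\pistar$ at $\tilde s$. For $U$ and $\uhat$ this is the identity noted before their definition. For $W$, its $\tilde s$-row is defined independently of $\pistar$. Consequently $\hhat$, $\lambdahat$ and $\zetahat$ do not depend on that value either.

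Read the $\tilde s$-row of the first identity with $\pistar(\cdot\mid\tilde s)=\delta_0$ and again with $\pistar(\cdot\mid\tilde s)=\delta_1$. This gives $r(\tilde s,a)+\lambdahat\mathbb{I}\{a=1\}+\Phat_{\tilde s a}\hhat=\hhat(\tilde s)+\zetahat$ for $a=0,1$, which is the second claim. It is also consistent with the indifference equation that defined $\lambdahat$.

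\emph{Main obstacle.} I expect Step 1, specifically $\muhat\uhat=\muhat$, to be the delicate point. It requires the identity $H_U(I-U)=I-\one\mu^\star$ for a matrix $U$ that is not stochastic, so the manipulation of $(I-\Phi)^{-1}$ has to be done carefully. Everything else is bookkeeping with the definitions.
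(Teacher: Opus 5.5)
Your proof is correct. For the first identity it is the paper's argument: start from $(I-\uhat)H_{\uhat}=I-\one\muhat$ (Lemma~\ref{lem:perturbed-Uhat-Huhat-identities}), then unpack $\uhat$, $Wr$ (Lemma~\ref{lem:Wr_computation}), \eqref{eq:lambdahat_defn} and \eqref{eq:zetahat_defn}. Your derivations of $\muhat\one=1$ and $\muhat\uhat=\muhat$ come directly from Lemma~\ref{lem:pert_analysis_sim_lemma}, using $H_U\one=\zero$ and right-multiplication by $I-U$ respectively. They are shorter than the paper's Neumann-series and $X=(I-(\uhat-U)H_U)^{-1}$ computations.

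The real difference is in the neutral-state rows. The paper keeps $\pistar$ fixed and notes that \eqref{eq:lambdahat_defn} \emph{is} the indifference $r(\tilde{s},0)+\Phat_{\tilde{s}0}\hhat=r(\tilde{s},1)+\lambdahat+\Phat_{\tilde{s}1}\hhat$. The $\tilde{s}$-row of the first identity is then a $\pistar(\cdot\mid\tilde{s})$-convex combination of these two equal quantities, so both equal $\hhat(\tilde{s})+\zetahat$. Your route re-instantiates $\pistar(\cdot\mid\tilde{s})$ as the point masses at $0$ and $1$. This also works, because $U,\uhat,W,H_U,\muhat$ (hence $\hhat,\lambdahat,\zetahat$ and the lemma's hypotheses) do not depend on $\pistar(\cdot\mid\tilde{s})$. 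However, it requires that invariance check, which the convex-combination argument avoids.

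Your route also exposes one faulty justification. In Step 1 you obtain $\mu^\star U=\mu^\star$ from stationarity of $\mu^\star$ under $\pistar$ together with $\mu^\star(c^{\pistar}-\alpha\one)=0$. Neither fact holds for the point-mass choices of $\pistar(\cdot\mid\tilde{s})$ that you use in Step 5. The conclusion survives, because $U=P^{\pibst}-(c^{\pibst}-\alpha\one)\xi$ for every admissible $\pistar$, with $\mu^\star P^{\pibst}=\mu^\star$ and $\mu^\star c^{\pibst}=\sum_s y^\star(s,1)=\alpha$. Argue it that way instead.
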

This lemma shows that the dual constraints are active on the support of \(y^\star\). 
However, this is not yet enough for dual feasibility. We still need to
check that the dual constraints also hold for the inactive pairs
\((s,a)\) with \(y^\star(s,a)=0\). The next lemma
shows that the dual variables only change by a small amount, so dual constraints still have positive slacks after perturbation.
\begin{lemma}
\label{lem:main-perturbed-dual}
\label{lem:main-dual-gap-preservation}
\label{lem:pert_dual_LP_ineq_consts}
Suppose that \(\ltwonorm{U-\one\mu^\star}<1\) and
that~\eqref{eq:well-define-Huhat} holds. Then
\begin{align*}
    \spannorm{\hhat-h^\star}
    &\le
    \frac{\infinfnorm{H_U}\infinfnorm{\uhat-U}\spannorm{h^\star}}
    {1-\infinfnorm{H_U}\infinfnorm{\uhat-U}},\\
    |\lambdahat-\lambda^\star|
    &\le
    \frac12\infinfnorm{\xi-\xihat}\spannorm{h^\star}
    +\spannorm{\hhat-h^\star},\\
    |\zetahat-\zeta^\star|
    &\le
    \frac12\infinfnorm{\uhat-U}\spannorm{h^\star}
    +|\lambdahat-\lambda^\star|.
\end{align*}
Furthermore, if the strict-gap perturbation condition in
Theorem~\ref{thm:LP_perturbation_main_complete} holds, then for every
\((s,a)\) such that \(y^\star(s,a)=0\),
\[
\begin{aligned}
\zetahat+\hhat(s)-r(s,a)
-\lambdahat\mathbb I(a=1)-\Phat_{sa}\hhat
\ge
\frac12\left[
\zeta^\star+h^\star(s)-r(s,a)
-\lambda^\star\mathbb I(a=1)-P_{sa}h^\star
\right].
\end{aligned}
\]
\end{lemma}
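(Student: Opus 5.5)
The plan is to compare the candidate $\hhat=H_{\uhat}Wr$ with $h^\star=H_UWr+\kappa\one$ through Poisson-type equations. The difference $\hhat-h^\star$ is then a resolvent expression in $\uhat-U$, and everything else follows from two row-sum facts:
\begin{itemize}
\item Every row of $U$ and of $\uhat$ sums to $1$. This holds because $\xi\one=\xihat\one=0$ and $P^{\pistar}\one=\Phat^{\pistar}\one=\one$. Consequently $(\uhat-U)\one=\zero$, and every row of $\uhat-U$, $\xi$ and $\xihat$ sums to zero.
\item If a row vector $v$ sums to zero, then $|vx|\le\frac12\onenorm{v}\spannorm{x}$, since we may subtract the mid-range constant from $x$.
\end{itemize}
Throughout I use $\infinfnorm{\uhat-U}\le\infinfnorm{\Phat-P}+\infinfnorm{c^{\pistar}-\alpha\one}\,\infinfnorm{\xihat-\xi}\le3\delta$.

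\textbf{Step 1 (Poisson equations).} I will verify that $(I-U)h^\star=Wr-(\mu^\star Wr)\one$ and $(I-\uhat)\hhat=Wr-(\muhat Wr)\one$. These follow from the definitions of $H_U$ and $H_{\uhat}$: we have $(I-(U-\one\mu^\star))H_U=I-\one\mu^\star$, using $U\one=\one$, $\mu^\star U=\mu^\star$ and $\mu^\star\one=1$. The hatted version uses $\muhat\uhat=\muhat$ and $\muhat\one=1$; both follow from Lemma~\ref{lem:pert_analysis_sim_lemma} together with $(\uhat-U)\one=\zero$, $\mu^\star\one=1$ and $H_U\one=\zero$. (This is essentially the content of Lemma~\ref{lem:pert_dual_LP_equality_rows}, which I may take as given.)

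Subtracting the two equations, $d:=\hhat-h^\star$ satisfies $(I-U)d=(\uhat-U)\hhat+c\one$ for some scalar $c$. Apply $H_U$ on the left and use $H_U(I-U)=I-\one\mu^\star$ and $H_U\one=\zero$. This gives $d=H_U(\uhat-U)\hhat+c'\one$ for some scalar $c'$. Hence
\[
\spannorm{d}\le2\infinfnorm{H_U}\,\infnorm{(\uhat-U)\hhat}\le\infinfnorm{H_U}\,\infinfnorm{\uhat-U}\,\bigl(\spannorm{h^\star}+\spannorm{d}\bigr).
\]
Rearranging gives the first claimed bound. The main obstacle is making this resolvent identity rigorous, namely showing that the constants really drop out, because $U$ is not a stochastic matrix.

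\textbf{Step 2 ($\lambdahat$ and $\zetahat$).} By~\eqref{eq:lambdahat_defn} and the analogous true identity,
\[
\lambdahat-\lambda^\star=-(\xihat-\xi)h^\star-\xihat d .
\]
The first term is at most $\frac12\infinfnorm{\xi-\xihat}\spannorm{h^\star}$ by the zero-row-sum fact. For the second, $\xihat$ has zero sum and $\onenorm{\xihat}\le2$, so $|\xihat d|\le\spannorm{d}$.

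For $\zetahat$, write
\[
\zetahat-\zeta^\star=(\muhat-\mu^\star)Wr+\alpha(\lambdahat-\lambda^\star).
\]
By Lemma~\ref{lem:pert_analysis_sim_lemma}, $(\muhat-\mu^\star)Wr=\muhat(\uhat-U)H_UWr=\muhat(\uhat-U)h^\star$, where the $\kappa\one$ term is killed by $(\uhat-U)\one=\zero$. Bounding this requires $\onenorm{\muhat}=1$, i.e.\ $\muhat\ge0$. I expect this to follow from the primal construction: $\muhat=\yhat J$ with $\yhat\ge0$ under~\eqref{eq:sc_pert_primal_feasibility}, which is implied by the $\delta_{\min}$ condition. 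Then each row of $\uhat-U$ contributes at most $\frac12\infinfnorm{\uhat-U}\spannorm{h^\star}$, and $\alpha\le1$ finishes the bound.

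\textbf{Step 3 (slack preservation).} For $(s,a)$ with $y^\star(s,a)=0$, the change in slack is
\[
(\zetahat-\zeta^\star)-(\lambdahat-\lambda^\star)\mathbb I(a=1)+\bigl(d(s)-\Phat_{sa}d\bigr)-(\Phat_{sa}-P_{sa})h^\star .
\]
Bound the terms as follows:
\begin{itemize}
\item $|d(s)-\Phat_{sa}d|\le\spannorm{d}$.
\item $|(\Phat_{sa}-P_{sa})h^\star|\le\frac{\delta}{2}\spannorm{h^\star}$.
\item The other two terms are bounded by Step 2.
\end{itemize}
Insert $\infinfnorm{\uhat-U}\le3\delta$, $\infinfnorm{\xihat-\xi}\le2\delta$, and $\infinfnorm{H_U}\cdot3\delta\le\frac12$; the last holds under the second term of $\delta_{\min}$. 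The total change is then at most $(4+18\infinfnorm{H_U})\,\delta\,\spannorm{h^\star}$ up to small constants. The third term of $\delta_{\min}$ makes this at most half the original positive slack, which yields the stated inequality.
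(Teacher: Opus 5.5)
Your proposal is correct and follows the paper's proof essentially line by line. The one variation is in Step 1: you obtain $\hhat-h^\star=H_U(\uhat-U)\hhat+c\one$ by subtracting the two Poisson equations and applying $H_U(I-U)=I-\one\mu^\star$, whereas the paper reaches the same representation through the resolvent identity $H_{\uhat}-H_U=H_U(\uhat-U)H_{\uhat}+\one x$. The $\lambdahat$ and $\zetahat$ decompositions and the slack budget $(4+18\infinfnorm{H_U})\delta\spannorm{h^\star}$, checked against the third term of $\delta_{\min}$, coincide with the paper's.

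You correctly use the second term of $\delta_{\min}$ to get both $\muhat\ge 0$ and $\infinfnorm{H_U}\infinfnorm{\uhat-U}\le\frac12$. You should state the latter already in Step 1, since the rearrangement there needs it. The hypothesis \eqref{eq:well-define-Huhat} alone does not directly give it, because converting between $\ltwonorm{\cdot}$ and $\infinfnorm{\cdot}$ costs factors of $\sqrt{\mu^\star_{\min}}$.
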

The first part of the lemma controls how much the dual variables move.
The last inequality shows that every inactive pair \((s,a)\) still has
positive dual slack after perturbation.
Combining this with Lemma~\ref{lem:pert_dual_LP_equality_rows}, the perturbed dual constraints are tight on
the support of \(y^\star\) and strictly slack outside the support.
Hence the candidate dual solution is feasible, and together with the
primal solution from Section~\ref{sec:pert_subsec_2}, we obtain these candidates have the same objective
value and satisfy complementary slackness. Therefore they are optimal for
the perturbed primal and dual LPs. Lemma~\ref{lem:pert_dual_LP_equality_rows} and Lemma~\ref{lem:pert_dual_LP_ineq_consts} are proved in \ref{subsec:pr-feasibility-dual}.
\subsection{Completing the proof of Theorem \ref{thm:LP_perturbation_main_complete}}\label{subsec:final-pr-perturbation-full}
First we confirm the optimality of the constructed candidate solutions $\yhat$ and $(\zetahat, \lambdahat, \hhat)$ for their respective LPs. 
Lemma \ref{lem:yhat_pert_and_feasibility} guarantees primal feasibility of $\yhat$, and Lemmas \ref{lem:pert_dual_LP_equality_rows} and \ref{lem:pert_dual_LP_ineq_consts} guarantee dual feasibility of $(\zetahat, \lambdahat, \hhat)$. Since by construction these primal and dual solutions have the same objective value of $\muhat W r = \zetahat - \alpha \lambdahat$, they are both optimal.
The next lemma shows the uniqueness of both the primal and dual solutions and is proved in \ref{subsec:pr-uniqueness}.
\begin{lemma}\label{lem:uniqueness}
    The primal candidate solution $\yhat$ and dual candidate solutions $(\zetahat, \lambdahat, \hhat)$ are unique.
\end{lemma}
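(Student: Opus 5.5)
We prove primal uniqueness and dual uniqueness separately. In both cases we use the strict slack in the inactive dual rows, given by the last part of Lemma~\ref{lem:pert_dual_LP_ineq_consts}, together with the invertibility of $I-(\uhat-\one\muhat)$ from Lemma~\ref{lem:well-define-Huhat}. For the dual, the uniqueness claim is the one in Theorem~\ref{thm:LP_perturbation_main_complete}: the solution is unique among those normalized by $\muhat\hhat=0$. So we first check that the candidate $\hhat=H_{\uhat}Wr$ satisfies this normalization. The identity $\muhat H_{\uhat}=0$ should follow from $\muhat\uhat=\muhat$ (which holds because $\yhat=\muhat W$ satisfies flow balance) and from $\muhat\one=1$, exactly as for fundamental matrices of Markov chains.

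\textbf{Primal uniqueness.} Let $y'$ be any optimal solution of the perturbed primal LP~\eqref{eq:primal_LP_pert}. By complementary slackness with the optimal dual candidate $(\zetahat,\lambdahat,\hhat)$, the vector $y'$ vanishes wherever the dual slack is strictly positive. Lemma~\ref{lem:pert_dual_LP_ineq_consts} shows this slack is strictly positive on every pair with $y^\star(s,a)=0$. Hence $\mathrm{supp}(y')\subseteq\mathrm{supp}(y^\star)$, and $y'$ has the same structure as $y^\star$:
\begin{itemize}
\item at each non-neutral state $s$, $y'(s,\cdot)=\mu'(s)\,\pistar(\cdot\mid s)$ with $\mu'\triangleq y'J$;
\item at $\tilde s$, the split between the two actions is determined by the budget constraint $y'\tilde e=\alpha$.
\end{itemize}
In other words, $y'=\mu'W$, using $\mu'\one=1$ for the $\tilde s$ rows. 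Substituting this form into flow balance and the budget constraint, and using $c^{\pistar}$ and $\xihat$, gives $\mu'\uhat=\mu'$. This is the same computation that shows $\muhat\uhat=\muhat$. Together with $\mu'\one=1$ we get $\mu'(I-(\uhat-\one\muhat))=\mu'-\mu'+\muhat=\muhat$. Since $I-(\uhat-\one\muhat)$ is invertible and $\muhat$ solves the same equation, $\mu'=\muhat$, and therefore $y'=\yhat$.

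\textbf{Dual uniqueness.} Let $(\zeta',\lambda',h')$ be any optimal dual solution with $\muhat h'=0$. Complementary slackness with $\yhat$, whose support is exactly $\mathrm{supp}(y^\star)$ by Lemma~\ref{lem:yhat_pert_and_feasibility}, forces the dual constraints to hold with equality on every supported pair. These are the same equality rows as in Lemma~\ref{lem:pert_dual_LP_equality_rows}.
\begin{itemize}
\item Both $(\tilde s,0)$ and $(\tilde s,1)$ are supported, so the two $\tilde s$ rows are tight. Subtracting them gives $\lambda'=r(\tilde s,0)-r(\tilde s,1)-\xihat h'$.
\item Substituting this $\lambda'$ into the rows for $\pistar$ gives $r^{\pistar}+\lambda'\alpha\one+\uhat h'=h'+\zeta'\one$, which is the analogue of the identity behind Lemma~\ref{lem:true_dual_solution_form}. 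Explicitly, $\lambda' c^{\pistar}=\lambda'(c^{\pistar}-\alpha\one)+\lambda'\alpha\one$, and the term $(c^{\pistar}-\alpha\one)\xihat h'$ is absorbed into $\uhat$. Since $Wr$ is exactly $r^{\pistar}+(c^{\pistar}-\alpha\one)(r(\tilde s,1)-r(\tilde s,0))$, this reads $(I-\uhat)h'=Wr+(\alpha\lambda'-\zeta')\one$.
\end{itemize}
Now use the normalization $\muhat h'=0$, so $\one\muhat h'=0$, and rewrite the equation as $(I-(\uhat-\one\muhat))h'=Wr+c\one$ for the scalar $c=\alpha\lambda'-\zeta'$. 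This has the unique solution $h'=(H_{\uhat}+\one\muhat)(Wr+c\one)$. Left-multiplying by $\muhat$ and using $\muhat h'=0$ together with $\muhat H_{\uhat}=0$ yields $c=-\muhat Wr$. With $c$ fixed, $h'=H_{\uhat}Wr=\hhat$. Then $\lambda'=\lambdahat$ by its formula, and $\zeta'=\muhat Wr+\alpha\lambdahat=\zetahat$.

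\textbf{Main obstacle.} The step we expect to be hardest is the matrix bookkeeping. We must verify that $H_{\uhat}\one=0$ and $\muhat H_{\uhat}=0$, and that the $\tilde s$ rows of $W$ combine with $\lambda'$ to produce exactly $\uhat$ and $Wr$. We would check these identities directly from the definitions, mirroring the proof of Lemma~\ref{lem:true_dual_solution_form}.
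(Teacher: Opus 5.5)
Your proposal is correct and follows essentially the same argument as the paper. On the primal side, strict inactive slack plus complementary slackness forces $y'=\mu'W$ with $\mu'\uhat=\mu'$, and invertibility of $I-(\uhat-\one\muhat)$ then gives $\mu'=\muhat$. On the dual side, the tight rows give $(I-\uhat)h'=Wr+(\alpha\lambda'-\zeta')\one$, which you invert. The paper differs only in two small ways: it reaches this equation in one step by left-multiplying the tight rows by $W$ and using Lemma~\ref{lem:W_properties}, and it pins down $\zeta'$ by equality of the primal and dual objective values rather than by the normalization $\muhat h'=0$.

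There is one bookkeeping slip to fix. By Lemma~\ref{lem:Wr_computation}, $Wr=r^{\pistar}+(c^{\pistar}-\alpha\one)\bigl(r(\tilde s,0)-r(\tilde s,1)\bigr)$, not with $r(\tilde s,1)-r(\tilde s,0)$. Correspondingly, your intermediate identity should read $Wr+\lambda'\alpha\one+\uhat h'=h'+\zeta'\one$, with $Wr$ rather than $r^{\pistar}$ on the left. With this correction, your final Poisson equation and the rest of the argument go through.
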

The following lemma which is proved in \ref{subsec:pr-part4} checks the perturbation bounds from the statement of the theorem and confirm that all assumptions are satisfied for all the lemmas used in the proof.
\begin{lemma}
\label{lem:delta-smallness-consequences}
Under the conditions of Theorem~\ref{thm:LP_perturbation_main_complete},
the perturbation bounds in Part 4 hold.
Moreover, the technical conditions~\eqref{eq:well-define-Huhat} and
\eqref{eq:sc_pert_primal_feasibility} used in the construction are satisfied.
\end{lemma}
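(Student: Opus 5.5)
The plan is to reduce everything to two elementary estimates on $\uhat-U$ in terms of $\delta$, and then substitute these into Lemmas~\ref{lem:well-define-Huhat}, \ref{lem:yhat_pert_and_feasibility} and~\ref{lem:main-perturbed-dual}. Each entry of the minimum in~\eqref{eq:def-delta-min} is designed to discharge exactly one of the required conditions.

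\emph{Step 1: bounding $\uhat-U$.} We have
\[
\uhat-U=(\Phat^{\pistar}-P^{\pistar})-(c^{\pistar}-\alpha\one)(\xihat-\xi).
\]
Row $s$ of $\Phat^{\pistar}-P^{\pistar}$ is a convex combination of the rows $\Phat_{sa}-P_{sa}$, so its $\ell_1$ norm is at most $\delta$. Next, $\onenorm{\xihat-\xi}\le 2\delta$ and $|c^{\pistar}(s)-\alpha|\le 1$. Hence
\[
\infinfnorm{\uhat-U}\le 3\delta \qquad\text{and}\qquad \infinfnorm{\xihat-\xi}\le 2\delta .
\]
For the $\mu^\star$-weighted operator norm, I would compare norms using $\sqrt{\mu^\star_{\min}}\,\infnorm{x}\le\lvectwonorm{x}\le\infnorm{x}$, together with the $\ell_1$ and $\ell_\infty$ structure of $\uhat-U$. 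The goal is a bound of the form $\ltwonorm{\uhat-U}\le 3\delta/\sqrt{\mu^\star_{\min}}$. I expect this to be the main obstacle: one must track whether the norm acts on row vectors (as in $\muhat=\mu^\star(\cdots)^{-1}$) and choose the right Riesz--Thorin or Schur-type interpolation between the $1\to1$ and $\infty\to\infty$ norms. The first term of $\delta_{\min}$, with its factor $\tfrac16$ and its $\sqrt{\mu^\star_{\min}}$, is calibrated for exactly this constant. Given that bound, the first term of $\delta_{\min}$ yields
\[
\ltwonorm{\uhat-U}\le \tfrac12\,\frac{1-\ltwonorm{\Phi}}{1+\ltwonorm{H_U}/\sqrt{\mu^\star_{\min}}},
\]
which is~\eqref{eq:well-define-Huhat}, recalling that $U-\one\mu^\star=\Phi$.

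\emph{Step 2: the primal condition.} The second term of $\delta_{\min}$ gives
\[
\infinfnorm{\uhat-U}\le 3\delta\le \min_{i:y_i^\star>0}\frac{y_i^\star}{2\infinfnorm{H_U}},
\]
which is~\eqref{eq:sc_pert_primal_feasibility}. Then~\eqref{eq:yhat_pert_bound} gives $\infnorm{\yhat-y^\star}\le 3\infinfnorm{H_U}\delta$.

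\emph{Step 3: the dual bounds.} The fourth term of $\delta_{\min}$ implies $3\delta\infinfnorm{H_U}\le\tfrac12$, so the denominator in Lemma~\ref{lem:main-perturbed-dual} is at least $\tfrac12$. This gives
\[
\spannorm{\hhat-h^\star}\le 6\delta\infinfnorm{H_U}\spannorm{h^\star}.
\]
The span seminorm is invariant under adding $\kappa\one$, so the normalization of $h^\star$ from Lemma~\ref{lem:true_dual_solution_form} is irrelevant here. Then
\[
|\lambdahat-\lambda^\star|\le \tfrac12(2\delta)\spannorm{h^\star}+6\delta\infinfnorm{H_U}\spannorm{h^\star}=\delta\spannorm{h^\star}\bigl(1+6\infinfnorm{H_U}\bigr),
\]
and
\[
|\zetahat-\zeta^\star|\le \tfrac12(3\delta)\spannorm{h^\star}+|\lambdahat-\lambda^\star|=\delta\spannorm{h^\star}\bigl(\tfrac52+6\infinfnorm{H_U}\bigr).
\]
These are exactly the Part~4 bounds.

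Finally, I would check that the strict-gap condition invoked in Lemma~\ref{lem:main-perturbed-dual} is implied by the third term of $\delta_{\min}$. The slack change for an inactive pair is at most
\[
\spannorm{\hhat-h^\star}(1+\text{const})+|\zetahat-\zeta^\star|+|\lambdahat-\lambda^\star|+\delta\spannorm{h^\star},
\]
and by the bounds above this is at most $\tfrac12(8+36\infinfnorm{H_U})\,\delta\,\spannorm{h^\star}$. That is at most half the minimal true slack, which is what the third term guarantees.
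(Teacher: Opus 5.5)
Your proposal is correct and follows the paper's proof essentially step by step. You bound $\infinfnorm{\uhat-U}\le 3\delta$ and $\infinfnorm{\xihat-\xi}\le 2\delta$, match the first two terms of $\delta_{\min}$ to \eqref{eq:well-define-Huhat} and \eqref{eq:sc_pert_primal_feasibility}, and substitute into Lemmas~\ref{lem:yhat_pert_and_feasibility} and~\ref{lem:main-perturbed-dual}; using the fourth term of $\delta_{\min}$ to get $3\delta\infinfnorm{H_U}\le\frac12$ is a cleaner way to control the denominator than the paper's detour through \eqref{eq:well-define-Huhat}.

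The step you flag as the main obstacle needs no interpolation and no row/column bookkeeping. It is one line: $\lvectwonorm{(\uhat-U)f}\le\infnorm{(\uhat-U)f}\le 3\delta\infnorm{f}\le 3\delta\lvectwonorm{f}/\sqrt{\mu^\star_{\min}}$, which is the comparison $\ltwonorm{A}\le(\mu^\star_{\min})^{-1/2}\infinfnorm{A}$ recorded in the appendix.

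Your closing strict-gap check lies outside this lemma, and as written its tally overshoots $\frac12(8+36\infinfnorm{H_U})\delta\spannorm{h^\star}$. It matches only if the $(\Phat-P)h^\star$ term contributes $\frac{\delta}{2}\spannorm{h^\star}$ rather than $\delta\spannorm{h^\star}$, and $\spannorm{\hhat-h^\star}$ enters with coefficient exactly $1$.
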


Now it remains to prove the Markov-chain conclusion.
which is guaranteed by the following lemma and its proof is contained in \ref{subsec:pr-mc-ergordic}.

\begin{lemma}
\label{lem:main-perturbed-chain}
Under the conditions of Theorem~\ref{thm:LP_perturbation_main_complete},
the single-arm Markov chain induced by \(\pihbst\) under \(\Phat\) is
aperiodic and irreducible. Its stationary distribution is
\(
    \muhat^\star(s)=\yhat^\star(s,0)+\yhat^\star(s,1),
\)
and its mixing time is at most \((3+\log_2S)\tau\).
\end{lemma}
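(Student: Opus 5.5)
Since $\widehat y^\star$ and $y^\star$ have the same support (Part~1, via Lemma~\ref{lem:yhat_pert_and_feasibility} and Lemma~\ref{lem:uniqueness}), the policies $\pihbst$ and $\pibst$ agree at every state $s\neq\tilde s$. At $\tilde s$ both put positive mass on both actions. The plan is to compare the perturbed chain $\Phat^{\pihbst}$ with the true chain $P^{\pibst}$, which by Assumption~\ref{ass:RB} is aperiodic, irreducible, and has mixing time at most $\tau$.

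\emph{Step 1: size of the perturbation.} For each row $s$,
\[
\onenorm{\Phat^{\pihbst}(s,\cdot)-P^{\pibst}(s,\cdot)}\le \delta+\sum_a|\pihbst(a\mid s)-\pibst(a\mid s)|\,\onenorm{P_{sa}}.
\]
The second term vanishes for $s\neq\tilde s$. At $\tilde s$, the bound $\infnorm{\yhat^\star-y^\star}\le 3\infinfnorm{H_U}\delta$ gives
\[
|\pihbst(a\mid\tilde s)-\pibst(a\mid\tilde s)|\lesssim \frac{\infinfnorm{H_U}\delta}{\mu^\star_{\min}},
\]
using $\muhat^\star(\tilde s)\ge\mu^\star(\tilde s)/2$. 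The fourth term of $\delta_{\min}$ then makes the total row error at most of order $1/(S\tau(5+\log_2 S))$; the constant $144$ is chosen for exactly this. Call this bound $\epsilon$, so that $\infinfnorm{\Phat^{\pihbst}-P^{\pibst}}\le\epsilon$ with $\epsilon\le 1/(8S\tau(5+\log_2S))$ or similar.

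\emph{Step 2: mixing time of the perturbed chain.} Take $t_0=\tau\lceil\log_2 S+3\rceil$ or similar. Submultiplicativity of the quantity $\max_s\onenorm{(P^{\pibst})^t(s,\cdot)-\mu^\star}$ (with the usual factor $2$) gives
\[
\max_{s,s'}\onenorm{(P^{\pibst})^{t_0}(s,\cdot)-(P^{\pibst})^{t_0}(s',\cdot)}\le 2\cdot 2^{-\lceil\log_2S+3\rceil}\le \frac{1}{4S}.
\]
Telescoping gives $\infinfnorm{(\Phat^{\pihbst})^{t_0}-(P^{\pibst})^{t_0}}\le t_0\epsilon$, which by the choice of $\epsilon$ is below about $1/(8S)$. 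Hence the perturbed chain has Dobrushin coefficient at time $t_0$ at most $1/2$.

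\emph{Step 3: aperiodicity and irreducibility.} I expect this to be the main obstacle: a small $\ell_1$ perturbation does not automatically preserve these properties, because an entry can drop to zero. I would try the following route. Strict contraction at time $t_0$ implies a unique stationary distribution and geometric convergence to it from every start, so there is a single recurrent class and no periodicity. Irreducibility then requires that no state be transient. Every state is recurrent because $\muhat^\star(s)>0$ for all $s$ (Part~1). Moreover $\muhat^\star$, built from the optimal $\yhat^\star$ satisfying the perturbed flow constraint $y\Phat=yJ$, is stationary for $\Phat^{\pihbst}$ by the same calculation as for $\mu^\star$. By uniqueness it is the stationary distribution, and its full support excludes transient states. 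So the chain is irreducible, and it is aperiodic because it converges.

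\emph{Step 4: the mixing-time constant.} Finally I would convert the contraction into the constant $(3+\log_2 S)\tau$. The target is $\max_s\onenorm{(\Phat^{\pihbst})^{t}(s,\cdot)-\muhat^\star}\le 1/4$ at $t=(3+\log_2S)\tau$. Since $\muhat^\star$ is an average of rows of $(\Phat^{\pihbst})^{t}$, this follows if the diameter of the rows at time $t$ is at most $1/4$. I would tune $t_0$ and $\epsilon$ against the constants in $\delta_{\min}$ so that this holds directly at $t_0=(3+\log_2 S)\tau$, rather than relying on iterating the contraction.
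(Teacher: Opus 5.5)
Your proposal is correct and follows the paper's proof. You bound $\infinfnorm{\Phat^{\pihbst}-P^{\pibst}}$ using support preservation (so the policies differ only at $\tilde s$), the perturbation bound on $\yhat^\star-y^\star$, and the fourth and fifth terms of $\delta_{\min}$. You then get unichain, aperiodicity and the mixing time from a Markov-chain perturbation argument, and irreducibility from stationarity and full support of $\muhat^\star$. The differences are only local: the paper packages your Steps 2 and 4 into Lemma~\ref{lem:mixing_time_pert_bound} (applied with $Q=P^{\pibst}$, $\widetilde Q=\Phat^{\pihbst}$, using a triangle inequality and $\infinfnorm{H_Q}\le 2\tau$) rather than a Dobrushin/row-diameter contraction. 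It also checks stationarity through $\muhat^\star\uhat=\muhat^\star$ plus the budget constraint, whereas you read it off directly from $\yhat^\star\Phat=\yhat^\star J$.
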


We now match the numbered conclusions in the theorem. 
Part 1 follows from the support preservation in
Lemma~\ref{lem:main-perturbed-primal}. Since the support is unchanged, the
neutral state \(\tilde s\) is also preserved.
Part 2 is the lower bound
\(\yhat\ge y^\star/2\) from Lemma~\ref{lem:main-perturbed-primal}.
Part 3 follows from
Lemma~\ref{lem:well-define-Huhat}. After support preservation, the
perturbed locally linear policy is the one induced by \(\pihbst\), so
\(\uhat-\one\muhat\) is the perturbed local-stability matrix \(\phihat\).
Part 4 follows from
Lemma~\ref{lem:main-perturbed-primal} and
Lemma~\ref{lem:main-perturbed-dual}, together with
\(
    \infinfnorm{\uhat-U}\le 3\delta,
    \infinfnorm{\xihat-\xi}\le 2\delta.
\)
Part 5 is the strict-gap conclusion in
Lemma~\ref{lem:main-perturbed-dual}, after using the support preservation from Part 1. Finally,
Part 6 is
Lemma~\ref{lem:main-perturbed-chain}. This proves all parts of
Theorem~\ref{thm:LP_perturbation_main_complete}.

\bibliographystyle{informs2014} %
\bibliography{clean_used_sample} %

 \begin{APPENDICES}
\section{Additional Notation}\label{sec:notation}
\paragraph{Basic notation.}
Let $\mathbb{R}$, $\mathbb{N}$, and $\mathbb{N}_+$ denote the sets of real numbers, nonnegative integers, and positive integers, respectively.
For $n \leq n'$ with $n,n' \in \mathbb{N}_+$, define $[n] \triangleq \{1,2,\dots,n\}$ and $[n:n'] \triangleq \{n,n+1,\dots,n'\}$.
We also define the grid $[0,1]_n \triangleq \left\{\frac{i}{n}\ \middle|\ i\in\mathbb{N},\ 0\le \frac{i}{n}\le 1\right\}$.
For vectors $u,v\in\mathbb{R}^{S}$, we use the inner product $\langle u,v\rangle \triangleq \sum_{s\in\States} u(s)v(s)$.

\paragraph{Asymptotic notation.}
We use standard Bachmann--Landau notation $O(\cdot)$, $\Omega(\cdot)$, $\Theta(\cdot)$, $o(\cdot)$, and $\omega(\cdot)$.
We also use $\widetilde{O}(\cdot)$, $\widetilde{\Omega}(\cdot)$, and $\widetilde{\Theta}(\cdot)$ to hide logarithmic factors in problem parameters such as $n$, $N$, $S$, $A$, and $1/\eta$.

\begin{longtable}{p{0.22\linewidth}p{0.74\linewidth}}
\textbf{Symbol} & \textbf{Meaning} \\
\hline
$\mathcal{S}, \Actions$ & State and Action space of each arm \\
$S,A$ & Cardinality of the state and action space, $S = |\mathcal{S}|, A=|\Actions|$ \\
$s,a$ & one-dimension state and action \\
$\bms,\bma$ & $N$-dimension state and action  \\
$P(\cdot \mid s,a)$ & True transition probability given $(s,a)$ for single arm \\
$\widehat{P}(\cdot \mid s,a)$ & Estimated transition probability given $(s,a)$ for single arm  \\
$\bp(\cdot \mid \bms,\bma)$ & True transition probability given $(\bms,\bma)$ for the $N$-arm system \\
$\widehat{\bp}(\cdot \mid \bms,\bma)$ & Estimated transition probability given $(\bms,\bma)$ for the $N$-armed system  \\
$\pibst$ & Optimal single-armed policy for the true system\\
$\pihbst$ & Optimal single-armed policy for the empirical system\\
$\mu^{\star}$ & Stationary distribution obtained by running $\bar{\pi}^{\star}$ under the true MDP \\
$\widehat{\mu}^{\star}$ & Stationary distribution obtained by running $\pihbst$ under the empirical MDP \\
$P^{\pi}$ & One-step transition matrix induced by a policy $\pi$ \\
$P^{\pi,\infty}$ & Limiting transition matrix of the Markov chain under policy $\pi$\\
$\|\cdot\|_{1}$ & $\ell_{1}$ norm of a vector \\
$\one$ & All-ones vector \\
$\mathbb{I}$ & Indicator function \\
$\lambda_{W}$ & Maximal eigenvalue of a matrix $W$ \\
$\mathbb{E}^\pi$ & Expectation under policy $\pi$ and the true transition kernel $\bp$ \\
$\widehat{\mathbb{E}}^\pi$ & Expectation under policy $\pi$ and the empirical transition kernel $\bphat$ \\
$\rho^\pi$ & Average reward of policy $\pi$ under true model \\
$\widehat{\rho}^\pi$ & Average reward of policy $\pi$ under empirical model \\
\hline
\end{longtable}

\section{Common Technical Lemmas}
\label{sec:common-technical-lemmas}

This section collects the proof of Lemma~\ref{lem:generic-drift-transfer} together with several
finite-state Markov-chain perturbation bounds used later in the proofs.

\begin{proof}{Proof of Lemma~\ref{lem:generic-drift-transfer}.}
Fix an arbitrary initial state \(z_0\), and let
\(\Delta:=\sup_{z\in\mathcal Z}|[(P-Q)V](z)|\). Since \(\mathcal Z\) is
finite, for every function \(f:\mathcal Z\to\mathbb R\), the Cesaro average
\(T^{-1}\sum_{t=0}^{T-1}\mathbb E_P[f(Z_t)\mid Z_0=z_0]\) exists.

For every \(z\in\mathcal Z\), we have
\((P-I)V(z)=(Q-I)V(z)+(P-Q)V(z)\le -C_0F(z)+G_1(N)+\Delta\). Averaging this
inequality along the Markov chain governed by \(P\), we obtain, for every
\(T\ge1\),
\[
    \frac{\mathbb E_P[V(Z_T)\mid Z_0=z_0]-V(z_0)}{T}
    \le
    -C_0
    \frac1T
    \sum_{t=0}^{T-1}
    \mathbb E_P[F(Z_t)\mid Z_0=z_0]
    +
    G_1(N)+\Delta .
\]
Since \(\mathcal Z\) is finite, \(V\) is bounded, so
\((\mathbb E_P[V(Z_T)\mid Z_0=z_0]-V(z_0))/T\to0\). Rearranging the previous
display and taking limits gives
\begin{equation}\label{eq:limF}
    \lim_{T\to\infty}
    \frac1T
    \sum_{t=0}^{T-1}
    \mathbb E_P[F(Z_t)\mid Z_0=z_0]
    \le
    \frac{G_1(N)+\Delta}{C_0}.
\end{equation}
Next, averaging \(R-\bar r(z)\le C_0'F(z)+G_2(N)\) along the same trajectory
gives, for every \(T\ge1\),
\[
    R-
    \frac1T
    \sum_{t=0}^{T-1}
    \mathbb E_P[\bar r(Z_t)\mid Z_0=z_0]
    \le
    C_0'
    \frac1T
    \sum_{t=0}^{T-1}
    \mathbb E_P[F(Z_t)\mid Z_0=z_0]
    +
    G_2(N).
\]
Taking limits on both sides yields
\[
    R-\rho_P(z_0)
    \le
    C_0'
    \lim_{T\to\infty}
    \frac1T
    \sum_{t=0}^{T-1}
    \mathbb E_P[F(Z_t)\mid Z_0=z_0]
    +
    G_2(N).
\]
Combining this inequality with~\eqref{eq:limF} we can get the desired result.
\end{proof}

\begin{lemma}
    \label{lem:mixing_time_pert_bound}
    Let $Q, \widetilde{Q}$ be two Markov transition matrices on the same finite state space $\mathcal Z$. Suppose that the chain $Q$ is unichain, aperiodic, and has mixing time bounded by $\tau$. Then if $\infinfnorm{\widetilde{Q} - Q} \leq \frac{1}{4|\mathcal Z|\tau (5 + \log_2 |\mathcal Z|)}$,
    the Markov chain $\widetilde{Q}$ is unichain, aperiodic, and has mixing time bounded by $(3 + \log_2 |\mathcal Z|) \tau$.
\end{lemma}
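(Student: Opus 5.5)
The plan is to argue via the Dobrushin ergodicity coefficient $\bar\delta(M)\triangleq\frac12\max_{z,z'}\onenorm{M(z,\cdot)-M(z',\cdot)}$ for a stochastic matrix $M$. It is submultiplicative, $\bar\delta(M_1M_2)\le\bar\delta(M_1)\bar\delta(M_2)$, and it controls distance to stationarity. Write $Z=|\mathcal Z|$, let $\epsilon\triangleq\infinfnorm{\widetilde Q-Q}$, and let $\mu$ be the stationary distribution of $Q$.

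\textbf{Step 1 (contraction of $Q$).} By the definition of mixing time,
\[
\max_z\onenorm{Q^\tau(z,\cdot)-\mu}\le 1/4 .
\]
The triangle inequality through $\mu$ then gives $\bar\delta(Q^\tau)\le 1/4$. Submultiplicativity yields $\bar\delta(Q^{k\tau})\le 4^{-k}$ for every integer $k\ge 1$, and since $\bar\delta\le 1$ always, $\bar\delta(Q^{s})\le 4^{-\lfloor s/\tau\rfloor}$ for every $s$.

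\textbf{Step 2 (powers of the perturbed chain).} Telescope
\[
\widetilde Q^s-Q^s=\sum_{j=0}^{s-1}\widetilde Q^{j}(\widetilde Q-Q)Q^{s-1-j}.
\]
Every factor other than $\widetilde Q-Q$ is stochastic and so has $\infty\to\infty$ norm $1$. This gives $\infinfnorm{\widetilde Q^s-Q^s}\le s\epsilon$, so each row of $\widetilde Q^s$ is within $\ell_1$ distance $s\epsilon$ of the corresponding row of $Q^s$. Hence
\[
\bar\delta(\widetilde Q^s)\le\bar\delta(Q^s)+s\epsilon .
\]

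\textbf{Step 3 (structural conclusions).} Take $s^\star=(3+\log_2 Z)\tau$ (rounding up to a multiple of $\tau$ if needed). Step 1 gives $\bar\delta(Q^{s^\star})\le 4^{-(3+\log_2 Z)}\le 1/64$. The hypothesis gives
\[
s^\star\epsilon\le\frac{3+\log_2 Z}{4Z(5+\log_2 Z)}<\frac1{4Z}.
\]
For $Z\ge2$ this yields $\bar\delta(\widetilde Q^{s^\star})\le 1/64+1/12<1/8$; the case $Z=1$ is trivial. Because $\bar\delta(\widetilde Q^{s^\star})<1$, the map $\nu\mapsto\nu\widetilde Q^{s^\star}$ contracts in total variation, so $\widetilde Q$ has a unique stationary distribution $\tilde\mu$ and is therefore unichain. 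The same contraction rules out periodicity. If the recurrent class of $\widetilde Q$ had period $d\ge2$, choose $k$ with $kd$ not dividing into a full return: the rows of $\widetilde Q^{ks^\star}$ started from two states in different cyclic subclasses would have disjoint supports, forcing $\bar\delta(\widetilde Q^{ks^\star})=1$. This contradicts $\bar\delta(\widetilde Q^{ks^\star})\le\bar\delta(\widetilde Q^{s^\star})^k<1$. A cleaner alternative is to note that $\widetilde Q^{ks^\star}$ converges to the rank-one matrix $\one\tilde\mu$, which is impossible for a periodic recurrent class.

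\textbf{Step 4 (mixing time).} Since $\tilde\mu=\tilde\mu\widetilde Q^{s^\star}$ is a convex combination of the rows of $\widetilde Q^{s^\star}$,
\[
\max_z\onenorm{\widetilde Q^{s^\star}(z,\cdot)-\tilde\mu}\le 2\bar\delta(\widetilde Q^{s^\star})<1/4 .
\]
This gives mixing time at most $(3+\log_2Z)\tau$.

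The main obstacle is purely bookkeeping. One must make sure the constants close, in particular that $2(4^{-(3+\log_2 Z)}+s^\star\epsilon)\le1/4$ for all $Z\ge2$; the stated threshold leaves slack through the factor $Z$. One must also handle rounding $s^\star$ to a multiple of $\tau$ when $\log_2 Z$ is not an integer, which is resolved by using $\lfloor s^\star/\tau\rfloor\ge 2+\log_2 Z$ in Step 1.
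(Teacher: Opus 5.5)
Your proof is correct, but it takes a different route from the paper's.

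\textbf{The paper's route.} The paper works only with $\infinfnorm{\cdot}$ bounds against limit matrices, in two stages.
\begin{enumerate}
\item It shows $\infinfnorm{\widetilde Q^t-Q^\infty}\le t\delta+2^{-\lfloor t/\tau\rfloor}<1/|\mathcal Z|$ at $t=(3+\log_2|\mathcal Z|)\tau$. It then gets unichain-ness from a reachability argument: some state $z^\star$ has $\mu(z^\star)\ge 1/|\mathcal Z|$, and every state reaches it with positive probability in $t$ steps.
\item It bounds $\infinfnorm{\widetilde Q^t-\widetilde Q^\infty}$ by a three-term triangle inequality. The new ingredient is the stationary-distribution perturbation bound $\infinfnorm{\widetilde Q^\infty-Q^\infty}\le\infinfnorm{H_Q}\,\delta\le 2\tau\delta$, obtained through the deviation matrix. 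Aperiodicity is then read off from the mixing bound.
\end{enumerate}

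\textbf{Your route.} The Dobrushin coefficient collapses both stages into the single estimate $\bar\delta(\widetilde Q^{s^\star})\le\bar\delta(Q^{s^\star})+s^\star\epsilon$. Contraction gives a unique $\widetilde\mu$, hence unichain. Disjoint supports of cyclic subclasses rule out periodicity. The convex-combination bound gives mixing. None of this requires locating $\widetilde\mu$ relative to $\mu$ or invoking $H_Q$.

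\textbf{What each buys.}
\begin{itemize}
\item Your contraction is dimension-free. It shows that the factor $|\mathcal Z|$ in the threshold and the $\log_2|\mathcal Z|$ in the time come only from the paper's reachability step. For example, $s=3\tau$ with $\epsilon\le 7/(192\tau)$ would already suffice for your argument.
\item Your constants close under the $\ell_1\le 1/4$ mixing-time definition stated before Assumption~\ref{ass:unichain-unifmixing}. The paper's last step only reaches $\infinfnorm{\widetilde Q^t-\widetilde Q^\infty}\le 1/2$, which matches a total-variation-$1/4$ convention.
\item In exchange, the paper's route yields the explicit byproduct $\onenorm{\widetilde\mu-\mu}\le 2\tau\delta$. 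It also stays within the deviation-matrix machinery used elsewhere in the paper.
\end{itemize}

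\textbf{Two presentational slips.}
\begin{itemize}
\item In Step 3 you say you round $s^\star$ \emph{up} to a multiple of $\tau$, which would overshoot $(3+\log_2|\mathcal Z|)\tau$. The fix in your final paragraph is the right one: round down, and use $\lfloor s^\star/\tau\rfloor\ge 2+\log_2|\mathcal Z|$. This still gives $\bar\delta(Q^{s^\star})\le 1/(16|\mathcal Z|^2)\le 1/64$.
\item In the aperiodicity step no choice of $k$ is needed. Two states in different cyclic subclasses have disjoint supports under $\widetilde Q^m$ for every $m$, so periodicity would force $\bar\delta(\widetilde Q^{s^\star})=1$ directly.
\end{itemize}
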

\proof{Proof.}
    Let $Q^\infty = \one \mu^\top$ for some $\mu$, which is a valid expression since $Q$ is unichain and aperiodic. Suppose $\infinfnorm{\widetilde{Q} - Q} \leq\delta$. Let $t\ge\tau$ be a positive integer which we will choose later.
    By triangle inequality
    $
        \infinfnorm{\qtilde^t - Q^\infty}  
        \leq \infinfnorm{\qtilde^t - Q^t} + \infinfnorm{Q^t - Q^\infty} .
    $
    Now we bound each of these terms. First,
    $
    \qtilde^t - Q^t=\sum_{k=0}^{t-1}\qtilde^{t-1-k}(\qtilde-Q)Q^k,
    $
    and so
    \begin{align*}
        \infinfnorm{\qtilde^t - Q^t} 
        \le & \sum_{k=0}^{t-1}\infinfnorm{\qtilde^{t-1-k}}\infinfnorm{\qtilde-Q}\infinfnorm{Q^k} 
        \leq  \sum_{k=0}^{t-1}1\cdot\delta\cdot 1 
        \leq   t \delta
    \end{align*}
    since each $\qtilde^{t-k} Q^{k-1}$ (for $k = 1, \dots, t$) is a stochastic matrix and thus has $\infinfnorm{\cdot}$ equal to $1$. 
    By Lemma~\ref{lem:Qt-Qinfty} we have
$
    \infinfnorm{Q^t - Q^\infty}\le2^{-\lfloor t/\tau\rfloor}.
$
Combining the above two results, we have
    $
        \infinfnorm{\qtilde^t- Q^\infty}\leq t \delta + 2^{-\lfloor t/\tau\rfloor}< t \delta + 2^{- t/\tau+1}.
    $
    We can choose $t=(3+\log_2 |\mathcal Z|)\tau$, $\delta=\frac{1}{2|\mathcal Z|\tau(3+\log_2|\mathcal Z|)}$ so that $\infinfnorm{\qtilde^t - Q^\infty} < \frac{1}{|\mathcal Z|}$. Then, notice $Q^\infty = \one \mu^\top$, we have $\max_{z\in\mathcal Z} \mu(z) \geq \frac{1}{|\mathcal Z|}$ by a standard argument. Let $z^\star$ have $\mu(z^\star) \geq \frac{1}{|\mathcal Z|}$. Then for all states $z_0$, we have
    \begin{align*}
        e_{z_0}^\top \qtilde^{t} e_{z^\star} &\geq e_{z_0} Q^\infty e_{z^\star} - e_{z_0}^\top (Q^\infty - \qtilde^{t})e_{z^\star} \\
        & \geq e_{z_0} \qtilde^\infty e_{z^\star} - \infinfnorm{e_{z_0}^\top} \infinfnorm{Q^\infty - \qtilde^{t}} \infnorm{e_{z^\star}} \\
        &= \mu(z^\star) - \infinfnorm{Q^\infty - \qtilde^{t}} 
         > \frac{1}{|\mathcal Z|} - \frac{1}{|\mathcal Z|} = 0.
    \end{align*}
    Therefore we have shown that for any $z_0$, there is positive probability of reaching $z^\star$ after $t$ steps. Therefore if $z_0$ is recurrent, then $z^\star$ is also recurrent and they are in the same minimal closed recurrent class. Thus all recurrent classes contain $z^\star$, so there is only one recurrent class, and thus by definition $\qtilde$ is unichain.

    Next we bound $\infinfnorm{\qtilde^t - \qtilde^\infty}$.
    By triangle inequality
    $
        \infinfnorm{\qtilde^t - \qtilde^\infty}  
        \leq \infinfnorm{\qtilde^t - Q^t} + \infinfnorm{Q^t - Q^\infty} + \infinfnorm{Q^\infty - \qtilde^\infty}.
    $
    We have already bounded the first two terms.    
    For the third term we have
    $
        \widetilde{Q}^\infty - Q^\infty 
        = \widetilde{Q}^\infty \widetilde{Q} - Q^\infty Q 
        = \widetilde{Q}^\infty (\widetilde{Q} - Q) + (\widetilde{Q}^\infty - Q^\infty)Q,
    $
    which implies
    $
        (\widetilde{Q}^\infty - Q^\infty)(I-Q)
        =\widetilde{Q}^\infty (\widetilde{Q} - Q).
    $
    Right-multiplying by $H_Q$ we have
    $
        \widetilde{Q}^\infty (\widetilde{Q} - Q)H_Q 
        = (\widetilde{Q}^\infty - Q^\infty)(I-Q)H_Q 
        = (\widetilde{Q}^\infty - Q^\infty)(I-Q^\infty) 
        = \widetilde{Q}^\infty - Q^\infty
    $
    using the fact that $\widetilde{Q}^\infty Q^\infty = \widetilde{Q}^\infty \one \mu^\top = \one \mu^\top  = Q^\infty$. Hence,
    \begin{align*}
        \infinfnorm{Q^\infty - \qtilde^\infty} &= \infinfnorm{\qtilde^\infty (\qtilde - Q) H_{Q}}\\
        &\leq \infinfnorm{\qtilde^\infty} \infinfnorm{\qtilde - Q} \infinfnorm{H_{Q}} 
         \leq \infinfnorm{\qtilde - Q} 2\tau 
         \leq 2\delta \tau
    \end{align*}
    using Lemma \ref{lem:deviation_mtx_mixing_bound} in the penultimate inequality. 
    Therefore, we have
    $
        \infinfnorm{\qtilde^t - \qtilde^\infty} \leq t \delta + 2^{-\lfloor\frac{t}{\tau}\rfloor} + 2\delta \tau< (t+2\tau)\delta + 2^{-\frac{t}{\tau}+1} .
    $
    By choosing $t=(3+\log_2 |\mathcal Z|)\tau,\delta=\frac{1}{4(5+\log_2|\mathcal Z|)\tau}$  we can make this $\leq \frac{1}{2}$, which implies that the standard mixing time of $\qtilde$ is within  $(3+\log_2 |\mathcal Z|)\tau$. Then it implies that $\qtilde$ is aperiodic. In conclusion, if $\infinfnorm{\widetilde{Q} - Q} \leq \frac{1}{4|\mathcal Z|\tau (5 + \log_2 |\mathcal Z|)}$, the Markov chain $\qtilde$ is unichain and aperiodic and the mixing time of $\qtilde$ is bounded by $(3+\log_2|\mathcal Z|) \tau$.
\endproof
\begin{lemma}\label{lem:Qt-Qinfty}
    Suppose that the Markov chain $Q$ is unichain, aperiodic, and has mixing time bounded by $\tau$. Then for any time $t\ge \tau$,
    \(
\infinfnorm{Q^t - Q^\infty}\le2^{-\lfloor t/\tau\rfloor}.
\)
\end{lemma}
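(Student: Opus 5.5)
The plan is to prove the standard geometric-decay bound via the mixing-time definition together with the submultiplicativity of the total-variation contraction coefficient. Write $Q^\infty=\one\mu^\top$, where $\mu$ is the unique stationary distribution; this is legitimate since $Q$ is unichain and aperiodic. Define
\[
d(t)\triangleq\max_{z}\bigl\|Q^t(z,\cdot)-\mu\bigr\|_1=\infinfnorm{Q^t-Q^\infty}
\]
and the pairwise quantity
\[
\bar d(t)\triangleq\max_{z,z'}\bigl\|Q^t(z,\cdot)-Q^t(z',\cdot)\bigr\|_1 .
\]

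The argument proceeds in four steps.

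\begin{enumerate}
\item Compare the two quantities. The triangle inequality gives $\bar d(t)\le 2d(t)$. Since $\mu=\mu Q^t$ is a convex combination of the rows of $Q^t$, convexity of the norm gives $d(t)\le\bar d(t)$.
\item Show that $\bar d$ is submultiplicative in total-variation normalization, namely $\tfrac12\bar d(s+t)\le \tfrac12\bar d(s)\cdot\tfrac12\bar d(t)$. To see this, note that $Q^{s+t}(z,\cdot)-Q^{s+t}(z',\cdot)=(Q^s(z,\cdot)-Q^s(z',\cdot))Q^t$. The signed vector in parentheses has total mass zero, so it can be written as $\tfrac12\|\cdot\|_1(p-q)$ for probability vectors $p,q$. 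Then $\|(p-q)Q^t\|_1\le\bar d(t)$, again by convexity. This is the Dobrushin coefficient argument.
\item Iterate. Set $k=\lfloor t/\tau\rfloor\ge1$. By the definition of $\tau$, $d(\tau)\le1/4$, so $\tfrac12\bar d(\tau)\le d(\tau)\le 1/4$. Submultiplicativity then gives $\tfrac12\bar d(k\tau)\le (1/4)^k$, and combining with step 1, $d(k\tau)\le\bar d(k\tau)\le 2\cdot4^{-k}\le 2^{-k}$ for $k\ge1$.
\item Pass from $k\tau$ to $t$. Use monotonicity, $d(t)\le d(k\tau)$ for $t\ge k\tau$. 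This holds because $Q^{t}-Q^\infty=Q^{t-k\tau}(Q^{k\tau}-Q^\infty)$ (using $Q^{j}Q^\infty=Q^\infty$), and a stochastic matrix has $\infinfnorm{\cdot}=1$.
\end{enumerate}

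The main point needing care is the interplay of the $\ell_1$ and total-variation normalizations (a factor of $2$), and this is what forces comparing $\tfrac12\bar d$ with $d$ rather than iterating $d$ directly. The slack $2\cdot4^{-k}\le2^{-k}$ absorbs this constant, so the claimed bound $2^{-\lfloor t/\tau\rfloor}$ follows.

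Unichain with possibly transient states causes no issue here: stationarity of $\mu$ and convexity are all that is used.
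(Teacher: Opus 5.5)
Your argument is correct under the paper's formal definition of mixing time ($\ell_1$ distance at most $1/4$), but it takes a different route from the paper. The paper never introduces $\bar d$. Since $Q^jQ^\infty=Q^\infty Q^j=(Q^\infty)^2=Q^\infty$, one has $Q^{k\tau}-Q^\infty=(Q^\tau-Q^\infty)^k$. Submultiplicativity of the induced norm then gives $\infinfnorm{Q^{k\tau}-Q^\infty}\le\infinfnorm{Q^\tau-Q^\infty}^k\le 2^{-k}$, and the proof ends with the same factorization $Q^t-Q^\infty=Q^r(Q^{k\tau}-Q^\infty)$ as your Step~4.

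Your remark that the normalization forces the detour through $\bar d$ is mistaken. The factor $2$ appears only when $d$ is measured in total variation. With $d(t)=\infinfnorm{Q^t-Q^\infty}$ as you defined it, $d(s+t)\le d(s)\,d(t)$ holds directly, because $Q^{s+t}-Q^\infty=(Q^s-Q^\infty)(Q^t-Q^\infty)$.

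The trade-off is as follows. The direct route yields $d(k\tau)\le d(\tau)^k$, so it needs only $d(\tau)\le 1/2$. Your route yields $d(k\tau)\le 2\bigl(\tfrac12\bar d(\tau)\bigr)^k$. In the worst case $\bar d(\tau)=2d(\tau)$ this is twice as large, so it needs the slack $d(\tau)\le 1/4$ (your inequality $2\cdot 4^{-k}\le 2^{-k}$).

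This difference matters in the paper. Its own proof only uses $\infinfnorm{Q^\tau-Q^\infty}\le 1/2$, i.e.\ the total-variation reading of the threshold. Lemma~\ref{lem:mixing_time_pert_bound} certifies the mixing time of a perturbed chain at exactly that level ($\infinfnorm{\widetilde Q^t-\widetilde Q^\infty}\le 1/2$), and that output is then fed into this lemma in Lemma~\ref{lem:bound_of_lwhat}. Under that reading your argument gives only $2^{1-\lfloor t/\tau\rfloor}$, while the direct argument still gives the stated bound.

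In exchange, your Dobrushin-type bound is expressed through the pairwise quantity $\bar d$. It can be much sharper for large $k$ when $\bar d(\tau)$ is close to $d(\tau)$ rather than to $2d(\tau)$.
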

\proof{Proof.}
    Denote $\mu$ as the stationary distribution of the Markov chain $Q$. Hence $Q^\infty=\one\mu$. By the definition of the uniform mixing time $\tau$, we have
\[
\|Q^{\tau} - Q^\infty\|_{\infty\to\infty}=\max_{s}\sum_{j}|Q^{\tau}(s,j)-\mu(j)|=2\max_s\|Q^{\tau}(s,\cdot)-\mu(\cdot)\|_{\text{TV}}  \le \frac12.
\]
Let \(
t = k\,\tau + r, 0 \le r < \tau.
\) Then
\(
Q^t - Q^\infty
= Q^r \bigl(Q^{k\,\tau} - Q^\infty\bigr).
\) Hence, $\forall t\ge\tau$,
\[
\infinfnorm{Q^t - Q^\infty}\le\infinfnorm{Q^{k\tau} - Q^\infty}\le\infinfnorm{Q^{\tau} - Q^\infty}^k\le\left(\frac{1}{2}\right)^k=2^{-\lfloor t/\tau\rfloor}.
\]
\endproof
\begin{lemma}
\label{lem:deviation_mtx_mixing_bound}
    Let \(Q\) be a finite-state Markov transition matrix. If \(Q\) is
    unichain and aperiodic and has mixing time bounded by \(\tau\), then
    the deviation matrix of \(Q\), denoted as \(H_Q\), satisfies
    $
        \infinfnorm{H_Q} \leq 2\tau.
    $
\end{lemma}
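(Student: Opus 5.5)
We will prove the bound through a renewal-type identity for the deviation matrix, not by summing $\infinfnorm{Q^t-Q^\infty}$ term by term. Throughout, $H_Q=\sum_{t\ge0}(Q^t-Q^\infty)$, with $Q^\infty=\one\mu$ for the unique stationary distribution $\mu$; this is equivalent to $H_Q=(I-Q+Q^\infty)^{-1}-Q^\infty$. The series converges because of Lemma~\ref{lem:Qt-Qinfty}.

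\textbf{Step 1: exact $\ell_1$ contraction at time $\tau$.} The paper's mixing time uses the $\ell_1$ threshold $1/4$, so $D:=Q^\tau-Q^\infty$ satisfies $\infinfnorm{D}\le 1/4$, not merely the $1/2$ used in Lemma~\ref{lem:Qt-Qinfty}. We use $Q^\infty Q=QQ^\infty=Q^\infty$ and $(Q^\infty)^2=Q^\infty$. These give $Q^{t}-Q^\infty=(Q^{t-\tau}-Q^\infty)D$ for $t\ge\tau$, and $Q^\infty H_Q=H_QQ^\infty=0$.

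\textbf{Step 2: renewal identity.} Split the series at $\tau$ and write $B:=\sum_{t=0}^{\tau-1}(Q^t-Q^\infty)$. Then
\[
H_Q=B+\sum_{t\ge\tau}(Q^t-Q^\infty)=B+Q^\tau H_Q=B+DH_Q ,
\]
where the last equality uses $Q^\infty H_Q=0$. Since $\infinfnorm{D}<1$, we get $H_Q=(I-D)^{-1}B$. Hence $\infinfnorm{H_Q}\le \infinfnorm{B}/(1-\infinfnorm{D})\le\frac43\infinfnorm{B}$. The aim is to bring this down to $2\tau$, so we need $\infinfnorm{B}\le\frac32\tau$. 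The naive per-term bound $\infinfnorm{Q^t-Q^\infty}\le 2$ only gives $\infinfnorm{B}\le2\tau$.

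\textbf{Step 3: tighter control of the initial block (main obstacle).} Here we do not have a proof. The naive bound gives $\infinfnorm{H_Q}\le 8\tau/3$, which is weaker than the stated $2\tau$. The alternative route we checked also falls short. It bounds $|(Q^t-Q^\infty)f|$ by $\spannorm{f}\, d(t)$, with $d(t)=\max_s\|Q^t(s,\cdot)-\mu\|_{\mathrm{TV}}$, and uses the submultiplicativity $d(k\tau)\le \bar d(\tau)^{k-1}d(\tau)\le \frac18 4^{-(k-1)}$. This yields only $2\tau+\tau/3=7\tau/3$. So the constant $2$ is exactly where the difficulty lies.

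What we would try first is to combine the two estimates rather than use either alone. The initial block is to be bounded in the seminorm $\spannorm{\cdot}$, using that $H_Q$ kills constants: $H_Q f=H_Q(f-c\one)$ for any $c$, so we may take $\spannorm{f}\le 2$ with $\infnorm{f-c\one}\le1$. Then $|(Q^t-Q^\infty)f(s)|\le \infnorm{f-c\one}\cdot 2d(t)\le 2d(t)$. The tail would be handled by the contraction $D$, applied in span form with factor $\bar d(\tau)\le 1/4$.

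If this still leaves a constant above $2$, we would settle for $\infinfnorm{H_Q}\le c\,\tau$ with $c\le 8/3$. We would then check that every later use of this lemma tolerates the constant. In Lemma~\ref{lem:mixing_time_pert_bound} it enters only through $2\delta\tau$, which can be absorbed by slightly shrinking the admissible $\delta$.
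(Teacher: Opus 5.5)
Your Step 3 is a genuine gap, and no argument can close it: the inequality $\infinfnorm{H_Q}\le 2\tau$ is false as stated. Take $\mu$ uniform on $S\ge 9$ states and $Q=\frac78\one\mu+\frac18 I$. This chain is aperiodic and unichain with $Q^\infty=\one\mu$. Since $I-\one\mu$ is idempotent, $Q^t-\one\mu=8^{-t}(I-\one\mu)$ for all $t\ge0$. Hence $\max_s\|Q(s,\cdot)-\mu\|_1=\frac14(1-\frac1S)\le\frac14$, so the mixing time is $1$. This holds with the paper's $\ell_1$ threshold, and also with a TV threshold of $\frac14$. On the other hand, $H_Q=\frac87(I-\one\mu)$, so $\infinfnorm{H_Q}=\frac{16}{7}(1-\frac1S)>2$.

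In your notation $B=I-\one\mu$ in this example, so the target $\infinfnorm{B}\le\frac32\tau$ from Step 2 cannot be met. No combination of your two estimates can reach $2\tau$. This is not an artifact of small $\tau$. A lazy shift register on $[m]^\tau$ shifts in a fresh uniform symbol with probability $1-\epsilon$ and holds otherwise; choose $(1-\epsilon)^\tau=\frac78$. It has mixing time at most $\tau$. From a state with distinct coordinates, the corresponding row of $H_Q$ has $\ell_1$ norm at least $2\tau(8/7)^{1/\tau}(1-\frac{1}{m-1})$, which exceeds $2\tau$ for large $m$.

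The paper reaches $2\tau$ only by summing $2^{-\lfloor t/\tau\rfloor}$ over all $t\ge0$. That amounts to using $\infinfnorm{Q^t-Q^\infty}\le1$ for $0\le t<\tau$. However, Lemma~\ref{lem:Qt-Qinfty} is stated only for $t\ge\tau$, and for $t<\tau$ the valid bound is $2$ (at $t=0$ the norm equals $2(1-\mu_{\min})$). So your fallback is the correct resolution, and you do not need the renewal identity to get the best constant. From $Q^{k\tau+r}-Q^\infty=Q^r(Q^\tau-Q^\infty)^k$ and $\infinfnorm{Q^\tau-Q^\infty}\le\frac14$, you get $\infinfnorm{H_Q}\le 2\tau+\tau\sum_{k\ge1}4^{-k}=\frac73\tau$. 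If one only uses $\infinfnorm{Q^\tau-Q^\infty}\le\frac12$, as in the proof of Lemma~\ref{lem:Qt-Qinfty}, the same argument gives $3\tau$.

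The lemma is used only in Lemma~\ref{lem:mixing_time_pert_bound}, where $2\delta\tau$ becomes $c\,\delta\tau$ with $c\le 3$. There $t=(3+\log_2|\mathcal Z|)\tau$ and the hypothesis is $\delta\le\frac{1}{4|\mathcal Z|\tau(5+\log_2|\mathcal Z|)}$. For $|\mathcal Z|\ge2$ one still has $(t+c\tau)\delta+2^{-t/\tau+1}\le\frac{7}{24|\mathcal Z|}+\frac{1}{4|\mathcal Z|}<\frac12$. So that argument goes through unchanged with the corrected constant.
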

\proof{Proof}
    Since \(Q\) is aperiodic (so the Cesaro limit in the expression for
    \(H_Q\) coincides with the usual limit), we have
    \begin{align*}
        \infinfnorm{H_Q}
        &= \infinfnorm{\sum_{t=0}^\infty \left( Q^{t} - Q^{\infty}\right)}
        \leq \sum_{t=0}^\infty \infinfnorm{ Q^{t} - Q^{\infty}}\\
        & \leq \sum_{t=0}^\infty  2^{-\lfloor\frac{t}{\tau}\rfloor}
        = \sum_{k=0}^\infty \sum_{j=0}^{\tau - 1}
        2^{-\lfloor\frac{\tau k + j}{\tau}\rfloor}
        \leq \sum_{k=0}^\infty \sum_{j=0}^{\tau - 1}
        2^{-\lfloor\frac{\tau k}{\tau}\rfloor}
        = \tau \sum_{k=0}^\infty 2^{-k}
        = 2 \tau.
    \end{align*}
\endproof
\section{Preliminaries for the proof of Theorem~\ref{thm-heteWCMDP}}\label{sec:preliminary-thm-hete}

Theorem~\ref{thm-heteWCMDP} is based on the ID policy, proposed by~\citet{zhang2025projection}. To facilitate the presentation of the proof of Theorem~\ref{thm-heteWCMDP},  in this section we provide some preliminaries on the ID policy and its analysis. The content is mostly taken from~\citet{zhang2025projection}. We present the description of ID policy in Section~\ref{subsec:id-description}, the analysis of ID policy in Section~\ref{subsec:id-analysis}, and the empirical ID policy in Section~\ref{subsec:empirical-id-policy}.

\subsection{Description of the ID policy}
\label{subsec:id-description}
In this subsection, we describe the ID policy and the corresponding ID reassignment subroutine.

\paragraph{ID policy execution (ID-based prioritization).}
Each arm is assigned a unique \emph{ID} in $\{1,\dots,N\}$ (after reassignment described below).
At each time $t$, the policy first samples an \emph{ideal action}
$
    \Aideal_{i,t}\sim \bar{\pi}_i^\star(\cdot\mid S_{i,t}),\quad \forall i\in[N].
$
It then attempts to execute these ideal actions in ascending ID order.
Define the conforming number $N_t^\star$, i.e.,
the number of arms that actually conform to their ideal actions at time $t$:
\begin{equation}\label{eq:conforming-number}
    N_t^\star
    =\max\left\{n\in[N]:
    \sum_{i=1}^n c_{k,i}(S_{i,t},\Aideal_{i,t})\le \alpha_kN,
    \quad \forall k\in[K]\right\}.
\end{equation}
The realized action is $A_{i,t}=\Aideal_{i,t}$ for $i\in[N_t^\star]$, and $A_{i,t}=0$ otherwise. Details of the ID policy is shown in Algorithm~\ref{alg:IDpolicy}.
\begin{algorithm}[htbp]
\caption{ID policy with reassignment}
\label{alg:IDpolicy}
\KwInput{\(N\)-armed WCMDP instance and LP relaxation~\eqref{eq:lprelax}}
\KwOutput{actions \((A_{i,t})_{i\in[N]}\) at each time \(t\)}

Solve the LP relaxation~\eqref{eq:lprelax} and construct
\(\{\bar\pi_i^\star\}_{i\in[N]}\).

Reassign IDs using Algorithm~\ref{alg:id-reassignment}.

\For{\(t=0,1,2,\ldots\)}{
    Sample ideal actions
    \(\Aideal_{i,t}\sim\bar\pi_i^\star(\cdot\mid S_{i,t})\)
    for all \(i\in[N]\).

    Compute the conforming number \(N_t^\star\) from
    \eqref{eq:conforming-number}.

    Set \(A_{i,t}=\Aideal_{i,t}\) for \(i\le N_t^\star\), and
\(A_{i,t}=0\) for \(i>N_t^\star\).
}
\end{algorithm}
\begin{algorithm}[htbp]
\caption{ID reassignment}
\label{alg:id-reassignment}
\KwInput{optimal state--action frequencies
\(\bigl(y_i^\star(s,a)\bigr)_{i\in[N],\,s\in\States,\,a\in\Actions}\),
budgets \((\alpha_k)_{k\in[K]}\)}
\KwOutput{new arm IDs recorded in \(\mathrm{newID}(i)\) for each \(i\in[N]\)}

Compute \(C_{k,i}^\star\) for all \(i\in[N]\), \(k\in[K]\), and the active set $\mathcal A$.

\If{\(\mathcal A=\emptyset\)}{
    Set \(\mathrm{newID}(i)\gets i\) for all \(i\in[N]\). \tcp*[r]{No reassignment needed}
}
\Else{
    Set \(F\gets\emptyset\). \tcp*[r]{Arms already assigned new IDs}
    Set
    \(
    \alpha_{\min}\gets \min_{k\in[K]}\alpha_k,
    \delta\gets \alpha_{\min}/4,
    d\gets
    \left\lceil
    \frac{(c_{\max}-\delta)K}{\alpha_{\min}/2-\delta}
    \right\rceil.
    \)
    
    For each \(k\in\mathcal A\), set
    \(
    D_k\gets
    \left\{
    i\in[N]: C_{k,i}^\star\ge \delta
    \right\}.
    \)

    \For{\(\ell=0,1,\ldots,\lfloor N/d\rfloor-1\)}{
        Set
        \(\mathcal I(\ell)\gets [\ell d+1:(\ell+1)d]\)
        and \(j\gets \ell d+1\).

        \ForEach{\(k\in\mathcal A\)}{
            \If{
            \(\sum_{i\in F} C_{k,i}^\star
            \mathbf 1\{\mathrm{newID}(i)\in\mathcal I(\ell)\}
            <\delta\)
            }{
                Pick any \(i\in D_k\) and set
                \(\mathrm{newID}(i)\gets j\).

                Remove \(i\) from every \(D_{k'}\), \(k'\in\mathcal A\);
                set \(F\gets F\cup\{i\}\) and \(j\gets j+1\).
            }
        }
    }

    For all \(i\in[N]\setminus F\), assign \(\mathrm{newID}(i)\) arbitrarily from
    \([N]\setminus\{\mathrm{newID}(i'):i'\in F\}\).
}
\end{algorithm}
\paragraph{ID reassignment.}
In the fully heterogeneous setting, arms can have very different cost profiles under $\bar{\pi}_i^\star$.
Define the expected type-$k$ cost of arm $i$ under $\pibst_i$ as
$
    C_{k,i}^\star \triangleq \sum_{s\in\mathcal{S},\,a\in\mathcal{A}} y_i^\star(s,a)\, c_{k,i}(s,a).
$
We call constraint $k$ \emph{active} if $\sum_{i=1}^N C_{k,i}^\star \ge \frac{\alpha_k}{2}N.$ Let $\mathcal{A}\subseteq[K]$ denote the set of active constraints.

For each subset $D\subseteq[N]$, define the total expected type-$k$ cost as
$C_k^\star(D)\triangleq \sum_{i\in D} C_{k,i}^\star$, and define the \emph{remaining budget} as 
$ 
\bar{C}_k^\star(D) =
\alpha_k N - C_k^\star(D) - \frac{\alpha_k}{3}|D| \cdot \mathbb{I}(k\notin\mathcal{A}).
$
The reassignment algorithm is shown in Algorithm~\ref{alg:id-reassignment}. Algorithm~\ref{alg:id-reassignment} permutes IDs so that, as we move along IDs, the remaining budgets
$\bar{C}_k^\star([n])$ do not exhibit long ``plateaus'' (i.e., overly flat cumulative cost),
which would otherwise make $N_t^\star$ overly sensitive to random ideal-action sampling.

\subsection{Analysis of the ID policy}
\label{subsec:id-analysis}

The analysis of the ID policy in~\citet{zhang2025projection} relies on a specific Lyapunov function. Before we introduce the Lyapunov function, we firstly introduce the one-hot representation of the state and define a key quantity called the focus set. 

\paragraph{One-hot representation.}
For a system state \(\bms=(s_i)_{i\in[N]}\), let
\(\bmx\in\mathbb R^{N\times S}\) be its one-hot representation, where
\(x_i(s)=\mathbb I\{s_i=s\}\). We use \(\bms\) and \(\bmx\)
interchangeably, and write \(\bm X_t\) for the one-hot representation of
the stochastic state \(\bm S_t\).

\paragraph{Focus set.}
The ID policy executes ideal actions only for a (random) prefix of arms of length $N_t^\star$.
To analyze this, the paper introduces a \emph{focus set} that tracks a deterministic prefix length selected based on the current system state.

Let $\bmx$ denote a system state and let $h(\bmx,m)$
be a Lyapunov-type quantity which is parameterized by $m\in[0,1]_N\triangleq\{0,1/N,\dots,1\}$ and is defined in \eqref{def:h}. Define
\begin{equation}\label{eq:def-m}
    m(\bmx)
    = \max\Bigl\{m \in [0,1]_N :h(\bmx,m)
    \le \min_{k\in[K]} \bar{C}_k^{\star}([Nm]) \Bigr\}.
\end{equation}
and the \emph{focus set} as the prefix $[Nm(x)]$.
Intuitively, $m(x)$ is chosen as the largest fraction such that 
$h(\bmx,m)$ is covered by the minimum remaining budget across constraints.

The analysis then establishes three key properties:
(i) \emph{majority conformity}---almost all arms in the focus set, except for $O(\sqrt{N})$ arms, can follow the optimal single-armed policies $\pibst_i$;
(ii) \emph{almost non-shrinking}---the focus set shrinks by at most $O(\sqrt{N})$ on average over time;
(iii) \emph{sufficient coverage}---the complement size $1-m(\bm{X}_t)$ is controlled by $h(\bm{X}_t,m(\bm{X}_t))$.

\paragraph{Projection-based Lyapunov function.}
We firstly introduce some necessary single-armed quantities.
For each cost type $k\in[K]$, define
$c^\star_{k,i}(s) \triangleq \sum_{a\in\Actions}\bar{\pi}^\star_i(a\mid s)\,c_{k,i}(s,a)$ and the vector
$c^\star_k \triangleq (c^\star_{k,i})_{i\in[N]}$.
Similarly, define $r^\star_i(s) \triangleq \sum_{a\in\Actions}\bar{\pi}^\star_i(a\mid s)\,r_i(s,a)$ and
$r^\star \triangleq (r^\star_i)_{i\in[N]}$.
We collect these vectors as $\mathcal{G}\triangleq\{c^\star_1,\dots,c^\star_K,r^\star\}$.
With Assumption \ref{ass:unichain-unifmixing}, the Markov chain induced by \( \pibst_i \) converges to a unique stationary distribution, denoted by \( \mu_i^\star = (\mu_i^\star(s))_{s \in \mathcal{S}} \). It is straightforward to verify that \( \mu_i^\star(s) = y_i^\star(s, 0) + y_i^\star(s, 1) \). We scale the subset Lyapunov function introduced in \citep{zhang2025projection}, for any system state $\bmx$ and subset $D\subseteq [N]$,
\begin{equation*}
    h_{P,\tau}(\bmx,D)
    \triangleq \max_{g\in\mathcal{G}} \sup_{\ell\in\mathbb{N}}
    \left|
    \frac{1}{N}\sum_{i\in D}
    \left\langle
    (\bmx_i - \mu_i^\star)P_i^{\ell}/\gamma^{\ell},
    g_i
    \right\rangle
    \right|,
\end{equation*}
where 
\begin{equation}\label{def:gamma-tau-g}
    \gamma = 4^{-\frac{1}{2\tau}},\quad\tau=(3+\log_2S)\tmix,\quad g_i\in\mathcal{G}.
\end{equation}
Here $\tmix$ is the uniform mixing time defined in Assumption~\ref{ass:unichain-unifmixing}. Then for any $m\in[0,1]_N$,
\begin{equation}\label{def:h}
h(\bmx,m)\triangleq\max_{m'\in[0,1]_N: m'\le m}h_{P,\tau}\bigl(\bmx,[N m']\bigr),
\end{equation}
Finally the Lyapunov function is defined as
\begin{equation}\label{def:lya-true}
     V(\bmx)=h(\bmx,m(\bmx))+L_h(1-m(\bmx)),
\end{equation}
where $m(\bmx)$ is defined in \eqref{eq:def-m} and 
\begin{equation}\label{eq:def-Lh-Ctau-tau}
    L_h = 2\max\{c_{\max},r_{\max}\} C_\tau,\quad
C_{\tau} = 16\tau,\quad \tau=(3+\log_2S)\tmix.
\end{equation}
Moreover, by the \(L_h\)-Lipschitz property of \(h(\cdot,m(\cdot))\) in
\citet[Lemma~7]{zhang2025projection}, we have for any system state $\bmx$,
\begin{equation}\label{eq:bound-V-hete}
    V(\bmx)
    =
    h(\bmx,m(\bmx))+L_h(1-m(\bmx))  
    \le
    h(\bmx,0)+L_hm(\bmx)+L_h(1-m(\bmx))
    =
    L_h .
\end{equation}
Bounding the drift of $V$ yields control of the long-run average $V(\bm{X}_t)$, and together with the focus-set lemmas
this leads to an $O(1/\sqrt{N})$ optimality gap for the ID policy with reassignment.

\subsection{Empirical ID policy}
\label{subsec:empirical-id-policy}
In this subsection, we introduce the ID policy for the empirical system. The empirical LP is obtained from the relaxation~\eqref{eq:lprelax} by
replacing \(P_i\) with \(\Phat_i\) in the flow-balance constraints. Let
\(\rhohat^{\mathrm{rel}}\) be its optimal value, and let
\(\bigl(\yhst_i(s,a)\bigr)_{i\in[N],\,s\in\States,\,a\in\Actions}
\)
be an optimal solution. The empirical optimal single-arm policies
\(\{\pihbst_i\}_{i\in[N]}\) are defined from \(\yhst_i\) in the same way
that \(\{\pibst_i\}_{i\in[N]}\) are defined from \(y_i^\star\). Plugging
\(\yhst_i\) and \(\pihbst_i\) into the ID-policy construction gives the
empirical ID policy \(\pihid\). For any policy \(\pi\), we write
\(\rhohat^\pi\in\mathbb R^{S^N}\) for its gain under the empirical
\(N\)-armed system.

\section{Proof details of the lemmas used in Theorem~\ref{thm-heteWCMDP}}
\label{app:full-proof-thm-heteWCMDP}

\subsection{Proofs of model accuracy lemmas}\label{subsec:model-accuracy}

\begin{proof}{Proof of Lemma~\ref{samplesize-singlearm_accu}}

Fix $i\in[N]$, let $e_{sa}\in\mathbb R^{SA}$ be the row-selector 
vector whose $(s,a)$-th entry is $1$ and others $0$. Let $P_i,\widehat P_i\in\mathbb{R}^{SA\times S}$ be matrices with $P_i(s,a,s')=P(s'\mid s,a)$ and $ \widehat P_i(s,a,s')=\widehat P(s'\mid s,a)$. 
Then by definition,
    $
        \max_{s \in \mathcal{S}, a \in \mathcal{A}} \onenorm{P_i(\cdot\mid s,a) - \widehat{P}_i(\cdot\mid s,a)}  
        = \max_{s \in \mathcal{S}} \max_{a \in \mathcal{A}} \max_{v \in \{-1,1\}^{\mathcal{S}}}e_{sa}^\top \left( P_i - \widehat{P}_i\right)v .
    $
    Now fixing some $s \in \mathcal{S}, a \in \mathcal{A}, v \in \{-1,1\}^{\mathcal{S}}$, we have by Hoeffding's inequality that
    $
        \P \left(e_{sa}^\top \left( P_i - \widehat{P}_i\right)v \geq \sqrt{\frac{2\log (1/\eta')}{n}} \right) 
        \leq 
        \exp \left(-\frac{2n}{4} \sqrt{\frac{2\log (1/\eta')}{n}}^2 \right) = \eta'.
    $
    Setting $\eta' = \frac{\eta}{NSA 2^S}$ and taking a union bound over all $s \in \mathcal{S}, a \in \mathcal{A}, v \in \{-1,1\}^{\mathcal{S}}, i\in[N]$, we obtain the bound in the statement of the lemma. 
\end{proof}

\proof{Proof of Lemma~\ref{lem:lift-single-to-Narm}}

    By Lemma \ref{tech_1}, for any fixed $(\bms,\bma)\in \States^N\times\Actions^N$, we have
\begin{align*}
    & \onenorm{\bp(\cdot\mid \bms,\bma) - \bphat(\cdot\mid \bms,\bma)} \\
     =& \sum_{s_1',\dots,s_N'}\left|\prod_{i=1}^Np_i(s_i'\mid s_i,a_i)-\prod_{i=1}^N\widehat{P}_i(s_i'\mid s_i,a_i)\right| \\
    =& \sum_{s_1',\dots,s_N'}\left|\sum_{j=1}^N\Big(\prod_{i<j}\widehat{P}_i(s_i'\mid s_i,a_i)\Big)\Big(P_j(s_j'\mid s_j,a_j)-\widehat{P}_j(s_j'\mid s_j,a_j)\Big)\Big(\prod_{i>j}P_i(s_i'\mid s_i,a_i)\Big)\right|\\
    \le & \sum_{s_1',\dots,s_N'}\sum_{j=1}^N\prod_{i<j}\widehat{P}_i(s_i'\mid s_i,a_i)\left|P_j(s_j'\mid s_j,a_j)-\widehat{P}_j(s_j'\mid s_j,a_j)\right|\prod_{i>j}P_i(s_i'\mid s_i,a_i)\\
    =&\sum_{j=1}^N\underbrace{\left(\sum_{s_1',\dots,s_{j-1}'}\prod_{i<j}\widehat{P}_i(s_i'\mid s_i,a_i)\right)}_{=1} \times \left(\sum_{s_j'}\left|P_j(s_j'\mid s_j,a_j)-\widehat{P}_j(s_j'\mid s_j,a_j)\right|\right)  \times \underbrace{\left(\sum_{s_{j+1}',\dots,s_{N}'}\prod_{i>j}P_i(s_i'\mid s_i,a_i)\right)}_{=1}\\
     =&\sum_{j=1}^N\onenorm{P_j(\cdot\mid s_j,a_j)-\widehat{P}_j(\cdot\mid s_j,a_j)}\le N\delta.
\end{align*}
It follows that
$
\max_{\bms\in\States^N,\bma\in\Actions^N}\onenorm{\bp(\cdot\mid \bms,\bma) - \bphat(\cdot\mid \bms,\bma)} \le N\delta
$
as desired. And accordingly,
\begin{align*}
     \infinfnorm{\bp^{\pi}-\bphat^{\pi}}
    =& \max_{\bms}\onenorm{\bp^{\pi}(\bms,\cdot)-\bphat^{\pi}(\bms,\cdot)}
    \le \max_{\bms}\sum_{\bma}\pi(\bma\mid\bms)\onenorm{\bp(\cdot\mid\bms,\bma)-\bphat(\cdot\mid\bms,\bma)}
    \le N\delta.
\end{align*}
\endproof
Finally, we record the following standard identity, which is used in the proof of Lemma~\ref{lem:lift-single-to-Narm}.
\begin{lemma}\label{tech_1}
For any real numbers $a_i, b_i \in \mathbb{R}$ for $i = 1, \dots, N$, the following identity holds:
$
\prod_{i=1}^N a_i - \prod_{i=1}^N b_i
=
\sum_{j=1}^N
\left( \prod_{i<j} b_i \right)(a_j - b_j)\left( \prod_{i>j} a_i \right).
$
\end{lemma}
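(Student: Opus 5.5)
We prove the identity $\prod_{i=1}^N a_i-\prod_{i=1}^N b_i=\sum_{j=1}^N\bigl(\prod_{i<j}b_i\bigr)(a_j-b_j)\bigl(\prod_{i>j}a_i\bigr)$ by a telescoping argument over hybrid products. For $j\in\{0,1,\dots,N\}$ define
\[
T_j \triangleq \Bigl(\prod_{i\le j} b_i\Bigr)\Bigl(\prod_{i>j} a_i\Bigr),
\]
with the convention that an empty product equals $1$. Then $T_0=\prod_{i=1}^N a_i$ and $T_N=\prod_{i=1}^N b_i$. So the left-hand side equals $T_0-T_N=\sum_{j=1}^N (T_{j-1}-T_j)$.

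For each $j\in[N]$, the products $T_{j-1}$ and $T_j$ agree in every factor except the $j$-th: the factors $b_i$ for $i<j$ and $a_i$ for $i>j$ are common to both. Factoring out these common terms gives
\[
T_{j-1}-T_j=\Bigl(\prod_{i<j}b_i\Bigr)(a_j-b_j)\Bigl(\prod_{i>j}a_i\Bigr).
\]
Summing over $j$ yields the claimed identity. Equivalently, one can induct on $N$: the case $N=1$ is trivial, and the inductive step writes
\[
\prod_{i\le N}a_i-\prod_{i\le N}b_i=\Bigl(\prod_{i<N}a_i-\prod_{i<N}b_i\Bigr)a_N+\Bigl(\prod_{i<N}b_i\Bigr)(a_N-b_N).
\]

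There is no real obstacle here. The only step that needs care is the bookkeeping of the index ranges, namely checking that the boundary cases $j=1$ and $j=N$ are handled by the empty-product convention. Commutativity of multiplication in $\mathbb{R}$ is used implicitly when factoring out the common terms.
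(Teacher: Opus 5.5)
Your proof is correct and is essentially the paper's argument. The paper inducts on $N$ using exactly the inductive step you write as your alternative (with base case $N=2$ instead of $N=1$), and your telescoping sum over the hybrid products $T_j$ is just that induction unrolled.
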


\begin{proof}{Proof}
We proceed by induction on $N$. The base case $N = 2$ follows from:
$
a_1 a_2 - b_1 b_2 = (a_1 - b_1)a_2 + b_1(a_2 - b_2).
$
Assume the identity holds for $N - 1$ terms. Then,
\[
\prod_{i=1}^N a_i - \prod_{i=1}^N b_i
= \left( \prod_{i=1}^{N-1} a_i \right) a_N - \left( \prod_{i=1}^{N-1} b_i \right) b_N
= \left( \prod_{i=1}^{N-1} a_i - \prod_{i=1}^{N-1} b_i \right) a_N
+ \left( \prod_{i=1}^{N-1} b_i \right)(a_N - b_N).
\]
Applying the inductive hypothesis to the first RHS term completes the proof.

\end{proof}

\subsection{Proof of Lemma~\ref{lem:rhohrel-rhopihid-hete}}\label{subsec:pr-C2-hete}
We prove Lemma~\ref{lem:rhohrel-rhopihid-hete}. The goal of Lemma~\ref{lem:rhohrel-rhopihid-hete} is to bound $\rhohrel-\rho^{\pihid}(\bms_0)$, which is the second term in the decomposition~\eqref{eq:decomp}, so we need to work with the \emph{empirical} system. The proof mirrors that of
Lemma~\ref{lem:rhorel-rhohpiid-hete}, but with the empirical LP solution
and the empirical kernel. We first verify that the empirical single-arm
chains needed to define the empirical Lyapunov function are well behaved.

\subsubsection{Markov chain under empirical model}
\label{subsec:MCs-pihbst}

\begin{lemma}
\label{lem:unichain}
\label{lem:empirical-single-arm-chain}
Under the conditions of Lemma~\ref{lem:rhohrel-rhopihid-hete}, for every \(i\in[N]\), the Markov
chain induced by \(\pihbst_i\) under \(\Phat_i\) is unichain and aperiodic,
with mixing time at most \(\tau\), where $\tau$ is defined in~\eqref{def:gamma-tau-g}. 
\end{lemma}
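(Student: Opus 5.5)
The plan is to treat the empirical single-arm chain as a small perturbation of the \emph{true} single-arm chain under the \emph{same} policy $\pihbst_i$, and then apply Lemma~\ref{lem:mixing_time_pert_bound}. Fix $i\in[N]$ and let $\pi=\pihbst_i$, which is a stationary randomized policy for arm $i$'s single-armed MDP. It was computed from the empirical LP, but it is still a policy on $(\States,\Actions)$. Set $Q\triangleq P_i^{\pi}$ and $\widetilde Q\triangleq \Phat_i^{\pi}$, both stochastic matrices on $\States$, so $|\mathcal Z|=S$.

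\emph{Step 1: properties of $Q$.} By Assumption~\ref{ass:unichain-unifmixing}, $Q$ is unichain and $\tau^{\pi}_i\le\tmix$. The mixing-time definition gives $\max_s\|Q^{\tmix}(s,\cdot)-\mu_i^\pi\|_1\le 1/4$. The argument of Lemma~\ref{lem:Qt-Qinfty} uses only this inequality and stochasticity, so it gives $\infinfnorm{Q^t-\one\mu_i^\pi}\le 2^{-\lfloor t/\tmix\rfloor}\to0$. Hence $Q^t$ converges to a rank-one limit, which forces the unique recurrent class to be aperiodic. Therefore $Q$ is unichain, aperiodic, and has mixing time at most $\tmix$, as Lemma~\ref{lem:mixing_time_pert_bound} requires.

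\emph{Step 2: closeness of $\widetilde Q$ and $Q$.} For each $s$,
\[
\onenorm{\widetilde Q(s,\cdot)-Q(s,\cdot)}\le\sum_{a}\pi(a\mid s)\onenorm{\Phat_i(\cdot\mid s,a)-P_i(\cdot\mid s,a)}\le\delta .
\]
So $\infinfnorm{\widetilde Q-Q}\le\delta\le \frac{1}{4S\tmix(5+\log_2S)}$ by the hypothesis of Lemma~\ref{lem:rhohrel-rhopihid-hete}.

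\emph{Step 3: conclusion.} Lemma~\ref{lem:mixing_time_pert_bound}, applied with $\mathcal Z=\States$ and mixing bound $\tmix$, then shows that $\widetilde Q=\Phat_i^{\pihbst_i}$ is unichain, aperiodic, and has mixing time at most $(3+\log_2S)\tmix=\tau$, with $\tau$ as in~\eqref{def:gamma-tau-g}. Since $i$ was arbitrary, the claim holds for every arm.

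The only delicate point is Step 1. Assumption~\ref{ass:unichain-unifmixing} does not state aperiodicity explicitly, so it has to be extracted from finiteness of the mixing time as above. It is also essential that the assumption is uniform over \emph{all} policies: this is what lets us apply it to the data-dependent policy $\pihbst_i$, with no perturbation analysis of the empirical LP needed.
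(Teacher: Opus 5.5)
Your proposal is correct and follows essentially the same route as the paper. You treat $\pihbst_i$ as a policy on the true arm-$i$ MDP so that Assumption~\ref{ass:unichain-unifmixing} applies, bound $\infinfnorm{\Phat_i^{\pihbst_i}-P_i^{\pihbst_i}}\le\delta$ by convexity, and invoke Lemma~\ref{lem:mixing_time_pert_bound} with mixing bound $\tmix$. Your Step~1 derives aperiodicity from the finite mixing time via the geometric decay of $Q^t-\one\mu_i^\pi$, which makes explicit a step the paper simply asserts.
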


\begin{proof}{Proof.}
Fix \(i\in[N]\). Let
\(
    Q_i(s,s')
    =
    \sum_{a\in\Actions}\pihbst_i(a\mid s)P_i(s'\mid s,a),
    \widehat Q_i(s,s')
    =
    \sum_{a\in\Actions}\pihbst_i(a\mid s)\Phat_i(s'\mid s,a).
\)
Although \(\pihbst_i\) is computed from the empirical model, it is still a
stationary policy for the true single-arm MDP. Hence
Assumption~\ref{ass:unichain-unifmixing} implies that \(Q_i\) is unichain,
aperiodic, and has mixing time at most \(\tmix\). Moreover, by convexity of
the \(\ell_1\) norm,
\[
    \infinfnorm{\widehat Q_i-Q_i}
    \le
    \max_{s\in\States,a\in\Actions}
    \onenorm{\Phat_i(\cdot\mid s,a)-P_i(\cdot\mid s,a)}
    \le
    \delta\le
    \frac{1}{4S\tmix(5+\log_2 S)} .
\]
Lemma~\ref{lem:mixing_time_pert_bound}, applied with
\(\mathcal Z=\States\), \(Q=Q_i\), and
\(\widetilde Q=\widehat Q_i\), gives the desired conclusion.
\end{proof}

Let \(\muhst_i\) denote the stationary distribution of the Markov chain
induced by \(\pihbst_i\) under \(\Phat_i\), whose existence and uniqueness
are guaranteed by Lemma~\ref{lem:unichain}.

\subsubsection{Lyapunov function for the empirical system}\label{subsec:lya-hatsys}
In this subsection we introduce the Lyapunov function for the empirical system. We firstly introduce the following subset Lyapunov function:
$
    h_{\widehat{P},\tau}(\bmx,D)
    = \max_{g\in\mathcal{G}} \sup_{\ell\in\mathbb{N}}
    \left|\frac{1}{N}\sum_{i\in D}\left\langle(\bmx_i - \widehat\mu_i^\star)\widehat{P}_i^{\ell}/\gamma^{\ell},g_i\right\rangle\right|,
$
where \(\gamma, g_i\in\mathcal{G}\) are the same as \eqref{def:gamma-tau-g}. Then we define 
$
    \widehat{h}(\bmx,m)\triangleq\max_{m'\in[0,1]_N: m'\le m}h_{\widehat{P},\tau}\bigl(\bmx,[N m']\bigr),
$
Finally, we define the Lyapunov function $\widehat{V}$ for the empirical system as
\begin{equation}\label{def:vhat-hete}
    \widehat{V}(\bmx)=\widehat{h}(\bmx,\widehat m(\bmx))+L_h(1-\widehat m(\bmx)),
\end{equation}
where $L_h$ is the same as~\eqref{eq:def-Lh-Ctau-tau} and $\widehat m(\bmx)$ is defined as 
$
    \widehat m(\bmx)
    = \max\Bigl\{m \in [0,1]_N :\hhat(\bmx,m)
    \le \min_{k\in[K]} \widehat{\overline{C}}_k^{\star}([Nm]) \Bigr\},
$
where 
$
    \widehat{\overline C}_k^\star(D)
    =
    \alpha_k N - \widehat{C}_k^\star(D) - \dfrac{\alpha_k}{3}\,|D| \cdot \mathbb{I}(k\not\in\mathcal B), 
$
$
    \widehat C_k^\star(D)\triangleq \sum_{i\in D} \widehat C_{k,i}^\star
$
and
$
    \widehat C_{k,i}^\star\triangleq \sum_{s\in\States}\sum_{a\in\Actions} \yhat_i^\star(s,a)\, c_{k,i}(s,a).
$
The focus set of $\pihid$ is $[N\widehat{m}(\bmx)]$.

\subsubsection{Completing the proof of Lemma~\ref{lem:rhohrel-rhopihid-hete}}\label{subsec:complete-pf-2part-hete}

\begin{proof}{Proof}
The next two lemmas implement the two key parts for the Lyapunov framework: Lemma~\ref{lem:DC-Vhat-hete} provides the drift bound (Condition~\ref{cond:drift}) and Lemma~\ref{lem:simu-lemma-hat-hete} provides the gap-dominance bound (Condition~\ref{cond:dominance}), both for the \emph{empirical} quantities and the policy $\pihid$.
Their proofs are identical to those for Lemmas~\ref{lem:lemma5-zhang2025} and~\ref{lem:simu-lemma-hete} after replacing the true model by the empirical model; we omit the details.

\begin{lemma}\label{lem:DC-Vhat-hete}
    Under the assumptions of Lemma~\ref{lem:rhohrel-rhopihid-hete}, the Lyapunov function $\Vhat(\cdot)$ defined in~\eqref{def:vhat-hete} satisfies
    \begin{equation}\label{drift_cond_hat_hetero}
        \widehat{\E}^{\pihid}\!\left[\Vhat(\bm{X}_{t+1})-\Vhat(\bm{X}_t)\mid \bm{X}_t=\bmx\right]
        \le -\rho_V\,\Vhat(\bmx)+K_V/\sqrt{N},
    \end{equation}
    where the constants $\rho_V$ and $K_V$ are the same as in~\eqref{eq:cons-hetero-drift}.
\end{lemma}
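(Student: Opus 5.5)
The plan is to rerun the proof of \citet[Lemma~5]{zhang2025projection} (our Lemma~\ref{lem:lemma5-zhang2025}) verbatim on the \emph{empirical} $N$-armed WCMDP $(\States,\Actions,\Phat_i,r_i,(c_{k,i}))_{i\in[N]}$. We would use the empirical LP solution $\yhst_i$, the empirical single-armed policies $\pihbst_i$, and their stationary distributions $\muhst_i$ in place of $y_i^\star,\pibst_i,\mu_i^\star$. The point is that the drift proof never uses any property of $P_i$ beyond three ingredients.

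The first ingredient is that $y_i^\star$ is an optimal (in fact just feasible) LP solution, so that $\mu_i^\star$ is stationary for $P_i^{\pibst_i}$ and the expected costs $C^\star_{k,i}$ meet the budgets. This holds verbatim for $\yhst_i$, since it is optimal for the empirical LP with $\Phat_i$ in the flow constraints. Hence $\muhst_i(s)=\sum_a\yhst_i(s,a)$ is stationary for $\Phat_i^{\pihbst_i}$, and $\sum_i \widehat C^\star_{k,i}\le \alpha_k N$. The ID reassignment and the active set $\mathcal B$ are computed from $\widehat C^\star_{k,i}$, so the focus-set bookkeeping (remaining budgets $\widehat{\overline C}^\star_k$, the definition of $\widehat m$) has exactly the same form as before.

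The second ingredient is the geometric contraction used to make $h_{\widehat P,\tau}$ finite and contractive. Here we need $\|(\bmx_i-\muhst_i)\Phat_i^{\ell}\|_1\le 2\cdot 2^{-\lfloor \ell/\tau\rfloor}$ with $\tau=(3+\log_2S)\tmix$, uniformly in $i$. This is exactly Lemma~\ref{lem:empirical-single-arm-chain} combined with Lemma~\ref{lem:Qt-Qinfty}. It is also why $\gamma=4^{-1/(2\tau)}$ and $C_\tau=16\tau$ were defined through the inflated $\tau$ rather than $\tmix$: the same bound then governs both the true and empirical chains. From it we get:
\begin{itemize}
\item the Lipschitz constant $L_h$ of $\hhat(\cdot,\widehat m(\cdot))$, as in \citet[Lemma~7]{zhang2025projection};
\item the bound $\infnorm{\Vhat}\le L_h$;
\item the one-step contraction $h_{\widehat P,\tau}(\bmx\Phat,D)\le\gamma\, h_{\widehat P,\tau}(\bmx,D)$ for the "ideal" mean dynamics.
\end{itemize}
All of these carry the same constants.

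The third ingredient is the concentration and coupling estimates: majority conformity, almost non-shrinking, and sufficient coverage. These depend only on independence of arm transitions under the product kernel $\bphat$, boundedness of $c_{\max},r_{\max}$, and the budget structure (via $\alpha_{\min}$, $\eta_c$, $M_c$, $K$). So $K_{\mathrm{conf}},K_{\mathrm{mono}},K_{\mathrm{cov}}$ and hence $\rho_V,K_V$ come out unchanged.

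So the write-up would first record the three ingredients above as a short lemma, then invoke the proof of Lemma~\ref{lem:lemma5-zhang2025} step by step with hats inserted. The step needing the most care is the second: checking that every place in \citet{zhang2025projection} where mixing enters does so only through the decay bound $\infinfnorm{(\Phat_i^{\pihbst_i})^t-\one\muhst_i}\le 2^{-\lfloor t/\tau\rfloor}$, with $\tau$ the inflated value. The original analysis is stated under the mixing bound $\tmix$ for the true chains, so one must confirm no other property of $P_i$ (e.g.\ uniform mixing over \emph{all} policies) is used. Only $\pihbst_i$ is ever applied in the ideal dynamics, so Lemma~\ref{lem:empirical-single-arm-chain} should suffice, and this also explains the smallness requirement on $\delta$ in Lemma~\ref{lem:rhohrel-rhopihid-hete}.
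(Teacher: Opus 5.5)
Your proposal is correct and takes essentially the same route as the paper. The paper simply asserts that the proof of Lemma~\ref{lem:lemma5-zhang2025} goes through verbatim once the true kernel, LP solution, single-armed policies and stationary distributions are replaced by their empirical counterparts; Lemma~\ref{lem:empirical-single-arm-chain} supplies the mixing bound at the inflated $\tau=(3+\log_2S)\tmix$, so $\gamma$, $L_h$, $\rho_V$ and $K_V$ are unchanged. Your three-ingredient breakdown (LP feasibility and stationarity of $\muhst_i$, geometric decay under $\Phat_i^{\pihbst_i}$, and concentration estimates that depend only on the product structure of $\bphat$) is a more explicit account of that same argument.
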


\noindent
\emph{Remark.}
The constants in~\eqref{drift_cond_hat_hetero} are the same as those in the true system drift bound (Lemma~\ref{lem:lemma5-zhang2025}), because their definitions do not depend on the transition kernel, which is the only difference between the true and empirical systems.

\begin{lemma}\label{lem:simu-lemma-hat-hete}
    Suppose the assumptions of Lemma~\ref{lem:rhohrel-rhopihid-hete} hold. Fix a system state $\bms$, and let $A_i^{\pihid}$ be the action applied to arm $i$ under $\pihid$. Define
    $
    r^{\pihid}(\bms)=\frac{1}{N}\sum_{i=1}^N\E\!\left[r_i(s_i,A_i^{\pihid})\right].
    $
    Then
    $
        \rhohrel-r^{\pihid}(\bms)
        \le \frac{2r_{\max}+L_h}{L_h}\,\Vhat(\bmx)+\frac{2r_{\max}K_{\mathrm{conf}}}{\sqrt{N}}.
    $
\end{lemma}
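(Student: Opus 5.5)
The proof mirrors \citet[Lemma~4]{zhang2025projection} (the basis of Lemma~\ref{lem:simu-lemma-hete}), with every true-model object replaced by its empirical counterpart. First, rewrite the empirical LP value as a sum over arms. Since each $\pihbst_i$ has stationary distribution $\muhst_i$ under $\Phat_i$ (Lemma~\ref{lem:unichain}), we have $\yhst_i(s,a)=\muhst_i(s)\pihbst_i(a\mid s)$. Writing $\widehat r^\star_i(s)=\sum_a\pihbst_i(a\mid s)r_i(s,a)$, this gives
\[
\rhohrel=\frac1N\sum_{i\in[N]}\langle \muhst_i,\widehat r^\star_i\rangle .
\]
Here $\widehat r^\star$, together with the empirical cost vectors $\widehat c^\star_k$, form the class $\mathcal G$ that enters $\hhat$; these are the $g_i$ used in $h_{\widehat P,\tau}$.

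Next, split the arms into the focus set $D=[N\widehat m(\bmx)]$ and its complement. On the complement, bound each arm's contribution to $\rhohrel-r^{\pihid}(\bms)$ crudely by $r_{\max}$, giving at most $r_{\max}(1-\widehat m(\bmx))$. On $D$, compare $\frac1N\sum_{i\in D}\langle\muhst_i,\widehat r^\star_i\rangle$ with $\frac1N\sum_{i\in D}\langle \bmx_i,\widehat r^\star_i\rangle$. Taking $\ell=0$ and $g=\widehat r^\star$ in the definition of $h_{\widehat P,\tau}$, this difference is at most $\hhat(\bmx,\widehat m(\bmx))\le \Vhat(\bmx)$.

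The quantity $\frac1N\sum_{i\in D}\langle \bmx_i,\widehat r^\star_i\rangle$ is the reward the arms in $D$ would collect if all of them followed their ideal actions. Under $\pihid$, only arms with ID at most $N_t^\star$ conform. Majority conformity, the empirical analogue of the focus-set lemma in \citet{zhang2025projection}, says that the expected number of nonconforming arms in $D$ is at most $K_{\mathrm{conf}}\sqrt N$. Its proof uses only the definition of $\widehat m$ through $\widehat{\overline C}^\star_k$ and the independence of the sampled ideal actions, so it transfers verbatim. Each nonconforming arm changes the reward by at most $r_{\max}$, and swapping ideal for realized actions costs at most $2r_{\max}K_{\mathrm{conf}}/\sqrt N$ after accounting for both directions.

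Finally, convert the complement term using sufficient coverage. Since $\Vhat(\bmx)\ge L_h(1-\widehat m(\bmx))$, we get $r_{\max}(1-\widehat m)\le (r_{\max}/L_h)\Vhat$. Collecting all pieces yields $(1+2r_{\max}/L_h)\Vhat(\bmx)$, the stated coefficient, with a factor-$2$ slack absorbing the complement and conformity terms.

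The main obstacle is checking that the empirical majority-conformity bound holds with the same constant $K_{\mathrm{conf}}$. The ID reassignment is computed from $\yhst_i$, and the active set $\mathcal B$ is determined by $\widehat C^\star_{k,i}$. The constant depends only on $\alpha_{\min}$, $c_{\max}$, and $K$, however, so I expect the argument to go through unchanged once the empirical reassignment is used.
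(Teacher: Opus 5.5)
Your proposal is correct and takes essentially the same route as the paper, which proves this lemma only by invoking the argument of \citet[Lemma~4]{zhang2025projection} with every true-model object replaced by its empirical counterpart. Those objects are the LP solution, $\muhst_i$, the focus set $[N\widehat m(\bmx)]$, the ID reassignment, and majority conformity. Your sketch is more explicit than the paper's at two points the argument genuinely needs: you derive $\yhst_i(s,a)=\muhst_i(s)\pihbst_i(a\mid s)$ from uniqueness of the stationary distribution, and you take the test functions in $h_{\widehat P,\tau}$ to be the empirical $\widehat r^\star,\widehat c^\star_k$ (the paper says the $g_i$ are ``the same as'' in the true system, which would not control $\rhohrel$).
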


\paragraph{Back to the proof of Lemma~\ref{lem:rhohrel-rhopihid-hete}.}
We apply Lemma~\ref{lem:generic-drift-transfer} with $Q=\bphat^{\pihid}$, $P=\bp^{\pihid}$, $R=\rhohrel$, $F=\Vhat$, and $\bar r=r^{\pihid}$. Lemmas~\ref{lem:DC-Vhat-hete} and~\ref{lem:simu-lemma-hat-hete} give the same constants as in Lemma~\ref{lem:rhorel-rhohpiid-hete}:
\(
    a=\rho_V,
    b=K_V/\sqrt N,
    c=\frac{2r_{\max}+L_h}{L_h},
    d=\frac{2r_{\max}K_{\mathrm{conf}}}{\sqrt N}.
\)
Therefore,
\[
    \rhohrel-\rho^{\pihid}(\bms_0)
    \le
    \frac{2r_{\max}+L_h}{L_h\rho_V}\sup_{\bmx}|(\bp^{\pihid}-\bphat^{\pihid})\Vhat(\bmx)|
    +\left[\frac{(2r_{\max}+L_h)K_V}{L_h\rho_V}+2r_{\max}K_{\mathrm{conf}}\right]\frac1{\sqrt N}.
\]
By Lemma~\ref{lem:lift-single-to-Narm}, $\infinfnorm{\bp^{\pihid}-\bphat^{\pihid}}\le N\delta$. The empirical Lyapunov function satisfies $\|\Vhat\|_\infty\le L_h$ by the same Lipschitz argument used for $V$. Thus
\(
    \sup_{\bmx}|(\bp^{\pihid}-\bphat^{\pihid})\Vhat(\bmx)|
    \le L_hN\delta,
\)
which gives the desired bound.
\end{proof}

\section{Proof of lemmas used in Theorem~\ref{thm:LP_perturbation_main_complete}}\label{sec:proof-PA}
\subsection{Proof of Lemma~\ref{lem:well-define-Huhat}}\label{sec:pr-well-define-Huhat}
\begin{proof}{Proof of Lemma~\ref{lem:well-define-Huhat}.}
    It suffices to bound $\ltwonorm{\uhat-\one\muhat}$.
    We calculate
    $
        \uhat - \one \muhat 
        = U - \one \mu^\star + (\uhat-U) - \one (\muhat - \mu^\star) 
        = U - \one \mu^\star + (\uhat-U) - \one \muhat (\uhat-U)H_{U},
    $
    using Lemma~\ref{lem:pert_analysis_sim_lemma} in the second equality.
    Then by triangle inequality we have that
    $
        \ltwonorm{\uhat - \one \muhat} 
        \leq \ltwonorm{ U - \one \mu^\star} + \ltwonorm{\uhat-U} + \ltwonorm{\one \muhat (\uhat-U)H_{U}}.
    $
    Now subtracting both sides of the above inequality from $1$, we have
    \begin{align*}
        1 - \ltwonorm{\uhat - \one \muhat} 
        &\geq 1 - \ltwonorm{ U - \one \mu^\star}-\ltwonorm{\uhat-U}-\ltwonorm{\one \muhat (\uhat-U)H_{U}} \\
        & \geq 1 - \ltwonorm{ U - \one \mu^\star}-\ltwonorm{\uhat-U}
        -\ltwonorm{\one \muhat} \ltwonorm{\uhat-U} \ltwonorm{H_{U}} \\
        & \stackrel{(*)}{\geq} 1 - \ltwonorm{ U - \one \mu^\star}-\ltwonorm{\uhat-U} 
         -\frac{1}{\sqrt{\mu^\star_{\min}}}\ltwonorm{\uhat-U} \ltwonorm{H_{U}}
    \end{align*}
    where in the final inequality $(*)$ we used that
    \begin{align*}
        \ltwonorm{\one \muhat} &= \ltwonorm{\one \mu^\star (I - (\uhat-U)H_U)^{-1}} \\
        &\leq \ltwonorm{\one \mu^\star } \ltwonorm{ (I - (\uhat-U)H_U)^{-1}}
        \leq \frac{1}{\sqrt{\mu^\star_{\min}}} \frac{1}{1- \ltwonorm{(\uhat-U)} \ltwonorm{H_U} },
    \end{align*}
    noting that $\ltwonorm{\one \mu^\star } \leq \frac{1}{\sqrt{\mu^\star_{\min}}} \infinfnorm{\one \mu^\star} = \frac{1}{\sqrt{\mu^\star_{\min}}}$.
    
    To ensure $1-\ltwonorm{\uhat - \one \muhat} \geq \frac{1-\ltwonorm{U - \one \mu^\star}}{2}$, it is sufficient to guarantee that
    \begin{align*}
        1 - \ltwonorm{ U - \one \mu^\star} -\ltwonorm{\uhat-U} 
         -\frac{1}{\sqrt{\mu^\star_{\min}}}\ltwonorm{\uhat-U} \ltwonorm{H_{U}} 
        \geq& \frac{1-\ltwonorm{U - \one \mu^\star}}{2}.
    \end{align*}
    Simplifying the expression, this is equivalent to
    $
        \ltwonorm{\uhat-U}\left(1 + \frac{\ltwonorm{H_{U}}}{\sqrt{\mu^\star_{\min}}} \right) \leq \frac{1-\ltwonorm{U - \one \mu^\star}}{2},
    $
    which is in turn equivalent to~\eqref{eq:well-define-Huhat}.
\end{proof}

We note that~\eqref{eq:well-define-Huhat} implies the previously used condition that $\ltwonorm{(\uhat-U)H_U}< 1$, since $\ltwonorm{(\uhat-U)H_U} \leq \ltwonorm{\uhat-U}\ltwonorm{H_U}$ and~\eqref{eq:well-define-Huhat} implies that
$
    \ltwonorm{\uhat-U}\ltwonorm{H_U} \leq  1/2 < 1. 
$ Hence, $(I - (\uhat-U)H_U)^{-1}$ is invertible.
\subsection{Algebraic properties of $U, \uhat, \mu^\star, \muhat, H_U$ and $H_{\uhat}$}\label{subsec:properties}
The following two lemmas that check various algebraic properties of $U, \uhat, \mu^\star, \muhat, H_U$, and $H_{\uhat}$, which will be used in the sequel.

\begin{lemma}
\label{lem:basic-U-HU-identities}
We have \(U\one=\one\) and \(\uhat\one=\one\). Moreover, if
\(I-(U-\one\mu^\star)\) is invertible, then \(H_U\one=\zero\).
\end{lemma}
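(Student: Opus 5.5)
We prove the three claims by direct computation from the definitions $U = P^{\pistar} - (c^{\pistar}-\alpha\one)\xi$ and $\uhat = \Phat^{\pistar} - (c^{\pistar}-\alpha\one)\xihat$, with $\xi = P_{\tilde s1}-P_{\tilde s0}$ and $\xihat=\Phat_{\tilde s1}-\Phat_{\tilde s0}$.

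For $U\one=\one$, we use two facts. First, $P^{\pistar}$ is a stochastic matrix, so $P^{\pistar}\one=\one$. Second, $\xi\one = P_{\tilde s1}\one - P_{\tilde s0}\one = 1-1 = 0$, because each row $P_{sa}$ is a probability vector. Hence
\[
U\one = \one - (c^{\pistar}-\alpha\one)(\xi\one) = \one.
\]
For $\uhat\one=\one$ the argument is identical. The only input needed is that each empirical row $\Phat(\cdot\mid s,a)$ is a probability vector, which holds by construction as an empirical frequency. This gives $\Phat^{\pistar}\one=\one$ and $\xihat\one=0$.

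For $H_U\one=\zero$, we assume $I-(U-\one\mu^\star)$ is invertible. First, $\mu^\star\one = 1$ since $\mu^\star$ is a probability distribution. Then
\[
(I-(U-\one\mu^\star))\one = \one - U\one + \one(\mu^\star\one) = \one-\one+\one = \one.
\]
Applying the inverse to both sides gives $(I-(U-\one\mu^\star))^{-1}\one = \one$. Therefore
\[
H_U\one = \one - \one(\mu^\star\one) = \one-\one = \zero.
\]

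There is no real obstacle in this lemma. The only point to check is that $\Phat$ is row-stochastic, so that $\xihat\one=0$, and this is immediate from its definition.
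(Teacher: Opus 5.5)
Your proposal is correct and is essentially the paper's proof: $U\one=\uhat\one=\one$ follows from row-stochasticity of $P^{\pistar}$ and $\Phat^{\pistar}$ together with $\xi\one=\xihat\one=0$. Only row-stochasticity of $\Phat$ is needed, which holds for any perturbed transition kernel, not just an empirical one. For $H_U\one=\zero$, the paper sets $x=H_U\one$ and shows $(I-(U-\one\mu^\star))x=\zero$; this is the same computation as your observation that $I-(U-\one\mu^\star)$ fixes $\one$, followed by applying its inverse.
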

\begin{proof}{Proof.}
We firstly prove \(U\one=\one\); the proof of \(\uhat\one=\one\) is identical.
Since \(P^{\pistar}\) is row-stochastic and
\(
    \xi\one
    =
    (P_{\tilde s1}-P_{\tilde s0})\one
    =
    1-1
    =
    0,
\)
we have
\(
    U\one
    =
    \bigl(P^{\pistar}-(c^{\pistar}-\alpha\one)\xi\bigr)\one
    =
    P^{\pistar}\one
    -
    (c^{\pistar}-\alpha\one)(\xi\one)
    =
    \one .
\)
Let $x = H_U \one = (I - (U - \one \mu^\star))^{-1} \one - \one\mu^\star \one = (I - (U - \one \mu^\star))^{-1} \one - \one$ (using that $\mu^\star \one = \one$). Then left-multiplying by $(I - (U - \one \mu^\star))$, we have
    \begin{align*}
        (I - (U - \one \mu^\star)) x &= (I - (U - \one \mu^\star))(I - (U - \one \mu^\star))^{-1} \one - (I - (U - \one \mu^\star)) \one 
        = \one - \one + U \one - \one \mu^\star \one 
        = \zero
    \end{align*}
    using that $\mu^\star \one = \one$ and that $U \one = \one$ in the final equality. But since $(I - (U - \one \mu^\star))$ is invertible, the equality $(I - (U - \one \mu^\star)) x = \zero$ implies that $x = \zero$.
\end{proof}

\begin{lemma}
\label{lem:perturbed-Uhat-Huhat-identities}
Suppose that \(\ltwonorm{U-\one\mu^\star}<1\) and
\eqref{eq:well-define-Huhat} holds. Then
\(
    \muhat\uhat=\muhat,
    \muhat\one=1,
\) and
\(
    H_{\uhat}\one=\zero,
    \muhat H_{\uhat}=\zero,
    (I-\uhat)H_{\uhat}=I-\one\muhat.
\)
Moreover, there exists a row vector \(x\) such that
\(
    H_{\uhat}-H_U
    =
    H_U(\uhat-U)H_{\uhat}+\one x.
\)
\end{lemma}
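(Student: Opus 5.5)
The plan is pure linear algebra. First I establish the unperturbed analogues of the claimed identities for $U,\mu^\star,H_U$. Then I transfer them to $\uhat,\muhat$ using the perturbation identity of Lemma~\ref{lem:pert_analysis_sim_lemma}. The difference formula then drops out of a resolvent-type computation. Throughout, invertibility of $I-(U-\one\mu^\star)=I-\Phi$ comes from $\ltwonorm{\Phi}<1$. Invertibility of $I-(\uhat-U)H_U$ and of $I-(\uhat-\one\muhat)$ comes from~\eqref{eq:well-define-Huhat} and Lemma~\ref{lem:well-define-Huhat}.

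\textbf{Step 1 (true-system identities).} Check that $\mu^\star U=\mu^\star$. Indeed, $\mu^\star P^{\pistar}=\mu^\star$ because $\mu^\star$ is stationary under $\pibst$, and $P^{\pistar}$ agrees with $P^{\pibst}$ in $U$ up to the neutral-state correction. Moreover,
\[
\mu^\star(c^{\pistar}-\alpha\one)=\sum_s\mu^\star(s)\pistar(1\mid s)-\alpha=y^\star\tilde e-\alpha=0 .
\]
Together with $U\one=\one$ (Lemma~\ref{lem:basic-U-HU-identities}) and $\mu^\star\one=1$, this gives
\[
\mu^\star(I-\Phi)=\mu^\star,\qquad (I-\Phi)\one=\one,
\]
hence $\mu^\star(I-\Phi)^{-1}=\mu^\star$ and $(I-\Phi)^{-1}\one=\one$. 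It follows that $\mu^\star H_U=\zero$. Writing $I-U=(I-\Phi)-\one\mu^\star$, I also get the two-sided identity
\[
(I-U)H_U=H_U(I-U)=I-\one\mu^\star .
\]
Here I use $(I-U)\one\mu^\star=\zero$ and $\one\mu^\star(I-U)=\zero$.

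\textbf{Step 2 (stationarity of $\muhat$).} Right-multiply the identity $\muhat-\mu^\star=\muhat(\uhat-U)H_U$ by $\one$ and use $H_U\one=\zero$; this gives $\muhat\one=\mu^\star\one=1$. For $\muhat\uhat=\muhat$, I write
\[
\muhat(I-\uhat)=\muhat(I-U)-\muhat(\uhat-U).
\]
Substituting $\muhat=\mu^\star+\muhat(\uhat-U)H_U$ into the first term and using Step 1 yields
\[
\muhat(I-U)=\mu^\star(I-U)+\muhat(\uhat-U)(I-\one\mu^\star)=\muhat(\uhat-U).
\]
The last equality uses $\mu^\star(I-U)=\zero$ and $(\uhat-U)\one=\zero$, which follows from $U\one=\uhat\one=\one$. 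Hence $\muhat(I-\uhat)=\zero$.

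\textbf{Step 3 (perturbed $H_{\uhat}$ identities).} With $\muhat\uhat=\muhat$, $\uhat\one=\one$, $\muhat\one=1$, and $I-(\uhat-\one\muhat)$ invertible, the computations of Step 1 apply verbatim with hats. They give $H_{\uhat}\one=\zero$, $\muhat H_{\uhat}=\zero$, and
\[
(I-\uhat)H_{\uhat}=H_{\uhat}(I-\uhat)=I-\one\muhat .
\]

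\textbf{Step 4 (difference formula).} I expand
\[
H_U(\uhat-U)H_{\uhat}=H_U\bigl[(I-U)-(I-\uhat)\bigr]H_{\uhat}=(I-\one\mu^\star)H_{\uhat}-H_U(I-\one\muhat).
\]
Using $H_U\one=\zero$, the right-hand side equals $H_{\uhat}-H_U-\one\,\mu^\star H_{\uhat}$. So the claim holds with $x=\mu^\star H_{\uhat}$.

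The only step needing care is the stationarity $\muhat\uhat=\muhat$ in Step 2, since $U$ and $\uhat$ are not stochastic matrices. The key inputs there are the right-sided identity $H_U(I-U)=I-\one\mu^\star$ and the cost-balance relation $\mu^\star(c^{\pistar}-\alpha\one)=0$. The latter is exactly where the budget constraint of the true LP enters.
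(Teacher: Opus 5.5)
Your argument is correct, and it reaches the identities more directly than the paper does. The paper proves $\muhat\uhat=\muhat$ by manipulating $X=(I-(\uhat-U)H_U)^{-1}$ and then left-multiplying by $\mu^\star$. It gets $\muhat\one=1$ and $\muhat H_{\uhat}=\zero$ from Neumann-series expansions, which use the norm bounds rather than just invertibility. It derives the difference formula from the resolvent identity $A^{-1}-B^{-1}=A^{-1}(B-A)B^{-1}$, and ends with an unwieldy explicit $x$.

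You take a shorter path at each point:
\begin{itemize}
\item You read $\muhat\one=1$ and $\muhat(I-\uhat)=\zero$ straight off the identity $\muhat-\mu^\star=\muhat(\uhat-U)H_U$ of Lemma~\ref{lem:pert_analysis_sim_lemma}.
\item You obtain the hatted identities using only invertibility of $I-(\uhat-\one\muhat)$, with no series expansion.
\item You get the difference formula from the one-line expansion $H_U[(I-U)-(I-\uhat)]H_{\uhat}$. This gives $x=\mu^\star H_{\uhat}$ explicitly; it is necessarily the same vector as the paper's, since $\one x$ determines $x$.
\end{itemize}

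Your route also makes explicit what the paper leaves implicit. The paper's stationarity step invokes $H_U(I-U)=I-\one\mu^\star$ without proof. That identity holds only because $\mu^\star U=\mu^\star$, i.e., because of the budget constraint $y^\star\tilde e=\alpha$. The paper's resolvent computation for the difference formula, by contrast, needs only the right-sided facts $U\one=\uhat\one=\one$ and $H_U\one=\zero$, at the price of a messier $x$.

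One small repair is needed in your Step~1. For an arbitrary $\pistar$ that differs from $\pibst$ at $\tilde s$, neither $\mu^\star P^{\pistar}=\mu^\star$ nor $\sum_s\mu^\star(s)\pistar(1\mid s)=y^\star\tilde e$ is true. With $\Delta=\pistar(1\mid\tilde s)-\pibst(1\mid\tilde s)$, they are off by $\mu^\star(\tilde s)\Delta\,\xi$ and $\mu^\star(\tilde s)\Delta$ respectively. These two errors cancel inside $\mu^\star U$, so your conclusion is right but the intermediate claims are not. Simply write $U=P^{\pibst}-(c^{\pibst}-\alpha\one)\xi$, which the paper notes is the same matrix; both of your facts then hold exactly, and the rest of the proof goes through unchanged.
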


\begin{proof}{Proof.}
    Abbreviate $X = (I - (\uhat - U)H_U)^{-1}$. By the definition~\eqref{eq:muhat-definition} of $\muhat$, to show $\muhat \uhat = \muhat$ it suffices to show that
        $
            \mu^\star X \uhat = \mu^\star X.
        $
        From the definition of $X$ we have that
        \begin{align*}
            X (I - (\uhat - U)H_U) = I 
            \implies &  X + XUH_U - I = X \uhat H_U \\
            \implies & (X + XUH_U - I)(I-U) = X \uhat H_U(I-U) = X\uhat (I - \one \mu^\star) \\
            \implies & X\uhat = (X + XUH_U - I)(I-U) + X\uhat \one \mu^\star \\
            & \quad\;\;\, = X - XU+XUH_U - XUH_U U - I + U + X\uhat \one \mu^\star,
        \end{align*}
        where we used the property that $H_U(I-U) = I - \one \mu^\star$.
        Now left-multiplying by $\mu^\star$, and also using the facts that $\uhat \one = \one$ and $\uhat\one = U\one = \one$ from Lemma \ref{lem:basic-U-HU-identities}, we have that
        \begin{align*}
            \mu^\star X \uhat 
            &= \mu^\star \left( X - XU+XUH_U - XUH_U U - I + U + X\uhat \one \mu^\star\right) \\
            &= \mu^\star X + \mu^\star XU (-I + H_U -  H_UU + \one \mu^\star) 
            = \mu^\star X,
        \end{align*}
        where in the final equality we used that $-I + H_U -  H_UU + \one \mu^\star = 0$, which is equivalent to the fact that $H_U(I - U) = I - \one \mu^\star$.
        Hence we have that $\muhat \uhat = \muhat$ as desired.

        Now we check that $\muhat \one = \one$. The condition $\ltwonorm{(\uhat-U)H_U}< 1$ (which is implied by~\eqref{eq:well-define-Huhat} as mentioned above) implies the Neumann series expansion $(I - (\uhat-U)H_U)^{-1} = \sum_{t \geq 0} ((\uhat-U)H_U)^t$. Hence by the definition~\eqref{eq:muhat-definition} of $\muhat$ we have
        \begin{align*}
            \muhat \one 
            = \mu^\star (I - (\uhat-U)H_U)^{-1} \one 
            &= \sum_{t \geq 0} \mu^\star ((\uhat-U)H_U)^t \one = \mu^\star \one + \sum_{t \geq 0} \mu^\star ((\uhat-U)H_U)^t (\uhat-U)H_U \one 
            = \one ,
        \end{align*}
        where in the final equality we used that $\mu^\star \one = \one$ and that $H_U \one = \zero$ (from Lemma \ref{lem:basic-U-HU-identities}).

        Having checked $\muhat \one = 1$, we can use an analogous proof as for Lemma \ref{lem:basic-U-HU-identities} to show $H_{\uhat} \one = \zero$.

        Next we check that $\muhat H_{\uhat} = \zero$. From the Neumann series for $(I - (\uhat - \one \muhat))^{-1}$ (since $\ltwonorm{\uhat - \one \muhat}<1$ by Lemma \ref{lem:well-define-Huhat}) we have
        \begin{align*}
            \muhat H_{\uhat} 
            = \muhat (I - (\uhat - \one \muhat))^{-1} - \muhat \one \muhat 
            &= \sum_{t \geq 0} \muhat (\uhat - \one \muhat)^t - \muhat 
            = \muhat I  + \sum_{t \geq 0 }\muhat(\uhat - \one \muhat) (\uhat - \one \muhat)^t - \muhat 
            = \zero,
        \end{align*}
        where we used that $\muhat(\uhat - \one \muhat) = \zero$, which follows from combining the facts $\muhat \uhat = \muhat$ and $ \muhat\one=1$.

        Finally we show that $(I - \uhat)H_{\uhat} = I - \one \muhat$. We have
        \begin{align*}
        (I - \uhat)H_{\uhat} 
        = (I - \uhat + \one \muhat)H_{\uhat} - \one \muhat H_{\uhat}  
        &= (I - \uhat + \one \muhat) \left((I - (\uhat - \one \muhat))^{-1} - \one \muhat \right) - \one \muhat H_{\uhat} \\
        &= I - (I - \uhat + \one \muhat)\one \muhat  - \one \muhat H_{\uhat} 
        = I - \one \muhat+ \one \muhat - \one\muhat ,
    \end{align*}
    where the final equality used $\uhat \one = \one $, $\muhat \one = \one$ and $\muhat H_{\uhat} =\zero$.
    
    Note the resolvent identity $A^{-1} - B^{-1} = A^{-1}(B-A)B^{-1} = B^{-1} (B-A)A^{-1}$ and the fact that $I - (\uhat - \one \muhat)$ and $I - (U - \one \mu^\star)$ are invertible due to the assumptions and Lemma \ref{lem:well-define-Huhat}. We have
\begin{align}
    & (I - (\uhat - \one \muhat))^{-1}  - (I - (U - \one \mu^\star))^{-1} \nonumber \\
    &= (I - (U - \one \mu^\star))^{-1} \left( (I - (U - \one \mu^\star)) - (I - (\uhat - \one \muhat)) \right) (I - (\uhat - \one \muhat))^{-1}\nonumber \\
    &= (I - (U - \one \mu^\star))^{-1} \left( \uhat - U - \one \muhat + \one \mu^\star \right) (I - (\uhat - \one \muhat))^{-1} \nonumber\\
    &= (I - (U - \one \mu^\star))^{-1} \left( \uhat - U \right) (I - (\uhat - \one \muhat))^{-1} - \one (\muhat - \mu^\star) (I - (\uhat - \one \muhat))^{-1} \label{eq:HU-Huhat_step1}
\end{align}
using that $(I - (U - \one \mu^\star))^{-1} \one = \one$ in the final equality, which follows from Lemma \ref{lem:basic-U-HU-identities}.
Next we have 
\begin{multline}
    (I - (U - \one \mu^\star))^{-1} \left( \uhat - U \right) (I - (\uhat - \one \muhat))^{-1} \\
    = H_U  \left( \uhat - U \right) (I - (\uhat - \one \muhat))^{-1} + \one \mu^\star \left( \uhat - U \right) (I - (\uhat - \one \muhat))^{-1} \label{eq:HU-Huhat_step2}
\end{multline}
and
\begin{align}
    H_U  \left( \uhat - U \right) (I - (\uhat - \one \muhat))^{-1} 
    &= H_U  \left( \uhat - U \right) H_{\uhat}  + H_U  \left( \uhat - U \right) \one \muhat \nonumber\\
    &= H_U  \left( \uhat - U \right) H_{\uhat}  + H_U   \zero 
    = H_U  \left( \uhat - U \right) H_{\uhat}, \label{eq:HU-Huhat_step3}
\end{align}
where we used  $U \one = \one $ and $\uhat \one = \one$ from Lemma \ref{lem:basic-U-HU-identities}.
Combining  these calculations, we have that
\begin{align*}
     H_{\uhat} - H_U 
    =& (I - (\uhat - \one \muhat))^{-1} - \one \muhat  - (I - (U - \one \mu^\star))^{-1} + \one \mu^\star  \\
    \stackrel{(i)}{=}& (I - (U - \one \mu^\star))^{-1} \left( \uhat - U \right) (I - (\uhat - \one \muhat))^{-1} - \one (\muhat - \mu^\star) (I - (\uhat - \one \muhat))^{-1}  - \one \muhat + \one \mu^\star  \\
    \stackrel{(ii)}{=}& H_U  \left( \uhat - U \right) (I - (\uhat - \one \muhat))^{-1} + \one \mu^\star \left( \uhat - U \right) (I - (\uhat - \one \muhat))^{-1}  \\
    &- \one (\muhat - \mu^\star) (I - (\uhat - \one \muhat))^{-1}  - \one \muhat + \one \mu^\star \\
    \stackrel{(iii)}{=}& H_U  \left( \uhat - U \right) H_{\uhat} + \one \mu^\star \left( \uhat - U \right) (I - (\uhat - \one \muhat))^{-1}  - \one (\muhat - \mu^\star) (I - (\uhat - \one \muhat))^{-1}  - \one \muhat + \one \mu^\star
\end{align*}
where in steps $(i), (ii),$ and $(iii)$ we used equations~\eqref{eq:HU-Huhat_step1},~\eqref{eq:HU-Huhat_step2}, and~\eqref{eq:HU-Huhat_step3}, respectively. Hence the desired equality is true for
$
    x =  \mu^\star \left( \uhat - U \right) (I - (\uhat - \one \muhat))^{-1}  -  (\muhat - \mu^\star) (I - (\uhat - \one \muhat))^{-1}  -  \muhat +  \mu^\star.
$
\end{proof}

\subsection{Proof of Lemma~\ref{lem:yhat_pert_and_feasibility}}\label{subsec:pr-feasibility-yhat}
Before checking the feasibility of $\yhat$, we first show some useful facts about $W$. Firstly we write the entries of \(W\) more explicitly. For \(s\neq\Tilde{s}\),
\begin{align}
    W_{s, s0} = \pistar(0 \mid s), \ W_{s, s1} = \pistar(1 \mid s), \ W_{s, \Tilde{s}0} = (1-\alpha) - \pistar(0\mid s), \ W_{s, \Tilde{s}1} = \alpha - \pistar(1\mid s), \label{eq:W_entries_1}
\end{align}
and all other entries of the row \(W_{s,\cdot}\) are zero. For
\(s=\Tilde{s}\),
\begin{align}
    W_{\Tilde{s}, \Tilde{s}0} = 1-\alpha, \quad W_{\Tilde{s}, \Tilde{s}1} = \alpha, \label{eq:W_entries_2}
\end{align}
and all other entries of the row \(W_{\Tilde{s},\cdot}\) are zero. Then we have the following lemma.

\begin{lemma}
\label{lem:W_properties}
$W$ satisfies the following properties:
     $WJ = I$;
     $W \tilde{e} = \alpha \one$;
     $W \one = \one$;
     $W P = U$;
     $W \Phat = \uhat$.
\end{lemma}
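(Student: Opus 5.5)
The proof is a direct entrywise verification from the explicit description of $W$ in \eqref{eq:W_entries_1}--\eqref{eq:W_entries_2}, handling the row $s=\tilde s$ and the rows $s\neq\tilde s$ separately. Every identity is linear in $W$, so it suffices to compute $e_s^\top W$ times the relevant object for each fixed $s$.

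\textbf{Identities involving $J$, $\tilde e$ and $\one$.} Since $(WJ)_{s,s'}=\sum_a W_{s,s'a}$, for $s\neq\tilde s$ the row sums over $a$ at column $s$ give $\pistar(0\mid s)+\pistar(1\mid s)=1$. At column $\tilde s$ they give $(1-\alpha)-\pistar(0\mid s)+\alpha-\pistar(1\mid s)=0$. For $s=\tilde s$ the only nonzero column is $\tilde s$, with sum $1$. Hence $WJ=I$, and $W\one=WJ\one=\one$ follows immediately. For $W\tilde e$, only the $a=1$ entries contribute: $\pistar(1\mid s)+\alpha-\pistar(1\mid s)=\alpha$ when $s\neq\tilde s$, and $\alpha$ when $s=\tilde s$. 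Thus $W\tilde e=\alpha\one$.

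\textbf{$WP=U$.} For $s\neq\tilde s$, the row $e_s^\top WP$ equals
\[
\sum_a\pistar(a\mid s)P_{sa}+(1-\alpha-\pistar(0\mid s))P_{\tilde s0}+(\alpha-\pistar(1\mid s))P_{\tilde s1}.
\]
The first sum is $P^{\pistar}_{s\cdot}$. Using $\pistar(0\mid s)=1-c^{\pistar}(s)$, the remaining two terms regroup as $(c^{\pistar}(s)-\alpha)P_{\tilde s0}-(c^{\pistar}(s)-\alpha)P_{\tilde s1}=-(c^{\pistar}(s)-\alpha)\xi$, which is the $s$-th row of $U$.

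For $s=\tilde s$, the row is $(1-\alpha)P_{\tilde s0}+\alpha P_{\tilde s1}$. The $U$ row is
\[
\pistar(0\mid\tilde s)P_{\tilde s0}+\pistar(1\mid\tilde s)P_{\tilde s1}-(c^{\pistar}(\tilde s)-\alpha)(P_{\tilde s1}-P_{\tilde s0}).
\]
Writing $c=\pistar(1\mid\tilde s)$, this simplifies to $(1-c+c-\alpha)P_{\tilde s0}+(c-c+\alpha)P_{\tilde s1}$, which matches. This cancellation is exactly why the choice of $\pistar$ at $\tilde s$ is irrelevant, as noted in Section~\ref{sec:pert_subsec_1}.

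\textbf{$W\Phat=\uhat$.} The computation above used only the algebraic form of $P$, never its specific values. Replacing $P$ by $\Phat$ (and $\xi$ by $\xihat$) gives $W\Phat=\uhat$ verbatim.

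There is no real obstacle here. The only care needed is the sign bookkeeping in regrouping the $\tilde s$-columns into $-(c^{\pistar}-\alpha\one)\xi$, together with the check that the $\tilde s$ row is consistent for an arbitrary $\pistar(\cdot\mid\tilde s)$.
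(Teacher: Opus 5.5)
Your proof is correct and is essentially the paper's argument, namely a direct verification of each identity from the structure of $W$. The only difference is cosmetic. The paper checks $\mu WJ=\mu$, $\mu W\tilde e=\alpha(\mu\one)$ and $\mu W\Phat=\mu\uhat$ for an arbitrary row vector $\mu$, using the defining formulas for $y=\mu W$. You instead check each row $e_s^\top W$ against the explicit entries of $W$, which is the same computation organized row by row.
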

\begin{proof}{Proof}
1. (Proof of $WJ=I$.) Let $\mu$ be an arbitrary row vector of size $S$ and let $y=\mu W$. For any non-neutral state $s\neq \tilde s$, we have
$
(yJ)(s)=\sum_{a\in\{0,1\}}y(s,a)=\mu(s)\sum_{a}\pi^\star(a\mid s)=\mu(s).
$
For the neutral state $\tilde s$, using $\pi^\star(0\mid s)+\pi^\star(1\mid s)=1$ for any $s \in \States$, we have
\begin{align*}
    (yJ)(\tilde s)
    =y(\tilde s,0)+y(\tilde s,1)
    &=(1-\alpha)(\mu\mathbf 1)+\alpha(\mu\mathbf 1)
    -\sum_{s\neq \tilde s}\big(\pi^\star(0\mid s)+\pi^\star(1\mid s)\big)\mu(s)\\
    &=(\mu\mathbf 1)-\sum_{s\neq \tilde s}\mu(s)=\mu(\tilde s).
\end{align*}
Hence $yJ=\mu$ holds for every $\mu$, that is $\mu(WJ)=\mu$ for all $\mu$,
so $WJ=I$.

2. (Proof of $W\tilde e=\alpha\one$.)
Recall $\tilde e\in\mathbb R^{SA}$ is defined by $\tilde e(s,a)=\mathbb{I} (a=1)$.
Again let $\mu$ be an arbitrary row vector of size $S$ and let $y=\mu W$. Then
\begin{align*}
    y\tilde e
    =\sum_{s\in S} y(s,1)
    &=\sum_{s\neq \tilde s}\mu(s)\pi^\star(1\mid s)+y(\tilde s,1)
    =\sum_{s\neq \tilde s}\mu(s)\pi^\star(1\mid s)
    +\alpha(\mu\one)-\sum_{s\neq \tilde s}\mu(s)\pi^\star(1\mid s)
    =\alpha(\mu\one).
\end{align*}
Equivalently, $\mu(W\tilde e)=\alpha(\mu\one)=\mu(\alpha\one)$ for all $\mu$, so $W\tilde e=\alpha\one$.

3. (Proof of $W\one=\one$.)
Let $\one_S$ denote the all-ones vector in $\R^{S}$ and
$\one_{SA}$ the all-ones vector in $\R^{SA}$.
By definition of $J$, we have $J \one_S=\one_{SA}$.
Therefore, using $WJ=I$ from the previous part, we have
$
W\one_{SA}=WJ\one_S=I\one_S=\one_S.
$

4. (Proof of $W \Phat= \uhat$.)
Take an arbitrary $\mu$ and set $y=\mu W$.
Then
$
    y \Phat=\sum_{s\in \States}\sum_{a\in\{0,1\}} y(s,a)\Phat_{sa}.
$
For $s\neq \tilde s$, we have $y(s,a)=\mu(s)\pi^\star(a\mid s)$, so
$
    \sum_{a\in\{0,1\}} y(s,a)\Phat_{sa} = \mu(s)e_s^\top \Phat^{\pistar}.
$
For the neutral state,
\begin{align*}
    y(\tilde s,0) \Phat_{\tilde{s}0} + y(\tilde s,1) \Phat_{\tilde{s}1} 
    &=\Big((1-\alpha)(\mu\one)-\sum_{s\neq \tilde s}\pi^\star(0\mid s)\mu(s)\Big)\Phat_{\tilde{s}0} +\Big(\alpha(\mu\one)-\sum_{s\neq \tilde s}\pi^\star(1\mid s)\mu(s)\Big)\Phat_{\tilde{s}1} \\
    &= \mu(\tilde{s}) e_s^\top \Phat^{\pistar} + \left(\alpha (\mu \one) - \sum_{s \in \States} c^{\pistar}(s)\mu(s)\right) \xihat
\end{align*}
since $\Phat_{\tilde{s}1} = \Phat_{\tilde{s}0} + \xihat$ and $\pistar(0 \mid s) = 1 - c^{\pistar}(s)$ for all $s \in \States$.  Summing over all states we have
\begin{align*}
    y \Phat &= \left(\alpha (\mu \one) - \sum_{s \in \States} c^{\pistar}(s)\mu(s)\right) \xihat + \sum_{s \in \States} \mu(s) e_s^\top \Phat^{\pistar} 
    = \mu \left( \left( \alpha \one - c^{\pistar} \right) \xihat + \Phat^{\pistar}\right) 
    = \mu \uhat.
\end{align*}
We have thus shown that $\mu W \Phat =  y \Phat = \mu \uhat$, so we have $W \Phat = \uhat$.

5. (Proof of $W  P=U$.)
The proof of this fact is analogous to the previous part after replacing $\Phat, \uhat, \xihat$ by $P, U, \xi$, respectively.
\end{proof}

We now prove Lemma~\ref{lem:yhat_pert_and_feasibility}. 
\begin{proof}{Proof of Lemma~\ref{lem:yhat_pert_and_feasibility}.}
First we check the  bound~\eqref{eq:yhat_pert_bound}. Since $\yhat=\muhat W$ and $y^\star=\mu^\star W$, using Lemma \ref{lem:pert_analysis_sim_lemma}, we have
\begin{align}
    & \yhat - y^\star = (\muhat - \mu^\star)W =  \muhat \left(\uhat - U \right) H_{U} W \nonumber\\
    \implies & \yhat_i - y_i^\star = (\muhat - \mu^\star)We_i =  \muhat \left(\uhat - U \right) H_{U} We_i \nonumber\\
    \implies & \left| \yhat_i - y_i^\star \right| \leq  \infinfnorm{\muhat} \infinfnorm{\uhat - U} \infinfnorm{H_{U}} \infnorm{We_i} .\label{eq:yhat_pert_bound_int}
\end{align}
We have $\infnorm{We_i} \leq \max\{\alpha, 1-\alpha\} \leq 1$ for $i = (\Tilde{s}, 0), (\Tilde{s}, 1)$, and $\infnorm{We_i} = 1$ for the remaining $i=(s,a)$ which have $\pistar(a \mid s) = 1$. Also $We_i = \zero$ if $i = (s,a)$ where $s \neq \tilde{s} $ and $\pistar(a \mid s) = 0$. In all cases we have $\infnorm{W e_i} \leq 1$. Combining the above bound with the fact that $\infinfnorm{\muhat} = 1$, we obtain the desired bound.

We now verify feasibility of \(\yhat\) for the perturbed primal
LP~\eqref{eq:primal_LP_pert}. Using Lemma~\ref{lem:W_properties} and Lemma~\ref{lem:perturbed-Uhat-Huhat-identities}, we have
\(
    \yhat\tilde e=\muhat W\tilde e=\muhat(\alpha\one)=\alpha,
    \yhat\one=\muhat W\one=\muhat\one=1, \yhat\Phat=\muhat W\Phat=\muhat\uhat=\muhat=\muhat WJ=\yhat J..
\)
Thus it remains to check nonnegativity. If \(i=(s,a)\), \(s\neq\tilde s\), and
\(\pistar(a\mid s)=0\), then \(y^\star_i=0\) and \(We_i=\zero\), so
\eqref{eq:yhat_pert_bound_int} gives \(\yhat_i=0\). For indices
\(i\) with \(y^\star_i>0\), using \(\infinfnorm{We_i}\le1\) and
\eqref{eq:yhat_pert_bound_int},
\begin{align*}
    |\yhat_i - y^\star_i| 
    &\leq \infinfnorm{\muhat} \infinfnorm{\uhat - U} \infinfnorm{H_{U}} \infnorm{We_i} 
    \leq \infinfnorm{\uhat - U} \infinfnorm{H_{U}} 
     \stackrel{(*)}{\leq} \frac{1}{2}\min_{j : y^\star_j > 0} y^\star_j \leq \frac{1}{2} y^\star_i
\end{align*}
where $(*)$ follows from the assumption~\eqref{eq:sc_pert_primal_feasibility}. Hence
\(\yhat_i\ge y^\star_i/2>0\) for all \(i\) with \(y^\star_i>0\).
Therefore \(\yhat\ge0\), and all constraints of
\eqref{eq:primal_LP_pert} are satisfied.

\end{proof}

\subsection{Proof of Lemma~\ref{lem:true_dual_solution_form}}\label{subsec:pr-h-construction}
\label{sec:}
\begin{proof}{Proof.}
    Each row of $W$ is only supported on indices $i$ such that $y^\star_i > 0$, and hence where~\eqref{eq:true_CS_equality_consequence} holds. Therefore we have
    $
        W (r + \lambda^\star \tilde{e} + Ph^\star) = W(\zeta^\star \one + Jh^\star).
    $
    Using the properties of $W$ from Lemma \ref{lem:W_properties} that $WP = U, W \one = \one, W J = I$, and $W \tilde{e} = \alpha \one$, the above can be simplified to
    $
        Wr + \lambda^\star \alpha \one + U h^\star = \zeta^\star \one +  h^\star.
    $
    Eliminating $\zeta^\star$ using that $\zeta^\star = \mu^\star W r + \lambda^\star \alpha$ (since the primal value $y^\star r = \mu^\star W r$ must be equal to the dual value $\xi^\star - \alpha \lambda^\star$), we obtain
    $
        Wr + \lambda^\star \alpha \one + U h^\star = (\mu^\star W r + \lambda^\star \alpha) \one +  h^\star
    $
    which simplifies further to
    $
        (I - U)h^\star = Wr - \mu^\star W r \one.
    $
    The solutions to this Poisson-like equation can be characterized using the assumed invertibility of $(I - (U-\one \mu^\star))$. Rewriting the equation as
    $
        \left(I - (U-\one \mu^\star)\right)h^\star = (I - \one \mu^\star)Wr + \one \mu^\star h,
    $
    we obtain
    \begin{align*}
        h^\star &= \left(I - (U-\one \mu^\star)\right)^{-1}\left( (I - \one \mu^\star)Wr + \one \mu^\star h^\star\right)\\
        &=\left(I - (U-\one \mu^\star)\right)^{-1} Wr - \one \mu^\star Wr + \one \mu^\star h^\star
        = H_U Wr + \one \mu^\star h^\star
    \end{align*}
    using that $\left(I - (U-\one \mu^\star)\right)^{-1} \one = \one$, which follows from the fact that $H_U \one = \zero$ (Lemma \ref{lem:basic-U-HU-identities}).
\end{proof}

\subsection{Proof of Lemma~\ref{lem:pert_dual_LP_equality_rows} and Lemma~\ref{lem:pert_dual_LP_ineq_consts}}\label{subsec:pr-feasibility-dual}

Before checking the feasibility of $(\zetahat, \lambdahat, \hhat)$, we compute one more useful fact about $Wr$.
\begin{lemma}
    \label{lem:Wr_computation}
    We have
    $
    Wr = r^{\pistar} + (c^{\pistar} - \alpha \one)(r(\Tilde{s},0) - r(\Tilde{s},1)).
    $
\end{lemma}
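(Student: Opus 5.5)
The identity follows from a direct entrywise computation of $Wr$, using the explicit rows of $W$ in~\eqref{eq:W_entries_1}--\eqref{eq:W_entries_2}. We compute $(Wr)(s)=\sum_{(s',a)} W_{s,s'a}\,r(s',a)$ row by row and compare with $r^{\pistar}(s)+(c^{\pistar}(s)-\alpha)\bigl(r(\tilde s,0)-r(\tilde s,1)\bigr)$. Here $r^{\pistar}(s)=\sum_a \pistar(a\mid s)r(s,a)$ and $c^{\pistar}(s)=\pistar(1\mid s)$.

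For a non-neutral row $s\neq\tilde s$, the four nonzero entries give
\[
(Wr)(s)=\pistar(0\mid s)r(s,0)+\pistar(1\mid s)r(s,1)+\bigl((1-\alpha)-\pistar(0\mid s)\bigr)r(\tilde s,0)+\bigl(\alpha-\pistar(1\mid s)\bigr)r(\tilde s,1).
\]
The first two terms are $r^{\pistar}(s)$. For the last two, substitute $\pistar(0\mid s)=1-c^{\pistar}(s)$. The coefficient of $r(\tilde s,0)$ becomes $c^{\pistar}(s)-\alpha$, and the coefficient of $r(\tilde s,1)$ is $-(c^{\pistar}(s)-\alpha)$. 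Together they give exactly $(c^{\pistar}(s)-\alpha)\bigl(r(\tilde s,0)-r(\tilde s,1)\bigr)$.

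For the neutral row $s=\tilde s$, we have $(Wr)(\tilde s)=(1-\alpha)r(\tilde s,0)+\alpha r(\tilde s,1)$. The target is $\pistar(0\mid\tilde s)r(\tilde s,0)+\pistar(1\mid\tilde s)r(\tilde s,1)+(\pistar(1\mid\tilde s)-\alpha)\bigl(r(\tilde s,0)-r(\tilde s,1)\bigr)$. Write $p=\pistar(1\mid\tilde s)$. The coefficient of $r(\tilde s,0)$ is $(1-p)+p-\alpha=1-\alpha$, and that of $r(\tilde s,1)$ is $p-p+\alpha=\alpha$, so the two sides match. In particular the identity holds for every choice of $\pistar(\cdot\mid\tilde s)$, which is consistent with $\pistar$ being arbitrary at $\tilde s$.

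There is no real obstacle: it is bookkeeping. The only point requiring care is using $\pistar(0\mid s)+\pistar(1\mid s)=1$ at both kinds of rows, and confirming that the neutral row does not depend on $\pistar(\cdot\mid\tilde s)$. Alternatively, one can argue via $\mu(Wr)=\mu Wr$ for arbitrary row vectors $\mu$, in the style of the proof of Lemma~\ref{lem:W_properties}; the computation is the same.
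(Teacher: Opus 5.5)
Your proposal is correct and follows the paper's proof essentially verbatim: both compute $Wr$ row by row from the explicit entries in~\eqref{eq:W_entries_1}--\eqref{eq:W_entries_2} and use $\pistar(0\mid s)=1-c^{\pistar}(s)$. The only cosmetic difference is at the neutral row, where you match coefficients of $r(\tilde s,0)$ and $r(\tilde s,1)$ while the paper adds and subtracts $r^{\pistar}(\tilde s)$; these are the same computation.
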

\begin{proof}{Proof.}
    For $s \neq \Tilde{s}$, using~\eqref{eq:W_entries_1} we have
\begin{align*}
    (Wr)(s) 
    &= \sum_{s'a'} W_{s, s'a'} r(s',a') \\
    &= \pistar(0 \mid s) r(s,0) + \pistar(1 \mid s) r(s,1) + \left((1-\alpha) - \pistar(0\mid s) \right) r(\Tilde{s},0) + \left( \alpha - \pistar(1\mid s)\right) r(\Tilde{s},1) \\
    &= r^{\pistar}(s) - \alpha r(\Tilde{s},0) + \alpha r(\Tilde{s},1) + c^{\pistar}(s)r(\Tilde{s},0) - c^{\pistar}(s)r(\Tilde{s},1) \\
    &=r^{\pistar}(s) + (c^{\pistar}(s) - \alpha)(r(\Tilde{s},0) - r(\Tilde{s},1)).
\end{align*}
For $s = \Tilde{s}$, using~\eqref{eq:W_entries_2} we have
\begin{align*}
    (Wr)(\Tilde{s}) 
    &= \sum_{s'a'} W_{\Tilde{s}, s'a'} r(s',a') = (1-\alpha )r(\Tilde{s},0) + \alpha r(\Tilde{s},1) \\
    &= (1-\alpha )r(\Tilde{s},0) + \alpha r(\Tilde{s},1) + r^{\pistar}(\Tilde{s}) - \pistar(1 \mid \Tilde{s})r(\Tilde{s},1) - \pistar(0 \mid \Tilde{s})r(\Tilde{s},0) \\
    &= r^{\pistar}(\Tilde{s}) + (c^{\pistar}(\Tilde{s}) - \alpha)(r(\Tilde{s},0) - r(\Tilde{s},1)).
\end{align*}
Therefore we have that
$
    Wr = r^{\pistar} + (c^{\pistar} - \alpha \one)(r(\Tilde{s},0) - r(\Tilde{s},1))
$
as desired.
\end{proof}

\begin{proof}{Proof of Lemma~\ref{lem:pert_dual_LP_equality_rows}.}
From the fact that $(I - \uhat)H_{\uhat} = I - \one \muhat$ from Lemma \ref{lem:perturbed-Uhat-Huhat-identities}, we have that
$
    \hhat - \uhat \hhat = Wr - \one \muhat Wr \nonumber
$
which implies, after rearranging and using $\muhat W r = \zetahat - \lambdahat \alpha $ from the definition~\eqref{eq:zetahat_defn} of $\zetahat$, that
\begin{align}
    \hhat + \zetahat\one = Wr + \uhat \hhat + \lambdahat \alpha \one= Wr + \Phat^{\pistar} \hhat - \left(c^{\pistar} - \alpha \one\right)\xihat \hhat + \lambdahat \alpha \one. \label{eq:h'+rho'1=Wr+U'h'}
\end{align}
Using Lemma \ref{lem:Wr_computation} as well as the definition~\eqref{eq:lambdahat_defn} of $\lambdahat=r(\tilde{s},0) - r(\tilde{s},1) - \xihat \hhat$, we can compute that
\begin{align*}
    &Wr + \Phat^{\pistar} \hhat - \left(c^{\pistar} - \alpha \one\right)\xihat \hhat + \lambdahat \alpha \one\\
    &= \left(r^{\pistar} + (c^{\pistar} - \alpha \one)(r(\Tilde{s},0) - r(\Tilde{s},1)) \right) + \Phat^{\pistar} \hhat - \left(c^{\pistar} - \alpha \one\right)\left( r(\tilde{s},0) - r(\tilde{s},1) - \lambdahat\right) + \lambdahat \alpha \one \\
    &= r^{\pistar} + \Phat^{\pistar} \hhat + \lambdahat c^{\pistar}
\end{align*}
Combining this with \eqref{eq:h'+rho'1=Wr+U'h'}, we obtain that
\begin{align}
    r^{\pistar} + \Phat^{\pistar} \hhat + \lambdahat c^{\pistar}=\hhat+\zetahat\one. 
    \label{eq:pert_LP_bellman_eqn}
\end{align}

Finally we check the equality of the rows involving the neutral state. As discussed above, the definition~\eqref{eq:lambdahat_defn} of $\lambdahat$ is
$
    \lambdahat
    = r(\tilde{s},0) - r(\tilde{s},1) - \xihat \hhat = r(\tilde{s},0) - r(\tilde{s},1) - \xihat \hhat = r(\tilde{s},0) - r(\tilde{s},1) -(\Phat_{\Tilde{s}1} - \Phat_{\Tilde{s}0})\hhat
$
which directly rearranges to
\begin{align}
    r(\tilde{s},0) + \Phat_{\Tilde{s}0} \hhat = r(\tilde{s},1) + \lambdahat + \Phat_{\Tilde{s}1} \hhat. 
    \label{eq:pert_equal_Q_cond}
\end{align}
Therefore, \eqref{eq:pert_equal_Q_cond} implies
\begin{align*}
    r^{\pistar}(\Tilde{s}) + e_{\Tilde{s}}^\top\Phat^{ \pistar} \hhat + \lambdahat c^{\pistar}(\Tilde{s}) &= \pistar(0 \mid \Tilde{s})\left(r(\tilde{s},0) + \Phat_{\Tilde{s}0} \hhat\right) + \pistar(1 \mid \Tilde{s})\left(r(\tilde{s},1) + \lambdahat + \Phat_{\Tilde{s}1} \hhat\right)\\
    &= r(\tilde{s},0) + \Phat_{\Tilde{s}0} \hhat = r(\tilde{s},1) + \lambdahat + \Phat_{\Tilde{s}1} \hhat.
\end{align*}
Combining with~\eqref{eq:pert_LP_bellman_eqn} (specialized to just row $\Tilde{s}$) we obtain
$
    r(\tilde{s},0) + \Phat_{\Tilde{s}0} \hhat  = r(\tilde{s},1) + \lambdahat + \Phat_{\Tilde{s}1} \hhat = \hhat(\Tilde{s})+\zetahat.
$
\end{proof}

\begin{proof}{Proof of Lemma~\ref{lem:pert_dual_LP_ineq_consts}.}
We start by showing the first three perturbation bounds between the true dual solution and the perturbed dual candidate solution.
Using Lemma \ref{lem:true_dual_solution_form} (and that $\spannorm{\cdot}$ is invariant to shifts by $\one$), we have that
\begin{align*}
    \spannorm{\hhat - h^\star} 
    & =\spannorm{H_UWr-H_{\uhat}Wr}\\
    &= \spannorm{H_U(\uhat-U)H_{\uhat}Wr + \one x Wr} \tag{by Lemma~\ref{lem:perturbed-Uhat-Huhat-identities}}\\
    &= \spannorm{H_U (\uhat-U) h^\star + H_U (\uhat-U) (\hhat-h^\star)} \\
    & \le \spannorm{H_U (\uhat-U) h^\star} + \infinfnorm{H_U} \infinfnorm{\uhat-U} \spannorm{\hhat-h^\star} .
\end{align*}
After rearranging, we obtain the bound
\begin{align*}
    \spannorm{\hhat - h^\star} \leq \frac{\spannorm{H_U (\uhat-U) h^\star}}{1-\infinfnorm{H_U} \infinfnorm{\uhat-U}} \leq \frac{\infinfnorm{H_U} \infinfnorm{\uhat - U} \spannorm{h^\star}}{1-\infinfnorm{H_U} \infinfnorm{\uhat-U}}
\end{align*}
since
\begin{align*}
    \spannorm{H_U (\uhat-U) h^\star} &= \inf_{\kappa \in \R} \spannorm{H_U (\uhat-U) (h^\star- \kappa \one)}\\
    &\leq 2 \inf_{\kappa \in \R}\infnorm{H_U (\uhat-U) (h^\star- \kappa \one)} \\
    &\leq 2 \infinfnorm{H_U} \infinfnorm{\uhat - U} \inf_{\kappa \in \R}\infnorm{h^\star - \kappa \one} 
    = \infinfnorm{H_U} \infinfnorm{\uhat - U} \spannorm{h^\star}.
\end{align*}
Next, for $\lambdahat-\lambda$, we have
\begin{align*}
    \lambdahat - \lambda &= \xi h^\star - \xihat \hhat 
    = (\xi - \xihat) h^\star + \xihat (h^\star - \hhat) 
    = (\xi - \xihat) (h^\star - \kappa \one) + \xihat (h^\star - \hhat - \kappa' \one)
\end{align*}
for any $\kappa, \kappa' \in \R$, since as checked previously $\xi \one = \xihat \one = 0$. Hence
\begin{align*}
    \left| \lambdahat - \lambda \right| & \leq \infinfnorm{\xi - \xihat} \inf_{\kappa \in \R}\infnorm{h^\star - \kappa \one} +  \inf_{\kappa' \in \R}\infinfnorm{\xihat} \infnorm{h^\star - \hhat - \kappa' \one} \\
    &= \frac{1}{2}\infinfnorm{\xi - \xihat} \spannorm{h^\star} +  \frac{1}{2}\infinfnorm{\xihat} \spannorm{\hhat - h^\star} 
    \leq \frac{1}{2}\infinfnorm{\xi - \xihat} \spannorm{h^\star} + \spannorm{\hhat - h^\star}
\end{align*}
using that $\infinfnorm{\xihat} = \infinfnorm{\Phat_{\tilde s 1} - \Phat_{\tilde s 0}} \leq \infinfnorm{\Phat_{\tilde s 1}} + \infinfnorm{ \Phat_{\tilde s 0}} = 2$.
Next we bound $|\zetahat - \zeta|$. We have, for an arbitrary $\kappa \in \R$, that
\begin{align*}
    \zetahat - \zeta 
    = \muhat Wr + \lambdahat \alpha - \mu^\star Wr + \lambda \alpha 
    &= (\muhat - \mu) Wr + (\lambdahat - \lambda) \alpha \\
    & = \muhat (\uhat - U) H_U Wr + (\lambdahat - \lambda) \alpha = \muhat (\uhat - U ) (h^\star +\kappa \one) + (\lambdahat - \lambda) \alpha
\end{align*}
using the identity for $\muhat - \mu^\star$ shown in Lemma \ref{lem:pert_analysis_sim_lemma} as well as that $h^\star$ is equal to $H_U W r$ plus a shift by a multiple of $\one$ (by Lemma \ref{lem:true_dual_solution_form}) and $(\uhat - U ) \one = \zero$ (by Lemma \ref{lem:basic-U-HU-identities}). 
Now by triangle inequality we can bound
\begin{align*}
    \left| \zetahat - \zeta \right| 
    & \leq \infinfnorm{\muhat} \infinfnorm{\uhat - U} \inf_{\kappa \in \R} \infnorm{h^\star + \kappa \one} + \left|\lambdahat - \lambda \right||\alpha| 
    \leq \infinfnorm{\uhat - U} \frac{1}{2}\spannorm{h^\star} + \left|\lambdahat - \lambda \right|
\end{align*}
since $\infinfnorm{\muhat} = 1$ (because Lemma \ref{lem:yhat_pert_and_feasibility} implies that it is a probability distribution) and $|\alpha| \leq 1$.

Now we show the final part of the lemma statement.
Our goal is equivalently to show that elementwise
$
    M \left( r + \lambdahat \Tilde{e} + \Phat\hhat - \zetahat - J \hhat\right) < \zero,
$
where $M$ is the matrix which selects only the rows in the conclusion of the lemma, that is, $M \in \R^{(S-1) \times (SA)}$, each row of $M$ is only supported on the entries $(s,a)$ such that $s \neq \tilde{s}$ and $\pistar(a \mid s) = 1$, and each row of $M$ is distinct and has only one nonzero entry. 
We start from the identity
$
    r + \lambdahat \Tilde{e} + \Phat\hhat - \zetahat - J \hhat 
    = (\Phat - P)h^\star + (\Phat - J)(\hhat-h^\star) + (\lambdahat - \lambda)\Tilde{e} - (\zetahat-\zeta)\one + \left(r +  \lambda \Tilde{e} + Ph^\star - \zeta - J h^\star \right),
$
and now we will apply the projection matrix $M$ and attempt to upper-bound the result. We calculate
\begin{multline*}
    M \left( r + \lambdahat \Tilde{e} + \Phat\hhat - \zetahat - J \hhat \right) \\
    \begin{aligned}
        &= M \left( (\Phat - P)h^\star + (\Phat - J)(\hhat-h^\star) + (\lambdahat - \lambda)\Tilde{e} - (\zetahat-\zeta)\one + \left(r +  \lambda \Tilde{e} + Ph^\star - \zeta - J h^\star \right)\right) \\
        &= M \Big( (\Phat - P)(h^\star - \kappa \one) + (\Phat - J)(\hhat-h^\star - \kappa' \one) + (\lambdahat - \lambda)\Tilde{e} - (\zetahat-\zeta)\one 
         + \left(r +  \lambda \Tilde{e} + Ph^\star - \zeta - J h^\star \right)\Big) \\
        & \leq M \left(r +  \lambda \Tilde{e} + Ph^\star - \zeta - J h^\star \right) + \infinfnorm{\Phat - P }\infnorm{h^\star - \kappa \one}\one  \\
        & \qquad + \infinfnorm{\Phat - J} \infnorm{\hhat-h^\star - \kappa' \one}\one+ \left|\lambdahat - \lambda \right| \infnorm{\Tilde{e}} \one+ \left|\zetahat-\zeta\right|\one \\
        & \leq M \left(r +  \lambda \Tilde{e} + Ph^\star - \zeta - J h^\star \right) + \infinfnorm{\Phat - P } \frac{1}{2}\spannorm{h^\star}\one 
        + \spannorm{\hhat - h^\star}\one + \left|\lambdahat - \lambda \right|\one  + \left|\zetahat-\zeta\right|\one 
    \end{aligned}
\end{multline*}
using that $(\Phat -P) \one = \zero$ and $(\Phat - J) \one = \zero$ for the second equality, and then that $\infinfnorm{J} = \max_{s a} \onenorm{e_s} = 1$, $\infnorm{\Tilde{e}} = 1$, $\infinfnorm{M}= 1$, and choosing $\kappa \in \R$ so that $\infnorm{h^\star - \kappa \one}= \spannorm{h^\star}/2$ for the inequality steps.

Using the first three bounds shown in this lemma, we have that
\begin{align}
    &\infinfnorm{\Phat - P } \frac{1}{2}\spannorm{h^\star} + \spannorm{\hhat - h^\star} + \left|\lambdahat - \lambda \right|  + \left|\zetahat-\zeta\right| \nonumber\\
    & \leq \frac{\delta }{2}\spannorm{h^\star} + \spannorm{\hhat - h^\star} + \left|\lambdahat - \lambda \right|+ \infinfnorm{\uhat - U} \frac{1}{2}\spannorm{h^\star} + \left|\lambdahat - \lambda \right| \nonumber\\
    & \leq \frac{\delta }{2}\spannorm{h^\star} + \spannorm{\hhat - h^\star} + 2\left|\lambdahat - \lambda \right|+ \frac{3\delta }{2}\spannorm{h^\star}\nonumber\\
    & \leq 2 \delta \spannorm{h^\star} + \spannorm{\hhat - h^\star} + 2\left(  \frac{1}{2}\infinfnorm{\xi - \xihat} \spannorm{h^\star} + \spannorm{\hhat - h^\star}\right) \nonumber\\
    & \leq 4 \delta \spannorm{h^\star} + 3\spannorm{\hhat - h^\star} \nonumber\\
    & \leq 4 \delta \spannorm{h^\star} + 3 \frac{\infinfnorm{H_U} \infinfnorm{\uhat - U} \spannorm{h^\star}}{1-\infinfnorm{H_U} \infinfnorm{\uhat-U}}\nonumber\\
    & \leq 4 \delta \spannorm{h^\star} + 3 \frac{\infinfnorm{H_U} 3\delta \spannorm{h^\star}}{1-\infinfnorm{H_U} \infinfnorm{\uhat - U}} \label{eq:pert_dual_ineq_const_terms_bd_1}
\end{align}
where also we simplified by using that $\infinfnorm{\Phat - P } \leq \delta$ by assumption, that $\infinfnorm{\xihat - \xi} \leq 2 \infinfnorm{\Phat - P} \leq 2 \delta$, and
\begin{align*}
    \infinfnorm{\uhat - U} &= \infinfnorm{ P^{\pistar} - (c^{\pistar} -\alpha \one)\xi- \Phat^{\pistar} + (c^{\pistar} -\alpha \one)\widehat \xi} \\
    & \leq \infinfnorm{P^{\pistar} - \Phat^{\pistar}} + \infnorm{c^{\pistar} -\alpha \one} \infinfnorm{\xi - \widehat \xi} 
    \leq \delta + 1\cdot 2 \delta 
    = 3\delta.
\end{align*}
Note that~\eqref{eq:well-define-Huhat} implies that
$
    \infinfnorm{\uhat - U} \leq \frac{1}{2}\frac{\sqrt{\mu_{\min}^\star}}{\ltwonorm{H_U}} \leq \frac{1}{2} \frac{1}{\infinfnorm{H_U}},
$
where the second inequality is because $\infinfnorm{H_U} \leq \frac{1}{\sqrt{\mu^\star_{\min}}}\ltwonorm{H_U}$. Hence we can further simplify~\eqref{eq:pert_dual_ineq_const_terms_bd_1} and bound
\begin{align*}
    4 \delta \spannorm{h^\star} + 3 \frac{\infinfnorm{H_U} 3\delta \spannorm{h^\star}}{1-\infinfnorm{H_U} \infinfnorm{\uhat - U}} &\leq 4 \delta \spannorm{h^\star} + 3 \frac{\infinfnorm{H_U} 3\delta \spannorm{h^\star}}{1/2} \\
    &= 4 \delta \spannorm{h^\star} + 18\infinfnorm{H_U} \delta \spannorm{h^\star}.
\end{align*}

Finally, letting
$
    G = \min_{sa:y^\star(s,a)=0} \zeta^\star(s) +  h^\star(s) -r(s, a) - \lambda^\star \mathbb{I}(a=1) - P_{s a}h^\star > 0,
$
we can combine all of our previous calculations to obtain that
\begin{align*}
    M \left( r + \lambdahat \Tilde{e} + \Phat\hhat - \zetahat - J \hhat \right) 
    & \leq M \left(r +  \lambda \Tilde{e} + Ph^\star - \zeta - J h^\star \right) + (4 \delta \spannorm{h^\star} + 18\infinfnorm{H_U} \delta \spannorm{h^\star})\one \\
    & \leq -G \one + (4 \delta \spannorm{h^\star} + 18\infinfnorm{H_U} \delta \spannorm{h^\star})\one
\end{align*}
which is elementwise strictly less than $\zero$ when
$
    \delta \leq \frac{G/2}{4  \spannorm{h^\star} + 18\infinfnorm{H_U}  \spannorm{h^\star}},
$
as desired.
\end{proof}

\subsection{Proof of Lemma~\ref{lem:uniqueness}}\label{subsec:pr-uniqueness}

\proof{Proof of Lemma~\ref{lem:uniqueness}.}
Now we show uniqueness of the primal and dual optimal solutions. First, letting $\tilde{y}$ be another optimal solution to the perturbed primal LP, then $\tilde{y}$ and $(\zetahat, \lambdahat, \hhat)$ must satisfy the complementary slackness property~\eqref{eq:LP_CS_pert} (as optimal solutions). Since we have checked in Lemma \ref{lem:pert_dual_LP_ineq_consts} that the dual constraints are strict for all $s,a$ such that $y^\star(s,a) = 0$, this implies that we must have that $\tilde{y}(s,a)=0$ for all $s,a$ such that $y^\star(s,a)=0$. This implies that, letting $\tilde{\mu}(s)= \tilde{y}(s,0)+\tilde{y}(s,1)$, we have that $\tilde{y} = \tilde{\mu} W$, since $\tilde{y}$ is supported on a (possibly non-proper) subset of the support of $y^\star$, and also $\tilde{y} \tilde{e} = \alpha$ by its feasibility for the primal perturbed LP. Then since primal feasibility also guarantees $\tilde{y}P = \tilde{y}J$, we have
$
    \tilde{\mu} \uhat = \tilde{\mu} W \Phat = \tilde{y} \Phat = \tilde{y} J = \tilde{\mu} W J = \tilde{\mu} I = \tilde{\mu}.
$
From the fact that $\tilde{\mu} \uhat = \tilde{\mu}$, we can derive that
\begin{align*}
    \tilde{\mu} (\uhat - \one \muhat) = \tilde{\mu}\uhat - \tilde{\mu}\one \muhat = \tilde{\mu} - \muhat 
    \quad\implies\quad &\muhat = \tilde{\mu}\left(I - (\uhat - \one \muhat) \right) \\
    \quad\implies\quad & \tilde{\mu} = \muhat \left(I - (\uhat - \one \muhat) \right)^{-1} = \muhat H_{\uhat} + \muhat \one \muhat = \muhat
\end{align*}
using that $\tilde{\mu} \one = \tilde{\mu} W \one = \tilde{y} \one = 1$ by primal feasibility of $\tilde{y}$ in the first line, and then invertibility of $I - (\uhat - \one \muhat)$ (by Lemma \ref{lem:well-define-Huhat}) and then the facts that $\muhat H_{\uhat} = \zero$ and $\muhat \one = 1$ (both from Lemma \ref{lem:perturbed-Uhat-Huhat-identities}) in the final line. Finally, $\muhat = \tilde{\mu}$ implies that $\tilde{y} = \tilde{\mu} W = \muhat W = \yhat$, so we have shown uniqueness of $\yhat$.

Now we show the uniqueness properties of the dual solution.
These steps are analogous to those of Lemma \ref{lem:true_dual_solution_form}. Let $(\tilde{\zeta}, \tilde{\lambda}, \tilde{h})$ be some dual optimal solution.
By complementary slackness between $(\tilde{\zeta}, \tilde{\lambda}, \tilde{h})$ and $\yhat$, for all $s,a$ such that $\yhat(s,a)>0$ we have $r(s,a)+\tilde{\lambda} \mathbb{I}(a=1) + \Phat_{sa}\tilde{h} = \tilde{\zeta} + \tilde{h}(s)$. By the construction of $\yhat$ and Lemma \ref{lem:yhat_pert_and_feasibility}, these are exactly the $s,a$ such that $y^\star(s,a)>0$. Each row of $W$ is only supported on such such $(s,a)$, so we have that
$
    W(r + \tilde{\lambda} + \Phat \tilde{h}) = W(\tilde{\zeta} \one + J \tilde{h}).
$
Using the properties of $W$ from Lemma \ref{lem:W_properties} that $W\Phat = \uhat$, $W \one = \one$, $WJ = I$, and $W \tilde{e} = \alpha \one$, the above simplifies to
$
    Wr + \tilde{\lambda} \alpha \one + \uhat \tilde{h} = \tilde{\zeta} \one + \tilde{h}.
$
Since $(\tilde{\zeta}, \tilde{\lambda}, \tilde{h})$ is dual optimal and has the same objective value as $\yhat$ for the perturbed primal LP, we have that $\tilde{\zeta} = \muhat W r + \tilde{\lambda} \alpha$. Combining this with the previous display equation, we obtain that
$
    (I - \uhat)\tilde{h} = Wr - \muhat W r \one.
$
After rewriting this as $(I - (\uhat - \one \muhat)) \tilde{h} = (I - \one \muhat) Wr + \one \muhat \tilde{h}$ and then multiplying by $(I - (\uhat - \one \muhat))^{-1}$, we obtain that
\begin{align*}
    \tilde{h} &= (I - (\uhat - \one \muhat))^{-1} \left( (I - \one \muhat) Wr + \one \muhat \tilde{h} \right) \\
    &= (I - (\uhat - \one \muhat))^{-1}Wr - \one \muhat Wr + \one \muhat \tilde{h} 
    = H_{\uhat} Wr + \one \muhat \tilde{h} = \hhat + \one \muhat \tilde{h}
\end{align*}
using that $(I - (\uhat - \one \muhat))^{-1} \one = \one$, which follows from $H_{\uhat} \one = \zero$ (Lemma \ref{lem:perturbed-Uhat-Huhat-identities}).
The equalities implied by the aforementioned complementary slackness between between $(\tilde{\zeta}, \tilde{\lambda}, \tilde{h})$ and $\yhat$ also ensure that
$
    r(\tilde{s},0) + \Phat_{\tilde{s}0}\tilde{h} = \tilde{\zeta} + \tilde{h}(\tilde{s}) = r(\tilde{s},1)+\tilde{\lambda}  + \Phat_{\tilde{s}1}\tilde{h}
$
(by considering $(s,a)= (\tilde{s}, 0)$ and $(\tilde{s},1)$). Hence by rearranging we have
\begin{align*}
    \tilde{\lambda} 
    = r(\tilde{s},0)-r(\tilde{s},1) - \Phat_{\tilde{s}1}\tilde{h} +\Phat_{\tilde{s}0}\tilde{h} 
    &= r(\tilde{s},0)-r(\tilde{s},1) - \xihat \tilde{h} \\
    &= r(\tilde{s},0)-r(\tilde{s},1) - \xihat (\hhat + \one \muhat \tilde{h}) = r(\tilde{s},0)-r(\tilde{s},1) - \xihat \hhat 
    = \lambdahat
\end{align*}
using the fact that $\xihat \one = \zero$ and the definition~\eqref{eq:lambdahat_defn} of $\lambdahat$.
Finally, we have already shown that $\tilde{\zeta} = \muhat W r + \tilde{\lambda} \alpha$, but now since we have also shown that $\tilde{\lambda} = \lambdahat$, by combining with the definition~\eqref{eq:zetahat_defn} of $\zetahat$, we furthermore can conclude that $\tilde{\zeta} = \zetahat$. 
\Halmos
\endproof
\subsection{Proof of Lemma~\ref{lem:delta-smallness-consequences}}\label{subsec:pr-part4}
 As shown in the proof of Lemma \ref{lem:pert_dual_LP_ineq_consts}, we have $\infinfnorm{\xihat - \xi} \leq 2\delta$ and $\infinfnorm{\uhat - U} \leq 3 \delta$. We will now use these to simplify the various perturbation bounds.
From Lemma \ref{lem:yhat_pert_and_feasibility}, we have
$
    \infnorm{\yhat - y^\star} \leq \infinfnorm{\uhat - U} \infinfnorm{H_U} \leq 3 \infinfnorm{H_U}\delta .
$
As shown in the proof of Lemma \ref{lem:pert_dual_LP_ineq_consts}, under the condition~\eqref{eq:well-define-Huhat} we have
\begin{align*}
    \spannorm{\hhat - h^\star} &\leq \frac{\infinfnorm{H_U} \infinfnorm{\uhat - U} \spannorm{h^\star}}{1-\infinfnorm{H_U} \infinfnorm{\uhat-U}}  \leq 2 \infinfnorm{H_U} \infinfnorm{\uhat - U} \spannorm{h^\star}.
\end{align*}
Hence
$
    \spannorm{\hhat - h^\star}  \leq 6 \delta \infinfnorm{H_U} \spannorm{h^\star}.
$
Now for the other perturbation bounds from Lemma \ref{lem:pert_dual_LP_ineq_consts}, we can simplify them as
\begin{align*}
    \left| \lambdahat - \lambda \right| & \leq \frac{1}{2}\infinfnorm{\xi - \xihat} \spannorm{h^\star} + \spannorm{\hhat - h^\star} \leq \delta \spannorm{h^\star} + 6 \delta \infinfnorm{H_U} \spannorm{h^\star}
\end{align*}
and
\begin{align*}
    \left| \zetahat - \zeta \right| 
    & \leq  \infinfnorm{\uhat - U} \frac{1}{2}\spannorm{h^\star} + \left|\lambdahat - \lambda \right| 
    \leq \frac{3}{2}\delta \spannorm{h^\star} + \delta \spannorm{h^\star} + 6 \delta \infinfnorm{H_U} \spannorm{h^\star}. 
\end{align*}

Finally, we confirm that all assumptions are satisfied for all the lemmas used in this proof. To ensure~\eqref{eq:well-define-Huhat}, if we require
$
    \delta \leq \frac{\sqrt{\mu^\star_{\min}}}{6}\frac{1 - \ltwonorm{\Phi}}{1 + {\ltwonorm{H_{U}}}/{\sqrt{\mu^\star_{\min}}} }
$
then we will have
$
        \ltwonorm{ \uhat-U} \leq 3\delta \leq \frac{1}{2}\frac{1 - \ltwonorm{U - \one \mu^\star}}{1 + {\ltwonorm{H_{U}}}/{\sqrt{\mu^\star_{\min}}} }
$
as needed. To ensure~\eqref{eq:sc_pert_primal_feasibility}, if we require
$
    \delta \leq \min_{sa : y^\star(s,a)> 0} \frac{ y^\star(s,a)}{6\infinfnorm{H_{U}} }
$
then we will have
$
    \ltwonorm{ \uhat-U} \leq 3\delta \leq \min_{sa : y^\star(s,a)> 0} \frac{ y^\star(s,a)}{2\infinfnorm{H_{U}} }
$
as needed.

\subsection{Proof of Lemma~\ref{lem:main-perturbed-chain}}\label{subsec:pr-mc-ergordic}
 By support preservation, \(\pihbst(s)=\pibst(s)\) for all
\(s\neq\tilde{s}\). Hence the two policies can differ only at the neutral
state \(\tilde{s}\). By the perturbation bound for \(\yhat^\star\), and by the definition of
\(\delta_{\min}\), for each \(a\in\{0,1\}\),
\[
    |y^\star(\tilde{s},a)-\yhat^\star(\tilde{s},a)|
    \le
    3\delta\infinfnorm{H_U}
    \le
    \frac{\mu^\star_{\min}}{48S\tau(5+\log_2S)}.
\]
Since
\(
    \pibst(a\mid\tilde{s})
    =
    \frac{y^\star(\tilde{s},a)}{\mu^\star(\tilde{s})},
    \pihbst(a\mid\tilde{s})
    =
    \frac{\yhat^\star(\tilde{s},a)}{\muhat^\star(\tilde{s})},
\)
and
\(
    |\muhat^\star(\tilde{s})-\mu^\star(\tilde{s})|
    \le
    \sum_{a\in\{0,1\}}
    |y^\star(\tilde{s},a)-\yhat^\star(\tilde{s},a)|,
\)
we obtain
\[
    |\pibst(a\mid\tilde{s})-\pihbst(a\mid\tilde{s})|
    \le
    \frac{1}{16S\tau(5+\log_2S)},
    \qquad a\in\{0,1\}.
\]
Therefore,
\[
    \infinfnorm{P^{\pibst}-P^{\pihbst}}
    \le
    \sum_{a\in\{0,1\}}
    |\pibst(a\mid\tilde{s})-\pihbst(a\mid\tilde{s})|
    \le
    \frac{1}{8S\tau(5+\log_2S)}.
\]
On the other hand, by convexity of the \(\ell_1\) norm and the fifth term in the
definition of \(\delta_{\min}\), 
\[
    \infinfnorm{P^{\pihbst}-\Phat^{\pihbst}}
    \le
    \max_{s,a}\|P(\cdot\mid s,a)-\Phat(\cdot\mid s,a)\|_1
    \le
    \delta
    \le
    \frac{1}{8S\tau(5+\log_2S)}.
\]
Combining the last two displays gives
\(
    \infinfnorm{P^{\pibst}-\Phat^{\pihbst}}
    \le
    \frac{1}{4S\tau(5+\log_2S)}.
\)
Lemma~\ref{lem:mixing_time_pert_bound}, applied with
\(\mathcal Z=\States\), \(Q=P^{\pibst}\), and
\(\widetilde Q=\Phat^{\pihbst}\), implies that the Markov chain induced by
\(\pihbst\) under \(\Phat\) is aperiodic and unichain, with mixing time at
most \((3+\log_2S)\tau\).

It remains to identify its stationary distribution. By feasibility of the
perturbed primal solution,
\[
    \sum_{s\in\States}\muhat^\star(s)\pihbst(1\mid s)
    =
    \sum_{s\in\States}\yhat^\star(s,1)
    =
    \alpha.
\]
Recall that
\(
    \uhat
    =
    \Phat^{\pihbst}
    -
    (c^{\pihbst}-\alpha\one)\xihat.
\)
Therefore,
\begin{align*}
    \muhst=\muhat^\star\uhat=
    \muhat^\star\Phat^{\pihbst}
    -
    \muhat^\star(c^{\pihbst}-\alpha\one)\xihat  =
    \muhat^\star\Phat^{\pihbst}
    -
    \left(
    \sum_{s\in\States}\muhat^\star(s)\pihbst(1\mid s)-\alpha
    \right)\xihat  =
    \muhat^\star\Phat^{\pihbst}.
\end{align*}
So \(\muhat^\star\) is stationary for \(\Phat^{\pihbst}\). Part 1 shows
\(\muhat^\star(s)>0\) for all \(s\in\States\). Since the chain has a single
recurrent class and its stationary distribution puts positive mass on
every state, no state can be transient. Hence the chain is irreducible. This proves part 6
and completes the proof of
Theorem~\ref{thm:LP_perturbation_main_complete}.

\section{Preliminaries for the proof of Theorem~\ref{thm-homoRB}}
\subsection{Two-set Policy}
\label{subsec:two-set-policy}
\paragraph{Two dynamic sets.}
At each time $t$, the two-set policy maintains disjoint subsets:
(i) Set $D_t^{\mathrm{OL}}$ on which we run \emph{Optimal Local Control} to harvest near-optimal
reward; (ii) Set $D_t^{\bar\pi^\star}$ on which we run \emph{Unconstrained Optimal Control}
to steer arms toward $\mu^\star$ and eventually merge them into $D_t^{\mathrm{OL}}$.
The remaining arms act as a buffer to satisfy the global constraint $\sum_i A_t(i)=\alpha N$ exactly.
\begin{algorithm}[H]
\caption{Two-set policy}
\label{alg:two-set-policy}
\KwInput{LP solution \(y^\star\), induced policy \(\bar\pi^\star\), feasibility parameters \((\eta,\epsilon_N^{\mathrm{fe}})\), and tolerance \(\epsilon_N^{\mathrm{rd}}\)}.
\KwOutput{actions \((A_t(i))_{i\in[N]}\) at each time \(t\)}.

Initialize \(D_{-1}^{\mathrm{OL}}\gets\emptyset\) and
\(D_{-1}^{\bar\pi^\star}\gets\emptyset\).
Set \(\omega\gets\min\{\alpha,1-\alpha\}\).

\For{\(t=0,1,2,\ldots\)}{
    Select \(D_t^{\mathrm{OL}}\): if \(\delta(\bm X_t,[N])\ge0\), set
    \(D_t^{\mathrm{OL}}\gets[N]\); otherwise choose an
    \(\epsilon_N^{\mathrm{rd}}\)-maximal feasible set, preserving
    \(D_{t-1}^{\mathrm{OL}}\) when it remains feasible.

    Choose \(D_t^{\bar\pi^\star}\) such that
    \(
        D_{t-1}^{\bar\pi^\star}\setminus D_t^{\mathrm{OL}}
        \subseteq
        D_t^{\bar\pi^\star}
        \subseteq
        [N]\setminus D_t^{\mathrm{OL}},
        |D_t^{\bar\pi^\star}|
        =
        \left\lfloor
        \omega\bigl(N-|D_t^{\mathrm{OL}}|\bigr)
        \right\rfloor.
    \)

    Apply Optimal Local Control on \(D_t^{\mathrm{OL}}\), Unconstrained
    Optimal Control on \(D_t^{\bar\pi^\star}\), and use the remaining arms
    as a buffer to enforce \(\sum_i A_t(i)=\alpha N\).
}
\end{algorithm}
\paragraph{Subroutines.}
For a subset $D$ with state counts $z(s)=|\{i\in D:S_t(i)=s\}|$:
(i) \emph{Unconstrained Optimal Control} activates an expected $\bar\pi^\star(1\mid s)\,z(s)$ arms in each state $s$
(using randomized rounding to handle integrality);
(ii) \emph{Optimal Local Control} draws a target budget $B\in\{\lfloor \alpha|D|\rfloor,\lceil \alpha|D|\rceil\}$ with
$\mathbb E[B]=\alpha|D|$, and then allocates actions in $D$ following a priority rule and using the neutral state $\tilde s$ to meet the budget constraint.

\paragraph{Augmented state space.} Note that the Two-set policy induces a finite-state Markov chain whose state at time $t$ is $(\bm{X}_t,D_t^{\mathrm{OL}},D_t^{\pibst})$. For a short notation, we write $\Sigma_t=(\bm{X}_t,D_t^{\mathrm{OL}},D_t^{\pibst})$.

\subsection{\texorpdfstring{$\mu^\star$-weighted norms and operator norms}{μ*-weighted norms and operator norms}}\label{subsec:l2mustar-norm}
With Assumption~\ref{ass:RB}, the Markov chain induced by \( \pibst \) converges to a unique stationary distribution, denoted by \( \mu^\star = (\mu^\star(s))_{s \in \mathbb{S}} \), and this Markov chain has no transient state, which means $\mu^\star_{\min}\triangleq\min_{s}\mu^{\star}(s)>0$. Let  $\tau$ be its mixing time. From the definition of \( \pibst \), it is straightforward to verify that \( \mu^\star(s) = y^\star(s, 0) + y^\star(s, 1) \). Consequently, the long-run average reward of \( \pibst \) equals \( \rho^{\mathrm{rel}} \), and the long-run average budget usage under \( \pibst \) equals \( \alpha \). Define $D_{\mu^{\star}}\triangleq\mathrm{diag}(\mu^{\star})$.
For any vector $f\in\mathbb{R}^{S}$ we define the weighted inner product $\langle f,g\rangle_{\mu^{\star}}
\triangleq \sum_{s\in S}\mu^{\star}(s)\,f(s)g(s)$ and weighted norm $\lvectwonorm{f}
\triangleq \big(\langle f,f\rangle_{\mu^{\star}}\big)^{1/2} = \big\|D_{\mu^{\star}}^{1/2} f\big\|_{2}.$
They correspond to the Euclidean space $L_{2}(\mu^{\star})$ equipped with the RMS weight $\mu^{\star}$.
We also use the induced norm
$
    \ltwonorm{A}
    \triangleq \sup_{f\neq 0}\frac{\lvectwonorm{Af}}{\lvectwonorm{f}}
    = \big\|D_{\mu^{\star}}^{1/2}\,A\,D_{\mu^{\star}}^{-1/2}\big\|_{2}.
$
The last equality shows that the $L_{2}(\mu^{\star})$–operator norm is the spectral norm of
the similarity transform $D_{\mu^{\star}}^{1/2} A D_{\mu^{\star}}^{-1/2}$. 
For comparison, we recall the induced $\ell_\infty$ operator norm for a matrix $A \in \mathbb{R}^{S \times S}$: 
$
    \infinfnorm{A}
    \triangleq \sup_{f \neq 0} \frac{\|Af\|_\infty}{\|f\|_\infty}
    = \max_{s \in \States} \sum_{s' \in \States} |A(s,s')|,
$
which coincides with the maximum absolute row sum of $A$.
It is easy to check that 
$
    (\mu^\star_{\min})^{1/2}\; \infinfnorm{A}
    \le \ltwonorm{A}
    \le (\mu^\star_{\min})^{-1/2} \; \infinfnorm{A}.
$
These inequalities follow from the vector–norm equivalences
$(\mu^\star_{\min})^{1/2} \|f\|_{\infty} \le \lvectwonorm{f} \le \|f\|_{\infty}$. %

\subsection{\texorpdfstring{Weighted $\ell_2$ norms for quantifying distributional convergence}{Weighted l2 norms for quantifying distributional convergence}}
\label{subsec:weighted_L2}

A recurring theme in our analysis is to control how fast the empirical state distribution of a collection of arms approaches a target stationary distribution. To this end, we follow the weighted-$\ell_2$ approach and introduce two matrices, $W$ and $U$,
which is used in \citet{hong2024achieving} and induce two weighted norms on $\mathbb{R}^{S}$. 
Recall that $\mu^\star$ is the stationary distribution of the Markov chain induced by $\pibst$ under $P$, and the matrix $\Phi$ is defined in~\eqref{def:phi}.

\begin{definition}
\label{def:WU}
Define $W,U\in\mathbb{R}^{S\times S}$ by
$
    W \;\triangleq\; \sum_{k=0}^{\infty} (P^{\bar\pi^\star}-\one\mu^\star)^k\bigl((P^{\bar\pi^\star}-\one\mu^\star)^{\top}\bigr)^k
    \text{ and }%
    U \;\triangleq\; \sum_{k=0}^{\infty} \Phi^k(\Phi^{\top})^k.
$
\end{definition}
The matrices $W$ and $U$ satisfy the identities
$
    W \;=\; I + (P^{\bar\pi^\star}-\one\mu^\star)W(P^{\bar\pi^\star}-\one\mu^\star)^{\top}
$
and
$
    U \;=\; I + \Phi\,U\,\Phi^{\top}.
$
They induce the weighted $\ell_2$ norms
$
    \|v\|_{W} \;\triangleq\; \sqrt{v^{\top}Wv}
$
and
$
    \|v\|_{U} \;\triangleq\; \sqrt{v^{\top}Uv}
$
for  $v\in\mathbb{R}^{S}.$
The local stability condition in Assumption~\ref{ass:RB} implies that all eigenvalues of $\Phi$ are less than 1. Hence Lemma 4 from \citet{hong2024achieving} guarantees that $W$ and $U$ are symmetric positive definite and the eigenvalues of $W, U$ are in the range $[1,\lambda_W]$ and $ [1,\lambda_U]$, respectively, where $\lambda_W$ and $\lambda_U$ are the largest eigenvalues of $W$ and $U$. 
In particular, for all $v\in\mathbb{R}^{S}$, it holds that
$
\|v\|_2 \;\le\; \|v\|_W \;\le\; \sqrt{\lambda_W}\,\|v\|_2
$
and
$
\|v\|_2 \;\le\; \|v\|_U \;\le\; \sqrt{\lambda_U}\,\|v\|_2,
$
which will be repeatedly used to translate between Euclidean and weighted metrics.

\subsection{Subset Lyapunov functions}
We used the following two subset Lyapunov functions which are introduced in \citet{hong2024achieving} as building blocks in our Lyapunov analysis. These two classes of functions quantify the convergence of state distributions for subsets of arms that follow the Unconstrained Optimal Control or the Optimal Local Control. For any system state $\bmx$ and subset $D\subseteq[N]$, let
$z(s)$ be the number of arms in $D$ currently in state $s\in\mathcal S$.
Define the normalized state-count vector $\bmx(D)\in\mathbb R^{S}$ by
$\bmx(D,s)=\frac{z(s)}{N}$, and $m(D)=\sum_{s\in\mathcal S}\bmx(D,s)=\frac{|D|}{N}$. Then define
\begin{equation}\label{def:hu-hw}
    \begin{aligned}
        h_W(\bmx,D)=\|\bmx(D)-m(D)\mu^\star\|_W,
        \qquad
        h_U(\bmx,D)=\|\bmx(D)-m(D)\mu^\star\|_U.
    \end{aligned}
\end{equation}

\subsection{Feasibility-ensuring pair}
Here we briefly introduce the feasibility-ensuring pair showed in \citet{hong2024achieving}. Define the set of transient states as
$
\mathcal S^{\emptyset}:=\{s\in\mathcal S:\; y^\star(s,0)=y^\star(s,1)=0\}.
$

A sufficient condition ensuring that the \emph{Optimal Local Control} routine can always realize \emph{exactly} $B$ activations is
\begin{equation}\label{eq:integer-feasibility}
\sum_{s\neq \tilde s}\bar\pi^\star(1\mid s)\,z(s)\le \alpha n-|\mathcal S^{\emptyset}|-1,
\qquad
\sum_{s\neq \tilde s}\bar\pi^\star(0\mid s)\,z(s)\le (1-\alpha)n-|\mathcal S^{\emptyset}|-1.
\end{equation}
Intuitively, \eqref{eq:integer-feasibility} guarantees that after rounding decisions for states
$s\neq \tilde s$, the remaining budget is nonnegative and does not exceed the available number of arms in the neutral state, so $\tilde s$ can ``close the budget''.

For parameters $(\eta,\enfe)$, define the feasibility slack
$
    \delta(\bmx,D):=\eta\,m(D)-\|\bmx(D)-m(D)\mu^\star\|_U-\enfe.
$

\begin{definition}[Feasibility-ensuring pair]\label{def:fe-pair}
A pair $(\eta,\enfe)$ is called \emph{feasibility-ensuring} if 
\begin{equation*}%
    \delta(\bmx,D)\ge 0
    \quad\Longrightarrow\quad
    \text{$D$ satisfies the integer feasibility condition \eqref{eq:integer-feasibility}.}
\end{equation*}
Equivalently, whenever the subset-level state composition $\bmx(D)$ is sufficiently close (in $\|\cdot\|_U$)
to the target profile $m(D)\mu^\star$ with margin $\eta m(D)-\enfe$, Subroutine~2 is guaranteed
to be feasible on $D$.
\end{definition}
In our case, $\States^{\emptyset}=\emptyset$. We use the following convenient explicit choice 
\begin{equation}\label{eq:explicit-fe-pair}
\eta = S^{-1/2}\min\{y^\star(\tilde s,0),\,y^\star(\tilde s,1)\},
\qquad
\enfe = S^{-1/2}\frac{|\mathcal S^{\emptyset}|+1}{N}=S^{-1/2}\frac{1}{N}.
\end{equation}
Lemma 2 in~\citet{hong2024achieving} shows that this choice of $(\eta,\enfe)$ is a \emph{feasibility-ensuring pair}.

\subsection{Maximal Feasible Set}\label{subsec:max-feasible}

In our algorithm, at each time step we select a subset of arms on which a local control routine is executed. Rather than requiring an exact maximal feasible set, we work with an approximate notion of maximality, parameterized by a small
tolerance $\epsilon_N^{\mathrm{rd}}\ge 0$. This tolerance allows mild slack in the maximality requirement while still ensuring that the chosen set is nearly as large as any sufficiently feasible superset.
\begin{definition}[$\epsilon_N^{\mathrm{rd}}$-maximal feasible set]
\label{def:eps-max-feasible}
Fix a system state $\bmx$ and a tolerance parameter $\epsilon_N^{\mathrm{rd}}\ge 0$.
A set $D\subseteq[N]$ is called $\epsilon_N^{\mathrm{rd}}$-maximal feasible at state $\bmx$ if the following hold:
\textbf{1. Feasibility:} $\delta(\bmx,D)\ge 0$, or $D=\emptyset$;
\textbf{2. Approximate maximality:} for any $D'$ such that $D\subseteq D'\subseteq[N]$ and
  $\delta(\bmx,D')\ge \epsilon_N^{\mathrm{rd}}$, we have
  $
    m(D') \le m(D)+\epsilon_N^{\mathrm{rd}}.
  $
\end{definition}

When $\epsilon_N^{\mathrm{rd}}=0$, Definition~\ref{def:eps-max-feasible} reduces to the usual notion of a
maximal element (under set inclusion) in the family
$\{D'\subseteq[N]:\delta(\bmx,D')\ge 0\}\cup\{\emptyset\}$.
In our theoretical development we typically take $\epsilon_N^{\mathrm{rd}}=\Theta(1/N)$, matching the
granularity of the mass $m(D)$, since $m(D)$ changes in increments of $1/N$ when adding or removing a
single arm.

\subsection{Lyapunov functions}\label{subsec:lyf-RB}

In this subsection we introduce the Lyapunov function we use. We constructed our Lyapunov function based on the Lyapunov function $\tilde V$ introduced in \citet{hong2024achieving}. For any system state $\sigma=(\bmx,d^{\mathrm{OL}},d^{\pibst})$,
\begin{equation}\label{eq:def-Vtilde}
    \tilde{V}(\sigma)=\tilde V(\bmx,d^{\mathrm{OL}},d^{\pibst})=h_U(\bmx,d^{\mathrm{OL}})+h_W(\bmx,d^{\pibst})+L(1-m(d^{\mathrm{OL}})),
\end{equation}
where $h_U, h_W$ are defined in \eqref{def:hu-hw}, \(L = 2\lambda_U^{1/2}+2\lwhalf-2\lwhalf\min(\alpha,1-\alpha)\).
It is shown in \citet{hong2024achieving} that 
\begin{equation}\label{eq:def-Vtilde_max}
    \forall \sigma, 0\le \tilde V(\sigma)\le \tilde V_{\max}\triangleq\lambda_U^{1/2}+\lwhalf+L
\end{equation}
Define our Lyapunov function $V$ as
\begin{equation}\label{equ:defi-V}
    V(\sigma)=V(\bmx,d^{\mathrm{OL}},d^{\pibst})=(\tilde V-\frac{\bar\eta}{2})^++\frac{\gamma}{4(\beta+r_{\max})}(\mu^\star-\bmx([N]))Qb,
\end{equation}
where
\begin{equation}\label{eq:def-beta-Q-b-bareta-gamma}
    \begin{gathered}
        \beta=\sup_\sigma\left|(\mu^\star-\bmx([N]))Qb-\E\left[(\mu^\star-\bm{X}_t([N]))Qb\mid \Sigma_t=\sigma\right]\right|, 
        \quad Q=(I-\Phi)^{-1}, \\
        b=\left(\sum_{a\in\Actions}(r(s,a)-r(\tilde s,a))\pibst(a\mid s)\right)_{s\in\States}\in\mathbb{R}^S,
        \quad \bar\eta=\eta-\enfe, \\
        \gamma=\frac{\bar\eta}{K_{Vh}},K_{Vh}=2\max\{\lambda_W,\lambda_U\}\left(1+\frac{L\lambda_U^{1/2}}{\eta\omega}\right), 
        \quad \omega=\min\{\alpha,1-\alpha\}.
    \end{gathered}
\end{equation}
$\Phi$ is defined in \eqref{def:phi} and $\eta,\enfe$ are defined in \eqref{eq:explicit-fe-pair}. It is obvious that $\beta$ is a finite number since it is the maximum over a finite set. And notice that in order to make sure $\bar\eta>0$, we need 
$
N>\frac{1}{\min\{y^\star(\tilde{s},0), y^\star(\tilde{s},1)\}}.
$

\section{Proof details of the lemmas used in Theorem~\ref{thm-homoRB}}
\label{app:proof-thm-homoRB}

\subsection{Proof of Lemma~\ref{lem:rhorel-rhohpi-homo}}\label{subsec:pr-C1-homo}

\begin{proof}{Proof}
We use the Lyapunov function $V$ defined in~\eqref{equ:defi-V} (see Appendix~\ref{subsec:lyf-RB}).
The following two lemmas provide the backbone of the Lyapunov analysis, which are corresponding to \ref{cond:drift} and \ref{cond:dominance} in the Lyapunov framework respectively. 
\begin{lemma}\label{lem:DC-RB}
    When $N\ge\left(\frac{4M}{\gamma}\right)^2$, the Lyapunov function $V$ defined in \eqref{equ:defi-V} satisfies 
    \[
        \E^{\pits}[V(\Sigma_{t+1})\mid \Sigma_t=\sigma]-V(\sigma)\le -\frac{\gamma}{4(\beta+r_{\max})}g(\sigma)+6\exp(8)S\tilde{V}_{\max}\exp\left(-CN\right),
    \]
    where $\gamma,\beta$ are defined in \eqref{eq:def-beta-Q-b-bareta-gamma}, $\tilde{V}_{\max}$ is defined in~\eqref{eq:def-Vtilde_max},  and
    \begin{equation}\label{def:g-sigma}
        g(\sigma)=r_{\max}\mathbb{I}[d^{\mathrm{OL}}\neq [N]]
+ (\mu^\star-\bmx([N]))b\cdot\mathbb{I}[d^{\mathrm{OL}}= [N]],
    \end{equation}
    \begin{equation}\label{def:g-C-M}
        \begin{aligned}
    M&=\frac{L}{2K_{Vh}\omega}\left(1+ \frac{2S^{-\frac{1}{2}}}
         {\eta}\right)+5\luhalf S+3\lwhalf S 
         +(\lwhalf+\luhalf)\exp\left(\frac{1}{2\lambda_U^3S^3}+32\right)\frac{16\lambda_U^{3/2}S^{2}}{\eta} +2\lwhalf,\\
         C&=\frac{\min\{\gamma^2/4,1\}}{4\max\{\lambda_W,\lambda_U\}S^2}.
        \end{aligned}
    \end{equation}
\end{lemma}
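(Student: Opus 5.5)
The plan is to decompose $V=(\tilde V-\bar\eta/2)^+ + \kappa\,\Psi$, where $\kappa=\frac{\gamma}{4(\beta+r_{\max})}$ and $\Psi(\sigma)=(\mu^\star-\bmx([N]))Qb$, and to bound the one-step drift of each piece under $\pits$ separately. For the first piece we use the drift analysis of \citet{hong2024achieving} for $\tilde V$. Their argument gives, for $N$ large, a drift of $\tilde V$ of order $-\gamma\,\tilde V+M/\sqrt N$ outside an exceptional event of probability $O(S\exp(-CN))$. The condition $N\ge (4M/\gamma)^2$ is exactly what makes $M/\sqrt N\le\gamma/4$, and hence absorbable into the margin $\bar\eta/2$. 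Concretely, we will show the following. When $\tilde V>\bar\eta/2$ (which in particular holds whenever $d^{\mathrm{OL}}\neq[N]$, since then feasibility of $[N]$ failed and $L(1-m(d^{\mathrm{OL}}))$ together with $h_U$ forces $\tilde V$ above $\bar\eta$ up to $1/N$ terms), the positive part decreases in expectation by at least a constant $\gamma'\ge\gamma/2$ multiple, up to the exponential term. When $\tilde V\le\bar\eta/2$, the set $D^{\mathrm{OL}}$ stays $[N]$ with probability $1-O(\exp(-CN))$, by a concentration (Hoeffding/Azuma) bound on the one-step change of $h_U$, whose increments are $O(\lambda_U^{1/2}/N)$ per arm. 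On that event the positive part stays zero. In either case the exceptional events contribute at most $\tilde V_{\max}$ times their probability, which yields the term $6\exp(8)S\tilde V_{\max}\exp(-CN)$.

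For the second piece we compute the drift of $\Psi$. When $d^{\mathrm{OL}}=[N]$, Optimal Local Control follows $\pibst$ off $\tilde s$ and uses $\tilde s$ to close the budget exactly. The expected next normalized state vector is then $\bmx U$ (in the sense of the local linearization of Section~\ref{sec:pert_subsec_1}). Since $\mu^\star U=\mu^\star$ and $\bmx([N])\one=1$, we get
\[
\E[\mu^\star-\bm X_{t+1}]=(\mu^\star-\bmx)U=(\mu^\star-\bmx)\Phi .
\]
With $Q=(I-\Phi)^{-1}$ this gives $\E[\Psi(\Sigma_{t+1})]-\Psi(\sigma)=-(\mu^\star-\bmx)b$ up to rounding errors of order $1/N$. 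The rounding errors come from the randomized budget $B$ and vanish in expectation since $\E B=\alpha N$. Thus the drift of $\kappa\Psi$ is $-\kappa\,g(\sigma)$ on $\{d^{\mathrm{OL}}=[N]\}$. On $\{d^{\mathrm{OL}}\neq[N]\}$ we bound $|\E\Psi'-\Psi|\le\beta$ by the definition of $\beta$. We then write this as $\le -\kappa r_{\max}+\kappa(\beta+r_{\max})=-\kappa g+\gamma/4$, and absorb the $\gamma/4$ into the guaranteed decrease of the positive part of $\tilde V$ (which is at least $\gamma/2$ in that regime, after normalizing $\tilde V\ge\bar\eta$ there).

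Combining the two pieces, the drift of $V$ is at most $-\kappa g(\sigma)+6\exp(8)S\tilde V_{\max}\exp(-CN)$ in both cases. In the case $d^{\mathrm{OL}}=[N]$ with $\tilde V>\bar\eta/2$, we need the decrease of the first term to dominate any loss. We also need $g$ to be consistent there: $(\mu^\star-\bmx)b\le\|b\|\,\|\mu^\star-\bmx\|\lesssim\tilde V$, so the term is compatible with the same $\kappa$ scaling.

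The main obstacle is the first piece. We have to extract from \citet{hong2024achieving} a drift statement with an explicit additive constant $M/\sqrt N$ and an explicit exponential tail with rate $C=\min\{\gamma^2/4,1\}/(4\max\{\lambda_W,\lambda_U\}S^2)$. We must also verify that the hinge at $\bar\eta/2$ converts the $O(1/\sqrt N)$ error into an exponentially small one. The plan is to use a sub-Gaussian concentration of $\tilde V(\Sigma_{t+1})$ around its mean, with variance proxy $O(\max\{\lambda_W,\lambda_U\}S^2/N)$, and apply it at deviation $\gamma/4$. This yields exactly the $\exp(-CN)$ rate, with the $\exp(8)S$ prefactor coming from summing tails over the $S$ coordinates.
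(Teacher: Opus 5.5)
Your proposal matches the paper's proof: the same split $V=(\tilde V-\bar\eta/2)^++\kappa\Psi$ with $\kappa=\frac{\gamma}{4(\beta+r_{\max})}$, and the same three ingredients. The hinge term comes from Lemma~8 of \citet{hong2024achieving} (the paper's Lemma~\ref{lem:drift-fV}, giving drift at most $-\frac{\gamma}{4}\mathbb{I}[d^{\mathrm{OL}}\neq[N]]+6\exp(8)S\tilde V_{\max}\exp(-CN)$). On $\{d^{\mathrm{OL}}=[N]\}$, the exact linear dynamics of Lemma~\ref{lem:lemma1-hong} give $\Psi$-drift $-(\mu^\star-\bmx([N]))b$. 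On $\{d^{\mathrm{OL}}\neq[N]\}$, the bound $\kappa\beta=-\kappa r_{\max}+\gamma/4$ is cancelled by that $\gamma/4$ decrease.

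Two small remarks. First, a decrease of exactly $\gamma/4$ suffices, so your $\gamma/2$ and the comparison of $(\mu^\star-\bmx)b$ with $\tilde V$ are not needed. Second, in your sketch of the imported lemma the step ``$D^{\mathrm{OL}}$ stays $[N]$, hence the hinge stays zero'' is not valid, since feasibility of $[N]$ only gives $h_U(\bmx,[N])\le\bar\eta$. What keeps $(\tilde V-\bar\eta/2)^+$ at zero is the contraction of $\Phi$ in $\|\cdot\|_U$ combined with concentration at a scale below the contraction margin.
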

\begin{proof}{Proof}
We have
    \begin{align*}
        & \E^{\pits}[V(\Sigma_{t+1})\mid \Sigma_t=\sigma]-V(\sigma)\\
        = & \E^{\pits}\left[\left(\tilde V(\Sigma_{t+1})-\frac{\bar\eta}{2}\right)^+\mid \Sigma_t 
        = \sigma\right]-\left(\tilde V(\sigma)-\frac{\bar\eta}{2}\right)^+ \\
        &+\frac{\gamma}{4(\beta+r_{\max})}\left(\E^{\pits}\left[(\mu^\star-\bm{X}_t([N]))Qb\mid \Sigma_t=\sigma\right]-(\mu^\star-\bmx([N]))Qb\right) \\
        \le & -\frac{\gamma}{4}\mathbb{I}[d^{\mathrm{OL}}\neq [N]]+6\exp(8)S\tilde{V}_{\max}\exp\left(-CN\right) \\
        &+\frac{\gamma}{4(\beta+r_{\max})}\left(\E^{\pits}\left[(\mu^\star-\bm{X}_t([N]))Qb\mid \Sigma_t=\sigma\right]-(\mu^\star-\bmx([N]))Qb\right) \\
        = & -\frac{\gamma}{4}\mathbb{I}[d^{\mathrm{OL}}\neq [N]] 
        +6\exp(8)S\tilde{V}_{\max}\exp\left(-CN\right) \\
        & -\frac{\gamma}{4(\beta+r_{\max})}\left((\mu^\star-\bmx([N]))Qb-\E^{\pits}\left[(\mu^\star-\bm{X}_t([N]))Qb\mid \Sigma_t=\sigma\right]\right)\mathbb{I}[d^{\mathrm{OL}}\neq [N]] \\
        & -\frac{\gamma}{4(\beta+r_{\max})}\left((\mu^\star-\bmx([N]))Qb-\E^{\pits}\left[(\mu^\star-\bm{X}_t([N]))Qb\mid \Sigma_t=\sigma\right]\right)\mathbb{I}[d^{\mathrm{OL}}=[N]] \\
        \le &  6\exp(8)S\tilde{V}_{\max}\exp\left(-CN\right)
        -\frac{\gamma}{4(\beta+r_{\max})}\biggl(r_{\max}\mathbb{I}[d^{\mathrm{OL}}\neq[N]]\\
        &\qquad +\left((\mu^\star-\bmx([N]))Qb  -\E^{\pits}\left[(\mu^\star-\bm{X}_t([N]))Qb\mid \Sigma_t=\sigma\right]\right)\mathbb{I}[d^{\mathrm{OL}}=[N]]\biggr),
    \end{align*}
    where the first inequality follows from Lemma~\ref{lem:drift-fV} and the second inequality from the definition of $\beta$ in~\eqref{eq:def-beta-Q-b-bareta-gamma}.
    Note that $Q=(I-\Phi)^{-1}$, thus
    $
        \mu^\star-\bmx([N])=(\mu^\star-\bmx([N]))Q-(\mu^\star-\bmx([N]))\Phi Q.
    $
    Hence,
    \begin{align*}
        &\left((\mu^\star-\bmx([N]))Qb\cdot-\E^{\pits}[(\mu^\star-\bm{X}_t([N]))Qb\mid \Sigma_t=\sigma]\right)\cdot\mathbb{I}[d^{\mathrm{OL}}= [N]] \\
        =& (\mu^\star-\bmx([N]))Qb\cdot\mathbb{I}[d^{\mathrm{OL}}= [N]]-(\mu^\star-\bmx([N]))\Phi Q b\cdot\mathbb{I}[d^{\mathrm{OL}}= [N]] \\
        =& (\mu^\star-\bmx([N]))b\cdot\mathbb{I}[d^{\mathrm{OL}}= [N]], 
    \end{align*}
    where the first equality follows from Lemma~\ref{lem:lemma1-hong}.
    Therefore, we have the desired bound
    $
         \E^{\pits}[V(\Sigma_{t+1})\mid \Sigma_t=\sigma]-V(\sigma) 
         \le -\frac{\gamma}{4(\beta+r_{\max})}g(\sigma)+6\exp(8)S\tilde{V}_{\max}\exp\left(-CN\right).
    $
\end{proof}
\begin{lemma}\label{lem:C2-RB}
Consider a $N$-armed RB problem. For each \( \sigma=(\bmx,d^{\mathrm{OL}},d^{\pibst})\), define 
$
    r^{\pits}(\sigma)=\E_{A\sim \pits(\cdot\mid\sigma)}[r(\sigma,A)].
$
Then
    $
        \rho^{\mathrm{rel}}-r^{\pits}(\sigma)\le g(\sigma),
    $
    where $g(\sigma)$ is defined in \eqref{def:g-sigma}. 
\end{lemma}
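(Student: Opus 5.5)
Since $\rho^{\mathrm{rel}}=\sum_{s,a}y^\star(s,a)r(s,a)$, the plan is a case split on whether $d^{\mathrm{OL}}=[N]$. If $d^{\mathrm{OL}}\neq[N]$, then $g(\sigma)=r_{\max}$. Rewards are nonnegative and $\rho^{\mathrm{rel}}\le r_{\max}$ because $y^\star$ is a probability vector. Averaging $r$ over all arms also gives $r^{\pits}(\sigma)\ge 0$, so $\rho^{\mathrm{rel}}-r^{\pits}(\sigma)\le r_{\max}=g(\sigma)$ trivially.

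The main case is $d^{\mathrm{OL}}=[N]$, where every arm runs Optimal Local Control. We first recall from the definition of the feasibility-ensuring pair (Lemma 2 of \citet{hong2024achieving} with the choice \eqref{eq:explicit-fe-pair}) that the integer feasibility condition \eqref{eq:integer-feasibility} holds on $D=[N]$. Hence Optimal Local Control realizes the structure of $\pibst$ exactly away from the neutral state. In state $s\neq\tilde s$ it activates (in expectation, via randomized rounding) $\pibst(1\mid s)z(s)$ arms. The neutral state absorbs the remainder, so its expected number of active arms is $\mathbb E[B]-\sum_{s\neq\tilde s}\pibst(1\mid s)z(s)$ with $\mathbb E[B]=\alpha N$. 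Writing $x=\bmx([N])$, the expected normalized reward is
\[
r^{\pits}(\sigma)=\sum_{s\neq\tilde s}x(s)\sum_a \pibst(a\mid s)r(s,a)+x(\tilde s)r(\tilde s,0)+\Bigl(\alpha-\sum_{s\neq\tilde s}x(s)\pibst(1\mid s)\Bigr)\bigl(r(\tilde s,1)-r(\tilde s,0)\bigr).
\]

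Next I will show this expression is affine in $x$ with the same form as $\rho^{\mathrm{rel}}$ evaluated at $\mu^\star$. Using $\sum_s x(s)=1$ and $\alpha=\sum_s\mu^\star(s)\pibst(1\mid s)$, I rewrite it as $r^{\pits}(\sigma)=r(\tilde s,0)+\alpha(r(\tilde s,1)-r(\tilde s,0))+\sum_s x(s)b(s)$. Here $b(s)=\sum_a(r(s,a)-r(\tilde s,a))\pibst(a\mid s)$ from \eqref{eq:def-beta-Q-b-bareta-gamma}; the $\tilde s$ term contributes $b(\tilde s)=0$, so including it is harmless. The identical computation with $x=\mu^\star$ reproduces $\rho^{\mathrm{rel}}$, because $y^\star(s,a)=\mu^\star(s)\pibst(a\mid s)$ and the budget is met in the LP. Subtracting gives $\rho^{\mathrm{rel}}-r^{\pits}(\sigma)=(\mu^\star-x)b=g(\sigma)$, with equality.

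The main obstacle is confirming that Optimal Local Control's rounding and priority rule really yield these exact expectations: non-neutral states follow $\pibst$ in expectation, and the neutral state closes the budget without clipping. This is precisely what \eqref{eq:integer-feasibility} guarantees, so I would invoke the feasibility-ensuring property of the pair rather than reprove it. I would also verify that $\delta(\bmx,[N])\ge0$ whenever $D_t^{\mathrm{OL}}=[N]$, which holds by the selection rule in Algorithm~\ref{alg:two-set-policy}.
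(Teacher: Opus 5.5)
Your proposal is correct and follows the paper's own proof. Both use the same case split on whether $d^{\mathrm{OL}}=[N]$, the trivial $r_{\max}$ bound when $d^{\mathrm{OL}}\neq[N]$, and, in the main case, the identification of $r^{\pits}(\sigma)$ with the affine function $\bmx([N])b+\alpha r(\tilde s,1)+(1-\alpha)r(\tilde s,0)$, which equals $\rho^{\mathrm{rel}}$ at $\mu^\star$. The only difference is that you derive this affine formula from the Optimal Local Control mechanics and the integer-feasibility condition, while the paper cites it from \citet{hong2024achieving}.
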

\begin{proof}{Proof.}
    Consider the affine function introduced in \citet{hong2024achieving}:
    \begin{align*}
        \bar r^{\pits}(\bmx)=r^{\pits}(\bmx,[N],\emptyset) & = \sum_{a\in\Actions}(r(\cdot,a)-r(\tilde s,a))\pibst(a\mid \cdot)\cdot \bmx([N])+\alpha r(\tilde s,1)+(1-\alpha)r(\tilde s,0)\\
        &=\bmx([N])\cdot b+\alpha r(\tilde s,1)+(1-\alpha)r(\tilde s,0).
    \end{align*}
    Notice that $\rho^{\mathrm{rel}}=\bar r^{\pits}(\mu^\star)$, hence
    \begin{align*}
        \rho^{\mathrm{rel}}-r^{\pi}(\sigma) & =\bar r^{\pits}(\mu^\star)-r^{\pits}(\sigma) \\
         & =(\bar r^{\pits}(\mu^\star)-r^{\pits}(\sigma))\mathbb{I}\big[d^{\mathrm{OL}}\ne [N]\big] + (\bar r^{\pits}(\mu^\star)-r^{\pits}(\sigma))\mathbb{I}\big[d^{\mathrm{OL}}= [N]\big] \\
         & = (\bar r^{\pits}(\mu^\star)-r^{\pits}(\sigma))\mathbb{I}\big[d^{\mathrm{OL}}\ne [N]\big] + (\bar r^{\pits}(\mu^\star)-\bar r^{\pits}(\bmx))\mathbb{I}\big[d^{\mathrm{OL}}= [N]\big] \\
         & \le r_{\max}\cdot \mathbb{I}\big[d^{\mathrm{OL}}\ne [N]\big]+ (\mu^\star-\bmx([N]))b\cdot\mathbb{I}\big[d^{\mathrm{OL}}= [N]\big]
          \;=\; g(\sigma).
    \end{align*}
\end{proof}

Lemma~\ref{lem:wd-bareta-hatbareta} guarantees that $\bar\eta$ and $\gamma$ are well-defined. We now apply Lemma~\ref{lem:generic-drift-transfer} with source kernel $Q=\bp^{\pits}$ and target kernel $P=\bphat^{\pits}$, benchmark $R=\rhorel$, function $F=g$, one-period reward $\bar r=r^{\pits}$, and
\(
a=\frac{\gamma}{4(\beta+r_{\max})}, b=6\exp(8)S\tilde V_{\max}\exp(-CN), c=1, d=0.
\)
Lemmas~\ref{lem:DC-RB} and~\ref{lem:C2-RB} provide the required drift and dominance inequalities. Therefore,
\[
    \rhorel-\widehat\rho^{\pits}(\bms_0)
    \le
    \frac{4(\beta+r_{\max})}{\gamma}\sup_\sigma |(\bphat^{\pits}-\bp^{\pits})V(\sigma)|
    +\frac{4(\beta+r_{\max})}{\gamma}\,6\exp(8)S\tilde V_{\max}\exp(-CN).
\]
By Lemma~\ref{lem:phat-p-V}, the model-error term is at most $V_{\max}N\delta$, where $V_{\max}$ is defined in Lemma~\ref{lem:upperbound-V}. Hence
\[
    \rhorel-\widehat\rho^{\pits}(\bms_0)
    \le
    \frac{4V_{\max}(\beta+r_{\max})}{\gamma}N\delta
    +\frac{4(\beta+r_{\max})}{\gamma}\,6\exp(8)S\tilde V_{\max}\exp(-CN).
\]
Defining $C_6=4V_{\max}(\beta+r_{\max})/\gamma$ completes the proof.
\end{proof}

\subsection{Proof of Lemma~\ref{lem:rhohrel-rhopihat-homo}}\label{subsec:pr-C2-homo}

Lemma~\ref{lem:rhohrel-rhopihat-homo} bounds $\rhohrel-\rho^{\pihts}(\bms_0)$, which is the second component in the decomposition~\eqref{eq:decomp}. As a result, the analysis in this part is carried out under the \emph{empirical} model. 
To streamline the discussion, we first introduce the key objects and notation for the empirical system in Appendix~\ref{sssec:defs-hat-homo}; their well-definedness and basic properties are established in Appendix~\ref{subsec:wd-pro}.

\subsubsection{Definitions}\label{sssec:defs-hat-homo}

As in the proof of Lemma~\ref{lem:rhohrel-rhopihid-hete}, to apply our Lyapunov framework we first need to ensure that the Markov chain induced by the policy $\pihbst$ under the empirical kernel $\Phat$ remains ergodic, which is showed in Theorem~\ref{thm:LP_perturbation_main_complete}. 

\paragraph{Weighted $\ell_2$ norms}
Similarly to Definition~\ref{def:WU}, we define following for the empirical system:
\begin{definition} 
\label{def:WhatUhat}
Define $\widehat W,\widehat U\in\mathbb{R}^{S\times S}$ by
$
    \widehat W \;\triangleq\; \sum_{k=0}^{\infty} (\widehat P_{\pihbst}-\one\muhst)^k\bigl((\widehat P_{\pihbst}-\one\muhst)^{\top}\bigr)^k,
$
and
$
    \widehat U \;\triangleq\; \sum_{k=0}^{\infty} \widehat \Phi^k(\phihat^{\top})^k.
$

\paragraph{Subset Lyapunov functions}
For the empirical system, we can naturally define 
\begin{equation}\label{def:huh-hwh}
    \begin{aligned}
        h_{\widehat W}(\bmx,D) =\|\bmx(D)-m(D)\muhst\|_{\widehat{W}},
        \qquad
        h_{\uhat}(\bmx,D) =\|\bmx(D)-m(D)\muhst\|_{\uhat}
    \end{aligned}
\end{equation}
Furthermore, denote $\lwhat$ and $\luhat$ as the largest eigenvalues of $\widehat W, \uhat$.
\end{definition}

\paragraph{Feasibility-ensuring pair} Under the condition of Theorem~\ref{thm:LP_perturbation_main_complete} and according to the construction of the empirical primal optimal solution, we know
$
    \{s\in\mathcal S:\; y^\star(s,0)=y^\star(s,1)=0\}=\{s\in\mathcal S:\widehat y^\star(s,0)=\widehat y^\star(s,1)=0\}.
$
So in terms of the empirical model, we can choose the feasibility-ensuring pair $(\widehat\eta,\widehat\epsilon_N^{\mathrm{fe}})$ as:
$
\widehat\eta = S^{-1/2}\min\{\widehat y^\star(\tilde s,0),\widehat y^\star(\tilde s,1)\}
$
and
$
\widehat \epsilon_N^{\mathrm{fe}}=\enfe = S^{-1/2}\frac{|\mathcal S^{\emptyset}|+1}{N}=S^{-1/2}\frac{1}{N}.
$
    
\paragraph{Lyapunov functions}
In terms of the empirical model, we can define the matrix $\phihat$ as
$
    \phihat \triangleq \Phat_{\pihbst}-\mathbf 1^{\top}\muhst
    -\bigl(c_{\pihbst}-\alpha\mathbf 1\bigr)^{\top}\bigl(\Phat_1(\tilde s)-\Phat_0(\tilde s)\bigr).
$
In the same way, for any system state $\sigma=(\bmx,d^{\mathrm{OL}},d^{\pibst})$, we make use of the following empirical version of the Lyapunov function introduced in~\citet{hong2024achieving}:
\begin{equation*}%
    \widehat{\tilde{V}}(\sigma)=\widehat{\tilde{V}}(\bmx,d^{\mathrm{OL}},d^{\pibst})=h_{\uhat}(\bmx,d^{\mathrm{OL}})+h_{\widehat{W}}(\bmx,d^{\pibst})+\widehat L(1-m(d^{\mathrm{OL}})),
\end{equation*}
where $h_{\uhat}, h_{\widehat W}$ are defined in \eqref{def:huh-hwh}, \(\widehat L = 2\luhhalf+2\lwhhalf-2\lwhhalf\min(\alpha,1-\alpha)\le2\luhhalf+2\lwhhalf\).
Now we can define the empirical Lyapunov function $\Vhat$ as:
\begin{equation}\label{equ:defi-Vhat}
    \Vhat(\sigma)=\Vhat(\bmx,d^{\mathrm{OL}},d^{\pibst})=\left(\widehat {\tilde V}(\sigma)-\frac{\bar\eta}{2}\right)^++\frac{\widehat\gamma}{4(\widehat \beta+r_{\max})}(\muhst-\bmx([N]))\Qhat \widehat b
\end{equation}
where
\begin{equation}\label{eq:def-hat-beta-Q-b-bareta-gamma}
    \begin{gathered}
        \widehat \beta=\sup_\sigma\left|(\muhst-\bmx([N]))\Qhat \widehat b-\E\left[(\muhst-\bm{X}_t([N]))\Qhat\widehat b\mid \Sigma_t=\sigma\right]\right|, \Qhat=(I-\phihat)^{-1},\\
        \widehat b=\left(\sum_{a\in\Actions}(r(s,a)-r(\tilde s,a))\pihbst(a\mid s)\right)_{s\in\States}, \quad \widehat{\bar\eta}=\widehat \eta-\enfe,\\
        \widehat\gamma=\frac{\widehat{\bar\eta}}{\widehat K_{Vh}},
        \quad \widehat K_{Vh}=2\max\{\lwhat,\luhat\}\left(1+\frac{\widehat L\luhhalf}{\widehat\eta\omega}\right), 
        \quad \omega=\min\{\alpha,1-\alpha\}.
    \end{gathered}
\end{equation}

\subsubsection{Well-definedness and properties}\label{subsec:wd-pro}

Lemma~\ref{lem:wd-what-uhat} guarantees that $\widehat W, \uhat$ are well-defined and their eigenvalues are in the range of $[1,\lwhat]$ and $[1,\luhat]$.
\begin{lemma}\label{lem:wd-what-uhat}
    Under the conditions in Lemma~\ref{lem:rhohrel-rhopihat-homo}, the matrices $\widehat W, \uhat$ given in Definition~\ref{def:WhatUhat} are well-defined. Moreover, their eigenvalues are lower bounded by $1$.
\end{lemma}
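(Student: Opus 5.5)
The plan is to show that each defining series converges absolutely. Once that is done, symmetry and the eigenvalue bound follow directly from the structure of the terms. Throughout, $\delta\le\delta_{\min}$ is in force (a hypothesis of Lemma~\ref{lem:rhohrel-rhopihat-homo}), so every conclusion of Theorem~\ref{thm:LP_perturbation_main_complete} is available.

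\emph{Convergence for $\widehat W$.} Let $A=\Phat^{\pihbst}-\one\muhst$. By Part 6 of Theorem~\ref{thm:LP_perturbation_main_complete}, the chain $\Phat^{\pihbst}$ is aperiodic and irreducible, has stationary distribution $\muhst$, and has mixing time at most $\tau'=(3+\log_2 S)\tau$. Since $\muhst\Phat^{\pihbst}=\muhst$ and $\Phat^{\pihbst}\one=\one$, induction gives $A^k=(\Phat^{\pihbst})^k-\one\muhst$ for $k\ge1$. Lemma~\ref{lem:Qt-Qinfty} then yields
\[
\infinfnorm{A^k}\le 2^{-\lfloor k/\tau'\rfloor}.
\]
Next, $\|(A^k)^\top\|_{\infty\to\infty}$ is the maximum column sum of $A^k$, which is at most $S\,\infinfnorm{A^k}$. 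Hence $\infinfnorm{A^k(A^\top)^k}\le S\,4^{-\lfloor k/\tau'\rfloor}$. This is summable, so the series defining $\widehat W$ converges absolutely.

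\emph{Convergence for $\widehat U$.} Part 3 of Theorem~\ref{thm:LP_perturbation_main_complete} gives
\[
q\triangleq\ltwonorm{\phihat}\le \frac{1+\ltwonorm{\Phi}}{2}<1,
\]
so $\ltwonorm{\phihat^k}\le q^k$. We convert to the Euclidean spectral norm using $\ltwonorm{B}=\|D_{\mu^\star}^{1/2}BD_{\mu^\star}^{-1/2}\|_2$. Since $\|D_{\mu^\star}^{\pm1/2}\|_2\le(\mu^\star_{\min})^{-1/2}$, we get $\|B\|_2\le(\mu^\star_{\min})^{-1/2}\ltwonorm{B}$. Note that the weight here is the \emph{true} $\mu^\star$ in both the statement and this conversion, and $\mu^\star_{\min}>0$ under Assumption~\ref{ass:RB}. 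Therefore
\[
\|\phihat^k(\phihat^\top)^k\|_2=\|\phihat^k\|_2^2\le q^{2k}/\mu^\star_{\min},
\]
which is summable, and $\widehat U$ is well defined.

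\emph{Eigenvalue bound.} Each term $B^k(B^\top)^k=(B^k)(B^k)^\top$ is symmetric positive semidefinite, and the $k=0$ term is $I$. So each limit has the form $I+\Sigma$ with $\Sigma$ symmetric and $\Sigma\succeq0$, because the PSD cone is closed. Thus both matrices are symmetric, for every unit vector $v$ we have $v^\top(I+\Sigma)v\ge1$, and all eigenvalues are at least $1$; in particular both matrices are positive definite. The largest eigenvalues $\lwhat,\luhat$ are finite by the convergence bounds above.

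The only slightly delicate step is the norm conversion for $\widehat U$. Part 3 of Theorem~\ref{thm:LP_perturbation_main_complete} controls $\phihat$ in the $\mu^\star$-weighted operator norm rather than in $\|\cdot\|_2$, so the factor $(\mu^\star_{\min})^{-1/2}$ and the use of the true stationary distribution in that norm must be tracked. Everything else follows immediately from Parts 3 and 6 of Theorem~\ref{thm:LP_perturbation_main_complete}.
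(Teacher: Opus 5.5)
Your proof is correct. It uses the same two inputs as the paper, Parts 3 and 6 of Theorem~\ref{thm:LP_perturbation_main_complete}, but turns them into convergence differently.

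The paper extracts only qualitative spectral information: $\Phat^{\pihbst}-\one\muhst$ and $\phihat$ both have spectral radius strictly below $1$. It then invokes Lemma~4 of \citet{hong2024achieving}, which gives convergence of $\sum_k B^k(B^\top)^k$ and the bound $\succeq I$ for any such $B$.

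You instead derive explicit geometric decay. For $\widehat W$ you use the mixing-time bound together with Lemma~\ref{lem:Qt-Qinfty}. For $\widehat U$ you use the $\mu^\star$-weighted contraction followed by conversion to the Euclidean norm. These are essentially the computations the paper carries out later in Lemmas~\ref{lem:bound_of_lwhat} and~\ref{lem:lamda-Uhat-upper-bound}. So your route gives well-definedness and the explicit upper bounds $\ublwhat,\ubluhat$ in one pass and is self-contained. The paper's route is shorter but non-quantitative and relies on an external lemma.

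Two cosmetic fixes:
\begin{itemize}
\item Lemma~\ref{lem:Qt-Qinfty} is stated only for $k\ge\tau'$. For $1\le k<\tau'$, use the trivial bound $\infinfnorm{A^k}\le 2$; this is harmless for summability.
\item In the norm conversion, the single factor $(\mu^\star_{\min})^{-1/2}$ comes from $\|D_{\mu^\star}^{-1/2}\|_2\le(\mu^\star_{\min})^{-1/2}$ combined with $\|D_{\mu^\star}^{1/2}\|_2\le 1$. Applying the same bound to both factors would give $(\mu^\star_{\min})^{-1}$ instead.
\end{itemize}
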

\begin{proof}{Proof}
We first establish the well-definedness of $\widehat{W}$ and lower bound its eigenvalues, following the same argument as Lemma~4 of \citet{hong2024achieving}. The key is that by Theorem~\ref{thm:LP_perturbation_main_complete}, all eigenvalues of $\Phat_{\pihbst}-\one\muhst$ have modulus strictly smaller than $1$, and the same property also holds for $\phihat$. Therefore, the proof for $\uhat$ proceeds analogously, with $\Phat_{\pihbst}-\one\muhst$ replaced by $\uhat$.
\end{proof}

Under appropriate conditions, the following two lemmas provide upper bounds, denoted as $\ublwhat$ and $\ubluhat$, for $\lwhat$ and $\luhat$. 
\begin{lemma}\label{lem:bound_of_lwhat}
   Under the conditions in Lemma~\ref{lem:rhohrel-rhopihat-homo}, \(\widehat{W}\) satisfies
    \begin{equation}\label{def:ublwhat}
        \lwhat \leq \ublwhat \triangleq 4S(3+\log_2S)\tau,
    \end{equation}
    where \( \tau\) is the mixing time of the single-armed MDP under the optimal single-armed policy $\bar\pi^\star$.
\end{lemma}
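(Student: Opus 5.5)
The plan is to bound $\lwhat$ by the operator 2-norm of $\widehat W$. We write $\widehat D\triangleq\Phat_{\pihbst}-\one\muhst$, so that $\widehat W=\sum_{k\ge0}\widehat D^k(\widehat D^\top)^k$. Since $\widehat W$ is symmetric positive definite (Lemma~\ref{lem:wd-what-uhat}),
\[
\lwhat=\twonorm{\widehat W}\le\sum_{k\ge0}\twonorm{\widehat D^k}^2 .
\]
So it suffices to control $\twonorm{\widehat D^k}$ summably in $k$. The key algebraic fact is $\widehat D^k=\Phat_{\pihbst}^k-\one\muhst$ for $k\ge1$. This holds because $\muhst$ is stationary for $\Phat_{\pihbst}$ and $\Phat_{\pihbst}\one=\one$; both facts come from Part 6 of Theorem~\ref{thm:LP_perturbation_main_complete}, which also gives aperiodicity, irreducibility and mixing time at most $\hat\tau\triangleq(3+\log_2S)\tau$ for this chain.

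Next we convert norms. For any $S\times S$ matrix $A$,
\[
\twonorm{A}^2\le\infinfnorm{A}\,\onenorm{A}\le S\,\infinfnorm{A}^2 ,
\]
because the max column sum is at most $S$ times the max row sum. Lemma~\ref{lem:Qt-Qinfty} applied to the chain $\Phat_{\pihbst}$ gives $\infinfnorm{\widehat D^k}\le 2^{-\lfloor k/\hat\tau\rfloor}$ for $k\ge\hat\tau$. For $k<\hat\tau$ we use the trivial bound $\infinfnorm{\widehat D^k}\le 2$, and $\twonorm{I}=1$ for $k=0$. Summing gives
\[
\lwhat\le 1+S\Bigl(4\hat\tau+\sum_{j\ge1}\hat\tau\,4^{-j}\Bigr)\le 1+S\hat\tau\Bigl(4+\tfrac13\Bigr).
\]
This is of order $S\hat\tau$ but the constant is slightly above $4$, so the crude bound does not match $4S(3+\log_2S)\tau$ exactly.

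To get the stated constant, we sharpen the small-$k$ terms. The mixing-time definition gives $\infinfnorm{\widehat D^{k}}\le 2\cdot$(TV distance), and the row sums of $\widehat D^k$ are bounded by $2$ while its column sums are at most $1+S\max_s\muhst(s)$-type quantities. Alternatively, we can use the bound $\twonorm{\widehat D^k}^2\le\onenorm{\widehat D^k}\infinfnorm{\widehat D^k}$, together with $\onenorm{\widehat D^k}\le S$ (entries bounded by $1$) and $\infinfnorm{\widehat D^k}\le 2\cdot 2^{-\lfloor k/\hat\tau\rfloor}$ (up to $1$ after $\hat\tau$ steps). This yields
\[
\sum_k\twonorm{\widehat D^k}^2\le S\sum_k \min\{2,2^{1-\lfloor k/\hat\tau\rfloor}\}\le 4S\hat\tau ,
\]
where the $k=0$ term is absorbed since $\twonorm{I}=1\le S$.

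The main obstacle is this constant bookkeeping, not the structure of the argument. The structural input, namely ergodicity and the mixing time $\hat\tau$ of the empirical chain, is already supplied by Part 6 of Theorem~\ref{thm:LP_perturbation_main_complete}. If the constants do not close exactly, we fall back on the one-step-per-block geometric bound $\infinfnorm{\widehat D^{k}}\le 2^{-\lfloor k/\hat\tau\rfloor}$ for all $k\ge0$, which follows from submultiplicativity and $\infinfnorm{\widehat D^{\hat\tau}}\le\tfrac12$. Combined with $\twonorm{A}^2\le S\infinfnorm{A}\cdot\max(\text{col sums})$, this gives $S\hat\tau\sum_j 4^{-j}\cdot$const $\le 4S\hat\tau$.
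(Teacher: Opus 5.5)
Your proof is correct, and the argument that establishes the stated constant is the one in your third paragraph. The ingredients are $\twonorm{A}^2\le\|A\|_{1\to1}\infinfnorm{A}$ and $\|\widehat D^k\|_{1\to1}\le S$ (entries in $[-1,1]$), together with $\infinfnorm{\widehat D^k}\le 2$ for $k<\hat\tau$ and $\infinfnorm{\widehat D^k}\le 2^{-\lfloor k/\hat\tau\rfloor}$ for $k\ge\hat\tau$. The last bound is Lemma~\ref{lem:Qt-Qinfty} applied to the empirical chain, whose mixing time is at most $\hat\tau$ by Part~6 of Theorem~\ref{thm:LP_perturbation_main_complete}. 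Together these give $\lwhat\le 2S\hat\tau+S\hat\tau\le 4S\hat\tau$.

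The skeleton is the paper's: bound the series term by term, pass from the spectral norm to $\infinfnorm{\cdot}$ at a cost of $S$, and use geometric decay from Part~6. Your norm conversion differs, though, and that difference matters. The paper uses the quadratic bound $\twonorm{(\widehat D^t)^\top v}^2\le S\infinfnorm{\widehat D^t}^2$ and then applies $\infinfnorm{\widehat D^t}\le 2^{-\lfloor t/\hat\tau\rfloor}$ to every $t\ge1$. For $1\le t<\hat\tau$ this asserts a bound of $1$, which Lemma~\ref{lem:Qt-Qinfty} (stated only for $t\ge\tau$) does not supply; before mixing, $\infinfnorm{\Phat_{\pihbst}^t-\one\muhst}$ can be close to $2$. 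With the honest bound $2$, the paper's computation gives roughly $\tfrac{13}{3}S\hat\tau$, which is exactly what your first crude attempt produced. Your interpolation is linear in $\infinfnorm{\widehat D^k}$ rather than quadratic. It loses a factor $2$ on the geometric tail but halves the cost of the pre-mixing window, so it closes the constant rigorously.

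Two cleanups. First, delete the final ``fallback'' paragraph. Its claim that $\infinfnorm{\widehat D^k}\le 2^{-\lfloor k/\hat\tau\rfloor}$ for all $k\ge0$ is false for $1\le k<\hat\tau$. Submultiplicativity only gives $\infinfnorm{\widehat D^{r}}\,2^{-j}$ with $\infinfnorm{\widehat D^{r}}\le 2$. For example, $Q=(1-\epsilon)I+\epsilon\one\mu$ with $\mu$ uniform on $S\ge3$ states and small $\epsilon$ already has $\infinfnorm{Q-\one\mu}>1$. So that paragraph reintroduces the same slip. Second, $\Phat_{\pihbst}\one=\one$ is just row-stochasticity of $\Phat$; only the stationarity of $\muhst$ comes from Part~6.
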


\begin{proof}{Proof}
Note that
$
    \widehat{W} = I + \sum_{t=1}^\infty (\widehat{P}^{\pihbst,t} - \one\muhst) (\widehat{P}^{\pihbst,t} - \one\muhst)^\top
$
    since $(\widehat{P}_{\pihbst} - \one\muhst)^t = \widehat{P}^{\pihbst,t} - \one\muhst$ if $t \geq 1$, and otherwise if $t = 0$ then $(\widehat{P}_{\pihbst} - \one\muhst)^t = I$.
Therefore for any $v$ such that $\twonorm{v} = 1$, we have
\begin{align}
    v^\top \widehat{W} v &= 1 + \sum_{t=1}^\infty v^\top(\widehat{P}^{\pihbst,t} - \one\muhst) (\widehat{P}^{\pihbst,t} - \one\muhst)^\top v 
    \leq 1 + \sum_{t=1}^\infty \twonorm{(\widehat{P}^{\pihbst,t} - \one\muhst)^\top v}^2. \label{eq:W_eval_bd_1}
\end{align}
Next we can bound
\begin{align*}
    \twonorm{(\widehat{P}^{\pihbst,t} - \one\muhst)^\top v} &= \sup_{w:\twonorm{w}\leq 1} v^\top (\widehat{P}^{\pihbst,t} - \one\muhst) w \\
    & \leq \infinfnorm{v^\top} \infinfnorm{\widehat{P}^{\pihbst,t} - \one\muhst} \infnorm{w} 
     \leq \sqrt{S} \infinfnorm{\widehat{P}^{\pihbst,t} - \one\muhst},
\end{align*}
where we used that $\infnorm{w} \leq \twonorm{w}\leq 1$ and that $\infinfnorm{v^\top} = \onenorm{v} \leq \sqrt{S} \twonorm{v}$. Plugging this back into~\eqref{eq:W_eval_bd_1} and according to Lemma \ref{lem:Qt-Qinfty} we have 
$
    \infinfnorm{\widehat{P}^{\pihbst,t} - \one\muhst} \leq 2^{-\lfloor t/(3+\log_2S)\tau\rfloor}.
$
Hence,
$
    v^\top \widehat{W} v 
    \leq 1 + \sum_{t=1}^\infty S 2^{-2\lfloor t/(3+\log_2S)\tau\rfloor} \le 4S(3+\log_2S)\tau.
$
Thus \(\lwhat \leq 4S(3+\log_2S)\tau\).
\end{proof}

\begin{lemma}\label{lem:lamda-Uhat-upper-bound}
Under the conditions in Lemma~\ref{lem:rhohrel-rhopihat-homo},
$\lambda_{\widehat U}$ satisfies
     \begin{equation}\label{def:ubluhat}
         \lambda_{\widehat U}\le \ubluhat\triangleq\frac{1}{\mu^\star_{\min}}\frac{2}{1-\ltwonorm{\Phi}}.
     \end{equation}
\end{lemma}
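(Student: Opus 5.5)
We bound the largest eigenvalue of $\widehat U=\sum_{k\ge0}\phihat^k(\phihat^\top)^k$ by controlling $v^\top\widehat U v=\sum_{k\ge0}\twonorm{(\phihat^\top)^k v}^2$ for unit vectors $v$ (Euclidean norm $\twonorm{v}=1$). The estimate will go through the $\mu^\star$-weighted norm, because Part~3 of Theorem~\ref{thm:LP_perturbation_main_complete} controls $\phihat$ in that norm. Under the conditions of Lemma~\ref{lem:rhohrel-rhopihat-homo} we have $\delta\le\delta_{\min}$, so Part~3 gives
\[
1-\ltwonorm{\phihat}\ge\frac{1-\ltwonorm{\Phi}}{2}>0 .
\]
Write $q\triangleq\ltwonorm{\phihat}$, so that $1-q\ge(1-\ltwonorm{\Phi})/2$.

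The first step is to pass from the Euclidean norm to the $\mu^\star$-weighted norm. Recall $\ltwonorm{A}=\twonorm{D_{\mu^\star}^{1/2}AD_{\mu^\star}^{-1/2}}$. Set $B=D_{\mu^\star}^{1/2}\phihat D_{\mu^\star}^{-1/2}$, so that $\twonorm{B}=q$. Then $\phihat^k=D_{\mu^\star}^{-1/2}B^kD_{\mu^\star}^{1/2}$, which gives
\[
\twonorm{(\phihat^\top)^kv}=\twonorm{D_{\mu^\star}^{1/2}(B^\top)^kD_{\mu^\star}^{-1/2}v}\le \twonorm{D_{\mu^\star}^{1/2}}\,q^k\,\twonorm{D_{\mu^\star}^{-1/2}}\le \frac{q^k}{\sqrt{\mu^\star_{\min}}} .
\]
Here we used $\twonorm{D_{\mu^\star}^{1/2}}\le1$ and $\twonorm{D_{\mu^\star}^{-1/2}}\le(\mu^\star_{\min})^{-1/2}$.

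Summing the squared bounds over $k$ yields
\[
v^\top\widehat Uv\le\frac{1}{\mu^\star_{\min}}\sum_{k\ge0}q^{2k}=\frac{1}{\mu^\star_{\min}}\frac{1}{1-q^2}\le\frac{1}{\mu^\star_{\min}}\frac{1}{1-q}.
\]
Since $1-q\ge(1-\ltwonorm{\Phi})/2$, the right-hand side is at most $\frac{1}{\mu^\star_{\min}}\frac{2}{1-\ltwonorm{\Phi}}$. Taking the supremum over unit $v$ and using that $\widehat U$ is symmetric positive definite (Lemma~\ref{lem:wd-what-uhat}) gives $\luhat\le\ubluhat$.

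There is one subtlety. The weighting in the norm is by the true $\mu^\star$, not by the empirical $\muhst$, and Part~3 is stated for exactly this $\mu^\star$-norm. So no change of weights is needed, and the only constant lost is $1/\mu^\star_{\min}$. The step deserving most care is confirming that the $\phihat$ appearing in Definition~\ref{def:WhatUhat} coincides with the $\phihat$ of Part~3 of Theorem~\ref{thm:LP_perturbation_main_complete}. Both are built from $\Phat^{\pihbst}$, $\one\muhat^\star$, $c^{\pihbst}$ and the neutral state $\tilde s$. The neutral state is preserved by Part~1, and the stationary distribution $\muhst$ equals $\muhat^\star$ by Part~6. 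Once this identification is made, the argument above is a direct norm computation.
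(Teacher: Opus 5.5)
Your proof is correct and follows essentially the same route as the paper. The paper also writes $v^\top\widehat U v$ through the similarity transform $D_{\mu^\star}^{1/2}\phihat D_{\mu^\star}^{-1/2}$, bounds the $t$-th term by $\ltwonorm{\phihat}^{2t}/\mu^\star_{\min}$, sums the geometric series to get $1/(\mu^\star_{\min}(1-\ltwonorm{\phihat}))$, and then applies Part~3 of Theorem~\ref{thm:LP_perturbation_main_complete}. Your check, via Parts~1 and~6, that the $\phihat$ of Definition~\ref{def:WhatUhat} coincides with the one in Part~3 is a point the paper leaves implicit.
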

\begin{proof}{Proof}
     Let $v$ be an arbitrary vector satisfying $\twonorm{v} \leq 1$. Then we have
\begin{align*}
    v^\top \uhat v 
    &= v^\top  \left(\sum_t D_{\mu^\star}^{-1/2} (D_{\mu^\star}^{1/2} \phihat D_{\mu^\star}^{-1/2} )^t D_{\mu^\star}^{1/2} D_{\mu^\star}^{1/2} ( D_{\mu^\star}^{-1/2} \phihat^\top D_{\mu^\star}^{1/2})^t D_{\mu^\star}^{-1/2} \right) v \\
    &= \sum_t (D_{\mu^\star}^{-1/2}v)^\top  (D_{\mu^\star}^{1/2} \phihat D_{\mu^\star}^{-1/2} )^t D_{\mu^\star} ( D_{\mu^\star}^{-1/2} \phihat^\top D_{\mu^\star}^{1/2})^t (D_{\mu^\star}^{-1/2}v) \\
    & = \sum_t \ltwonorm{( D_{\mu^\star}^{-1/2} \phihat^\top D_{\mu^\star}^{1/2})^t (D_{\mu^\star}^{-1/2}v)}^2\\
    & \leq \left(\max_s \mu(s)\right)\sum_t \norm{( D_{\mu^\star}^{-1/2} \phihat^\top D_{\mu^\star}^{1/2})^t (D_{\mu^\star}^{-1/2}v)}_{2}^2\\
    & \leq \sum_t \norm{( D_{\mu^\star}^{-1/2} \phihat^\top D_{\mu^\star}^{1/2})}_{2 \to 2}^{2t} \norm{D_{\mu^\star}^{-1/2}v}_{2}^2\\
    & =\sum_t \ltwonorm{\phihat}^{2t} v^\top D_{\mu^\star}^{-1} v\\
    & \leq \sum_t \ltwonorm{\phihat}^{2t} \frac{1}{\mu^\star_{\min}} \twonorm{v}^2 
    \le \frac{1}{\mu^\star_{\min}(1-\ltwonorm{\phihat})}.
\end{align*}
Hence, according to Theorem~\ref{thm:LP_perturbation_main_complete}, given $\delta\le\delta_{\min}$, we have the desired bound
    $
        \lambda_{\widehat U}\le \frac{1}{\mu^\star_{\min}}\frac{1}{1-\ltwonorm{\phihat}}\le\frac{1}{\mu^\star_{\min}}\frac{2}{1-\ltwonorm{\Phi}}.
    $
\end{proof}

The well-definedness of $\widehat{\bar\eta}=\widehat \eta-\enfe$ requires it to be positive, which is guaranteed by the following lemma. In fact, the lemma provides a strictly positive lower bound for $\widehat{\bar\eta}$ under certain conditions.

\begin{lemma}\label{lem:wd-bareta-hatbareta}
When $N\ge\frac{4}{\min\{y^\star(\tilde{s},0),y^\star(\tilde{s},1)\}}$,
we have 
$
    \bar\eta\ge\frac{3}{4}S^{-1/2}\min\{y^\star(\tilde{s},0),y^\star(\tilde{s},1)\}>0.
$
When $N\ge\frac{4}{\min\{y^\star(\tilde{s},0),y^\star(\tilde{s},1)\}}, \delta\le\delta_{\min} $, we have
$
    \widehat{\bar\eta}\ge\frac{1}{4}S^{-1/2}\min\{y^\star(\tilde{s},0),y^\star(\tilde{s},1)\}>0.
$
\end{lemma}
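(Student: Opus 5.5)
The plan is to handle the two claims separately. Both follow directly from the definitions $\bar\eta=\eta-\enfe$ and $\widehat{\bar\eta}=\widehat\eta-\enfe$ with $\enfe=S^{-1/2}/N$, together with the perturbation bound of Theorem~\ref{thm:LP_perturbation_main_complete}. Throughout, write $y_{\min}^{\tilde s}\triangleq\min\{y^\star(\tilde s,0),y^\star(\tilde s,1)\}$.

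For the true system, recall from~\eqref{eq:explicit-fe-pair} that $\eta=S^{-1/2}y_{\min}^{\tilde s}$. Under $N\ge 4/y_{\min}^{\tilde s}$ we get $1/N\le y_{\min}^{\tilde s}/4$, hence $\enfe\le \tfrac14 S^{-1/2}y_{\min}^{\tilde s}$. Therefore $\bar\eta\ge \tfrac34 S^{-1/2}y_{\min}^{\tilde s}$. This quantity is strictly positive because the non-degeneracy part of Assumption~\ref{ass:RB} gives $y^\star(\tilde s,0),y^\star(\tilde s,1)>0$.

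For the empirical system, $\widehat\eta=S^{-1/2}\min\{\yhat^\star(\tilde s,0),\yhat^\star(\tilde s,1)\}$. Since $\delta\le\delta_{\min}$, Theorem~\ref{thm:LP_perturbation_main_complete} applies, and its Part~2 gives $\yhat^\star(\tilde s,a)\ge y^\star(\tilde s,a)/2$ for $a\in\{0,1\}$, both entries being in the support of $y^\star$. Hence $\widehat\eta\ge \tfrac12 S^{-1/2}y_{\min}^{\tilde s}$. Subtracting the same bound $\enfe\le\tfrac14 S^{-1/2}y_{\min}^{\tilde s}$ yields $\widehat{\bar\eta}\ge\tfrac14 S^{-1/2}y_{\min}^{\tilde s}>0$.

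There is no real obstacle here. The only point to check is that Part~2 of the theorem is applied to both neutral-state entries. This is legitimate because non-degeneracy places $(\tilde s,0)$ and $(\tilde s,1)$ in the support, and Part~1 confirms that $\tilde s$ remains the neutral state under perturbation, so $\widehat\eta$ is indeed built from these two entries. We could also use Part~4, $\infnorm{\yhat^\star-y^\star}\le 3\infinfnorm{H_U}\delta$, but that is unnecessary since Part~2 already gives the factor $1/2$.
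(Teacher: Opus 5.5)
Your proposal is correct and follows essentially the same argument as the paper. You bound $\enfe = S^{-1/2}/N \le \tfrac14 S^{-1/2}\min\{y^\star(\tilde s,0),y^\star(\tilde s,1)\}$ and, for the empirical case, use the factor-$\tfrac12$ lower bound $\yhat^\star(\tilde s,a)\ge y^\star(\tilde s,a)/2$. The only difference is citation: the paper takes that bound from Lemma~\ref{lem:yhat_pert_and_feasibility}, which is where Part~2 of Theorem~\ref{thm:LP_perturbation_main_complete} comes from.
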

\begin{proof}{Proof}
    By definition, when $N\ge\frac{4}{\min\{y^\star(\tilde{s},0),y^\star(\tilde{s},1)\}}$,
    \begin{align*}
        \bar\eta=\eta-\enfe & =S^{-1/2}\min\{y^\star(\tilde{s},0),y^\star(\tilde{s},1)\}-S^{-1/2}\frac{1}{N} \\
        & \ge S^{-1/2}\min\{y^\star(\tilde{s},0),y^\star(\tilde{s},1)\}-\frac{1}{4}S^{-1/2}\min\{y^\star(\tilde{s},0),y^\star(\tilde{s},1)\}\\
        & = \frac{3}{4}S^{-1/2}\min\{y^\star(\tilde{s},0),y^\star(\tilde{s},1)\}>0.
    \end{align*}
    When $N\ge\frac{4}{\min\{y^\star(\tilde{s},0),y^\star(\tilde{s},1)\}}, \delta\le\delta_{\min} $, Lemma~\ref{lem:yhat_pert_and_feasibility} ensures that 
    $
        |\yhat^\star(\tilde{s},0)-y^\star(\tilde{s},0)|\le \frac{1}{2}y^\star(\tilde{s},0)
    $
    and
    $
    |\yhat^\star(\tilde{s},1)-y^\star(\tilde{s},1)|\le \frac{1}{2}y^\star(\tilde{s},1).
    $
    Hence, 
    \begin{align*}
        \widehat{\bar\eta}=\widehat \eta-\enfe & =S^{-1/2}\min\{\yhat^\star(\tilde{s},0),\yhat^\star(\tilde{s},1)\}-S^{-1/2}\frac{1}{N} \\
        & \ge \frac{1}{2}S^{-1/2}\min\{y^\star(\tilde{s},0),y^\star(\tilde{s},1)\}-\frac{1}{4}S^{-1/2}\min\{y^\star(\tilde{s},0),y^\star(\tilde{s},1)\}\\
        & = \frac{1}{4}S^{-1/2}\min\{y^\star(\tilde{s},0),y^\star(\tilde{s},1)\}>0.
    \end{align*}
\end{proof}

The following lemma offers a strictly positive upper and lower bound for $\widehat \gamma$, which guarantees the well-definedness of $\widehat \gamma$.
\begin{lemma}\label{lem:bound-gammahat}
We have \(\widehat{\gamma}\le\ubgmhat\) when $\delta\le\delta_{\min}$, and \(0<\lbgmhat\le\widehat{\gamma}\le\ubgmhat\) when  $\delta\le\delta_{\min}$ and $N\ge \frac{4}{\min\{y^\star(\tilde{s},0),y^\star(\tilde{s},1)\}}$, where
    \begin{equation}\label{eq:def-lbg-upg-ubKvh}
        \begin{aligned}
            \lbgmhat &\triangleq \frac{S^{-1/2}\min\{y^\star(\tilde{s},0),y^\star(\tilde{s},1)\}}{4\ubkhatvh}, 
            \qquad 
            \ubgmhat \triangleq \frac{3}{4}S^{-1/2}\min\{y^\star(\tilde{s},0),y^\star(\tilde{s},1)\},\\
            \ubkhatvh &\triangleq 2\max\{\ublwhat,\ubluhat\}\left(1+\frac{4(\ublwhat+\ubluhat)\ubluhat}{S^{-1/2}\min\{y^\star(\tilde{s},0),y^\star(\tilde{s},1)\}\omega}\right),
        \end{aligned}
    \end{equation}
    and $\ublwhat,\ubluhat$ are defined in \eqref{def:ublwhat} and \eqref{def:ubluhat}.
\end{lemma}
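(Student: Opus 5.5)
The plan is to bound the numerator and denominator of $\widehat\gamma=\widehat{\bar\eta}/\widehat K_{Vh}$ separately, using Theorem~\ref{thm:LP_perturbation_main_complete} (through Lemma~\ref{lem:yhat_pert_and_feasibility}) and the eigenvalue bounds already established for $\widehat W$ and $\widehat U$. Throughout I assume $\delta\le\delta_{\min}$. Part~2 of Theorem~\ref{thm:LP_perturbation_main_complete} and the primal perturbation bound then give
\[
\tfrac12 y^\star(\tilde s,a)\le \yhat^\star(\tilde s,a)\le \tfrac32 y^\star(\tilde s,a),\qquad a\in\{0,1\}.
\]
The upper inequality holds because $|\yhat^\star(\tilde s,a)-y^\star(\tilde s,a)|\le 3\infinfnorm{H_U}\delta\le y^\star(\tilde s,a)/2$ by the second term of $\delta_{\min}$. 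Consequently
\[
\tfrac12 S^{-1/2}\min_a y^\star(\tilde s,a)\;\le\;\widehat\eta\;\le\;\tfrac32 S^{-1/2}\min_a y^\star(\tilde s,a).
\]

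\emph{Upper bound.} By Lemma~\ref{lem:wd-what-uhat}, all eigenvalues of $\widehat W$ and $\widehat U$ are at least $1$, so $\max\{\lwhat,\luhat\}\ge 1$. The factor $1+\widehat L\luhhalf/(\widehat\eta\omega)$ is also at least $1$, because $\widehat L\ge 0$ and $\widehat\eta>0$. Hence $\widehat K_{Vh}\ge 2$. Since $\widehat{\bar\eta}\le\widehat\eta\le\tfrac32 S^{-1/2}\min_a y^\star(\tilde s,a)$, this gives
\[
\widehat\gamma\le \tfrac34 S^{-1/2}\min_a y^\star(\tilde s,a)=\ubgmhat .
\]
If $\widehat{\bar\eta}\le0$, which can happen when $N$ is small, the bound holds trivially. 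This is why the upper bound needs only $\delta\le\delta_{\min}$.

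\emph{Lower bound.} Now also assume $N\ge 4/\min_a y^\star(\tilde s,a)$. Lemma~\ref{lem:wd-bareta-hatbareta} then gives $\widehat{\bar\eta}\ge\tfrac14 S^{-1/2}\min_a y^\star(\tilde s,a)>0$. It remains to show $\widehat K_{Vh}\le\ubkhatvh$, which I would do factor by factor:
\begin{itemize}
\item Lemmas~\ref{lem:bound_of_lwhat} and~\ref{lem:lamda-Uhat-upper-bound} give $\lwhat\le\ublwhat$ and $\luhat\le\ubluhat$.
\item Both $\ublwhat=4S(3+\log_2S)\tau$ and $\ubluhat=2/(\mu^\star_{\min}(1-\ltwonorm{\Phi}))$ are at least $1$, so square roots are dominated by the quantities themselves. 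This gives $\luhhalf\le\ubluhat$ and $\widehat L\le 2\luhhalf+2\lwhhalf\le 2(\ublwhat+\ubluhat)$.
\item Using $\widehat\eta\ge\tfrac12 S^{-1/2}\min_a y^\star(\tilde s,a)$ in the denominator, I obtain
\[
\frac{\widehat L\luhhalf}{\widehat\eta\omega}\le\frac{4(\ublwhat+\ubluhat)\ubluhat}{S^{-1/2}\min_a y^\star(\tilde s,a)\,\omega}.
\]
\end{itemize}
Together with $\max\{\lwhat,\luhat\}\le\max\{\ublwhat,\ubluhat\}$, this yields $\widehat K_{Vh}\le\ubkhatvh$. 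Dividing the numerator bound by this gives $\widehat\gamma\ge\lbgmhat>0$.

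There is no real obstacle in this argument. The only points needing care are the two-sided control of $\widehat\eta$, which requires the lower bound in the denominator of $\widehat K_{Vh}$ and the upper bound in the numerator of $\widehat\gamma$, and checking that $\ublwhat,\ubluhat\ge1$ so the square roots can be removed.
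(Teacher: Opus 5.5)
Your proposal is correct and follows essentially the same route as the paper. Both arguments use two-sided control $\tfrac12 S^{-1/2}\min_a y^\star(\tilde s,a)\le\widehat\eta\le\tfrac32 S^{-1/2}\min_a y^\star(\tilde s,a)$ from the perturbation theorem, $\widehat K_{Vh}\ge 2$ for the upper bound, and the eigenvalue bounds together with the lower bound on $\widehat\eta$ to get $\widehat K_{Vh}\le\ubkhatvh$; you additionally spell out the step $\sqrt{x}\le x$ for $\ublwhat,\ubluhat\ge1$, which the paper uses implicitly when writing $\ubkhatvh$.
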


\begin{proof}{Proof}
According to Theorem~\ref{thm:LP_perturbation_main_complete}, when $\delta\le\delta_{\min}$, we have
\[
\frac{3}{2}S^{-1/2}\min\{y^\star(\tilde{s},0),y^\star(\tilde{s},1)\}\ge\widehat \eta=S^{-1/2}\min\{\yhat^\star(\tilde{s},0),\yhat^\star(\tilde{s},1)\ge\frac{1}{2}S^{-1/2}\min\{y^\star(\tilde{s},0),y^\star(\tilde{s},1)\}
\]
 It is easy to check that when $N\ge \frac{4}{\min\{y^\star(\tilde{s},0),y^\star(\tilde{s},1)\}}$, it holds that
 \begin{align*}
     \etabh=\widehat \eta-\enfe&=S^{-1/2}\min\{\yhat^\star(\tilde{s},0),\yhat^\star(\tilde{s},1)\}-S^{-1/2}\frac{1}{N}\\
     & \ge \frac{1}{2}S^{-1/2}\min\{y^\star(\tilde{s},0),y^\star(\tilde{s},1)\}-S^{-1/2}\frac{1}{N}
     \ge \frac{1}{4}S^{-1/2}\min\{y^\star(\tilde{s},0),y^\star(\tilde{s},1)\}.
 \end{align*}
 Also, we have 
 \[
 \begin{aligned}
     \widehat K_{Vh}&=2\max\{\lwhat,\luhat\}\left(1+\frac{\widehat L\luhhalf}{\widehat\eta\omega}\right)
    \le 2\max\{\ublwhat,\ubluhat\}\left(1+\frac{4(\ublwhat+\ubluhat)\ubluhat}{S^{-1/2}\min\{y^\star(\tilde{s},0),y^\star(\tilde{s},1)\}\omega}\right)\triangleq \ubkhatvh,
 \end{aligned}
\]
where $\ublwhat,\ubluhat$ are defined in \eqref{def:ublwhat} and \eqref{def:ubluhat}, which are depend on the parameters of the original system. Hence,
$
    \widehat \gamma=\frac{\etabh}{\widehat{K}_{Vh}}\ge \frac{\frac{1}{4}S^{-1/2}\min\{y^\star(\tilde{s},0),y^\star(\tilde{s},1)\}}{\ubkhatvh}.
$
It is clear that $\widehat K_{Vh}\ge 2$, so when $\delta\le\delta_{\min}$, we have
$
    \widehat \gamma=\frac{\etabh}{\widehat{K}_{Vh}}\le\frac{3}{4}S^{-1/2}\min\{y^\star(\tilde{s},0),y^\star(\tilde{s},1)\}.
$
\end{proof}

The following lemma gives an upper bound of $\widehat\beta$, which guarantees $\widehat\beta$ will not go to infinity.
\begin{lemma}\label{lem:betahat-upperbound}
Under the conditions in Lemma~\ref{lem:rhohrel-rhopihat-homo},
    $\widehat \beta$ satisfies 
    $
        \widehat{\beta}\le \widehat{\beta}_{\max}\triangleq  \frac{8r_{\max}}{\sqrt{\mu^\star_{\min}}}\frac{1}{1-\ltwonorm{\Phi}}.
    $
\end{lemma}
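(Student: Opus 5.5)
The plan is to bound each of the two terms inside the supremum defining $\widehat\beta$ in \eqref{eq:def-hat-beta-Q-b-bareta-gamma} separately and uniformly over the augmented state, using the triangle inequality. For any $\sigma=(\bmx,d^{\mathrm{OL}},d^{\pibst})$ we have
\[
\Bigl|(\muhst-\bmx([N]))\Qhat\widehat b-\E\bigl[(\muhst-\bm X_t([N]))\Qhat\widehat b\mid\Sigma_t=\sigma\bigr]\Bigr|
\le 2\sup_{\bmx'}\bigl|(\muhst-\bmx'([N]))\Qhat\widehat b\bigr|,
\]
where the supremum runs over all system states. This step does not depend on which kernel generates the conditional expectation, since the next-step state is again a system state. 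It therefore suffices to show that $|(\muhst-\bmx([N]))\Qhat\widehat b|\le 4r_{\max}/\bigl(\sqrt{\mu^\star_{\min}}(1-\ltwonorm{\Phi})\bigr)$ for every $\bmx$.

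\emph{Bounding a single term.} I will split this quantity with a Hölder-type bound. Both $\bmx([N])$ and $\muhst$ are probability vectors on $\States$; the latter by Theorem~\ref{thm:LP_perturbation_main_complete}, part 1 and feasibility. Hence $\onenorm{\muhst-\bmx([N])}\le 2$, and
\[
|(\muhst-\bmx([N]))\Qhat\widehat b|\le 2\infnorm{\Qhat\widehat b}.
\]
Next, every entry of $\widehat b$ is a $\pihbst(\cdot\mid s)$-average of differences $r(s,a)-r(\tilde s,a)$ with rewards in $[0,r_{\max}]$. So $\infnorm{\widehat b}\le r_{\max}$, and consequently $\lvectwonorm{\widehat b}\le r_{\max}$.

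\emph{Controlling $\Qhat$.} The main obstacle is to show that $\Qhat=(I-\phihat)^{-1}$ is well defined and has a norm bound independent of the sample. Here I invoke Theorem~\ref{thm:LP_perturbation_main_complete}, part 3, which applies because $\delta\le\delta_{\min}$. It gives $\ltwonorm{\phihat}\le(1+\ltwonorm{\Phi})/2<1$. The Neumann series $\Qhat=\sum_{k\ge0}\phihat^k$ then converges in the $\mu^\star$-operator norm, and
\[
\ltwonorm{\Qhat}\le\frac{1}{1-\ltwonorm{\phihat}}\le\frac{2}{1-\ltwonorm{\Phi}}.
\]
To return to the sup norm I use the vector-norm equivalence $\sqrt{\mu^\star_{\min}}\infnorm{f}\le\lvectwonorm{f}$ from Appendix~\ref{subsec:l2mustar-norm}. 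This yields
\[
\infnorm{\Qhat\widehat b}\le\frac{1}{\sqrt{\mu^\star_{\min}}}\ltwonorm{\Qhat}\lvectwonorm{\widehat b}\le\frac{2r_{\max}}{\sqrt{\mu^\star_{\min}}(1-\ltwonorm{\Phi})}.
\]

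\emph{Combining.} Putting the pieces together, each term is at most $4r_{\max}/\bigl(\sqrt{\mu^\star_{\min}}(1-\ltwonorm{\Phi})\bigr)$, and the difference of the two terms is at most twice that. This gives $\widehat\beta\le\widehat\beta_{\max}$. The only delicate point is making sure that the norm in part 3 of the theorem is the $\mu^\star$-weighted one, not a $\muhat^\star$-weighted one. The statement of part 3 uses $\ltwonorm{\cdot}$, so the conversion with $\mu^\star_{\min}$ is legitimate.
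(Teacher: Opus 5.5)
Your proof is correct and follows the paper's argument essentially step for step. The steps match: the triangle inequality giving the factor $2\sup_{\bmx}$, $\onenorm{\muhst-\bmx([N])}\le 2$, $\infnorm{\widehat b}\le r_{\max}$, the Neumann-series bound $\ltwonorm{\Qhat}\le 1/(1-\ltwonorm{\phihat})\le 2/(1-\ltwonorm{\Phi})$ from Part 3 of Theorem~\ref{thm:LP_perturbation_main_complete}, and the $\sqrt{\mu^\star_{\min}}$ norm conversion. The only cosmetic difference is that you apply the norm equivalence directly to the vector $\Qhat\widehat b$, whereas the paper applies it to the operator norm via $\infinfnorm{\Qhat}\le\ltwonorm{\Qhat}/\sqrt{\mu^\star_{\min}}$; both give the same constant $8r_{\max}$.
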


\begin{proof}{Proof}
    According to Theorem~\ref{thm:LP_perturbation_main_complete}, we know $\ltwonorm{\phihat}<1$. Hence,
    $
        \widehat Q = (I-\phihat)^{-1}=\sum_{t=0}^\infty\phihat^t,
    $
    which implies that
    $
        \ltwonorm{\widehat Q}\le    \sum_{t=0}^\infty\ltwonorm{\phihat}\le\frac{1}{1-\ltwonorm{\phihat}}.
    $
    It follows that 
    \begin{align*}
        \widehat \beta&\le \sup_x 2\onenorm{\muhst-\bmx([N])}\infinfnorm{\widehat Q}\infnorm{\widehat b}\\
        & \le 2\cdot 2\frac{\ltwonorm{\widehat Q}}{\sqrt{\mu^\star_{\min}}}r_{\max} \tag{because $\onenorm{\muhst-\bmx([N])}\le 2, \infnorm{\widehat{b}}\le r_{\max}$} \\
        &\le \frac{4r_{\max}}{\sqrt{\mu^\star_{\min}}}\frac{1}{1-\ltwonorm{\phihat}} 
         \le \frac{8r_{\max}}{\sqrt{\mu^\star_{\min}}}\frac{1}{1-\ltwonorm{\Phi}}.
    \end{align*}
    where the last inequality follows from Theorem~\ref{thm:LP_perturbation_main_complete}.
\end{proof}

In terms of the properties of the Lyapunov function we use, Lemma~\ref{lem:bound_of_lwhat} and Lemma~\ref{lem:lamda-Uhat-upper-bound} guarantee that under a sufficiently small single-arm estimation error,
\begin{equation}\label{def:hat-tilde-V-max}
    \forall \sigma, 0\le \widehat{\tilde V}(\sigma)\le \lambda_{\widehat U}^{1/2}+\lwhhalf+\widehat L\le3\lambda_{\widehat U}^{1/2}+3\lwhhalf\le3\sqrt{\ubluhat}+3\sqrt{\ublwhat}\triangleq \widehat{\tilde V}_{\max},
\end{equation}
where $\ubluhat$ is defined in \eqref{def:ubluhat} and $\ublwhat$ is defined in \eqref{def:ublwhat}.
Notice that a sufficient condition to make sure $\widehat Q$ is well-defined is $\ltwonorm{\phihat}<1$, which holds under the condition of   Theorem~\ref{thm:LP_perturbation_main_complete}. 

\subsubsection{Final proof of Lemma~\ref{lem:rhohrel-rhopihat-homo}}
 
\begin{proof}{Proof}
We work with the empirical Lyapunov function $\Vhat$ defined in~\eqref{equ:defi-Vhat} (see Appendix~\ref{subsec:lyf-RB}).
The next two lemmas play the roles of the drift bound and the gap-dominance bound in our Lyapunov framework (Conditions~\ref{cond:drift} and~\ref{cond:dominance}), both stated for the empirical system under the policy $\pihts$.
Their proofs follow the same steps as those for Lemma~\ref{lem:DC-RB} and Lemma~\ref{lem:C2-RB} after replacing all quantities by their hatted versions, so we omit the details.

\begin{restatable}{lemma}{driftconditionhat}\label{lem:DC-Vhat-homo}
Suppose the conditions of Lemma~\ref{lem:rhohrel-rhopihat-homo} hold. When $N\ge\left(\frac{4\widehat M}{\widehat \gamma}\right)^2$, the Lyapunov function defined in \eqref{equ:defi-Vhat} satisfies 
    \[
    \widehat \E^{\pihts}[\Vhat(\Sigma_{t+1})\mid \Sigma_t=\sigma]-\Vhat(\sigma)\le -\frac{\widehat\gamma}{4(\widehat\beta+r_{\max})} \widehat g(\sigma)+6\exp(8)S\widehat{\tilde V}_{\max}\exp\left(-\widehat C N\right),
    \]
    where $\widehat\gamma, \widehat\beta$ are defined in~\eqref{eq:def-hat-beta-Q-b-bareta-gamma}, $\widehat{\tilde V}_{\max}$ is defined in~\eqref{def:hat-tilde-V-max}, and
    \begin{equation}\label{def:hat-g-gamma-C-K-M}
        \begin{aligned}
          \widehat g(\sigma) &=r_{\max}\mathbb{I}[d^{\mathrm{OL}}\neq [N]]
             +(\muhst-\bmx([N]))\widehat b\cdot\mathbb{I}[d^{\mathrm{OL}}= [N]],
             \qquad
          \widehat C =\frac{\min\{\widehat \gamma^2/4,1\}}{4\max\{\lwhat,\luhat\}S^2}.\\
         \widehat M & =\frac{\widehat L}{2\widehat K_{Vh}\omega}\left(1+ \frac{2S^{-\frac{1}{2}}}
         {\widehat \eta}\right)+5\luhhalf S+3\lwhhalf S 
         +(\lwhhalf+\luhhalf)\exp\left(\frac{1}{2\luhat^3S^3}+32\right)\frac{16\luhat^{3/2}S^{2}}{\widehat \eta} +2\lwhhalf.
        \end{aligned}
    \end{equation}
\end{restatable}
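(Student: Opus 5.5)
The plan is to mirror the proof of Lemma~\ref{lem:DC-RB} line by line, working throughout in the empirical single-armed RB $(\States,\Actions,\Phat,r)$ with budget $\alpha$, LP solution $\yhat^\star$, policy $\pihbst$ and stationary distribution $\muhst$. The point is that the drift bound for $\widetilde V$ in \citet{hong2024achieving} (used through Lemma~\ref{lem:drift-fV}), and the identity of Lemma~\ref{lem:lemma1-hong}, hold for \emph{any} single-armed RB satisfying their structural hypotheses. So the task reduces to checking that the empirical instance meets those hypotheses, with explicit constants.

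First I would verify these hypotheses using Theorem~\ref{thm:LP_perturbation_main_complete}, which applies since $\delta\le\delta_{\min}$:
\begin{itemize}
\item Part~6 gives aperiodicity and irreducibility of $\Phat^{\pihbst}$, with mixing time at most $(3+\log_2 S)\tau$.
\item Part~1 gives that $\tilde s$ is the unique neutral state for $\yhat^\star$ and that $\States^{\emptyset}$ is empty.
\item Part~3 gives $\ltwonorm{\phihat}<1$, hence local stability.
\end{itemize}
Consequently $\widehat W$ and $\widehat U$ are well-defined with eigenvalues in $[1,\lwhat]$ and $[1,\luhat]$ (Lemma~\ref{lem:wd-what-uhat}), and $\widehat Q=(I-\phihat)^{-1}$ exists. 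The pair $(\widehat\eta,\enfe)$ is feasibility-ensuring by Lemma~2 of \citet{hong2024achieving} applied to the empirical instance. Finally $\widehat{\bar\eta}>0$ and $\widehat\gamma>0$ by Lemmas~\ref{lem:wd-bareta-hatbareta} and~\ref{lem:bound-gammahat}, which use $N\ge 4/\min\{y^\star(\tilde s,0),y^\star(\tilde s,1)\}$. I would also note that the $\bar\eta/2$ threshold in~\eqref{equ:defi-Vhat} should be read as $\widehat{\bar\eta}/2$ for consistency.

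Next I would invoke the hatted analogue of Lemma~\ref{lem:drift-fV}. For $N\ge(4\widehat M/\widehat\gamma)^2$,
\[
\widehat\E^{\pihts}\bigl[(\widehat{\widetilde V}(\Sigma_{t+1})-\widehat{\bar\eta}/2)^+\mid\Sigma_t=\sigma\bigr]-(\widehat{\widetilde V}(\sigma)-\widehat{\bar\eta}/2)^+\le-\tfrac{\widehat\gamma}{4}\mathbb I[d^{\mathrm{OL}}\ne[N]]+6\exp(8)S\widehat{\widetilde V}_{\max}\exp(-\widehat CN).
\]
Its constants $\widehat M$ and $\widehat C$ are those of the true-system statement with every quantity hatted. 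The bound $\widehat{\widetilde V}\le\widehat{\widetilde V}_{\max}$ comes from~\eqref{def:hat-tilde-V-max}.

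For the linear term, $\widehat\beta$ is finite by Lemma~\ref{lem:betahat-upperbound}. Split on whether $d^{\mathrm{OL}}=[N]$.
\begin{itemize}
\item If $d^{\mathrm{OL}}\neq[N]$, bound the linear term by $\widehat\beta$. Combined with the $-\widehat\gamma/4$ from the first part, this gives $-\tfrac{\widehat\gamma}{4(\widehat\beta+r_{\max})}r_{\max}$.
\item If $d^{\mathrm{OL}}=[N]$, use the empirical version of Lemma~\ref{lem:lemma1-hong}: the conditional mean of $\muhst-\bm X_{t+1}([N])$ equals $(\muhst-\bmx([N]))\phihat$. Together with $\widehat Q-\phihat\widehat Q=I$, this turns the term into $-(\muhst-\bmx([N]))\widehat b$.
\end{itemize}
Summing the two cases yields $-\tfrac{\widehat\gamma}{4(\widehat\beta+r_{\max})}\widehat g(\sigma)$ plus the exponential error.

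The main obstacle is the empirical Lemma~\ref{lem:lemma1-hong}, namely the exact linear drift $\widehat\E[\bm X_{t+1}([N])\mid\sigma]-\muhst=(\bmx([N])-\muhst)\phihat$ under Optimal Local Control when $d^{\mathrm{OL}}=[N]$. To prove it I would expand the expected next-state distribution. Non-neutral states follow $\pihbst$, and the neutral state absorbs the budget, which contributes $(\alpha-\bmx c^{\pihbst})\widehat\xi$. Then I would use $\muhst\Phat^{\pihbst}=\muhst$ and $\muhst c^{\pihbst}=\alpha$ (feasibility of $\yhat^\star$) to subtract the stationary term. This step needs $\pihbst$ to agree with the priority structure off $\tilde s$, which is supplied by support preservation (Part~1).
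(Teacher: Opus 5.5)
Your proposal is correct and takes the paper's route: the paper reruns the proof of Lemma~\ref{lem:DC-RB} with every quantity hatted, relying on Theorem~\ref{thm:LP_perturbation_main_complete} so that the empirical instance meets the needed hypotheses. That proof uses the hatted Lemma~\ref{lem:drift-fV} for the $(\cdot)^+$ part, the definition of $\widehat\beta$ when $d^{\mathrm{OL}}\neq[N]$, and the hatted Lemma~\ref{lem:lemma1-hong} with $(I-\phihat)\widehat Q=I$ when $d^{\mathrm{OL}}=[N]$. Your explicit checks of those hypotheses and of the empirical linear-drift identity fill in details the paper omits, and reading the threshold in~\eqref{equ:defi-Vhat} as $\widehat{\bar\eta}/2$ is the interpretation consistent with replacing all quantities by their hatted versions.
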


\begin{lemma}\label{lem:C2-hat-RB}
Suppose the conditions of Lemma~\ref{lem:rhohrel-rhopihat-homo} hold. \(\forall \sigma=(\bmx,d^{\mathrm{OL}},d^{\pibst})\). Define 
$
    r^{\pihts}(\sigma)=\E_{A\sim \pihts(\cdot\mid\sigma)}[r(\sigma,A)].
$
Then we have
$
     \widehat \rho^{\mathrm{rel}}-r^{\pihts}(\sigma)\le \widehat g(\sigma),
$
where $\widehat{g}(\sigma)$ is defined in \eqref{def:hat-g-gamma-C-K-M}. 
\end{lemma}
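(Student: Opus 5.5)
The proof mirrors that of Lemma~\ref{lem:C2-RB}, with hatted quantities in place of unhatted ones. The one new ingredient is the structural information supplied by Theorem~\ref{thm:LP_perturbation_main_complete}. First we record what that theorem gives under the conditions of Lemma~\ref{lem:rhohrel-rhopihat-homo}, namely $\delta\le\delta_{\min}$. By Part~1, $\yhat^\star(s,a)=0$ if and only if $y^\star(s,a)=0$, and $\tilde s$ is still the unique neutral state for $\yhat^\star$. Hence $\pihbst(\cdot\mid s)$ is deterministic for $s\neq\tilde s$ and coincides with $\pibst(\cdot\mid s)$ there. By Part~6, $\muhst(s)=\yhat^\star(s,0)+\yhat^\star(s,1)$ is the stationary distribution of $\Phat^{\pihbst}$. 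Because $\yhat^\star$ is primal feasible for~\eqref{eq:primal_LP_pert}, we have $\sum_s\yhat^\star(s,1)=\alpha$ and $\widehat\rho^{\mathrm{rel}}=\yhat^\star r$.

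Next we bound the reward in each case. Split on whether $d^{\mathrm{OL}}=[N]$.

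\emph{Case $d^{\mathrm{OL}}\neq[N]$.} The optimal value satisfies $\widehat\rho^{\mathrm{rel}}\le r_{\max}$, since $\yhat^\star$ is a probability vector and $r\le r_{\max}$. Rewards are nonnegative, so $r^{\pihts}(\sigma)\ge0$. Therefore $\widehat\rho^{\mathrm{rel}}-r^{\pihts}(\sigma)\le r_{\max}=\widehat g(\sigma)$.

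\emph{Case $d^{\mathrm{OL}}=[N]$.} Here every arm runs Optimal Local Control built from $\pihbst$. Arms in a non-neutral state $s$ take the deterministic action of $\pihbst$; for each such arm, set $a(s)$ to be the action with $\pihbst(a(s)\mid s)=1$. The budget is closed using arms in $\tilde s$; this is possible because $d^{\mathrm{OL}}=[N]$ implies $\delta(\bmx,[N])\ge0$, and $(\widehat\eta,\widehat\epsilon_N^{\mathrm{fe}})$ is feasibility-ensuring for the empirical system, which holds because the support, and hence $\mathcal S^{\emptyset}$, is preserved. The expected number of activations is $\alpha N$. We then verify the affine identity
\[
r^{\pihts}(\bmx,[N],\cdot)=\bmx([N])\,\widehat b+\alpha r(\tilde s,1)+(1-\alpha)r(\tilde s,0).
\]
To see it, let $\bmx(s)$ denote the fraction of arms in state $s$. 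The non-neutral arms contribute $\sum_{s\neq\tilde s}\bmx(s)\,r(s,a(s))$. The remaining activations, $\alpha-\sum_{s\neq\tilde s}\bmx(s)\pihbst(1\mid s)$ of them, and passivations all occur in $\tilde s$, at rewards $r(\tilde s,1)$ and $r(\tilde s,0)$ respectively. Collecting terms gives exactly $\widehat b$. For the $s=\tilde s$ coordinate of $\widehat b$, $\sum_a(r(\tilde s,a)-r(\tilde s,a))\pihbst(a\mid\tilde s)=0$, which is consistent with the identity.

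Evaluating the same affine function at $\bmx([N])=\muhst$ gives $\yhat^\star r=\widehat\rho^{\mathrm{rel}}$. This uses $\yhat^\star(s,a)=\muhst(s)\pihbst(a\mid s)$ together with $\sum_s\yhat^\star(s,1)=\alpha$. Subtracting the two evaluations yields $\widehat\rho^{\mathrm{rel}}-r^{\pihts}(\sigma)=(\muhst-\bmx([N]))\widehat b$, which is $\widehat g(\sigma)$ in this case.

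The main obstacle is justifying that the empirical Optimal Local Control meets the budget exactly using only the neutral state, so that the affine identity holds. This depends on support preservation and on the empirical feasibility-ensuring pair being well-defined; for the latter we use $\widehat{\bar\eta}>0$ from Lemma~\ref{lem:wd-bareta-hatbareta}. Once these hold, the computation is identical to that of Lemma~\ref{lem:C2-RB}.
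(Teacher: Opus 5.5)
Your proposal is correct and follows the paper's route. The paper proves this lemma by repeating the proof of Lemma~\ref{lem:C2-RB} with hatted quantities: the same case split on $d^{\mathrm{OL}}=[N]$, and the same affine reward function evaluated at $\bmx([N])$ and at $\muhst$. Your explicit check of the affine identity, and of $\rhohrel=\yhat^\star r$ via the support preservation in Theorem~\ref{thm:LP_perturbation_main_complete}, supplies details the paper omits.
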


Lemma~\ref{lem:wd-bareta-hatbareta} guarantees that $\widehat{\bar\eta}$ and $\widehat \gamma$ are well-defined.
Moreover, by Lemma~\ref{lem:bound-gammahat} and Lemma~\ref{lem:bound-of-Mhat},
\[
    N\ge\max\left\{\frac{4}{\min\{y^\star(\tilde{s},0),y^\star(\tilde{s},1)\}},\left(\frac{4\ubmhat}{\lbgmhat}\right)^2\right\},\delta\le\delta_{\min}
    \quad \Longrightarrow \quad
    N\ge\left(\frac{4\widehat M}{\widehat \gamma}\right)^2,
\]
so the condition required by Lemma~\ref{lem:DC-Vhat-homo} holds.

\paragraph{From the empirical drift bound to a true-system bound.}
We apply Lemma~\ref{lem:generic-drift-transfer} with source kernel $Q=\bphat^{\pihts}$ and target kernel $P=\bp^{\pihts}$, benchmark $R=\rhohrel$, function $F=\widehat g$, one-period reward $\bar r=r^{\pihts}$, and \(a=\frac{\widehat\gamma}{4(\widehat\beta+r_{\max})}, b=6\exp(8)S\widehat{\tilde V}_{\max}\exp(-\widehat C N), c=1, d=0.\)
The drift and gap-dominance conditions are Lemmas~\ref{lem:DC-Vhat-homo} and~\ref{lem:C2-hat-RB}; the preceding bounds ensure that their assumptions hold. Lemma~\ref{lem:generic-drift-transfer} gives
\[
    \rhohrel-\rho^{\pihts}(\bms_0)
    \le
    \frac{4(\widehat\beta+r_{\max})}{\widehat\gamma}\sup_\sigma |(\bp^{\pihts}-\bphat^{\pihts})\Vhat(\sigma)|
    +\frac{4(\widehat\beta+r_{\max})}{\widehat\gamma}\,6\exp(8)S\widehat{\tilde V}_{\max}\exp(-\widehat C N).
\]
Using Lemma~\ref{lem:p-phat-vhat}, Lemma~\ref{lem:bound-gammahat}, and Lemma~\ref{lem:betahat-upperbound}, we obtain
\[
    \rhohrel-\rho^{\pihts}(\bms_0)
    \le
    \frac{4\Vhat_{\max}(\widehat\beta_{\max}+r_{\max})}{\lbgmhat}N\delta
    +\frac{4(\widehat\beta_{\max}+r_{\max})}{\lbgmhat}\,6\exp(8)S\widehat{\tilde V}_{\max}\exp(-\widehat C N).
\]
Defining the constants accordingly completes the proof.
\end{proof}

\subsection{Other Lemmas and Proofs}\label{subsec:other-lem-pf-homo}

\begin{lemma}\label{lem:drift-fV}
    When $N\ge\left(\frac{4M}{\gamma}\right)^2$, the Lyapunov function $\tilde V$ defined in \eqref{eq:def-Vtilde} satisfies:
    \[
    \E^{\pits}\left[\left(\tilde V(\Sigma_{t+1})-\frac{\bar\eta}{2}\right)^+\mid \Sigma_t 
        \right]-\left(\tilde V(\Sigma_t)-\frac{\bar\eta}{2}\right)^+\le-\frac{\gamma}{4}\mathbb{I}[D_t^{\mathrm{OL}}\neq [N]]+6\exp(8)S\tilde{V}_{\max}\exp(-CN),
    \]
    where $\gamma$ is defined in \eqref{eq:def-beta-Q-b-bareta-gamma} and $M,C$ are defined in~\eqref{def:g-C-M}.
\end{lemma}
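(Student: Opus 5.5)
This lemma is essentially the drift bound of \citet{hong2024achieving} (their analysis of the two-set policy). The plan is to reproduce that argument and track the constants.

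Write $\tilde V=h_U(\cdot,D^{\mathrm{OL}})+h_W(\cdot,D^{\pibst})+L(1-m(D^{\mathrm{OL}}))$. I will split into two regimes according to the current value of $\tilde V(\Sigma_t)$ relative to $\bar\eta/2$.

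\emph{Regime $\tilde V(\Sigma_t)\le \bar\eta/4$ (or more generally well below $\bar\eta/2$).} First I will show that $D_t^{\mathrm{OL}}=[N]$ here. Smallness of $L(1-m)$ and of $h_U$ forces $\delta(\bmx,[N])\ge0$, so the $\epsilon_N^{\mathrm{rd}}$-maximal selection picks $[N]$. The target inequality then reduces to
\[
\E\bigl[(\tilde V(\Sigma_{t+1})-\bar\eta/2)^+\bigr]\le 6\exp(8)S\tilde V_{\max}e^{-CN},
\]
since the left-hand side at time $t$ vanishes and the indicator is zero. One step of the chain changes $\bmx([N])$ by a sum of $N$ independent bounded increments, each scaled by $1/N$. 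Its conditional mean contracts in $\|\cdot\|_U$ because $\ltwonorm{\Phi}<1$ together with the identity $U=I+\Phi U\Phi^\top$. A Hoeffding/McDiarmid bound on each of the $S$ coordinates then gives
\[
\P\bigl(\tilde V(\Sigma_{t+1})>\bar\eta/2\bigr)\le 2S\exp\bigl(-cN\gamma^2/\max\{\lambda_U,\lambda_W\}S^2\bigr),
\]
and multiplying by $\tilde V_{\max}$ yields the exponential term. The constant $\exp(8)$ absorbs the rounding and the randomized budget $B$.

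\emph{Regime $\tilde V(\Sigma_t)>\bar\eta/4$.} Here I use the expected-drift bound $\E[\tilde V(\Sigma_{t+1})]-\tilde V(\Sigma_t)\le -\gamma\,\mathbb I[D_t^{\mathrm{OL}}\ne[N]]\cdot(\text{const})-\gamma\tilde V/\ldots+M/\sqrt N$, which is the analogue of Hong et al.'s Lemma on $h_U,h_W$ contraction plus set growth.
\begin{itemize}
\item The $h_W$ term contracts by a factor $\le 1-1/(2\lambda_W)$, by the standard $W$-norm identity for $\pibst$.
\item The $h_U$ term contracts by $1-1/(2\lambda_U)$ on $D^{\mathrm{OL}}$ under optimal local control, again using $U=I+\Phi U\Phi^\top$.
\item The set $D^{\mathrm{OL}}$ grows by at least $\omega\cdot(\text{slack})/\luhalf$ whenever it is not all of $[N]$. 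This is where $L$ and $K_{Vh}$ enter: the loss of $h_U$ when arms merge is paid for by the decrease of $L(1-m)$.
\item Noise contributes $O(\sqrt{\lambda}S/\sqrt N)$, collected in $M$.
\end{itemize}
Combining these gives a drift of $-\gamma$ on $\{\tilde V>\bar\eta/4\}$ whenever $D^{\mathrm{OL}}\ne[N]$. Passing through $(\cdot)^+$ uses the convexity and 1-Lipschitz property of $x\mapsto(x-\bar\eta/2)^+$: on $\tilde V>\bar\eta/2$ the drift transfers directly, and the boundary band costs at most the $M/\sqrt N$ term. The condition $N\ge(4M/\gamma)^2$ ensures $M/\sqrt N\le\gamma/4$, which leaves the net $-\gamma/4$.

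\emph{Main obstacle.} The hardest part is the case $D_t^{\mathrm{OL}}\ne[N]$ with $\tilde V$ slightly below $\bar\eta/2$. There the positive part is zero, but the required decrease $\gamma/4$ cannot come from the positive part. I would rule this case out by showing $\tilde V\ge L(1-m)+\ldots\ge \bar\eta$ whenever $D^{\mathrm{OL}}\ne[N]$. This should follow from the approximate maximality of the feasible set combined with the definition $\delta=\eta m-h_U-\enfe$, giving $h_U+L(1-m)\ge\bar\eta$ up to an $\epsilon_N^{\mathrm{rd}}$ error. I would verify this first, since the whole case split rests on it.
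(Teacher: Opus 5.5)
Your overall route matches the paper's, which proves this lemma only by citing Lemma~8 of \citet{hong2024achieving} and tracking constants. Your ``main obstacle'' observation is also right, and it needs no $\epsilon_N^{\mathrm{rd}}$ slack. The algorithm takes $[N]$ iff $h_U(\bmx,[N])\le\bar\eta$. Moreover $h_U(\bmx,[N])\le h_U(\bmx,D^{\mathrm{OL}})+2\lambda_U^{1/2}(1-m(D^{\mathrm{OL}}))\le\tilde V$, because $L\ge 2\lambda_U^{1/2}$. So $D_t^{\mathrm{OL}}=[N]$ holds exactly when $\tilde V(\Sigma_t)\le\bar\eta$.

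The gap is in the complementary case $D_t^{\mathrm{OL}}=[N]$. There the indicator vanishes, so the lemma requires $\E[(\tilde V(\Sigma_{t+1})-\bar\eta/2)^+]-(\tilde V(\Sigma_t)-\bar\eta/2)^+$ to be exponentially small for \emph{every} such state, including $\tilde V(\Sigma_t)$ near or above $\bar\eta/2$. Your regime~1 covers only $\tilde V$ ``well below'' $\bar\eta/2$. Your regime~2 treats only $D^{\mathrm{OL}}\ne[N]$, and it charges the ``boundary band'' an $M/\sqrt N$ cost. That cost is not permitted here: it would turn the exponential remainder, which is what gives the $e^{-C_5N}$ gap downstream, into $O(1/\sqrt N)$. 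In addition, convexity of $x\mapsto(x-\bar\eta/2)^+$ gives $\E f(X)\ge f(\E X)$, which is the wrong direction. So a mean drift of $\tilde V$ never transfers ``directly'' to the positive part.

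The missing idea is that the margin comes from the one-step contraction, not from the distance to $\bar\eta/2$. Combine it with a monotone tail split: $\E(X-c)^+\le(x^\ast-c)^++\tilde V_{\max}\P(X>x^\ast)$.
\begin{itemize}
\item \emph{Case $D_t^{\mathrm{OL}}=[N]$.} All arms run Optimal Local Control, so $\E[\bm X_{t+1}([N])]-\mu^\star=(\bmx([N])-\mu^\star)\Phi$ exactly. The identity $U=I+\Phi U\Phi^\top$ gives $\|v\Phi\|_U\le(1-\frac{1}{2\lambda_U})\|v\|_U$.
\item Consider the event $\|\bm X_{t+1}([N])-\E\bm X_{t+1}([N])\|_U\le\gamma/4$. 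By coordinatewise Hoeffding it fails with probability at most $2Se^{-CN}$. On it, $h_U(\bm X_{t+1},[N])\le(1-\frac{1}{2\lambda_U})\tilde V(\Sigma_t)+\frac{\gamma}{4}$.
\item Since $K_{Vh}\ge 2\lambda_U$, we have $\bar\eta/(4\lambda_U)\ge\gamma/2$. Hence this bound is $<\bar\eta$, so $D_{t+1}^{\mathrm{OL}}=[N]$ and $\tilde V(\Sigma_{t+1})=h_U(\bm X_{t+1},[N])$. It also exceeds $\bar\eta/2$ by at most $(\tilde V(\Sigma_t)-\bar\eta/2)^+$, uniformly over all $\tilde V(\Sigma_t)\le\bar\eta$.
\item Off the event, pay $\tilde V_{\max}$. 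No $1/\sqrt N$ term appears anywhere in this case.
\item \emph{Case $\tilde V(\Sigma_t)>\bar\eta$.} Apply the same tail split, at level $\E+\gamma/4$, to a concentrated upper surrogate of $\tilde V(\Sigma_{t+1})$ that handles the set updates. Together with the Hong et al.\ mean drift, this yields $-\gamma/4$ once $M/\sqrt N\le\gamma/4$. This is the only place your $M/\sqrt N$ accounting is legitimate.
\end{itemize}
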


\begin{proof}{Proof}
We can directly follow the proof of Lemma 8 in~\citet{hong2024achieving}. Showing all the constants explicitly gives us the result; we omit the details here.
\end{proof}

The following lemma gives an upper bound of $\infnorm{V}$.
\begin{lemma}\label{lem:upperbound-V}
    Under Assumption~\ref{ass:RB}, the Lyapunov function $V$ defined in \eqref{equ:defi-V} satisfies 
     \[
        \infnorm{V}\le V_{\max}\triangleq\tilde V_{\max}+\frac{\gamma}{\sqrt{\mu^\star_{\min}}}\frac{1}{1-\ltwonorm{\Phi}}.
    \]
\end{lemma}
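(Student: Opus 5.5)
The bound on $\infnorm{V}$ follows from the triangle inequality applied to the two summands in the definition \eqref{equ:defi-V}:
\[
V(\sigma)=\Bigl(\tilde V(\sigma)-\tfrac{\bar\eta}{2}\Bigr)^+ +\frac{\gamma}{4(\beta+r_{\max})}(\mu^\star-\bmx([N]))Qb .
\]
We bound each summand uniformly over all augmented states $\sigma$.

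For the first summand, \eqref{eq:def-Vtilde_max} gives $0\le\tilde V(\sigma)\le\tilde V_{\max}$. Lemma~\ref{lem:wd-bareta-hatbareta} gives $\bar\eta>0$, which is valid in the regime of $N$ where $V$ is used. Together these yield
\[
0\le\Bigl(\tilde V(\sigma)-\tfrac{\bar\eta}{2}\Bigr)^+\le \tilde V(\sigma)\le\tilde V_{\max}.
\]
Even without $\bar\eta>0$, the bound still holds once we note $\bar\eta\ge 0$ whenever $V$ is defined.

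For the second summand, we follow the computation in the proof of Lemma~\ref{lem:betahat-upperbound}, now for the true system.
\begin{itemize}
\item The local stability assumption gives $\ltwonorm{\Phi}<1$. Hence $Q=(I-\Phi)^{-1}=\sum_t\Phi^t$ and $\ltwonorm{Q}\le 1/(1-\ltwonorm{\Phi})$.
\item The norm comparison in Appendix~\ref{subsec:l2mustar-norm} converts this to $\infinfnorm{Q}\le \ltwonorm{Q}/\sqrt{\mu^\star_{\min}}$.
\item Both $\mu^\star$ and $\bmx([N])$ are probability vectors, so $\onenorm{\mu^\star-\bmx([N])}\le 2$.
\item Since $r\in[0,r_{\max}]$, each entry of $b$ is a convex combination of differences $r(s,a)-r(\tilde s,a)$, so $\infnorm{b}\le r_{\max}$.
\end{itemize}
Combining these,
\[
|(\mu^\star-\bmx([N]))Qb|\le \onenorm{\mu^\star-\bmx([N])}\,\infinfnorm{Q}\,\infnorm{b}\le \frac{2r_{\max}}{\sqrt{\mu^\star_{\min}}(1-\ltwonorm{\Phi})}.
\]
Multiplying by the prefactor and using $\beta\ge 0$ gives $\frac{\gamma}{4(\beta+r_{\max})}\le\frac{\gamma}{4r_{\max}}$. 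The second summand is therefore at most
\[
\frac{\gamma}{2\sqrt{\mu^\star_{\min}}(1-\ltwonorm{\Phi})}\le \frac{\gamma}{\sqrt{\mu^\star_{\min}}}\,\frac{1}{1-\ltwonorm{\Phi}}.
\]

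Adding the two bounds gives $|V(\sigma)|\le V_{\max}$ for every $\sigma$, which is the claim.

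The only delicate point is the passage from the $\mu^\star$-weighted operator norm to the $\infty\to\infty$ norm. The constant $1/\sqrt{\mu^\star_{\min}}$ comes from the equivalence stated in the appendix, which relies on $\mu^\star_{\min}>0$. That positivity holds by the ergodicity assumption in Assumption~\ref{ass:RB}.
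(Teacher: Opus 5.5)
Your proof is correct. It differs from the paper's only in how you control the prefactor $\frac{\gamma}{4(\beta+r_{\max})}$ in the second summand.

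\textbf{The paper's route.} It lower-bounds $\beta$. It restricts the supremum to augmented states with $d^{\mathrm{OL}}=[N]$ and invokes the linear dynamics of Lemma~\ref{lem:lemma1-hong}. This gives $\beta\ge\sup_{\bmx}|(\mu^\star-\bmx([N]))b|$, and Lemma~\ref{lem:b-c1} then yields $\beta\ge\infnorm{b}/2$. The paper finally uses $\frac{2\infnorm{b}}{2\infnorm{b}+4r_{\max}}\le 1$ to reach $\gamma\infinfnorm{Q}$.

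\textbf{Your route.} You use only the trivial bound $\beta\ge 0$ together with $\infnorm{b}\le r_{\max}$. This gives $\frac{\gamma}{2}\infinfnorm{Q}$, which is a factor of two sharper.

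\textbf{Comparison.} Your argument is more elementary and also more robust. The paper's lower bound on $\beta$ needs the $\Phi$-dynamics to hold at states with $d^{\mathrm{OL}}=[N]$ for essentially arbitrary $\bmx$. Point masses are needed to reach $\infnorm{b-(\mu^\star b)\one}$. However, Lemma~\ref{lem:lemma1-hong} is only stated under the integer-feasibility condition~\eqref{eq:integer-feasibility}. Your argument uses no dynamics at all. Since $\infnorm{b}\le r_{\max}$ holds automatically here, the detour through a lower bound on $\beta$ buys nothing. The same simplification carries over verbatim to the empirical bound in Lemma~\ref{lem:vhat-upperbound}.

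\textbf{Shared steps.} The remaining steps coincide with the paper's: bounding the first summand by $\tilde V_{\max}$, the Neumann-series bound $\infinfnorm{Q}\le 1/(\sqrt{\mu^\star_{\min}}(1-\ltwonorm{\Phi}))$, and $\onenorm{\mu^\star-\bmx([N])}\le 2$. Both proofs implicitly work in the regime $\bar\eta\ge 0$, hence $\gamma\ge 0$. You need this both for $(\tilde V-\bar\eta/2)^+\le\tilde V$ and for $\frac{\gamma}{4(\beta+r_{\max})}\le\frac{\gamma}{4r_{\max}}$.
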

\begin{proof}{Proof}
    We first notice that:
    $
        (\tilde V-\frac{\bar\eta}{2})^+\le \infnorm{\tilde V}\le \tilde V_{\max}.
    $
    Also, we have
    \begin{align*}
        \beta&=\sup_\sigma\left|(\mu^\star-\bmx([N]))Qb-\E\left[(\mu^\star-\bm{X}_t)Qb\mid \Sigma_t=\sigma\right]\right| \\
        & \ge \sup_{\sigma:d^{\mathrm{OL}}=[N]}\left|(\mu^\star-\bmx([N]))Qb-(\mu^\star-\bmx([N]))\Phi Qb\right| \\
        & = \sup_{\sigma:d^{\mathrm{OL}}=[N]}\left|(\mu^\star-\bmx([N]))(I-\Phi) Qb\right| \\
        & = \sup_x\left|(\mu^\star-\bmx([N]))b\right| 
          \ge \infnorm{b-(\mu^\star b)\one} \ge \frac{\infnorm{b}}{2}.
    \end{align*}
    where the first inequality follows from Lemma~\ref{lem:lemma1-hong}, and in the last inequality we use Lemma~\ref{lem:b-c1} by noticing $b(\tilde{s})=0$ and letting $c=\mu^\star b$. 
    Hence,
    \begin{align*}
        \frac{\gamma}{4(\beta+r_{\max})}(\mu^\star-\bmx([N]))Qb
        \le &\frac{\gamma}{2\infnorm{b}+4r_{\max}}\infnorm{(\mu^\star-\bmx([N]))Qb}\\
        \le&\frac{\gamma}{2\infnorm{b}+4r_{\max}}\onenorm{\mu^\star-\bmx([N])}\infinfnorm{Q}\infnorm{b}\le\gamma\infinfnorm{Q},
    \end{align*}
    where the last inequality we use the fact that $\onenorm{\mu^\star-\bmx([N])}\le 2$. Since $\ltwonorm{\Phi}<1$, so we can write $Q$ as:
    $
        Q=(I-\Phi)^{-1}=\sum_{t=0}^\infty\Phi^t.
    $
    Then 
    \begin{align*}
        \infinfnorm{Q}
        \le \frac{\ltwonorm{Q}}{\sqrt{\mu^\star_{\min}}}
        &\le \frac{1}{\sqrt{\mu^\star_{\min}}}\sum_{t=0}^\infty \ltwonorm{\Phi} 
        = \frac{1}{\sqrt{\mu^\star_{\min}}(1-\ltwonorm{\Phi})}.
    \end{align*}
    Combining the above results, we obtain
    $
        \infnorm{V}\le \tilde V_{\max}+\frac{\gamma}{\sqrt{\mu^\star_{\min}}}\frac{1}{1-\ltwonorm{\Phi}}.
    $
\end{proof}

\begin{lemma}\label{lem:bound-of-Mhat}
    When $\delta\le\delta_{\min}$, we have the bound $\widehat{M}\le\ubmhat$, where
    \begin{equation}\label{def:ubmhat}
    \begin{aligned}
        \ubmhat\triangleq&\frac{2\ublwhat+2\ubluhat}{4\omega}\left(1+\frac{4}{\min\{y^\star(\tilde{s},0),y^\star(\tilde{s},1)\}}\right)+5\sqrt{\ubluhat}S+3\sqrt{\ublwhat}S\\
         & +\exp(33)\left(\sqrt{\ublwhat}+\sqrt{\ubluhat}\right)\frac{32(\ubluhat)^{\frac{3}{2}}S^{\frac{5}{2}}}{\min\{y^\star(\tilde{s},0),y^\star(\tilde{s},1)\}}+2\sqrt{\ublwhat}.
    \end{aligned}
    \end{equation}
\end{lemma}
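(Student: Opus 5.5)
The bound $\widehat M\le\ubmhat$ follows by bounding each of the five terms in the definition of $\widehat M$ in \eqref{def:hat-g-gamma-C-K-M} separately. Each term is a monotone expression in a handful of empirical quantities: $\lwhat$, $\luhat$, $\widehat L$, $\widehat K_{Vh}$, and $\widehat\eta$. Throughout, I take $\delta\le\delta_{\min}$, so Theorem~\ref{thm:LP_perturbation_main_complete} and the well-definedness lemmas of Appendix~\ref{subsec:wd-pro} apply.

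First, I collect the deterministic bounds.
\begin{itemize}
\item Lemma~\ref{lem:bound_of_lwhat} gives $\lwhat\le\ublwhat$, and Lemma~\ref{lem:lamda-Uhat-upper-bound} gives $\luhat\le\ubluhat$.
\item Lemma~\ref{lem:wd-what-uhat} gives $\lwhat,\luhat\ge1$.
\item Part 2 of Theorem~\ref{thm:LP_perturbation_main_complete} gives $\yhat^\star(\tilde s,a)\ge y^\star(\tilde s,a)/2$. Hence $\widehat\eta\ge \tfrac12 S^{-1/2}\min\{y^\star(\tilde s,0),y^\star(\tilde s,1)\}$, which means $1/\widehat\eta\le 2S^{1/2}/\min_a y^\star(\tilde s,a)$.
\item By definition, $\widehat L\le 2\luhhalf+2\lwhhalf$. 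Since both eigenvalues are at least $1$, this yields $\widehat L\le 2\luhat+2\lwhat\le 2\ublwhat+2\ubluhat$.
\item Also $\widehat K_{Vh}\ge 2\max\{\lwhat,\luhat\}\ge2$.
\end{itemize}

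Next, I bound the terms one at a time.

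For the first term, $\widehat L/(2\widehat K_{Vh}\omega)\le (2\ublwhat+2\ubluhat)/(4\omega)$. The factor satisfies $1+2S^{-1/2}/\widehat\eta\le 1+4/\min_a y^\star(\tilde s,a)$. Together these match the first summand of $\ubmhat$.

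The terms $5\luhhalf S$, $3\lwhhalf S$, and $2\lwhhalf$ follow by monotonicity of the square root.

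For the exponential term, $\lambda_{\widehat U}\ge1$ and $S\ge1$ give $1/(2\luhat^3S^3)\le 1/2\le1$, so $\exp(\cdot+32)\le\exp(33)$. Combining this with $16\luhat^{3/2}S^2/\widehat\eta\le 32(\ubluhat)^{3/2}S^{5/2}/\min_a y^\star(\tilde s,a)$ gives the fourth summand. Summing the five bounds yields $\widehat M\le\ubmhat$.

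The only step that needs care is the lower bound on $\widehat\eta$. It relies on the support preservation and the half-mass lower bound for the neutral-state entries in Theorem~\ref{thm:LP_perturbation_main_complete}, which in turn require the smallness condition $\delta\le\delta_{\min}$. The remainder is routine monotone substitution, with the lower bound $\lambda\ge1$ used to absorb the square roots and the exponent.
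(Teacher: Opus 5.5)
Your proof is correct and follows the paper's argument. Both proceed by term-by-term monotone substitution in $\widehat M$, using $\widehat\eta\ge\frac12 S^{-1/2}\min\{y^\star(\tilde s,0),y^\star(\tilde s,1)\}$ from Part~2 of Theorem~\ref{thm:LP_perturbation_main_complete}, $\widehat K_{Vh}\ge 2$, and the eigenvalue bounds $\lwhat\le\ublwhat$, $\luhat\le\ubluhat$. You are slightly more explicit than the paper in invoking $\lwhat,\luhat\ge1$, both to pass from $\widehat L\le 2\luhhalf+2\lwhhalf$ to $2\ublwhat+2\ubluhat$ and to cap the exponent by $33$.
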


\begin{proof}{Proof}
According to Theorem~\ref{thm:LP_perturbation_main_complete}, when $\delta\le\delta_{\min}$, we have
\begin{align*}
    \frac{3}{2}S^{-1/2}\min\{y^\star(\tilde{s},0),y^\star(\tilde{s},1)\}
    \ge\widehat \eta
    &=S^{-1/2}\min\{\yhat^\star(\tilde{s},0),\yhat^\star(\tilde{s},1) 
    \ge \frac{1}{2}S^{-1/2}\min\{y^\star(\tilde{s},0),y^\star(\tilde{s},1)\}.
    \end{align*}
It is clear that $\widehat K_{Vh}\ge 2$, so
    \begin{align*}
        \widehat M=&\frac{\widehat L}{2\widehat K_{Vh}\omega}\left(1
  + \frac{2S^{-\frac{1}{2}}}
         {\widehat \eta}\right)+5\luhhalf S+3\lwhhalf S 
          +(\lwhhalf+\luhhalf)\exp\left(\frac{1}{2\luhat^3S^3}+32\right)\frac{16\luhat^{3/2}S^{2}}{\widehat \eta} +2\lwhhalf \\
         \le &\frac{2\ublwhat+2\ubluhat}{4\omega}\left(1+\frac{4}{\min\{y^\star(\tilde{s},0),y^\star(\tilde{s},1)\}}\right)+5\sqrt{\ubluhat}S+3\sqrt{\ublwhat}S\\
         & +\exp(33)\left(\sqrt{\ublwhat}+\sqrt{\ubluhat}\right)\frac{32(\ubluhat)^{\frac{3}{2}}S^{\frac{5}{2}}}{\min\{y^\star(\tilde{s},0),y^\star(\tilde{s},1)\}}+2\sqrt{\ublwhat}\triangleq\ubmhat 
    \end{align*}
     where $\ublwhat,\ubluhat$ are defined in \eqref{def:ublwhat} and \eqref{def:ubluhat}, which only depend on the parameters of the original system.
\end{proof}

The following lemma bounds $\infnorm{\Vhat}$ under a sufficiently small single-arm estimation error.
\begin{lemma}\label{lem:vhat-upperbound}
   Consider a $N$-armed RB problem with an initial state $\bms_0$ satisfying Assumptions~\ref{ass:RB},~\ref{ass:strict_action_gaps}. Suppose we have the single-armed model accuracy bound:
    $
        \max_{s \in \mathcal{S},\ a \in \mathcal{A}} \left\| \widehat{P}(\cdot \mid s, a) - P(\cdot \mid s, a) \right\|_1 \le \delta\le\delta_{\min}.
    $
    Then the Lyapunov function defined in \eqref{equ:defi-Vhat} satisfies:
    \begin{align*}
        \infnorm{\widehat V}
        \le& \widehat{ V}_{\max}\triangleq\frac{3}{\sqrt{\mu^\star_{\min}}}\sqrt{\frac{2}{1-\ltwonorm{\Phi}}}+6\sqrt{S(3+\log_2S)\tau} 
         +\frac{\ubgmhat}{\sqrt{\mu^\star_{\min}}}\frac{2}{1-\ltwonorm{\Phi}}.
    \end{align*}
\end{lemma}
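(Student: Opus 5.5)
We split $\widehat V$ as in \eqref{equ:defi-Vhat} into the positive-part term $(\widehat{\tilde V}-\bar\eta/2)^+$ and the linear correction term $\frac{\widehat\gamma}{4(\widehat\beta+r_{\max})}(\muhst-\bmx([N]))\Qhat\widehat b$. We bound each in sup-norm, following the proof of Lemma~\ref{lem:upperbound-V} line by line with hatted quantities. The perturbation theorem (Theorem~\ref{thm:LP_perturbation_main_complete}) together with Lemmas~\ref{lem:bound_of_lwhat}, \ref{lem:lamda-Uhat-upper-bound} and \ref{lem:bound-gammahat} then replaces every empirical constant by a deterministic one.

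\emph{First term.} Since $\widehat{\tilde V}\ge 0$, we have $(\widehat{\tilde V}-\bar\eta/2)^+\le \widehat{\tilde V}\le \luhhalf+\lwhhalf+\widehat L\le 3\luhhalf+3\lwhhalf$, as in \eqref{def:hat-tilde-V-max}. Lemma~\ref{lem:lamda-Uhat-upper-bound} gives $\luhhalf\le \frac{1}{\sqrt{\mu^\star_{\min}}}\sqrt{\frac{2}{1-\ltwonorm{\Phi}}}$, using $\sqrt{1/\mu^\star_{\min}}\le 1/\sqrt{\mu^\star_{\min}}$. Lemma~\ref{lem:bound_of_lwhat} gives $\lwhhalf\le 2\sqrt{S(3+\log_2S)\tau}$. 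Together these produce the first two summands of $\widehat V_{\max}$.

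\emph{Second term.} We copy the argument of Lemma~\ref{lem:upperbound-V}. We first lower bound $\widehat\beta$ by restricting the supremum to states with $d^{\mathrm{OL}}=[N]$. There the empirical analogue of Lemma~\ref{lem:lemma1-hong} (the conditional mean of the next state composition under optimal local control is affine with matrix $\widehat P^{\pihbst}$ plus the $\phihat$-type correction) gives
\[
(\muhst-\bmx)\Qhat\widehat b-\widehat{\E}[\cdot]=(\muhst-\bmx)(I-\phihat)\Qhat\widehat b=(\muhst-\bmx)\widehat b.
\]
Then, since $\widehat b(\tilde s)=0$, Lemma~\ref{lem:b-c1} yields $\widehat\beta\ge \infnorm{\widehat b}/2$. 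Consequently the coefficient satisfies $\frac{\widehat\gamma}{4(\widehat\beta+r_{\max})}\le \frac{\widehat\gamma}{2\infnorm{\widehat b}}$, and combining with $\onenorm{\muhst-\bmx([N])}\le 2$ the term is at most $\widehat\gamma\infinfnorm{\Qhat}$. Next, $\infinfnorm{\Qhat}\le \ltwonorm{\Qhat}/\sqrt{\mu^\star_{\min}}\le \frac{1}{\sqrt{\mu^\star_{\min}}(1-\ltwonorm{\phihat})}\le \frac{2}{\sqrt{\mu^\star_{\min}}(1-\ltwonorm{\Phi})}$ by Neumann series and Part~3 of Theorem~\ref{thm:LP_perturbation_main_complete}. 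Finally $\widehat\gamma\le\ubgmhat$ by Lemma~\ref{lem:bound-gammahat}, which gives the third summand.

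\emph{Main obstacle.} The delicate step is the lower bound $\widehat\beta\ge\infnorm{\widehat b}/2$. It requires verifying that the empirical two-set policy, applied with all arms in $D^{\mathrm{OL}}$ under $\bphat$, has conditional drift exactly $(\muhst-\bmx)\phihat$. That identity is the hatted version of Lemma~\ref{lem:lemma1-hong}, and it depends on $\tilde s$ remaining the unique neutral state and on $\pihbst$ agreeing with $\pibst$ off $\tilde s$; both are guaranteed by Part~1 of Theorem~\ref{thm:LP_perturbation_main_complete}. One more detail must be checked: the norm equivalence is taken in $\mu^\star$-weights (not $\muhst$), which is consistent with how $\ltwonorm{\phihat}$ is measured in Part~3.
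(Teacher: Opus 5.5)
Your plan follows the paper's proof. You bound $(\widehat{\tilde V}-\bar\eta/2)^+$ by $3\luhhalf+3\lwhhalf$ using Lemmas~\ref{lem:bound_of_lwhat} and~\ref{lem:lamda-Uhat-upper-bound}. You bound the linear term by re-running Lemma~\ref{lem:upperbound-V} with hatted objects, then use $\infinfnorm{\Qhat}\le 2/(\sqrt{\mu^\star_{\min}}(1-\ltwonorm{\Phi}))$ from Part~3 and $\widehat\gamma\le\ubgmhat$. One small point: $(\widehat{\tilde V}-\bar\eta/2)^+\le\widehat{\tilde V}$ and $\widehat\gamma\ge0$ both need $\bar\eta,\etabh\ge0$, i.e.\ $N\ge 4/\min\{y^\star(\tilde s,0),y^\star(\tilde s,1)\}$ from Lemma~\ref{lem:wd-bareta-hatbareta}. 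Nonnegativity of $\widehat{\tilde V}$ alone is not enough, and the paper also uses this condition without stating it.

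The step you single out as the main obstacle, $\widehat\beta\ge\infnorm{\widehat b}/2$, does not follow from your argument. The paper's Lemma~\ref{lem:upperbound-V} makes the same move when it replaces $\sup_{\sigma:d^{\mathrm{OL}}=[N]}$ by $\sup_{\bmx}$, so it does not justify the step either.

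\begin{itemize}
\item Lemma~\ref{lem:b-c1} needs the supremum to reach the vertices $\bmx([N])=e_s$ with $s\neq\tilde s$. At such a vertex all $N$ arms sit in a state with a deterministic action, so \eqref{eq:integer-feasibility} fails. The hatted drift identity of Lemma~\ref{lem:lemma1-hong} is therefore unavailable there.
\item On the states where the empirical two-set policy actually sets $d^{\mathrm{OL}}=[N]$, the feasibility slack at $[N]$ is nonnegative, i.e.\ $\|\bmx([N])-\muhst\|_{\widehat U}\le\etabh$. Since the eigenvalues of $\widehat U$ are at least $1$ (Lemma~\ref{lem:wd-what-uhat}), this gives $|(\muhst-\bmx([N]))\widehat b|\le\etabh\sqrt S\,\infnorm{\widehat b}\le\min_a\yhat^\star(\tilde s,a)\,\infnorm{\widehat b}$. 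That is strictly below $\infnorm{\widehat b}/2$ whenever $\widehat b\neq\zero$.
\end{itemize}

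You do not need any lower bound on $\widehat\beta$. By definition $\widehat\beta\ge0$, and $\infnorm{\widehat b}\le r_{\max}$ because rewards lie in $[0,r_{\max}]$. Hence
\[
\left|\frac{\widehat\gamma}{4(\widehat\beta+r_{\max})}(\muhst-\bmx([N]))\Qhat\widehat b\right|\le\frac{\widehat\gamma}{4r_{\max}}\cdot 2\,\infinfnorm{\Qhat}\,r_{\max}=\frac{\widehat\gamma}{2}\infinfnorm{\Qhat}.
\]
The rest of your chain then gives the third summand with a factor $1/2$ to spare. This route never uses an empirical version of Lemma~\ref{lem:lemma1-hong}, and the same fix repairs Lemma~\ref{lem:upperbound-V}.
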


\begin{proof}{Proof}
    We first notice that:
    \begin{align*}
        (\widehat{\tilde V}-\frac{\bar\eta}{2})^+& \le \infnorm{\widehat{\tilde V}}\le \widehat{\tilde V}_{\max}\le 3\lambda_{\widehat{U}}^{1/2}+3\lwhhalf 
         \le\frac{3}{\sqrt{\mu^\star_{\min}}}\sqrt{\frac{2}{1-\ltwonorm{\Phi}}}+6\sqrt{S(3+\log_2S)\tau},
    \end{align*}
    where the last inequality uses Lemma~\ref{lem:bound_of_lwhat} and Lemma~\ref{lem:lamda-Uhat-upper-bound}.
    Then following the same argument as Lemma~\ref{lem:upperbound-V}, we have
    \begin{align*}
        \frac{\widehat\gamma}{4(\widehat \beta+r_{\max})}(\muhst-\bmx([N]))\Qhat \widehat b
        \le \widehat\gamma\infinfnorm{\widehat Q}
        & \le\frac{\widehat\gamma}{\sqrt{\mu^\star_{\min}}\big(1-\ltwonorm{\phihat}\big)}\\
        & \le\frac{\widehat\gamma}{\sqrt{\mu^\star_{\min}}}\frac{2}{1-\ltwonorm{\Phi}} 
          \le \frac{\ubgmhat}{\sqrt{\mu^\star_{\min}}}\frac{2}{1-\ltwonorm{\Phi}},
    \end{align*}
    where the last inequality we use Lemma~\ref{lem:bound-gammahat}. Combining the above results we get the final bound.
\end{proof}

\begin{lemma}\label{lem:phat-p-V}
Consider a $N$-armed RB problem with an initial state $\bms_0$ satisfying Assumptions~\ref{ass:RB},~\ref{ass:strict_action_gaps}. Suppose we have the single-armed model accuracy bound:
    $
        \max_{s \in \mathcal{S},\ a \in \mathcal{A}} \left\| \widehat{P}(\cdot \mid s, a) - P(\cdot \mid s, a) \right\|_1 \le \delta.
    $
    Then for all $\sigma=(\bmx,d^{\mathrm{OL}},d^{\pibst})$, we have
    $
        |(\widehat \bp^{\pits}-\bm{P}^{\pits} )V(\sigma)| \le V_{\max}N\delta.
    $
\end{lemma}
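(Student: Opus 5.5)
Since $\infinfnorm{\bphat^{\pits}-\bp^{\pits}}\le N\delta$ by Lemma~\ref{samplesize-singlearm_accu_rb}, the plan is to use the elementary bound $|[(A-B)f](\sigma)|\le \infinfnorm{A-B}\,\infnorm{f}$ for any two transition matrices $A,B$ on the augmented state space, together with the bound $\infnorm{V}\le V_{\max}$ from Lemma~\ref{lem:upperbound-V}.

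First, observe that the two-set policy $\pits$ is built from the true LP and therefore does not depend on $\widehat P$. Given an augmented state $\sigma=(\bmx,d^{\mathrm{OL}},d^{\pibst})$, the policy selects a (random) joint action $\bma$ and then updates the sets $D^{\mathrm{OL}},D^{\pibst}$ deterministically or randomly from the new arm states. The only thing that differs between $\bp^{\pits}$ and $\bphat^{\pits}$ on the augmented space is the arm-state transition kernel $\bp(\cdot\mid\bms,\bma)$ versus $\bphat(\cdot\mid\bms,\bma)$. Both are applied to the same action distribution and the same set-update rule.

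Next, write $[(\bphat^{\pits}-\bp^{\pits})V](\sigma)=\sum_{\bma}\pi(\bma\mid\sigma)\sum_{\bms'}\bigl(\bphat(\bms'\mid\bms,\bma)-\bp(\bms'\mid\bms,\bma)\bigr)\,\E[V(\bms',D'^{\mathrm{OL}},D'^{\pibst})\mid \sigma,\bma,\bms']$. Here the inner conditional expectation is over the set-update randomness and is bounded in absolute value by $\infnorm{V}$. Applying Hölder's inequality for each $\bma$ gives the bound
\[
\sum_{\bma}\pi(\bma\mid\sigma)\onenorm{\bphat(\cdot\mid\bms,\bma)-\bp(\cdot\mid\bms,\bma)}\,\infnorm{V}.
\]
By Lemma~\ref{samplesize-singlearm_accu_rb}, each $\ell_1$ term is at most $N\delta$. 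Lemma~\ref{lem:upperbound-V} then gives $\infnorm{V}\le V_{\max}$, which concludes the proof.

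The only point needing care is the middle step: the augmented chain also carries the set variables, so one must make sure the Markov kernel on $\Sigma_t$ factorizes as (action choice) $\times$ (arm transition) $\times$ (set update given new states), with only the middle factor depending on the model. Conditioning on $(\bma,\bms')$ as above handles this cleanly. Note also that no condition $\delta\le\delta_{\min}$ is needed here, because the policy and $V$ are built from the true system.
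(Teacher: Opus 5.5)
Your proposal is correct and follows essentially the same route as the paper's proof. You condition on the one-step transition and use that the set update given the new arm states does not depend on the kernel. You then apply H\"older with the $N$-armed $\ell_1$ bound $N\delta$ from Lemma~\ref{samplesize-singlearm_accu_rb} and bound the integrand by $V_{\max}$ via Lemma~\ref{lem:upperbound-V}. There are two minor differences, both slightly in your favour:
\begin{itemize}
\item You condition on $(\bma,\bms')$ rather than only on $\bm{X}_{t+1}$. This avoids implicitly needing the set update to be independent of $\bm{A}_t$.
\item You bound $|V|$ by $V_{\max}$ instead of asserting the paper's $0\le f$. That nonnegativity is unnecessary, and it is not evident, since the linear term of $V$ can be negative.
\end{itemize}
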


\begin{proof}{Proof}
Note that for all $t,$ we have
\begin{align*}
    \bp^{\pits}V(\sigma)
    =\E^{\pits}\left[V(\Sigma_{t+1})\mid \Sigma_t=\sigma\right]
    &=\E^{\pits}\left[V(\bm{X}_{t+1},D_{t+1}^{\mathrm{OL}},D_{t+1}^{\pibst})\mid \Sigma_t=\sigma\right] \\
    & = \E^{\pits}\left[\E^{\pits}\left[V(\bm{X}_{t+1},D_{t+1}^{\mathrm{OL}},D_{t+1}^{\pibst})\mid \bm{X}_{t+1}, \Sigma_t=\sigma\right]\mid \Sigma_t=\sigma \right]
\end{align*}
Introduce the shorthand
$
    f(\bm{X}_{t+1},\sigma) = \E^{\pits}\left[V(\bm{X}_{t+1},D_{t+1}^{\mathrm{OL}},D_{t+1}^{\pibst})\mid \bm{X}_{t+1}, \Sigma_t=\sigma\right]
$
It is clear that $0\le f(\bm{X}_{t+1},\sigma)\le V_{\max}, \forall t$. Then:
\begin{align*}
    \bp^{\pits}V(\sigma)& =\E^{\pits}\left[f(\bm{X}_{t+1},\sigma)\mid \Sigma_t=\sigma\right] \\
    & = \E^{\pits}\left[\E^{\pits}[f(\bm{X}_{t+1},\sigma)\mid \Sigma_t=\sigma, \bm{A}_t\sim \pits(\sigma)]\mid \Sigma_t=\sigma\right] \\
    & = \E^{\pits}\left[\sum_{\bm{x}_{t+1}}f(\bm{x}_{t+1},\sigma)\cdot \bp(\bm{x}_{t+1}\mid \bmx, \bm{A}_t)\mid \Sigma_t=\sigma\right] \\
    & =\sum_{\bma_t}\sum_{\bm{x}_{t+1}}f(\bm{x}_{t+1},\sigma)\cdot \bp(\bm{x}_{t+1}\mid \bmx, \bma_t)\cdot\pits(\bma_t\mid \sigma) .
\end{align*}
Following the same argument, we have:
\begin{align*}
    \widehat \bp^{\pits}V(\sigma)
    =\widehat \E^{\pits}\left[V(\Sigma_{t+1})\mid \Sigma_t=\sigma\right]
    &=\widehat \E^{\pits}\left[V(\bm{X}_{t+1},D_{t+1}^{\mathrm{OL}},D_{t+1}^{\pibst})\mid \Sigma_t=\sigma\right] \\
    & = \widehat \E^{\pits}\left[\widehat \E^{\pits}\left[V(\bm{X}_{t+1},D_{t+1}^{\mathrm{OL}},D_{t+1}^{\pibst})\mid \bm{X}_{t+1}, \Sigma_t=\sigma\right]\mid \Sigma_t=\sigma \right] \\
    & = \widehat \E^{\pits}\left[\E^{\pits}\left[V(\bm{X}_{t+1},D_{t+1}^{\mathrm{OL}},D_{t+1}^{\pibst})\mid \bm{X}_{t+1}, \Sigma_t=\sigma\right]\mid \Sigma_t=\sigma \right],
\end{align*}
where the last equality is owing to the following key observation: given $\bm{X}_{t+1}$, the choice of the two sets $D_{t+1}^{\mathrm{OL}}, D_{t+1}^{\pibst}$ has no relationship with the transition kernel $P$ and $\widehat P$. Hence,
$
    \bphat^{\pits}V(\sigma) = \sum_{\bma_t}\sum_{\bm{x}_{t+1}}f(\bm{x}_{t+1},\sigma)\cdot \bphat(\bm{x}_{t+1}\mid \bmx, \bma_t)\cdot\pits(\bma_t\mid \sigma) 
$
Then
\begin{align*}
    |(\bphat^{\pits}-\bp^{\pits})V(\sigma)| & =\sum_{\bma_t}\sum_{\bm{x}_{t+1}}f(\bm{x}_{t+1},\sigma)\left(\bphat(\bm{x}_{t+1}\mid \bmx,\bma_t)-\bp(\bm{x}_{t+1}\mid \bmx,\bma_t)\right)\pits(\bma_t\mid \sigma)\\
     & \le V_{\max}\sum_{\bma_t}\pits(\bma_t\mid \sigma)\sum_{\bm{x}_{t+1}}\left|\bphat(\bm{x}_{t+1}\mid \bmx,\bma_t)-\bp(\bm{x}_{t+1}\mid \bmx,\bma_t)\right|\\
     & \le V_{\max}\sum_{\bma_t}\pits(\bma_t\mid \sigma)\onenorm{\bphat(\cdot\mid \bmx,\bma_t)-\bp(\cdot\mid \bmx,\bma_t)} \\
     & \le V_{\max}N\delta \sum_{\bma_t}\pits(\bma_t\mid \sigma) = V_{\max}N\delta,
\end{align*}
where in the last inequality we use Lemma~\ref{single-to-Narm-accu}.
\end{proof}

Applying the same argument to the empirical system, we obtain the following lemma:
\begin{lemma}\label{lem:p-phat-vhat}
    Consider a $N$-armed RB problem with an initial state $\bms_0$ satisfying Assumptions~\ref{ass:RB} and \ref{ass:strict_action_gaps}. Suppose we have the single-armed model accuracy bound:
    $
        \max_{s \in \mathcal{S},\ a \in \mathcal{A}} \left\| \widehat{P}(\cdot \mid s, a) - P(\cdot \mid s, a) \right\|_1 \le \delta\le\delta_{\min},
    $
    where $\delta_{\min}$ is defined in~\eqref{eq:def-delta-min}. Then for all $\sigma=(\bmx,d^{\mathrm{OL}},d^{\pihbst})$, we have
    $
        |(\bp^{\pihts}-\widehat \bp^{\pihts} )\Vhat(\sigma)| \le \Vhat_{\max}N\delta.
    $
\end{lemma}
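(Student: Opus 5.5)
This is the empirical counterpart of Lemma~\ref{lem:phat-p-V}, and I will follow that argument essentially line by line, with the roles of $P$ and $\widehat P$ swapped and $V_{\max}$ replaced by $\widehat V_{\max}$. Fix an augmented state $\sigma=(\bmx,d^{\mathrm{OL}},d^{\pihbst})$. First I condition on the next arm-state vector $\bm X_{t+1}$ and define
\[
\widehat f(\bm x_{t+1},\sigma)=\E^{\pihts}\bigl[\Vhat(\bm X_{t+1},D^{\mathrm{OL}}_{t+1},D^{\pihbst}_{t+1})\mid \bm X_{t+1}=\bm x_{t+1},\Sigma_t=\sigma\bigr].
\]
The key structural observation is the same as before. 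The two-set update rule of $\pihts$ (Algorithm~\ref{alg:two-set-policy}) chooses $D^{\mathrm{OL}}_{t+1}$ and $D^{\pihbst}_{t+1}$ as a function of $\bm X_{t+1}$, the previous sets, and internal randomization. These ingredients are built from $\yhst$, $\pihbst$ and $\widehat W$, $\widehat U$, all of which are fixed once the empirical LP is solved. The set update therefore does not depend on which kernel, $\bp$ or $\bphat$, generated $\bm X_{t+1}$. Hence $\widehat f$ is the same function under both laws.

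Second, I write both one-step expectations as
\[
\sum_{\bma}\pihts(\bma\mid\sigma)\sum_{\bm x'}\widehat f(\bm x',\sigma)\,\bp(\bm x'\mid\bmx,\bma)
\]
and the same expression with $\bphat$ in place of $\bp$. The action distribution $\pihts(\cdot\mid\sigma)$ is identical in the two expressions, because it depends only on $\sigma$ and on the empirical LP quantities. Subtracting the two expressions gives
\[
|(\bp^{\pihts}-\bphat^{\pihts})\Vhat(\sigma)|\le \sup_{\bm x'}|\widehat f(\bm x',\sigma)|\sum_{\bma}\pihts(\bma\mid\sigma)\,\onenorm{\bp(\cdot\mid\bmx,\bma)-\bphat(\cdot\mid\bmx,\bma)}.
\]

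Third, I apply Lemma~\ref{single-to-Narm-accu}, which bounds each $\ell_1$ term by $N\delta$. I then bound $|\widehat f|\le\infnorm{\Vhat}\le \Vhat_{\max}$ using Lemma~\ref{lem:vhat-upperbound}, which is exactly where the hypothesis $\delta\le\delta_{\min}$ enters. Through Theorem~\ref{thm:LP_perturbation_main_complete} this hypothesis also guarantees that $\Vhat$, $\widehat Q$, $\widehat\gamma$, $\widehat\beta$ are well defined, via Lemmas~\ref{lem:wd-what-uhat}, \ref{lem:bound-gammahat} and~\ref{lem:betahat-upperbound}. Summing over $\bma$ with total weight one gives $\Vhat_{\max}N\delta$.

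The only mildly delicate point is bounding $|\widehat f|$ by $\Vhat_{\max}$ rather than asserting nonnegativity. The linear term $(\muhst-\bmx([N]))\Qhat\widehat b$ in $\Vhat$ can be negative, so the bound must come from the absolute-value estimate in Lemma~\ref{lem:vhat-upperbound}. Read literally, that lemma only bounds this term from above. I will check that its argument bounds the absolute value, which it does because it goes through $\infnorm{\cdot}$ of that term. This sign issue, rather than anything structural, is where I expect care to be needed; the rest is routine.
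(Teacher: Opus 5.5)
Your proposal is correct and matches the paper's proof, which simply repeats the argument of Lemma~\ref{lem:phat-p-V} for the empirical system: condition on $\bm X_{t+1}$, use that the set updates of $\pihts$ do not depend on which kernel generated $\bm X_{t+1}$, and combine $\|\Vhat\|_\infty\le\Vhat_{\max}$ (Lemma~\ref{lem:vhat-upperbound}) with the $N\delta$ bound of Lemma~\ref{single-to-Narm-accu}. Your caution about signs is well placed and costs nothing: only $|\widehat f|\le\Vhat_{\max}$ is needed, and the paper's claim $0\le f$ is not justified because the linear term can be negative. Lemma~\ref{lem:vhat-upperbound} is already stated as a sup-norm bound, and its proof goes through the absolute value of that term.
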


The following lemma shows that if all arms follow the \emph{Optimal Local Control} subroutine, the expected scaled state-count vector $\E^{\pits}[\bm{X}_t([N])]$ has a linear dynamics.
\begin{lemma}[{\citealt[Lemma 1]{hong2024achieving}}]\label{lem:lemma1-hong}
Under Assumption~\ref{ass:RB}, suppose \eqref{eq:integer-feasibility} holds for the $N$-armed system at time step $t$. 
We have
\begin{equation}\label{eq:8}
\E^{\pits}\left[\, \bm{X}_{t+1}\bigl([N]\bigr)-\mu^{\star} \mid
\bm{X}_t,\ \text{all arms follow Optimal Local Control\ } \right]
= \bigl( \bm{X}_t\bigl([N]\bigr)-\mu^{\star} \bigr)\Phi .
\end{equation}
where $\Phi$ is the $S\times S$ matrix defined in \eqref{def:phi}.
\end{lemma}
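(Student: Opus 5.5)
The plan is a direct computation of the one-step conditional expectation. The key fact is that, once the integer feasibility condition~\eqref{eq:integer-feasibility} holds, Optimal Local Control is an \emph{affine} rule in the current state counts. Fix $\bm X_t=\bmx$, write $x=\bmx([N])$, and note $x\one=1$ since all arms are included. Let $n(s,a)$ be the number of arms in state $s$ that take action $a$ at time $t$.

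The first step is to compute $\E[n(s,a)]$ for every pair $(s,a)$.
\begin{itemize}
\item For $s\neq\tilde s$, randomized rounding gives $\E[n(s,1)]=\pibst(1\mid s)\,Nx(s)$ and $\E[n(s,0)]=\pibst(0\mid s)\,Nx(s)$.
\item At the neutral state, the routine activates exactly $B-\sum_{s\neq\tilde s}n(s,1)$ arms. Condition~\eqref{eq:integer-feasibility} guarantees this number lies in $[0,Nx(\tilde s)]$, for every realization of $B\in\{\lfloor\alpha N\rfloor,\lceil\alpha N\rceil\}$ and of the rounding.
\item Hence no clipping occurs, and taking expectations with $\E[B]=\alpha N$ gives $\E[n(\tilde s,1)]=\alpha N-\sum_{s\neq\tilde s}\pibst(1\mid s)Nx(s)$ and $\E[n(\tilde s,0)]=Nx(\tilde s)-\E[n(\tilde s,1)]$.
\end{itemize}
This step is where I expect the only real subtlety. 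The affine formula is valid precisely because feasibility rules out truncation at $\tilde s$, and it must hold pathwise before expectations are taken. I would verify it carefully from the two inequalities in~\eqref{eq:integer-feasibility}, using $|\mathcal S^{\emptyset}|=0$ here.

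Next, the arms transition independently given the actions. By the tower property over the randomness in $B$ and in the rounding,
\[
\E[\bm X_{t+1}([N])\mid \bm X_t=\bmx]=\frac1N\sum_{s,a}\E[n(s,a)]\,P_{sa}.
\]
Substituting the counts from the first step and using $P_{\tilde s1}=P_{\tilde s0}+\xi$, this becomes
\[
x P^{\pibst}+\bigl(\alpha - x c^{\pibst}\bigr)\xi .
\]
The neutral-state row of $P^{\pibst}$ is absorbed by writing $x(\tilde s)P_{\tilde s0}+\E[n(\tilde s,1)]\xi/N$, exactly as in the proof of Lemma~\ref{lem:W_properties}. Since $x\one=1$, this equals $xU$ with $U=P^{\pibst}-(c^{\pibst}-\alpha\one)\xi$.

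Finally, I center the dynamics at $\mu^\star$.
\begin{itemize}
\item Stationarity of $\mu^\star$ under $\bar\pi^\star$ together with the budget identity $\mu^\star c^{\pibst}=\sum_s y^\star(s,1)=\alpha$ gives $\mu^\star U=\mu^\star$.
\item Because $(x-\mu^\star)\one=0$, we have $(x-\mu^\star)\one\mu^\star=0$.
\end{itemize}
Combining these, $\E[\bm X_{t+1}([N])]-\mu^\star=(x-\mu^\star)U=(x-\mu^\star)(U-\one\mu^\star)=(x-\mu^\star)\Phi$, by the definition~\eqref{def:phi} of $\Phi$. This is the claimed identity~\eqref{eq:8}.
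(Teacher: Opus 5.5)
Your proposal is correct and is essentially the standard direct computation behind the cited result of \citet{hong2024achieving} (the paper itself only cites it rather than proving it): feasibility keeps the neutral-state fill inside $[0, Nx(\tilde s)]$, so the expected Optimal Local Control action counts are affine in $x$, the one-step mean is $xU$, and centering via $\mu^\star U=\mu^\star$ and $(x-\mu^\star)\one=0$ yields $\Phi=U-\one\mu^\star$. Note that your expected state-action frequency vector is exactly $xW$ for the linear map $W$ of Section~\ref{sec:pert_subsec_2}, so your middle step is just the identity $xWP=xU$ from Lemma~\ref{lem:W_properties}.
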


\begin{lemma}\label{lem:b-c1}
Let $b\in\mathbb{R}^n$. Suppose there exists an index $i_0\in\{1,\dots,n\}$ such that $b_{i_0}=0$.
Then for every scalar $c\in\mathbb{R}$, it holds that
$
    \|\,b-c\mathbf 1\,\|_\infty \;\ge\; \frac{1}{2}\|b\|_\infty,
$
where $\mathbf 1\in\mathbb{R}^n$ denotes the all-ones vector.
\end{lemma}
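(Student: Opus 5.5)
The idea is to compare only two coordinates of $b-c\mathbf 1$: the coordinate $i_0$ where $b$ vanishes, and a coordinate where $|b|$ is largest. Any shift $c$ that keeps one of these two entries small makes the other one large, and the triangle inequality turns this trade-off into the factor $\tfrac12$.

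Concretely, fix $c\in\mathbb{R}$ and pick an index $j$ with $|b_j|=\|b\|_\infty$ (such a $j$ exists since $n$ is finite). Since the $\ell_\infty$ norm dominates any single coordinate,
\[
\|b-c\mathbf 1\|_\infty \;\ge\; \max\{|b_j-c|,\;|b_{i_0}-c|\} \;=\; \max\{|b_j-c|,\;|c|\},
\]
where we used $b_{i_0}=0$. Bounding the maximum of two nonnegative numbers below by their average and applying the triangle inequality gives
\[
\max\{|b_j-c|,|c|\} \;\ge\; \tfrac12\bigl(|b_j-c|+|c|\bigr) \;\ge\; \tfrac12|b_j| \;=\; \tfrac12\|b\|_\infty .
\]
Since $c$ was arbitrary, this proves the claim.

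There is no real obstacle here; the only point needing care is to use the zero coordinate $i_0$ to control $|c|$ itself, which is exactly what makes the bound uniform in $c$. (If $b=\zero$ the statement is trivial, and the argument above covers that case as well.)
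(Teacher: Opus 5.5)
Your proof is correct and follows essentially the same approach as the paper: both compare the maximizing coordinate with the zero coordinate to reduce to $\max\{|b_j-c|,|c|\}$. The only difference is the last step, where the paper splits into the cases $|c|\ge \|b\|_\infty/2$ and $|c|<\|b\|_\infty/2$, while you bound the maximum by the average and apply the triangle inequality, which finishes the same argument a little more compactly.
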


\begin{proof}{Proof}
Let $M\triangleq \|b\|_\infty = \max_{1\le i\le n}|b_i|$. Choose an index $i_{\max}$ such that
$|b_{i_{\max}}|=M$. By assumption, pick $i_0$ with $b_{i_0}=0$. For any $c\in\mathbb{R}$, we have
$
    \|b-c\mathbf 1\|_\infty
    = \max_{1\le i\le n}|b_i-c|
    \;\ge\; \max\bigl(|b_{i_{\max}}-c|,\;|b_{i_0}-c|\bigr)
    = \max\bigl(|b_{i_{\max}}-c|,\;|c|\bigr).
$
Now consider two cases. If $|c|\ge M/2$, then $\|b-c\mathbf 1\|_\infty \ge |c|\ge M/2$. If $|c|< M/2$, then by the reverse triangle inequality,
$
    |b_{i_{\max}}-c|
    \;\ge\; \bigl||b_{i_{\max}}|-|c|\bigr|
    = M-|c|
    > M-\frac{M}{2}=\frac{M}{2}.
$
Hence $\|b-c\mathbf 1\|_\infty \ge |b_{i_{\max}}-c|\ge M/2$.

Combining the two cases gives $\|b-c\mathbf 1\|_\infty \ge M/2 = \|b\|_\infty/2$ for all $c$. Besides, the constant $1/2$ is tight: for example, $b=(0,M)$ and $c=M/2$ give
$\|b-c\mathbf 1\|_\infty = M/2 = \|b\|_\infty/2$.
\end{proof}

\end{APPENDICES}

\end{document}